\DeclareSymbolFont{AMSb}{U}{msb}{m}{n}
\DeclareMathSymbol{\N}{\mathbin}{AMSb}{"4E}
\DeclareMathSymbol{\Z}{\mathbin}{AMSb}{"5A}
\DeclareMathSymbol{\R}{\mathbin}{AMSb}{"52}
\DeclareMathSymbol{\Q}{\mathbin}{AMSb}{"51}
\DeclareMathSymbol{\erert}{\mathbin}{AMSb}{"50}
\DeclareMathSymbol{\I}{\mathbin}{AMSb}{"49}
\DeclareMathSymbol{\C}{\mathbin}{AMSb}{"43}
\newcommand{\sumnl}{\sum\nolimits}
\newcommand{\mynote}[2]{{\textcolor{#1}{ #2}}}
\definecolor{gray}{gray}{0.4}
\newcommand{\gray}[1]{\mynote{gray}{{\footnotesize #1}}}
\newcommand{\remove}[1]{}
\newcommand{\AAA}{\mathcal A}
\newcommand{\DDD}{\mathcal D}
\newcommand{\FFF}{\mathcal F}
\newcommand{\XXX}{\mathcal X}
\newcommand{\eps}{\epsilon}
\newcommand{\error}{{\rm error}}
\newcommand{\db}{S}
\newcommand{\calls}{\operatorname{\rm calls}}
\newcommand{\est}{\operatorname{\rm  Est}}
\newcommand{\Lap}{\operatorname{\rm Lap}}
\newcommand{\VC}{\operatorname{\rm VC}}
\newcommand{\Nlog}[1]{\log^{\lceil #1 \rceil}}
\newcommand{\RepDim}{\operatorname{\rm RepDim}}
\newcommand{\poly}{\mathop{\rm poly}}
\newcommand{\point}{\operatorname{\tt POINT}}
\newcommand{\kpoint}{\operatorname{\tt k-POINT}}
\newcommand{\jpoint}[1]{\operatorname{\tt #1-POINT}}
\newcommand{\rectangle}{\operatorname*{\tt RECTANGLE}}
\newcommand{\thresh}{\operatorname*{\tt THRESH}}
\def\E{\operatorname*{\mathbb{E}}}
\def\Q{\operatorname*{\mathbb{Q}}}
\def\poly{\mathop{\rm{poly}}\nolimits}
\def\Lap{\mathop{\rm{Lap}}\nolimits}
\def\opt{\mathop{\rm{opt}}\nolimits}
\newtheorem{theorem}{Theorem}[section]
\newtheorem{lemma}[theorem]{Lemma}
\newtheorem{definition}[theorem]{Definition}
\newtheorem{remark}[theorem]{Remark}
\newtheorem{proposition}[theorem]{Proposition}
\newtheorem{claim}[theorem]{Claim}
\newtheorem{observation}[theorem]{Observation}
\theoremstyle{remark}
\theoremstyle{remark}
\newtheorem{example}[theorem]{Example}
\newcommand{\defref}[1]{Definition~\ref{def:#1}}
\begin{document}

\begin{titlepage}
\title{Private Learning and Sanitization: Pure vs. Approximate Differential Privacy\footnote{An extended abstract of this paper appeared in the Proceedings of the 17th International Workshop on Randomization and Computation, 2013~\cite{BNS13b}.}}

\author{Amos Beimel\thanks{Supported by a grant from the Israeli Science and Technology ministry and by an Israel Science Foundation grant 938/09. } 
\and 
Kobbi Nissim\thanks{Work done while the second author was a Visiting Scholar at the Harvard Center for Research on Computation and Society, supported by NSF grant CNS-1237235. Partially supported by an Israel Science Foundation grant  2761/12.}
\and 
Uri Stemmer\thanks{Supported by the Ministry of Science and Technology (Israel), by the Check Point Institute for Information Security, and by the IBM PhD Fellowship Awards Program. Work done in part while the third author was visiting Harvard University supported by NSF grant CNS-1237235 and a gift from Google, Inc.}}

\date{\today}

\maketitle

\setcounter{page}{0} \thispagestyle{empty}

\begin{abstract}
We compare the sample complexity of private learning~[Kasiviswanathan et al. 2008] and sanitization~[Blum et al. 2008] under {\em pure} $\epsilon$-differential privacy~[Dwork et al. TCC 2006] and {\em approximate} $(\epsilon,\delta)$-differential privacy~[Dwork et al. Eurocrypt 2006]. We show that the sample complexity of these tasks under approximate differential privacy can be significantly lower than that under pure differential privacy.

We define a family of optimization problems, which we call {\em Quasi-Concave Promise Problems}, that generalizes some of our considered tasks. We observe that a quasi-concave promise problem can be privately approximated using a solution to a smaller instance of a quasi-concave promise problem. This allows us to construct an efficient recursive algorithm solving such problems privately.
Specifically, we construct private learners for point functions, threshold functions, and axis-aligned rectangles in high dimension. Similarly, we construct sanitizers for point functions and threshold functions.

We also examine the sample complexity of {\em label-private} learners, a relaxation of private learning where the learner is required to only protect the privacy of the labels in the sample. We show that the VC dimension completely characterizes the sample complexity of such learners, that is, the sample complexity of learning with label privacy is equal (up to constants) to learning without privacy.
\end{abstract}

\end{titlepage}

\tableofcontents
\setcounter{page}{0} \thispagestyle{empty}
\newpage

\section{Introduction}

Learning is often applied to collections of sensitive data of individuals and it is important to protect the privacy of these individuals.
We examine the sample complexity of private learning~\cite{KLNRS08} and a related task -- sanitization~\cite{BLR08full} -- while preserving differential privacy~\cite{DMNS06}. We show striking differences between the required sample complexity for these tasks under $\epsilon$-differential privacy~\cite{DMNS06} (also called {\em pure} differential privacy) and its variant $(\epsilon,\delta)$-differential privacy~\cite{DKMMN06} (also called {\em approximate} differential privacy).

\paragraph{Differential privacy.} Differential privacy protects the privacy of individuals by requiring that the information of an individual does not significantly affect the output. More formally, an algorithm $A$ satisfies the requirement of {\em Pure Differential Privacy} if for every two databases that differ on exactly one entry, and for every event defined over the output set of $A$, the probability of this event is close up to a multiplicative factor of $e^\epsilon \approx 1+\epsilon$ whether $A$ is applied on one database or on the other. {\em Approximate Differential Privacy} is a relaxation of pure differential privacy where the above guarantee needs to be satisfied only for events whose probability is at least $\approx\delta$. We show that even negligible $\delta>0$ can have a significant effect on sample complexity of private learning and sanitization.

\paragraph{Private Learning.} 

Private learning was introduced in~\cite{KLNRS08} as a combination of Valiant's PAC learning model~\cite{Valiant84} and differential privacy.
For now, we can think of a private learner as a differentially private algorithm that operates on a set of classified random examples, and outputs a hypothesis that misclassifies fresh examples with probability at most (say) $\frac{1}{10}$.
The work on private learning has mainly focused on pure privacy. On the one hand, Blum et al.~\cite{BDMN05} and Kasiviswanathan et al.~\cite{KLNRS08} have showed, via generic constructions, that every finite concept class $C$ can be learned privately, using sample complexity proportional to $\poly(\log|C|)$ (often efficiently). On the other hand, a significant difference was shown between the sample complexity of {\em traditional} (non-private) learners (crystallized in terms of $\VC(C)$ and smaller than $\log|C|$ in many interesting cases) and private learners.
As an example, let $\point_d$ be the class of point functions over the domain $\{0,1\}^d$ (these are the functions that evaluate to one on exactly one point of the domain and to zero elsewhere). Consider the task of {\em properly} learning $\point_d$ where, after consulting its sample, the learner outputs a hypothesis that is by itself in $\point_d$. Non-privately, learning $\point_d$ requires merely a constant number of examples (as $\VC(\point_d)=1$). Privately, $\Omega(d)$ examples are required~\cite{BBKN12}. Curiously, the picture changes when the private learner is allowed to output a hypothesis not in $\point_d$ (such learners are called {\em improper}), as the sample complexity can be reduced to $O(1)$~\cite{BBKN12}. This, however, comes with a price, as it was shown in~\cite{BBKN12} that such learners must return hypotheses that evaluate to one on exponentially many points in $\{0,1\}^d$ and, hence, are very far from all functions in $\point_d$.

A complete characterization for the sample complexity of pure-private learners was recently given in~\cite{BNS13}, in terms of a new dimension -- the {\em Representation Dimension}, that is, given a class $C$, the number of samples needed and sufficient for privately learning $C$ is $\Theta(\RepDim(C))$. Following that, Feldman and Xiao~\cite{FX14} showed an equivalence between the representation dimension of a concept $C$ and the randomized one-way communication complexity of the evaluation problem for concepts from $C$. Using this equivalence they separated the sample complexity of pure-private learners from that of non-private ones. For example, they showed a lower bound of $\Omega(d)$ on the sample complexity of every pure-private (proper or improper) learner for the class $\thresh_d$ of threshold functions over the interval $[0,2^d-1]$. This is a strong separation from the non-private sample complexity, which is $O(1)$ (as the VC dimension of this class is constant).

We show that the sample complexity of proper learning with {\em approximate} differential privacy can be significantly lower than that satisfying {\em pure} differential privacy. Our starting point for this work is an observation that with {\em approximate} $(\epsilon,\delta)$-differential privacy, sample complexity of $O(\log (1/\delta))$ suffices for learning points {\em properly}. This gives a separation between pure and approximate proper private learning for $\delta=2^{-o(d)}$. 
\remove{Anecdotally, combining this with a result from~\cite{BBKN12} gives a learning task that is not computationally feasible under pure differential privacy and becomes polynomial time computable under approximate differential privacy.}

\paragraph{Sanitization.} 

The notion of differentially private sanitization was introduced in the work of Blum et al.~\cite{BLR08full}. A sanitizer for a class of predicates $C$ is a differentially private mechanism translating an input database $S$ to an output database $\hat{S}$ such that $\hat{S}$ (approximately) agrees with $S$ on the fraction of the entries satisfying $\varphi$ for all $\varphi\in C$, where every predicate $\varphi\in C$ is a function from $X$ to $\{0,1\}$. Blum et al.\ gave a generic construction of pure differentially private sanitizers exhibiting sample complexity $O(\VC(C)\log|X|)$. Lower bounds partially supporting this sample complexity were given by~\cite{Roth10,BBKN12,Moritz}. As with private learning, we show significant differences between the sample complexity required for sanitization of simple predicate classes under pure and approximate differential privacy. We note that the construction of sanitizers is not generally computationally feasible~\cite{DNRRV09,UV11,Ullman12}.

\subsection{Our Contributions}

To simplify the exposition, we omit in this section dependency on all variables except for $d$, corresponding to the representation length of domain elements.

\paragraph{Tools.}

A recent instantiation of the Propose-Test-Release (PTR) framework~\cite{DworkLei} by Smith and Thakurta~\cite{ADist} results, almost immediately, with a proper learner for points, exhibiting $O(1)$ sample complexity while preserving approximate differential privacy.
This simple technique does not suffice for our other constructions of learners and sanitizers, and we, hence, introduce new tools for coping with proper private learning of thresholds and axis-aligned rectangles, and sanitization for point functions and thresholds:

\begin{itemize}
\item {\bf Choosing mechanism:} Given a {\em low-sensitivity} quality function, one can use the exponential mechanism~\cite{MT07} to choose an approximately maximizing solution. This requires, in general, a database of size logarithmic in the number of possible solutions. We identify a sub family of low-sensitivity functions, called {\em bounded-growth} functions, for which it is possible to significantly reduce the necessary database size when using the exponential mechanism.

\item {\bf Recursive algorithm for quasi-concave promise problems:} We define a family of optimization problems, which we call {\em Quasi-Concave Promise Problems}.  The possible solutions are ordered, and {\em quasi-concavity} means that if two solutions $f\leq h$ have quality of at least $\XXX$, then any solution $f\leq g\leq h$ also has quality of at least $\XXX$.
The optimization goal is, when there exists a solution with a promised quality of (at least) $r$, to find a solution with quality $\approx r$.
We observe that a quasi-concave promise problem can be privately approximated using a solution to a smaller instance of a quasi-concave promise problem. This allows us to construct an efficient recursive algorithm solving such problems privately. We show that the task of learning $\thresh_d$ is, in fact, a quasi-concave promise problem, and it can be privately solved using our algorithm with sample size roughly $2^{O(\log^* d)}$. Sanitization for $\thresh_d$ does not exactly fit the model of quasi-concave promise problems but can still be solved by iteratively defining and solving a small number of quasi-concave promise problems.
\end{itemize}

\paragraph{Implications for Private Learning and Sanitization.}

We give new private {\em proper}-learning algorithms for the classes $\point_d$ and $\thresh_d$. We also construct a new private proper-learner for (a discrete version of) the class of all axis-aligned rectangles over $n$ dimensions.
Our algorithms exhibit sample complexity that is significantly lower than bounds given in prior work, separating pure and approximate private learning. Similarly, we construct sanitizers for $\point_d$ and $\thresh_d$, again with sample complexity that is significantly lower than bounds given in prior work, separating sanitization in the pure and approximate privacy cases.
Our algorithms are time-efficient.

\paragraph{Sanitization vs.\ Private Learning.}

Gupta et al.~\cite{GHRU11} have given reductions in both directions between agnostic learning of a concept class $C$,
and the sanitization task for the same class $C$. The learners and sanitizers they consider are limited to access their data via statistical queries~\cite{Kearns98} (such algorithm can be easily transformed to satisfy differential privacy~\cite{BDMN05}).
In Section~\ref{sec:reduction} we show a similar reduction from the task of privately learning a concept class $C$ to the sanitization task of $C$, where the sanitizer's access to the database is unrestricted. This allows us to exploit lower bounds on the sample complexity of private learners and show an explicit class of predicates $C$ over a domain $X$ for which every private sanitizer requires databases of size $\Omega(\VC(C)\log|X|)$. A similar lower bound was shown by Hardt and Rothblum~\cite{Moritz}, achieving tighter results in terms of the approximation parameter. Their work proves the existence of such a concept class, but does not give an explicit one.

\paragraph{Label Privacy.}

In Section~\ref{sec:semiPrivate} we examine private learning under a relaxation of differential privacy called {\em label privacy} (see~\cite{CH11} and references therein), where the learner is required to only protect the privacy of the labels in the sample. Chaudhuri et al.~\cite{CH11} have proved lower bounds for label-private learners in terms of the doubling dimension of the target concept class. We show that the VC dimension completely characterizes the sample complexity of such learners, that is, the sample complexity of learning with label privacy is equal (up to constants) to learning without privacy.

\subsection{Open Questions} 
This work raises two kinds of research directions.
First, this work presents (time and sample efficient) private learners and sanitizers for relatively simple concept classes. It would be natural to try and construct private learners and sanitizers for more complex concept classes. In particular, constructing a (time and sample efficient) private learner for hyperplanes would be very interesting to the community.

Another very interesting research direction is to try and understand the sample complexity of approximate-private learners. Currently, no lower bounds are known on the sample complexity of such learners. On the other hand, 
no generic construction for such learners is known to improve the sample complexity achieved by the generic construction of Kasiviswanathan et al.~\cite{KLNRS08} for pure-private learners.
Characterizing the sample complexity of approximate-private learners is a very interesting open question.

\subsection{Other Related Work}

Most related to our work is the work on private learning and its sample complexity~\cite{KLNRS08,BDMN05,BBKN12,CH11} and the early work on sanitization~\cite{BLR08full} mentioned above. Another related work is the work of De~\cite{De12}, who proved a separation between {\em pure} $\epsilon$-differential privacy and {\em approximate} $(\epsilon,\delta)$-differential privacy. Specifically, he demonstrated that there exists a query where it is sufficient to add noise $O(\sqrt{n\log(1/\delta)})$ when $\delta>0$ and $\Omega(n)$ noise is required when $\delta=0$. Earlier work by Hardt and Talwar~\cite{HT10} separated pure from approximate differential privacy for $\delta=n^{-O(1)}$. 

Another interesting gap between pure and approximate differential privacy is the following.
Blum et al.~\cite{BLR08full} have given a generic construction of pure-private sanitizers, in which the sample complexity grows as $\frac{1}{\alpha^3}$ (where $\alpha$ is the approximation parameter).
Following that, Hardt and Rothblum~\cite{PMW_HR10} showed that with approximate-privacy, the sample complexity can be reduce to grow as $\frac{1}{\alpha^2}$. Currently, it is unknown whether this gap is essential.

\remove{ 
De~\cite{De12} showed that pure and approximate differential privacy are very different in terms of the mutual information between the outcome of the mechanism and database even when $\delta=2^{-o(n)}$ is negligible in the size $n$ of the database. 
}
\section{Preliminaries}

\paragraph{Notations.} We use $O_{\gamma}(f(t))$ as a shorthand for $O(h(\gamma) \cdot f(t))$ for some non-negative function $h$. In informal discussions, we sometimes use $\widetilde{O}(f(t))$ instead of $O(f(t)\cdot {\rm polylog}(f(t)))$. For example, $2^{\log^*(d)}\cdot\log^*(d)=\widetilde{O}\left(2^{\log^*(d)}\right)$.

We use $X$ to denote an arbitrary domain, and $X_d$ for the domain $\{0,1\}^d$.
We use $X^m$ (and respectively $X_d^m$) for the cartesian $m^{\text th}$ power of $X$, i.e., $X^m=(X)^m$, and use $X^*=\bigcup_{m=0}^\infty{X^m}$.

Given a distribution $\DDD$ over a domain $X$, we denote $\DDD(j) \triangleq \Pr_{x\sim\DDD}[x=j]$ for $j\in X$, and $\DDD(J) \triangleq \Pr_{x\sim\DDD}[x\in J]$ for $J \subseteq X$.

\subsection{Differential Privacy} 

Differential privacy aims at protecting information of individuals. We consider a database, where each entry contains information pertaining to an individual. An algorithm operating on databases is said to preserve differential privacy if a change of a single record of the database does not significantly change the output distribution of the algorithm. Intuitively, this means that whatever is learned about an individual could also be learned with her data arbitrarily modified (or without her data). Formally:

\begin{definition}
Databases $S_1\in X^m$ and $S_2\in X^m$ over a domain $X$ are called {\em neighboring} if they differ in exactly one entry.
\end{definition}

\begin{definition}[Differential Privacy~\cite{DMNS06,DKMMN06}] \label{def:dp} 
A randomized algorithm $A$ is $(\epsilon,\delta)$-differentially private if for all neighboring databases $\db_1,\db_2\in X^m$, and for all sets $\mathcal{F}$ of outputs,
\begin{eqnarray}
\label{eqn:diffPrivDef}
  & \Pr[A(\db_1) \in \mathcal{F}] \leq \exp(\epsilon) \cdot \Pr[A(\db_2) \in \mathcal{F}] + \delta.  &
\end{eqnarray}
The probability is taken over the random coins of $A$. 
When $\delta=0$ we omit it and say that $A$ preserves $\epsilon$-differential privacy.
\end{definition}
We use the term {\em pure} differential privacy when $\delta=0$ and the term {\em approximate} differential privacy when $\delta>0$, in which case $\delta$ is typically a negligible function of the database size $m$.

We will later present algorithms that access their input database using (several) differentially private mechanisms. We will use the following composition theorems. 

\begin{theorem}[\cite{DKMMN06}]\label{thm:composition1}
If $A_1$ and $A_2$ satisfy $(\epsilon_1,\delta_1)$ and $(\epsilon_2,\delta_2)$ differential privacy, respectively, then their concatenation $A(S)=\langle A_1(S),A_2(S) \rangle$ satisfies $(\epsilon_1+\epsilon_2,\delta_1+\delta_2)$-differential privacy.
\end{theorem}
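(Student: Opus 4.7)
The plan is a "peeling" calculation on the joint output, exploiting independence of the random coins of $A_1$ and $A_2$. Fix neighboring databases $S_1,S_2\in X^m$ and a measurable output set $\mathcal{F}\subseteq Y_1\times Y_2$. For each $y_1\in Y_1$ define the slice $\mathcal{F}_{y_1}=\{y_2:(y_1,y_2)\in\mathcal{F}\}$; by independence of the two rounds of coins one can write $\Pr[A(S_1)\in\mathcal{F}] = \E_{y_1\sim A_1(S_1)}\bigl[\Pr[A_2(S_1)\in\mathcal{F}_{y_1}]\bigr]$. First I would peel off $A_2$ by applying $(\epsilon_2,\delta_2)$-DP pointwise on each slice, i.e.\ $\Pr[A_2(S_1)\in\mathcal{F}_{y_1}]\leq e^{\epsilon_2}\Pr[A_2(S_2)\in\mathcal{F}_{y_1}]+\delta_2$, and integrate. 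Then I would peel off $A_1$ by applying $(\epsilon_1,\delta_1)$-DP to the $[0,1]$-valued integrand $g(y_1):=\Pr[A_2(S_2)\in\mathcal{F}_{y_1}]$; the set-based DP inequality lifts to bounded functions through the layer-cake identity $\E[g]=\int_0^1\Pr[g\geq t]\,dt$, giving $\E_{y_1\sim A_1(S_1)}[g(y_1)]\leq e^{\epsilon_1}\E_{y_1\sim A_1(S_2)}[g(y_1)]+\delta_1$. Recognizing the surviving expectation as $\Pr[A(S_2)\in\mathcal{F}]$ and multiplying out produces the claimed multiplicative factor $e^{\epsilon_1+\epsilon_2}$ plus an additive term of the desired form.

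The step I expect to be the main obstacle is obtaining exactly $\delta_1+\delta_2$ in the additive term, since the naive peeling above yields $e^{\epsilon_2}\delta_1+\delta_2$: the inner $\delta_2$ integrates against $A_1(S_1)$'s measure with coefficient $1$ while the outer $e^{\epsilon_2}$ multiplies the surviving $\delta_1$. To shave the spurious $e^{\epsilon_2}$ I would switch to the equivalent "pointwise-with-exception" characterization of approximate differential privacy—for each pair of neighbors one isolates an exceptional output event of probability at most $\delta$ on whose complement the density ratio is uniformly bounded by $e^{\epsilon}$. Applying this to $A_1$ (for $S_1,S_2$) and to $A_2$ (for $S_1,S_2$) produces events $B_1\subseteq Y_1$ and $B_2\subseteq Y_2$ with $\Pr[A_1(S_1)\in B_1]\leq\delta_1$ and $\Pr[A_2(S_1)\in B_2]\leq\delta_2$. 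On the good set $(Y_1\setminus B_1)\times(Y_2\setminus B_2)$ the joint density ratio factors into a product bounded pointwise by $e^{\epsilon_1+\epsilon_2}$, giving a clean $(\epsilon_1+\epsilon_2)$-pure-privacy bound there, while a union bound absorbs the rest into an additive $\delta_1+\delta_2$, yielding the statement as written.
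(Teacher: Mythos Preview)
The paper does not prove this theorem; it is quoted from~\cite{DKMMN06} as a black box. So there is no ``paper's proof'' to compare against, and I will instead assess your argument on its own.

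Your first paragraph is fine and correctly identifies the obstacle: straight peeling yields $e^{\epsilon_2}\delta_1+\delta_2$, not $\delta_1+\delta_2$.

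Your second paragraph has the right instinct but a technical gap. The ``pointwise-with-exception'' characterization you invoke---that $(\epsilon,\delta)$-DP gives a \emph{set} $B\subseteq Y$ with $\Pr[A(S_1)\in B]\le\delta$ on whose complement the density ratio is $\le e^{\epsilon}$---is \emph{not} equivalent to $(\epsilon,\delta)$-DP. Counterexample: take $\epsilon=0$, $Y=\{0,1\}$, $p_1=(\tfrac12+\delta,\tfrac12-\delta)$, $p_2=(\tfrac12,\tfrac12)$. This pair is $(0,\delta)$-indistinguishable, yet the bad set $B=\{y:p_1(y)>p_2(y)\}=\{0\}$ has $\Pr[A(S_1)\in B]=\tfrac12+\delta\not\le\delta$. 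So you cannot, in general, extract hard sets $B_1,B_2$ with the properties you claim.

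The fix is to use the characterization that \emph{is} equivalent: $(\epsilon,\delta)$-DP for the ordered pair $(S_1,S_2)$ is exactly $\int (p_1-e^{\epsilon}p_2)^{+}\le\delta$. With $a_i=p_1^{(i)}$, $b_i=p_2^{(i)}$ one checks case-by-case that
\[
\bigl(a_1a_2-e^{\epsilon_1+\epsilon_2}b_1b_2\bigr)^{+}\;\le\; a_2\,(a_1-e^{\epsilon_1}b_1)^{+}\;+\;a_1\,(a_2-e^{\epsilon_2}b_2)^{+},
\]
since the left side vanishes when both $a_i\le e^{\epsilon_i}b_i$, and in the remaining cases the inequality reduces to $(a_1-e^{\epsilon_1}b_1)(a_2-e^{\epsilon_2}b_2)\ge 0$ or to a direct comparison. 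Integrating and using $\int a_i=1$ gives $\int(p_1^{(1)}p_1^{(2)}-e^{\epsilon_1+\epsilon_2}p_2^{(1)}p_2^{(2)})^{+}\le\delta_1+\delta_2$, which is precisely the statement. This is the clean route to the exact additive constant; your ``bad set'' picture is the special case where $(p_1-e^{\epsilon}p_2)^+$ happens to be supported on a set of small $p_1$-mass, which need not hold.
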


Moreover, a similar theorem holds for the adaptive case, where a mechanism interacts with $k$ {\em adaptively chosen} differentially private mechanisms.

\begin{theorem}[\cite{DKMMN06, DworkLei}]\label{thm:composition3}
A mechanism that permits $k$ adaptive interactions with mechanisms that preserves $(\epsilon,\delta)$-differential privacy (and does not access the database otherwise) ensures $(k\epsilon, k\delta)$-differential privacy.
\end{theorem}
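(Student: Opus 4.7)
The plan is to proceed by induction on $k$. The base case $k=1$ is immediate from the definition of $(\epsilon,\delta)$-differential privacy. For the inductive step I first observe that the master mechanism accesses $S$ only through the $k$ adaptively chosen sub-mechanisms $M_1,\ldots,M_k$, so by post-processing it suffices to show that the joint transcript $(y_1,\ldots,y_k)$ of sub-mechanism outputs, viewed as a randomized function of $S$, is $(k\epsilon,k\delta)$-differentially private; any further computation the master performs is a data-independent function of this transcript together with its internal coins, and hence preserves the DP guarantee.

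I would then reduce to a two-fold adaptive composition lemma: if $M$ is $(\epsilon_1,\delta_1)$-DP and $B(\cdot,y)$ is $(\epsilon_2,\delta_2)$-DP for every fixed $y$, then $A(S)=(M(S),B(S,M(S)))$ is $(\epsilon_1+\epsilon_2,\delta_1+\delta_2)$-DP; the theorem then follows by iterating this lemma, taking $M$ at the $i$-th step to be the transcript of the first $i-1$ sub-mechanisms (which is $((i-1)\epsilon,(i-1)\delta)$-DP by the inductive hypothesis) and $B$ to be the $i$-th sub-mechanism. To prove the two-fold lemma, fix neighboring $S_1,S_2$ and an event $F\subseteq Y_1\times Y_2$, set $F_y=\{z:(y,z)\in F\}$, and expand
\begin{equation*}
\Pr[A(S_1)\in F]\;=\;\sum_{y}\Pr[M(S_1)=y]\cdot\Pr[B(S_1,y)\in F_y].
\end{equation*}
Applying the DP guarantee of $B$ for each fixed $y$ bounds the inner probability by $e^{\epsilon_2}\Pr[B(S_2,y)\in F_y]+\delta_2$, giving
\begin{equation*}
\Pr[A(S_1)\in F]\;\leq\;\delta_2\;+\;e^{\epsilon_2}\cdot \E_{y\sim M(S_1)}[h(y)],\qquad h(y)\;=\;\Pr[B(S_2,y)\in F_y]\in[0,1].
\end{equation*}
To swap $S_1$ for $S_2$ inside the outer expectation I would extend the DP guarantee of $M$ from events to bounded test functions via the layer-cake identity $h(y)=\int_0^1 \mathbf{1}[h(y)>t]\,dt$, integrating the event-wise bound $\Pr[h(M(S_1))>t]\leq e^{\epsilon_1}\Pr[h(M(S_2))>t]+\delta_1$ over $t\in[0,1]$ and then swapping the expectation back in.

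The main technical care is the $\delta$-bookkeeping: the calculation literally produces a slack of $\delta_2+e^{\epsilon_2}\delta_1$ rather than $\delta_1+\delta_2$, which is absorbed into the stated bound under the convention $e^{\epsilon_i}\approx 1$ used in the $\epsilon$-regime of interest. A sharper argument yielding exactly $k\delta$ uses the bad-event characterization of approximate DP (for each $M_i$ there is an event $E_i$ of probability at most $\delta$ outside of which the pointwise privacy loss is bounded by $\epsilon$) together with a union bound over the $k$ privacy-loss events. The adaptivity, which is the substantive new feature compared to Theorem~\ref{thm:composition1}, is handled transparently by the decomposition above: the argument never uses that $M_i$ is independent of the earlier outputs, only that its conditional distribution given the prefix transcript is $(\epsilon,\delta)$-DP as a function of $S$.
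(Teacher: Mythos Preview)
The paper does not prove this theorem; it is stated in the preliminaries and cited from \cite{DKMMN06,DworkLei} without proof, so there is no in-paper argument to compare against.

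On its own merits your approach is the standard one and is essentially correct: the reduction to the transcript via post-processing, the induction via a two-fold adaptive composition lemma, and the layer-cake extension of the event-wise DP bound to $[0,1]$-valued test functions are all sound, and your handling of adaptivity (only using that each $M_i$ is $(\epsilon,\delta)$-DP conditionally on the prefix) is exactly the point. You also correctly flag the genuine subtlety: the direct calculation yields slack $\delta_2 + e^{\epsilon_2}\delta_1$ per step, which iterates to $\delta\sum_{i=0}^{k-1} e^{i\epsilon}$ rather than $k\delta$. The remark that this is ``absorbed under the convention $e^{\epsilon}\approx 1$'' is not a proof of the theorem as stated; if you want the exact $(k\epsilon,k\delta)$ conclusion you must actually carry out the sharper argument you allude to (e.g., the bad-event / privacy-loss decomposition: for each step, split off a bad event of mass at most $\delta$ on which the pointwise loss may be unbounded, and union-bound over the $k$ bad events). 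As written, your proposal proves $(k\epsilon,\,\delta\cdot\frac{e^{k\epsilon}-1}{e^{\epsilon}-1})$-DP and sketches, but does not execute, the refinement to $k\delta$.
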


Note that the privacy guaranties of the above bound deteriorates linearly with the number of interactions. By bounding the {\em expected} privacy loss in each interaction (as opposed to worst-case), Dwork et al.~\cite{DRV10} showed the following stronger composition theorem, where privacy deteriorates (roughly) as $\sqrt{k}\epsilon+k\epsilon^2$ (rather than $k\epsilon$).

\begin{theorem}[\cite{DRV10}, restated]\label{thm:composition2}
Let $0<\epsilon,\delta'\leq1$, and let $\delta\in[0,1]$. A mechanism that permits $k$ adaptive interactions with mechanisms that preserves $(\epsilon,\delta)$-differential privacy (and does not access the database otherwise) ensures $(\epsilon', k\delta+\delta')$-differential privacy, for $\epsilon'=\sqrt{2k\ln(1/\delta')}\cdot\epsilon+2k\epsilon^2$.
\end{theorem}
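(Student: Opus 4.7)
The plan is to follow the standard privacy-loss-random-variable approach. Fix neighboring databases $S_1, S_2 \in X^m$ and let $\mathcal{M}$ denote the composed adaptive mechanism. For each transcript $y$ of the $k$ interactions, define the privacy loss
$$L(y) = \ln\frac{\Pr[\mathcal{M}(S_1)=y]}{\Pr[\mathcal{M}(S_2)=y]},$$
which decomposes as $L = \sum_{i=1}^k L_i$, with $L_i$ the privacy loss contributed by the $i$-th interaction conditioned on the past outputs. To establish $(\epsilon', k\delta+\delta')$-differential privacy it suffices to show that $\Pr_{y\sim\mathcal{M}(S_1)}[L(y)>\epsilon']\leq k\delta+\delta'$ (together with the symmetric statement for $\mathcal{M}(S_2)$), since $L(y)\leq\epsilon'$ on an event $\mathcal{F}$ immediately yields $\Pr[\mathcal{M}(S_1)\in\mathcal{F}]\leq e^{\epsilon'}\Pr[\mathcal{M}(S_2)\in\mathcal{F}]+\Pr[L>\epsilon']$.

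First I would reduce to the pure case by a standard coupling argument. Every $(\epsilon,\delta)$-DP mechanism, viewed on a fixed pair of neighbors, can be coupled with an $\epsilon$-DP mechanism so that the two agree except on a ``bad'' event of probability at most $\delta$ on either input. Coupling all $k$ mechanisms this way and taking a union bound, the bad events occur with probability at most $k\delta$, contributing the $k\delta$ summand in the final $\delta$ parameter. It then remains to bound $L$ under the assumption that each $L_i$ arises from a pure $\epsilon$-DP mechanism.

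Next I would establish two per-step facts. Pure $\epsilon$-DP gives $|L_i|\leq\epsilon$ almost surely. A direct computation yields $\E[L_i \mid \mathrm{past}]=D_{\mathrm{KL}}(\mathcal{M}_i(S_1)\,\|\,\mathcal{M}_i(S_2))\leq\epsilon(e^\epsilon-1)\leq 2\epsilon^2$ whenever $\epsilon\leq 1$. Although adaptivity makes the $L_i$'s dependent, conditionally on any history each $L_i$ still satisfies both bounds, so a Chernoff-style moment-generating-function argument---bounding $\E[e^{\lambda(L_i-\E[L_i\mid\mathrm{past}])}\mid\mathrm{past}]\leq \exp(O(\lambda^2\epsilon^2))$ for $\lambda\leq 1/\epsilon$ and multiplying these bounds telescopically along the adaptive sequence---yields, with probability at least $1-\delta'$,
$$L \;\leq\; \sum_{i=1}^k \E[L_i\mid\mathrm{past}] \;+\; \epsilon\sqrt{2k\ln(1/\delta')} \;\leq\; 2k\epsilon^2 + \epsilon\sqrt{2k\ln(1/\delta')} \;=\; \epsilon',$$
which is exactly the quantity named in the theorem.

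The main obstacle will be the adaptive bookkeeping: one must verify that the conditional range and conditional expectation bounds on $L_i$ survive the fact that the very choice of the $i$-th mechanism is itself an adversarial function of the first $i-1$ outputs, and that this is compatible with the $(\epsilon,\delta)$-to-$\epsilon$ coupling above (the coupling must hold conditionally for every history, so that the bad-event union bound across the $k$ steps is still valid). The remaining pieces---the per-step KL-divergence bound, the martingale/MGF concentration, and the translation from a tail bound on $L$ back to the $(\epsilon', k\delta+\delta')$ inequality in \defref{dp}---are routine.
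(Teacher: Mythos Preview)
The paper does not prove this theorem; it is quoted as a preliminary result from \cite{DRV10} and used as a black box, so there is no ``paper's own proof'' to compare against. Your sketch is essentially the original argument of Dwork, Rothblum and Vadhan: decompose the privacy loss as a sum of conditionally bounded terms, use the KL bound $D_{\mathrm{KL}}\leq\epsilon(e^{\epsilon}-1)$ to control the drift, apply Azuma/Hoeffding-type concentration to the martingale, and absorb the $\delta$'s of the individual mechanisms by a coupling plus union bound.

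One technical point to be careful about if you actually write this out. The ``reduce to the pure case by coupling'' step is slightly delicate in the adaptive setting: the coupling of the $i$-th mechanism depends on which mechanism was adaptively chosen, and you need the bad-event bound $\delta$ to hold \emph{uniformly} over all such choices (which it does, since $(\epsilon,\delta)$-DP is a worst-case guarantee), and to hold under \emph{both} $\mathcal{M}(S_1)$ and $\mathcal{M}(S_2)$ so that the final inequality is symmetric. You flagged this as the main obstacle, which is right; the cleanest way to handle it is the ``simulation'' formulation in the original DRV10 paper (or the dense-model/max-divergence treatment), rather than an ad~hoc coupling. Apart from that bookkeeping, your outline is correct.
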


\subsection{Preliminaries from Learning Theory}  

\subsubsection{The PAC Model}

A concept $c:X\rightarrow \{0,1\}$ is a predicate that labels {\em examples} taken from the domain $X$ by either 0 or 1.  A \emph{concept class} $C$ over $X$ is a set of concepts (predicates) mapping $X$ to $\{0,1\}$. A learning algorithm is given examples sampled according to an unknown probability distribution $\DDD$ over $X$, and labeled according to an unknown {\em target} concept $c\in C$. The learning algorithm is successful when it outputs a hypothesis $h$ that approximates the target concept over samples from $\DDD$. More formally:

\begin{definition}
The {\em generalization error} of a hypothesis $h:X\rightarrow\{0,1\}$ is defined as 
$$\error_{\DDD}(c,h)=\Pr_{x \sim \DDD}[h(x)\neq c(x)].$$ 
If $\error_{\DDD}(c,h)\leq\alpha$ we say that $h$ is {\em $\alpha$-good} for $c$ and $\DDD$.
\end{definition}

\begin{definition}[PAC Learning~\cite{Valiant84}]\label{def:PAC}
Algorithm $A$ is an {\em $(\alpha,\beta,m)$-PAC learner} for a concept
class $C$ over $X$ using hypothesis class $H$ if for all 
concepts $c \in C$, all distributions $\DDD$ on $X$,
given an input of $m$ samples $\db =(z_1,\ldots,z_m)$, where $z_i=(x_i,c(x_i))$ and each $x_i$
is drawn i.i.d.\ from $\DDD$, algorithm $A$ outputs a
hypothesis $h\in H$ satisfying
$$\Pr[\error_{\DDD}(c,h)  \leq \alpha] \geq 1-\beta.$$
The probability is taken over the random choice of
the examples in $\db$ according to $\DDD$ and the coin tosses of the learner $A$.
If $H\subseteq C$ then $A$ is called a {\em proper} PAC learner; otherwise, it is called an {\em improper} PAC learner.
\end{definition}

\begin{definition}
For a labeled sample $\db=(x_i,y_i)_{i=1}^m$, the {\em empirical error} of $h$ is
$$\error_S(h) = \frac{1}{m} |\{i : h(x_i) \neq y_i\}|.$$
\end{definition}

\subsubsection{The Vapnik-Chervonenkis Dimension}\label{sec:VC}
The Vapnik-Chervonenkis (VC) Dimension is a combinatorial measure of concept classes, which characterizes the sample size of PAC learners.

\begin{definition}[\cite{VC}] Let $C$ be a concept class over a domain $X$, and let $B=\{b_1,\ldots,b_\ell\}\subseteq X$. 
The set of all dichotomies (behaviors) on $B$ that are realized by $C$ is 
$$\Pi_C(B)=\Big\{(c(b_1),\ldots,c(b_\ell)):c\in C\Big\}.$$
\end{definition}

Observe that $\Pi_C(B)$ is a subset of $\{0,1\}^\ell$ (as $c\in C$ maps into $\{0,1\}$). The set of dichotomies $\Pi_C(B)$ can be viewed as the ``projection'' of $C$ on $B$.

\begin{definition}[\cite{VC}]
A set $B\subseteq X$ is {\em shattered} by $C$ if $\;\Pi_C(B)=\{0,1\}^\ell$ (where $\ell=|B|$). 
\end{definition}

That is, $B$ is shattered by $C$ if $C$ realizes all possible dichotomies over $B$.

\begin{definition}[VC-Dimension~\cite{VC}]
The {\em VC-Dimension} of a concept class $C$ (over a domain $X$), denoted as $\VC(C)$, is the cardinality of the largest set $B\subseteq X$ shattered by $C$. If arbitrarily large finite sets can be shattered by $C$, then $\VC(C)=\infty$.
\end{definition}

Observe that as $\Pi_C(B) \leq |C|$ a set $B$ can be shattered only if $|B|\leq \log |C|$ and hence $\VC(C)\leq \log |C|$.

\subsubsection{VC Bounds}
Classical results in computational learning theory state that a sample of size $\theta(\VC(C))$ is both necessary and sufficient for the PAC learning of a concept class $C$.
The following two theorems give upper and lower bounds on the sample complexity.

\begin{theorem}[\cite{EHKV}]
Any algorithm for PAC learning a concept class $C$ must have sample complexity $\Omega(\frac{\VC(C)}{\alpha})$, where $\alpha$ is the approximation parameter.
\end{theorem}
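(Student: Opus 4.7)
The plan is to exhibit a hard distribution on a shattered set and invoke a Yao-style averaging argument to rule out learners with too few samples. Let $d = \VC(C)$ and fix a shattered set $B = \{b_0, b_1, \ldots, b_{d-1}\} \subseteq X$. Since $B$ is shattered, for every vector $\sigma \in \{0,1\}^{d-1}$ there exists a concept $c_\sigma \in C$ with $c_\sigma(b_0) = 1$ and $c_\sigma(b_i) = \sigma_i$ for $i \geq 1$. Consider the distribution $\DDD$ on $X$ that assigns mass $1 - 8\alpha$ to $b_0$ and mass $\frac{8\alpha}{d-1}$ to each $b_i$ for $i \geq 1$ (assume $d \geq 2$; the case $d = 1$ needs an $\Omega(1/\alpha)$ bound which falls out of the same construction on two points).

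Next, I would run a minimax/Yao argument: let the target $c_\sigma$ be drawn by choosing $\sigma \in \{0,1\}^{d-1}$ uniformly at random, and feed the learner $A$ a sample $\db$ of size $m$ drawn from $\DDD$ with labels from $c_\sigma$. For any fixed $i \geq 1$, conditioning on the event $E_i$ that $b_i$ does not appear in $\db$, the learner's output $h$ is independent of $\sigma_i$, so $\Pr[h(b_i) \neq \sigma_i \mid E_i] = \tfrac{1}{2}$. Hence
\begin{equation*}
\E_{\sigma,\db}\!\bigl[\error_\DDD(c_\sigma,h)\bigr] \;\geq\; \sum_{i=1}^{d-1} \frac{8\alpha}{d-1}\cdot \tfrac{1}{2}\cdot\Pr[E_i]
\;=\; \frac{4\alpha}{d-1}\sum_{i=1}^{d-1}\Pr[E_i].
\end{equation*}
Since $\Pr[E_i] = (1 - \tfrac{8\alpha}{d-1})^m \geq 1 - \tfrac{8\alpha m}{d-1}$, the expected error is at least $4\alpha\bigl(1 - \tfrac{8\alpha m}{d-1}\bigr)$, which exceeds $2\alpha$ whenever $m \leq \tfrac{d-1}{16\alpha}$.

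Finally, I would convert this expected-error statement into a statement about the PAC guarantee. Since $\error_\DDD(c_\sigma,h) \in [0,1]$, having expected error $\geq 2\alpha$ forces $\Pr[\error_\DDD(c_\sigma,h) > \alpha] \geq \alpha/(1-\alpha) \geq \alpha$ via Markov applied to $1 - \error_\DDD$. Averaging over $\sigma$ then yields a specific target $c_{\sigma^*} \in C$ and distribution $\DDD$ on which $A$ fails to be $\alpha$-good with probability at least some absolute constant, contradicting the $(\alpha,\beta,m)$-PAC guarantee for small enough $\beta$. Thus $m = \Omega(d/\alpha)$.

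The main obstacle is the bookkeeping step that turns the bound on expected error into a failure probability of at least $\beta$: the naive Markov inequality gives only a weak constant, and to get a guarantee holding for all sufficiently small $\beta$ one should either slightly rebalance the mass between $b_0$ and the $b_i$'s (e.g., place mass $16\alpha$ on the tail to push the expected error above $4\alpha$) or strengthen the argument by showing that a constant fraction of the hard points are missed with constant probability via a concentration bound on $|\{i : b_i \notin \db\}|$. Either route gives the desired $\Omega(\VC(C)/\alpha)$ bound.
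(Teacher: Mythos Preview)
The paper does not prove this theorem; it is stated as a preliminary result cited from~\cite{EHKV} with no accompanying argument. So there is no ``paper's own proof'' to compare against.

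Your sketch is essentially the standard EHKV argument and is on the right track: place most of the mass on a single anchor point and spread $\Theta(\alpha)$ mass uniformly over the remaining $d-1$ shattered points, randomize the target labels on those points, and observe that any unseen $b_i$ is misclassified with probability $\tfrac12$. The expected-error computation is fine. The only soft spot, which you already flag, is the conversion from an $\Omega(\alpha)$ expected error to a constant failure probability against the PAC guarantee; the clean way to close this is not Markov on $1-\error$ but rather to note that the number of unseen points among $b_1,\ldots,b_{d-1}$ is at least $(d-1)/2$ with probability at least $1/2$ when $m \le c(d-1)/\alpha$ for a suitable constant $c$ (a direct expectation-plus-Markov on the \emph{number of seen points}, or a coupon-collector bound), and conditioned on that event the error is at least $2\alpha$ with probability $\ge 1/2$ over the random $\sigma$. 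That yields failure probability bounded below by an absolute constant, which is what the $\Omega(\VC(C)/\alpha)$ statement needs.
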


\begin{theorem}[VC-Dimension Generalization Bound~\cite{VC,BEHW}]\label{thm:VCconsistant}
Let $C$ and $\DDD$ be a concept class and a distribution over a domain $X$.
Let $\alpha,\beta>0$, and $m\geq\frac{8}{\alpha}(\VC(C)\ln(\frac{16}{\alpha})+\ln(\frac{2}{\beta}))$.
Fix a concept $c\in C$, and suppose that we draw a sample $S=(x_i,y_i)_{i=1}^m$, where $x_i$ are drawn i.i.d.\ from $\DDD$ and $y_i=c(x_i)$. Then,
$$
\Pr[\exists h\in C \text{ s.t. } \error_{\DDD}(h,c)>\alpha \; \wedge \; \error_S(h)=0]\leq\beta.
$$
\end{theorem}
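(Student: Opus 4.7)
The plan is to adapt the classical \emph{double-sample} (or ghost sample) argument of Vapnik--Chervonenkis, combined with Sauer's lemma. Let $B_1$ denote the event whose probability we want to bound. I would first introduce an independent ghost sample $S'=(x'_i,c(x'_i))_{i=1}^m$, with $x'_i$ i.i.d.\ from $\DDD$, and consider the larger event
$$B_2 = \{\exists h\in C : \error_S(h)=0 \;\wedge\; \error_{S'}(h) > \alpha/2\}.$$
For any \emph{fixed} hypothesis $h$ with $\error_\DDD(h,c) > \alpha$, a standard Chernoff/binomial tail bound gives $\Pr[\error_{S'}(h) > \alpha/2] \geq 1/2$ once $m = \Omega(1/\alpha)$. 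Applying this conditionally to the (data-dependent) witness $h$ produced from $S$ yields $\Pr[B_1] \leq 2\Pr[B_2]$.

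Next I would symmetrize. The joint distribution of $(S,S')$ is invariant under the $2^m$ swaps that interchange the $i$-th coordinates of $S$ and $S'$ independently for each $i$. Equivalently, one may first draw the multiset $T=\{z_1,\ldots,z_{2m}\}$ of $2m$ i.i.d.\ labeled points and then partition it uniformly at random into $S$ and $S'$. Conditioned on $T$, two hypotheses agreeing on $T$ are indistinguishable for this event, so only $|\Pi_C(T)|$ distinct behaviors matter; by Sauer's lemma, $|\Pi_C(T)| \leq (2em/d)^d$ where $d=\VC(C)$. For a fixed behavior with $k>\alpha m/2$ errors on $T$, the probability (over the uniform partition) that all $k$ errors fall in $S'$ is $\binom{2m-k}{m}/\binom{2m}{m} \leq 2^{-k} \leq 2^{-\alpha m/2}$. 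A union bound then gives
$$\Pr[B_2 \mid T] \;\leq\; \left(\frac{2em}{d}\right)^{d} \cdot 2^{-\alpha m/2},$$
and hence $\Pr[B_1] \leq 2 \cdot (2em/d)^d \cdot 2^{-\alpha m/2}$.

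It then remains to check that, for the hypothesized $m \geq \tfrac{8}{\alpha}(d\ln(16/\alpha)+\ln(2/\beta))$, the right-hand side is at most $\beta$. The main obstacle is bookkeeping: $m$ appears both inside the Sauer polynomial and in the exponent of the tail, so one must verify that the chosen $m$ absorbs the $\log m$ factor implicit in $(2em/d)^d$. A standard maneuver is to split the exponent $\alpha m/2$ into two halves, using one half to dominate $d\ln(2em/d)$ (valid for $m$ in the stated range) and the other to dominate $\ln(2/\beta)$; the concrete constants $8$ and $16/\alpha$ in the statement of $m$ are tuned precisely to make this decoupling go through. The rest is routine algebra.
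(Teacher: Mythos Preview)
The paper does not give its own proof of this theorem: it is stated in the preliminaries (Section~2.2.3) as a classical result, with citations to Vapnik--Chervonenkis and Blumer--Ehrenfeucht--Haussler--Warmuth, and no proof environment follows. So there is no ``paper's proof'' to compare against.

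Your sketch is the standard double-sample/symmetrization argument from those cited works, and it is essentially correct. A couple of small remarks: you mix two symmetrization models (independent coordinate swaps versus a uniform random partition of the $2m$ points into two halves of size $m$); both are valid and both yield the $2^{-k}$ tail, but pick one and stick to it when you write the proof out. Also, in the constants step, be aware that the exact leading constants in this bound differ slightly across references (some use $e^{-\alpha m/2}$ rather than $2^{-\alpha m/2}$, some quote $(em/d)^d$ rather than $(2em/d)^d$), so when you do the final bookkeeping make sure the version of Sauer's bound and the tail estimate you use actually close with the specific constants $8$ and $16/\alpha$ in the statement; otherwise you may need to adjust one of the intermediate inequalities rather than the conclusion.
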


So, for any concept class $C$, any algorithm that takes a sample of $m=\Omega_{\alpha,\beta}(\VC(C))$ labeled examples and produces as output a concept $h\in C$ that agrees with the sample is a PAC learner for $C$.
Such an algorithm is a PAC learner for $C$ using $C$ (that is, both the target concept and the returned hypotheses are taken from the same concept class $C$), and, therefore, there always exist a hypothesis $h\in C$ with $\error_S(h)=0$ (e.g., the target concept itself).

The next theorem handles (in particular) the agnostic case, in which a learning algorithm for a concept class $C$ is using a hypotheses class $H\neq C$, and given a sample $S$ (labeled by some $c\in C$), a hypothesis $h$ with $\error_S(h)=0$ might not exist in $H$.

\begin{theorem}[VC-Dimension Agnostic Generalization Bound \cite{Anthony2009,Anthony93}]\label{thm:generalization}
Let $\DDD$ and $H$ be a distribution and a concept class over a domain $X$,
and let $f:X\rightarrow\{0,1\}$ be some concept, not necessarily in $H$.
For a sample $S=(x_i,f(x_i))_{i=1}^m$ where $m\geq\frac{50 \VC(H)}{\alpha^2}\ln(\frac{1}{\alpha\beta})$
and $\{x_i\}$ are drawn i.i.d. from $\DDD$, it holds that
$$\Pr\Big[\forall \; h\in H:\;\; \big|\error_\DDD(h,f)-\error_S(h)\big|\leq\alpha\Big]\geq1-\beta.$$  
\end{theorem}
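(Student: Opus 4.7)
The theorem is the classical uniform convergence (agnostic generalization) bound for VC classes, so the plan is to follow the standard symmetrization + Sauer's lemma route and then optimize constants.

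First, I would reduce the statement to a uniform deviation bound for an associated loss class. Define $L_H=\{\ell_h:X\to\{0,1\}\mid h\in H\}$ where $\ell_h(x)=\mathbbm{1}[h(x)\neq f(x)]$. Since $\ell_h$ is obtained from $h$ by XORing with the fixed function $f$, one checks directly that $\VC(L_H)\leq \VC(H)$: any set shattered by $L_H$ is also shattered by $H$ (flip coordinates where $f=1$). Under $\DDD$, $\E[\ell_h]=\error_\DDD(h,f)$ and $\frac{1}{m}\sum_i \ell_h(x_i)=\error_S(h)$, so the claim becomes a uniform law of large numbers for $L_H$ with $\VC$-dimension at most $d:=\VC(H)$.

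Next I would carry out the standard symmetrization argument. Let $S'=(x'_1,\dots,x'_m)$ be an independent ghost sample from $\DDD$. The first symmetrization step shows that, provided $m\geq 2/\alpha^2$ (so that for any fixed $h$ the empirical mean on $S'$ concentrates within $\alpha/2$ of its expectation with probability at least $1/2$),
\[
\Pr\!\Big[\sup_{h\in H}\big|\error_\DDD(h,f)-\error_S(h)\big|>\alpha\Big]\leq 2\,\Pr\!\Big[\sup_{h\in H}\big|\error_{S'}(h)-\error_S(h)\big|>\tfrac{\alpha}{2}\Big].
\]
The right-hand event depends only on the $2m$ points $S\cup S'$. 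A second symmetrization introduces i.i.d.\ random signs (equivalently, a uniform random swap of each coordinate with its ghost): after conditioning on the multiset of $2m$ points, $\error_{S'}(h)-\error_S(h)$ has the same distribution as $\frac{1}{m}\sum_{i=1}^m \sigma_i(\ell_h(x'_i)-\ell_h(x_i))$ with $\sigma_i\in\{\pm1\}$ uniform.

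Now I condition on the pooled sample $T=(x_1,\dots,x_m,x'_1,\dots,x'_m)$ and invoke Sauer's lemma. The number of distinct $\{0,1\}^{2m}$-patterns realized by $\{\ell_h:h\in H\}$ on $T$ is at most $\Pi_{L_H}(2m)\leq (2em/d)^d$. For each such pattern, $\frac{1}{m}\sum_i \sigma_i(\ell_h(x'_i)-\ell_h(x_i))$ is an average of bounded $\pm 1$-valued independent terms, so Hoeffding's inequality gives that it exceeds $\alpha/2$ in absolute value with probability at most $2\exp(-m\alpha^2/8)$. A union bound over the at most $(2em/d)^d$ patterns, followed by the symmetrization factor $2$, yields
\[
\Pr\!\Big[\sup_{h\in H}\big|\error_\DDD(h,f)-\error_S(h)\big|>\alpha\Big]\leq 4\left(\tfrac{2em}{d}\right)^{d}\exp\!\left(-\tfrac{m\alpha^2}{8}\right).
\]

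The final step is to choose $m$ so that the right-hand side is at most $\beta$. Taking logarithms, it suffices to require
\[
\tfrac{m\alpha^2}{8}\geq d\ln(2em/d)+\ln(4/\beta).
\]
The main (mildly annoying) obstacle is that $m$ appears on both sides, so one solves the transcendental inequality by the standard trick: use $\ln(2em/d)\leq \ln(1/\alpha)+\ln(2em\alpha/d)$ and then bound $\ln(2em\alpha/d)\leq m\alpha^2/(16d)\cdot\text{(small constant)}$ for $m$ of the claimed order, which lets one absorb the $\ln m$ term into half of the LHS. A direct verification then shows that $m\geq \tfrac{50\,\VC(H)}{\alpha^2}\ln\!\big(\tfrac{1}{\alpha\beta}\big)$ is sufficient, establishing the stated bound.
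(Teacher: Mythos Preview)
The paper does not prove this theorem; it is quoted as a known result from the cited references \cite{Anthony2009,Anthony93}, so there is no ``paper's own proof'' to compare against. Your symmetrization\,+\,Sauer\,+\,Hoeffding argument is exactly the standard route used in those references (up to constants), and the reduction via the XOR-ed loss class $L_H$ with $\VC(L_H)\le\VC(H)$ is the right way to handle the fixed but arbitrary labeling $f$; the only thing to watch is the final arithmetic showing that the specific constant $50$ suffices, which requires a slightly careful bootstrapping of the $\ln(2em/d)$ term but goes through.
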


Notice that in the agnostic case the sample complexity is proportional to $\frac{1}{\alpha^2}$, as opposed to $\frac{1}{\alpha}$ when learning a class $C$ using $C$.

\subsection{Private Learning}\label{sec:PPAC}
In private learning, we would like to accomplish the same goal as in non-private learning, while protecting the privacy of the input database.
\begin{definition}[Private PAC Learning~\cite{KLNRS08}]
Let $A$ be an algorithm that gets an input $\db =\{z_1,\ldots,z_m\}$. Algorithm $A$ is an {\em $(\alpha,\beta,\epsilon,\delta,m)$-PPAC learner} for a concept
class $C$ over $X$ using hypothesis class $H$ if
\begin{description}
\item{\sc Privacy.} Algorithm $A$ is $(\epsilon,\delta)$-differentially private (as in \defref{dp});
\item{\sc Utility.} Algorithm $A$ is an {\em $(\alpha,\beta,m)$-PAC learner} for $C$ using $H$ (as in \defref{PAC}).
\end{description}
When $\delta=0$ (pure privacy) we omit it from the list of parameters.
\end{definition}

Note that the utility requirement in the above definition is an average-case requirement, as the learner is only required to do well on typical samples (i.e., samples drawn i.i.d. from a distribution $\DDD$ and correctly labeled by a target concept $c\in C$). In contrast, the privacy requirement is a worst-case requirement, and Inequality~(\ref{eqn:diffPrivDef}) must hold for every pair of neighboring databases (no matter how they were generated, even if they are not consistent with any concept in $C$).

\subsection{Sanitization}
Given a database $S=(x_1,\ldots,x_m)$ containing elements from some domain $X$, the goal of {\em sanitization mechanisms} is to output (while preserving differential privacy) another database $\hat{S}$ that is in some sense similar to $S$. This returned database $\hat{S}$ is called a {\em sanitized} database.

Let $c:X\rightarrow\{0,1\}$ be a concept. The counting query
$Q_c:X^*\rightarrow[0,1]$ is 
$$Q_c(\db) = \frac{1}{|\db|}\cdot \Big|\{i \,:\,  c(x_i) =1\} \Big|.$$
That is, $Q_c(\db)$ is the fraction of the entries in $\db$ that satisfy the concept $c$. Given a database $S$, a sanitizer for a concept class $C$ is required to output a sanitized database $\hat{\db}$ s.t. $Q_c(\db) \approx Q_c(\hat{\db})$ for every $c\in C$.
For computational reasons, sanitizers are sometimes allowed not to return an actual database, but rather a data structure capable of approximating $Q_c(\db)$ for every $c\in C$.

\begin{definition} 
Let $C$ be a concept class and let $\db$ be a database. A function $\est:C\rightarrow [0,1]$ is called {\em $\alpha$-close} to $\db$ if $|Q_c(\db)-\est(c)|\leq\alpha$ for every $c\in C$.
If, furthermore, $\est$ is defined in terms of a database $\hat\db$, i.e., $\est(c)=Q_c(\hat\db)$, we say that $\hat\db$ is $\alpha$-close to $\db$.
\end{definition}

\begin{definition}[Sanitization \cite{BLR08full}]
Let $C$ be a class of concepts mapping $X$ to $\{0,1\}$. Let $A$ be an algorithm that on an input database $S\in X^*$ outputs a description of a function $\est:C\rightarrow [0,1]$. Algorithm $A$ is an $(\alpha,\beta,\epsilon,\delta,m)$-improper-sanitizer for predicates in the class $C$, if
\begin{enumerate}
\item $A$ is $(\epsilon,\delta)$-differentially private;
\item For every input $\db\in X^m$, it holds that
$ \Pr\limits_A\left[\mbox{\rm $\est$ is $\alpha$-close to $\db$}\right]\geq 1-\beta.$
\end{enumerate}
The probability is over the coin tosses of algorithm $A$.
If on an input database $\db$ algorithm $A$ outputs another database $\hat{S}\in X^*$, and $\est(\cdot)$ is defined as $\est(c)=Q_c(\hat{S})$, then algorithm $A$ is called a {\em proper-sanitizer} (or simply a {\em sanitizer}).
As before, when $\delta=0$ (pure privacy) we omit it from the set of parameters.
\end{definition}

\begin{remark}\label{improperProperSanitization}
Note that without the privacy requirements sanitization is a trivial task as it is possible to simply output the input database $\db$. Furthermore, ignoring computational complexity, an $(\alpha,\beta,\epsilon,\delta,m)$-improper-sanitizer can always be transformed into a $(2\alpha,\beta,\epsilon,\delta,m)$-sanitizer, by finding a database $\hat{\db}$ of $m$ entries that is $\alpha$-close to $\est$. Such a database must exist except with probability $\beta$ (as in particular $S$ is $\alpha$-close to $\est$), and is $2\alpha$-close to $S$ (by the triangle inequality).
\end{remark}

The following theorems state some of the known results on the sample complexity of pure-privacy sanitizers. We start with an upper bound on the necessary sample complexity.

\begin{theorem}[Blum et al.~\cite{BLR08full}]\label{thm:BlumUp}
There exists a constant $\Gamma$ such that
for any class of predicates $C$ over a domain $X$, and any parameters $\alpha,\beta,\epsilon$,
there exists an $(\alpha,\beta,\epsilon,m)$-sanitizer for $C$, provided that the size of the database, denoted $m$, is at least 
$$m\geq \Gamma\left(\frac{\log|X|\cdot \VC(C)\cdot\log(1/\alpha)}{\alpha^3\epsilon}+\frac{\log(1/\beta)}{\epsilon\alpha}\right).$$
The algorithm might not be efficient.
\end{theorem}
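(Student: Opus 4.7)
The plan is to combine the exponential mechanism of McSherry and Talwar with a VC-dimension-based approximation argument that lets the mechanism range over a small finite universe of candidate outputs.

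\textbf{Step 1 (VC-based discretization).} View the input $S \in X^m$ as inducing the uniform distribution $\DDD_S$ over $X$. Set $k = O\!\left(\VC(C)\log(1/\alpha)/\alpha^2\right)$. By the agnostic VC generalization bound (\thmref{generalization}) applied to $\DDD_S$ and $C$, a sample of $k$ elements drawn i.i.d.\ from $\DDD_S$ satisfies $|Q_c(T)-Q_c(S)|\leq \alpha/2$ simultaneously for every $c\in C$ with positive probability. Consequently a multiset $T^\star$ of size $k$ over $X$ that is $(\alpha/2)$-close to $S$ with respect to $C$ must \emph{exist}. This is purely an existence statement and is the only non-algorithmic ingredient.

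\textbf{Step 2 (exponential mechanism and utility).} Let $\mathcal{H} \subseteq X^k$ be the set of all $k$-element multisets over $X$, so $\log|\mathcal{H}|\leq k\log|X|$, and define the quality function $q(S,T) = -\max_{c\in C}|Q_c(S)-Q_c(T)|$. Changing one entry of $S$ shifts each $Q_c(S)$ by at most $1/m$, so $q(\cdot,T)$ has sensitivity $1/m$. The algorithm $A$ samples $\hat S \in \mathcal{H}$ using the exponential mechanism with privacy parameter $\epsilon$ and quality $q$; $\epsilon$-differential privacy is immediate, and $A$ is a proper sanitizer since it returns a database in $X^*$. The standard utility guarantee of the exponential mechanism, together with Step 1, yields with probability at least $1-\beta$,
$$q(S,\hat S) \;\geq\; q(S,T^\star) - \frac{2}{\epsilon m}\bigl(\log|\mathcal{H}| + \log(1/\beta)\bigr) \;\geq\; -\frac{\alpha}{2} - \frac{2}{\epsilon m}\bigl(k\log|X| + \log(1/\beta)\bigr).$$
Requiring this to be at least $-\alpha$ gives $m \geq \frac{4}{\alpha\epsilon}\bigl(k\log|X|+\log(1/\beta)\bigr)$, and substituting the value of $k$ reproduces the stated bound for a suitable absolute constant $\Gamma$.

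\textbf{Main obstacle.} The crux is Step 1: showing that an arbitrary database $S$ admits a sublinear-size approximation that preserves all (possibly infinitely many) counting queries in $C$ to within $\alpha/2$. Without this compression the exponential mechanism would have to range over a universe of size $|X|^m$, and the $\log|\mathcal{H}|$ term in its utility bound would dominate $\alpha$. The double-sampling / growth-function machinery underlying \thmref{generalization} is precisely what shrinks the search space to $|X|^k$. Steps~2 applies the exponential mechanism almost routinely, at the cost of enumerating $\mathcal{H}$, whose size is $|X|^{\poly(\VC(C)/\alpha)}$ — explaining why the theorem guarantees only existence and not efficiency.
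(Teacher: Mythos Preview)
Your proof is correct and is precisely the original argument of Blum, Ligett, and Roth; note, however, that the present paper does not prove \thmref{BlumUp} at all---it merely cites it as a preliminary result from~\cite{BLR08full} (and restates the discretization step separately as \thmref{BlumShrink}). So there is no ``paper's own proof'' to compare against here; you have faithfully reconstructed the source argument.
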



The above theorem states that, in principle, data sanitization is possible. The input database may be required to be as big as the representation size of elements in $X$.
The next theorem states a general lower bound (far from the above upper bound) on the sample complexity of any concept class $C$. Better bounds are known for specific concept classes~\cite{Moritz}.

\begin{theorem}[Blum et al.~\cite{BLR08full}]\label{thm:BlumLow}
Let $C$ be a class of predicates, and let $m\leq\frac{\VC(C)}{2}$.
For any $0<\beta<1$ bounded away from 1 by a constant,
for any $\epsilon\leq1$, if $A$ is an $(\alpha,\beta,\epsilon,m)$-sanitizer for $C$,
then $\alpha\geq\frac{1}{4+16\epsilon}$.
\end{theorem}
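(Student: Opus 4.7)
The idea is to exploit the large VC dimension to construct databases of size $m$ that differ dramatically on some query in $C$, then argue that distinguishing them would violate differential privacy unless the accuracy parameter $\alpha$ is bounded away from $0$.

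First I would set up the packing. Since $\VC(C)\geq 2m$, fix a set $B\subseteq X$ of size $2m$ shattered by $C$, and split $B$ into two disjoint halves $B_1,B_2$ of size $m$. By shattering, there is a concept $c\in C$ with $c(x)=1$ iff $x\in B_1$. Let $S_1\in X^m$ be the database whose entries are exactly the elements of $B_1$, and $S_2\in X^m$ whose entries are exactly the elements of $B_2$. Then $Q_c(S_1)=1$ and $Q_c(S_2)=0$, and $S_1$ and $S_2$ differ in all $m$ coordinates.

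Next I would combine the utility guarantee with group privacy. The accuracy requirement forces, with probability $\geq 1-\beta$, the estimate $\est(c)$ to lie in $[1-\alpha,1]$ on input $S_1$ and in $[0,\alpha]$ on input $S_2$; for $\alpha$ small these regions are disjoint, so letting $F=\{\est:\est(c)\geq \tfrac{1}{2}\}$ gives $\Pr[A(S_1)\in F]\geq 1-\beta$ and $\Pr[A(S_2)\in F]\leq \beta$. Since $S_1,S_2$ are at Hamming distance $m$, pure $\epsilon$-differential privacy (iterated $m$ times) yields $\Pr[A(S_1)\in F]\leq e^{m\epsilon}\Pr[A(S_2)\in F]$, and rearranging gives a tension between $\alpha$, $\beta$, $m$, and $\epsilon$. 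To turn this into the clean lower bound $\alpha\geq 1/(4+16\epsilon)$ that is uniform in $m$, I would refine the two-database argument to a hybrid chain: define databases $S^{(k)}$ with the first $m-k$ entries from $B_1$ and the last $k$ from $B_2$, so $Q_c(S^{(k)})=(m-k)/m$. Accuracy on each $S^{(k)}$ pins $\est(c)$ into an interval of half-width $\alpha$ around $(m-k)/m$, and any single output lies in at most $2m\alpha+1$ of these intervals. Neighboring hybrids are one-entry apart, so group privacy between $S^{(0)}$ and $S^{(k)}$ costs a factor $e^{k\epsilon}$; summing the resulting lower bounds on $\Pr[A(S^{(0)})\in F_k]$ and comparing to the packing bound $2m\alpha+1$ gives the desired estimate after tracking constants in the geometric sum.

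The main obstacle is exactly this last step: the naive two-database comparison only produces a lower bound on $m\epsilon$ (of the form $m\epsilon\gtrsim \log((1-\beta)/\beta)$) rather than on $\alpha$, and so is useless when $m$ is allowed to be as large as $\VC(C)/2$. What does the real work is the hybrid/packing calculation, which must simultaneously control (i) the number of mutually disjoint accuracy intervals that a random output can satisfy and (ii) the $e^{k\epsilon}$ privacy cost of walking from $S^{(0)}$ to $S^{(k)}$. Balancing these two quantities carefully is what produces the $\tfrac{1}{4+16\epsilon}$ constant and makes the bound insensitive to the choice of $m\leq \VC(C)/2$.
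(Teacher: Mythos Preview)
The paper does not give its own proof of this statement: Theorem~\ref{thm:BlumLow} is quoted from Blum, Ligett, and Roth~\cite{BLR08full} and used as a black box (in Theorem~\ref{thm:genSanLower} and Theorem~\ref{thm:sanPac}). So there is no in-paper proof to compare your proposal against.

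That said, your hybrid/packing outline is the standard argument for this bound and is essentially correct. A few remarks. First, you do not need a shattered set of size $2m$; a shattered set of size $m$ already suffices, since you only need one concept $c$ separating the two halves of a chain of databases---your chain $S^{(0)},\dots,S^{(m)}$ uses $m$ points total, and one predicate $c$ with $c(b_i)=1$ for the $B_1$-points and $0$ for the $B_2$-points. Second, your own diagnosis of the obstacle is exactly right: the two-database comparison alone only yields $m\epsilon\gtrsim\log\frac{1-\beta}{\beta}$, which says nothing about $\alpha$. The bound on $\alpha$ comes precisely from combining (i) the overlap count ``a single estimate $\est(c)$ can be $\alpha$-close to at most $2\alpha m+1$ of the values $k/m$'' with (ii) the summed lower bounds $\Pr[A(S^{(0)})\in G_k]\ge e^{-k\epsilon}(1-\beta)$, giving
\[
(1-\beta)\sum_{k=0}^{m}e^{-k\epsilon}\;\le\;2\alpha m+1.
\]
Bounding the geometric sum below by $\min\{m+1,\,1/(1-e^{-\epsilon})\}$ and using $1-e^{-\epsilon}\le\epsilon$ is what produces a constant lower bound on $\alpha$ of the form $\Theta(1/(1+\epsilon))$; the specific constant $1/(4+16\epsilon)$ is just the outcome of tracking these inequalities with the stated assumptions on $\beta$ and $\epsilon$. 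You have identified all of the moving parts; what remains is only bookkeeping.
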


Recall that a proper sanitizer operates on an input database $S\in X^m$, and outputs a sanitized database $\hat{S}\in X^*$.
The following is a simple corollary of Theorem~\ref{thm:generalization}, stating that the size of $\hat{S}$ does not necessarily depend on the size of the input database $S$.

\begin{theorem}\label{thm:BlumShrink}
Let $C$ be a concept class. For any database $\db$ there exists a database $\hat{\db}$ of size $n=O(\frac{\VC(C)}{\alpha^2}\log(\frac{1}{\alpha}))$ such that $\max_{h\in C}|Q_h(\db)-Q_h(\hat{\db})|\leq\alpha$.
\end{theorem}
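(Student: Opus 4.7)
The plan is to invoke the probabilistic method, drawing $\hat{S}$ as an i.i.d.\ sample from the uniform distribution on $S$ and applying the agnostic VC generalization bound (Theorem~\ref{thm:generalization}).

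First I would let $\DDD$ denote the uniform distribution over the multiset of entries of the input database $\db=(x_1,\ldots,x_m)$, so that for every $h\in C$ one has $Q_h(\db)=\Pr_{x\sim\DDD}[h(x)=1]$. Next I would introduce the (arbitrary) ``target'' function $f\equiv 0$ on $X$. Since Theorem~\ref{thm:generalization} does not require $f\in H$, this is allowed, and the definitions give $\error_\DDD(h,f)=\Pr_{x\sim\DDD}[h(x)\neq 0]=Q_h(\db)$.

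Now I would draw a sample $\hat{\db}=(\hat{x}_1,\ldots,\hat{x}_n)$ with $n=\frac{50\VC(C)}{\alpha^2}\ln\!\big(\tfrac{2}{\alpha}\big)=O\!\big(\frac{\VC(C)}{\alpha^2}\log\frac{1}{\alpha}\big)$ points i.i.d.\ from $\DDD$, and label each point by $f=0$. Under this labeling, $\error_{\hat{\db}}(h)=\frac{1}{n}|\{i:h(\hat{x}_i)\neq 0\}|=Q_h(\hat{\db})$. Applying Theorem~\ref{thm:generalization} with hypothesis class $H=C$, distribution $\DDD$, target $f$, and confidence $\beta=\tfrac{1}{2}$, the chosen $n$ satisfies the hypothesis of the theorem, and hence with probability at least $\tfrac{1}{2}$ over the draw of $\hat{\db}$,
\[
\forall h\in C:\quad \big|Q_h(\db)-Q_h(\hat{\db})\big|=\big|\error_\DDD(h,f)-\error_{\hat{\db}}(h)\big|\leq\alpha.
\]
Since this event has positive probability, there must exist some realization $\hat{\db}$ of size $n$ satisfying the bound, proving the claim.

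There is no real obstacle here; the only minor subtlety is observing that $Q_h(\db)$ and $Q_h(\hat{\db})$ can be rewritten as a generalization error and an empirical error with respect to the constant-zero ``concept,'' so that Theorem~\ref{thm:generalization} applies verbatim and turns a statement about approximating a distribution into a statement about approximating the empirical counts on the original database.
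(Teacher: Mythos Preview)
Your proof is correct and is precisely the argument the paper has in mind: the theorem is stated as ``a simple corollary of Theorem~\ref{thm:generalization},'' and your reduction---taking $\DDD$ uniform on $S$, the constant-zero target $f$, and invoking the probabilistic method with $\beta=\tfrac12$---is exactly the intended derivation.
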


In particular, the above theorem implies that an $(\alpha,\beta,\epsilon,\delta,m)$-sanitizer  $A$ can always be transformed into a $(2\alpha,\beta,\epsilon,\delta,m)$-sanitizer $A'$ s.t. the sanitized databases returned by $A'$ are always of fixed size $n=O(\frac{\VC(C)}{\alpha^2}\log(\frac{1}{\alpha}))$.
This can be done by finding a database $\hat{S}$ of $n$ entries that is $\alpha$-close to the sanitized database returned by $A$.
Using the triangle inequality, $\hat{S}$ is (w.h.p.) $2\alpha$-close to the input database.

\subsection{Basic Differentially-Private Mechanisms}\label{sec:PreTools}

\subsubsection{The Laplace Mechanism}
The most basic constructions of differentially private algorithms are via the Laplace mechanism as follows.

\begin{definition}[The Laplace Distribution]
A random variable has probability distribution $\Lap(b)$ if its probability density function is $f(x)=\frac{1}{2b}\exp(-\frac{|x|}{b})$, where $x\in\R$.
\end{definition}

\begin{definition}[Sensitivity]
A function $f:X^m \rightarrow \R^n$ has {\em sensitivity $k$} if for every neighboring $D,D'\in X^m$, it holds that $||f(D)-f(D')||_1\leq k$.
\end{definition}

\begin{theorem}[The Laplacian Mechanism \cite{DMNS06}]\label{thm:lap}
Let $f:X^m \rightarrow \R^n$ be a sensitivity $k$ function. The mechanism $A$ that on input $D\in X^m$ 
adds independently generated noise with distribution $\Lap(\frac{k}{\epsilon})$ to each of the $n$ output terms of $f(D)$ preserves $\epsilon$-differential privacy. Moreover,
$$\Pr\Big[\exists i \; s.t. \; |A_i(D)-f_i(D)|>\Delta\Big]\leq n\cdot\exp\left(-\frac{\epsilon \Delta}{k}\right),$$
where $A_i(D)$ and $f_i(D)$ are the $i^{\text th}$ coordinates of $A(D)$ and $f(D)$.
\end{theorem}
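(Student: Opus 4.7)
The plan is to handle the two assertions separately: first the $\epsilon$-differential privacy of the mechanism, then the tail bound on the added noise.

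\textbf{Privacy.} For the privacy claim I will work directly with the output density. Let $D,D'\in X^m$ be neighboring and let $y=(y_1,\dots,y_n)\in\R^n$ be any point in the output space. Since the coordinates of the noise are independent $\Lap(k/\epsilon)$ random variables, the density of $A(D)$ at $y$ factors as
\[
p_D(y)=\prod_{i=1}^n \frac{\epsilon}{2k}\exp\!\left(-\frac{\epsilon\,|y_i-f_i(D)|}{k}\right),
\]
and similarly for $p_{D'}(y)$. Taking the ratio, the normalizing constants cancel and I obtain
\[
\frac{p_D(y)}{p_{D'}(y)}=\exp\!\left(\frac{\epsilon}{k}\sum_{i=1}^n\bigl(|y_i-f_i(D')|-|y_i-f_i(D)|\bigr)\right).
\]
Applying the reverse triangle inequality termwise gives $|y_i-f_i(D')|-|y_i-f_i(D)|\leq |f_i(D)-f_i(D')|$, so the exponent is bounded by $(\epsilon/k)\cdot\|f(D)-f(D')\|_1\leq\epsilon$ by the sensitivity hypothesis. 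Hence $p_D(y)\leq e^{\epsilon}p_{D'}(y)$ pointwise, and integrating over any measurable set $\mathcal{F}\subseteq\R^n$ yields the $\epsilon$-differential privacy guarantee of \defref{dp} (with $\delta=0$).

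\textbf{Utility.} For the tail bound I will first analyze a single coordinate. If $Z\sim\Lap(k/\epsilon)$, then a direct integration of the density gives
\[
\Pr[\,|Z|>\Delta\,]=\int_{|z|>\Delta}\frac{\epsilon}{2k}\exp\!\left(-\frac{\epsilon|z|}{k}\right)dz=\exp\!\left(-\frac{\epsilon\Delta}{k}\right).
\]
Since $A_i(D)-f_i(D)$ is distributed as $\Lap(k/\epsilon)$, the per-coordinate failure probability is exactly this. A union bound over the $n$ coordinates yields
\[
\Pr\!\Bigl[\,\exists\,i:\;|A_i(D)-f_i(D)|>\Delta\,\Bigr]\leq n\cdot\exp\!\left(-\frac{\epsilon\Delta}{k}\right),
\]
which is the stated utility bound.

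\textbf{Expected difficulty.} There is no real obstacle here; both arguments are routine. The only place one must be mildly careful is in converting the pointwise density ratio into the probability-of-event inequality required by \defref{dp}; this is immediate once one notes that the Laplace density is everywhere positive, so the ratio form is well defined and integrates against any measurable $\mathcal{F}$.
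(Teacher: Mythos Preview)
Your proof is correct and is the standard argument. Note that the paper does not actually prove this theorem; it is stated in the preliminaries as a known result cited from \cite{DMNS06}, so there is no ``paper's own proof'' to compare against. Your density-ratio argument for privacy and the direct Laplace tail computation plus union bound for utility are exactly the textbook derivation.
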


\subsubsection{The Exponential Mechanism}
We next describe the exponential mechanism of McSherry and Talwar~\cite{MT07}.
Let $X$ be a domain and $H$ a set of solutions.
Given a quality function $q:X^*\times H \rightarrow\N$, and a database $S\in X^*$, the goal is to chooses a solution $h\in H$ approximately maximizing $q(S,h)$. The mechanism chooses a solution probabilistically, where the probability mass that is assigned to each solution $h$ increases exponentially with its quality $q(S,h)$:

\begin{center}
\noindent\fbox{
\parbox{.95\columnwidth}{
{\bf Input:} parameter $\epsilon$, finite solution set $H$, database $S\in X^m$, and a sensitivity 1 quality function $q$.
\begin{enumerate}
	\item Randomly choose $h \in H$ with probability
	$\frac{\exp\left(\epsilon \cdot q(S,h) /2 \right)}{\sum_{f\in H}\exp\left(\epsilon \cdot q(S,f) /2 \right)}.$
	\item Output $h$.
\end{enumerate}
}}
\end{center}

\begin{proposition}[Properties of the Exponential Mechanism]\label{prop:expMech}
(i) The exponential mechanism is $\epsilon$- differentially private. (ii)
Let $\hat{e}\triangleq\max_{f\in H}\{q(S,f)\}$ and $\Delta>0$. The exponential mechanism outputs a solution $h$ such that $q(S,h)\leq(\hat{e} - \Delta m)$ with probability at most $|H| \cdot \exp(-\epsilon \Delta m /2)$.
\end{proposition}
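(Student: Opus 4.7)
For part (i), the plan is a direct computation of the ratio of output probabilities on neighboring databases. Fix neighboring $S, S' \in X^m$ and a target output $h^* \in H$. The ratio $\Pr[A(S)=h^*]/\Pr[A(S')=h^*]$ factors as the product of two terms: the ratio of the numerators $\exp(\epsilon q(S,h^*)/2)/\exp(\epsilon q(S',h^*)/2)$, which is bounded by $\exp(\epsilon/2)$ because $q$ has sensitivity $1$ so $|q(S,h^*)-q(S',h^*)|\le 1$; and the (inverted) ratio of the normalizing sums, which I bound by applying the same sensitivity-$1$ inequality term-by-term to show $\sum_{f}\exp(\epsilon q(S',f)/2) \le \exp(\epsilon/2)\cdot\sum_{f}\exp(\epsilon q(S,f)/2)$. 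Multiplying gives an overall ratio bounded by $\exp(\epsilon)$, establishing pure $\epsilon$-differential privacy.

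For part (ii), the plan is to bound the probability mass the mechanism assigns to ``bad'' solutions directly from the definition. Let $H_{\bad} = \{h \in H : q(S,h) \le \hat{e} - \Delta m\}$ be the set of outputs that fail the utility requirement. Then
\[
\Pr[A(S) \in H_{\bad}] \;=\; \frac{\sum_{h\in H_{\bad}} \exp(\epsilon q(S,h)/2)}{\sum_{f\in H}\exp(\epsilon q(S,f)/2)}.
\]
I would upper bound the numerator by $|H_{\bad}|\cdot\exp(\epsilon(\hat{e}-\Delta m)/2) \le |H|\cdot\exp(\epsilon(\hat{e}-\Delta m)/2)$, using the defining inequality of $H_{\bad}$. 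For the denominator, since $\hat{e}$ is attained by some $f^* \in H$, the sum is at least $\exp(\epsilon \hat{e}/2)$. Taking the ratio cancels $\exp(\epsilon\hat{e}/2)$ and yields the claimed bound $|H|\cdot\exp(-\epsilon\Delta m/2)$.

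Neither step presents a real obstacle; both are standard arguments that appear in the original McSherry--Talwar paper. If anything requires care, it is merely being explicit that the sensitivity-$1$ assumption on $q$ is what makes the term-by-term bound work in the normalization of part (i), and that one may freely choose the singleton $\{f^*\}$ to lower bound the denominator in part (ii) rather than trying to reason about the full sum.
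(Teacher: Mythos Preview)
Your proposal is correct and follows the standard argument. Note, however, that the paper does not actually supply its own proof of this proposition: it is stated as a known result attributed to McSherry and Talwar~\cite{MT07}, so there is nothing in the paper to compare against. Your argument is precisely the textbook one, and both parts go through exactly as you describe.
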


Kasiviswanathan et al.~\cite{KLNRS08} showed in 2008 that the exponential mechanism can be used as a generic private learner -- when used with the quality function $q(S,h)=|\{i:h(x_i)=y_i\}|$, the probability that the exponential mechanism outputs a hypothesis $h$ such that $\error_S(h)>\min_{f\in H}\{\error_S(f)\} + \Delta$ is at most $|H| \cdot \exp(-\epsilon \Delta m /2)$. This results in a generic private proper-learner for every finite concept class $C$, with sample complexity $O_{\alpha,\beta,\epsilon}(\log|C|)$.

\subsubsection{Stability and Privacy -- $\AAA_{\rm dist}$}
We restate a simplified variant of algorithm $\AAA_{\rm dist}$ by Smith and Thakurta~\cite{ADist}, which is an instantiation of the Propose-Test-Release framework~\cite{DworkLei}. 
Let $q:X^*\times H\rightarrow\N$ be a sensitivity-1 quality function
over a domain $X$ and a set of solutions $H$.
Given a database $S\in X^*$, the goal is to choose a solution $h\in H$ maximizing $q(S,h)$, under the assumption that the optimal solution $h$ scores much better than any other solution in $H$.
\begin{center}
\noindent\fbox{
\parbox{.95\columnwidth}{
{\bf Algorithm $\AAA_{\rm dist}$} \\
{\bf Input:} parameters $\epsilon,\delta$, database $S\in X^*$, sensitivity-1 quality function $q$.
\begin{enumerate}[topsep=-3pt, rightmargin=5pt]
\item Let $h_1 \neq h_2$ be two highest score solutions in $H$, where $q(S,h_1)\geq q(S,h_2)$.
\item Let ${\rm gap}=q(S,h_1) - q(S,h_2)$ and ${\rm gap}^*={\rm gap}+\Lap(\frac{1}{\epsilon})$.
\item If ${\rm gap}^*<\frac{1}{\epsilon}\log(\frac{1}{\delta})$ then output $\bot$ and halt.
\item Output $h_1$.\\
\end{enumerate}
}}
\end{center}

\begin{proposition}[Properties of $\AAA_{\rm dist}$~\cite{ADist}]\label{prop:aDist}
(i) Algorithm $\AAA_{\rm dist}$ is $(\epsilon,\delta)$- differentially private. (ii) 
When given an input database $S$ for which ${\rm gap}\geq\frac{1}{\epsilon}\log(\frac{1}{\beta\delta})$, algorithm $\AAA_{\rm dist}$ outputs $h_1$ maximizing $q(h,S)$ with probability at least $(1-\beta)$.
\end{proposition}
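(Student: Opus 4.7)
My plan is to prove the two claims separately: utility (ii) via a direct Laplace tail bound, and privacy (i) via a Propose-Test-Release style case analysis.

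For the utility claim (ii), I would observe that the algorithm outputs $h_1$ precisely when $\Lap(1/\epsilon) \geq \frac{1}{\epsilon}\log(1/\delta) - {\rm gap}$. Under the hypothesis ${\rm gap} \geq \frac{1}{\epsilon}\log\frac{1}{\beta\delta}$, the right-hand side is at most $-\frac{1}{\epsilon}\log(1/\beta)$. Applying the standard Laplace tail $\Pr[\Lap(1/\epsilon) \leq -t] = \frac{1}{2}e^{-\epsilon t}$ at $t = \frac{1}{\epsilon}\log(1/\beta)$ gives a failure probability of at most $\beta/2 \leq \beta$, so $h_1$ is output with probability at least $1-\beta$.

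For the privacy claim (i), I would first establish two structural facts about $g(S) := q(S,h_1) - q(S,h_2)$. Sensitivity: for neighboring $S, S'$, each $q(\cdot,h)$ has sensitivity $1$, so the maximum and the second-maximum over $h$ can each shift by at most $1$, giving $|g(S)-g(S')| \le 2$. Stability: whenever $g(S) \geq 3$ and $S'$ is a neighbor of $S$, we have $q(S', h_1(S)) - q(S', h) \geq g(S) - 2 > 0$ for every $h \neq h_1(S)$, so the top hypothesis at $S'$ coincides with the one at $S$. I would then fix neighbors $S, S'$ and an event $F \subseteq H \cup \{\bot\}$, and split into two regimes determined by the true gap. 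In the stable regime (where the gap is comfortably above the threshold), $h_1$ is the same on $S$ and $S'$, and the whole algorithm becomes a post-processing of a Laplace test on a low-sensitivity quantity, yielding a ratio bounded by $e^\epsilon$. In the unstable regime (where the gap is small on $S$), the probability that the test passes is at most $\Pr[\Lap(1/\epsilon) \geq \frac{1}{\epsilon}\log(1/\delta) - g(S)] \leq \delta/2$, which is absorbed into the additive $\delta$ slack of the $(\epsilon,\delta)$-DP definition. Combining the two regimes gives $\Pr[A(S) \in F] \leq e^\epsilon \Pr[A(S') \in F] + \delta$.

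The main obstacle will be cleanly obtaining an $e^\epsilon$ rather than $e^{2\epsilon}$ factor in the stable regime, since $g$ has sensitivity $2$ rather than $1$. I would handle this by reformulating the threshold test in terms of the distance-to-instability $d(S)$, defined as the largest $k$ such that the top hypothesis is unchanged on all databases within Hamming distance $k$ of $S$. This $d(S)$ has sensitivity exactly $1$ and satisfies $g(S) \geq 2d(S)+1$, so the threshold event on $g$ can be re-expressed as a threshold event on $d$, for which the Laplace mechanism gives the clean $e^\epsilon$ ratio. The remaining bookkeeping — choosing the boundary between the regimes so that the unstable-regime ``pass'' probability is bounded by $\delta/2$ on both $S$ and $S'$ — is then a routine computation using the Laplace CDF and the sensitivity-$1$ property of $d$.
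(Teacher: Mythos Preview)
The paper does not actually prove this proposition; it is stated as a citation to prior work~\cite{ADist}, so there is no ``paper's own proof'' to compare against. I will therefore assess your argument on its merits.

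Your utility argument (ii) is correct and is the standard one-line Laplace tail computation.

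Your privacy argument (i) has a genuine gap. You correctly observe that ${\rm gap}(S)=q(S,h_1)-q(S,h_2)$ has sensitivity~$2$, not~$1$ (indeed, for neighbors $S,S'$ one can have $q(\cdot,h_1)$ drop by $1$ and $q(\cdot,h_2)$ rise by $1$ simultaneously, moving the gap by $2$). But your proposed fix---``reformulating the threshold test in terms of the distance-to-instability $d(S)$''---does not work for the algorithm \emph{as written}. The mechanism adds its Laplace noise to ${\rm gap}(S)$, not to $d(S)$; the event $\{{\rm gap}(S)+\Lap(1/\epsilon)\ge\tau\}$ is not the same event as $\{d(S)+\Lap(1/\epsilon)\ge\tau'\}$ for any threshold $\tau'$, because ${\rm gap}$ and $d$ are related only by an inequality (and even the near-equality $d(S)=\lceil {\rm gap}(S)/2\rceil-1$ would rescale the noise, yielding $\Lap(1/(2\epsilon))$ on $d$ rather than $\Lap(1/\epsilon)$). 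You cannot change which quantity the noise was added to after the fact; privacy must be argued for the actual randomized map $S\mapsto A(S)$.

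If you carry your own stable/unstable case split through honestly using ${\rm gap}$, the Laplace ratio in the stable regime is $e^{2\epsilon}$, and you obtain $(2\epsilon,\delta)$-differential privacy for this simplified variant. That is the correct conclusion for the algorithm exactly as described here; the original mechanism in~\cite{ADist} achieves $(\epsilon,\delta)$ by testing the sensitivity-$1$ quantity $d(S)$ directly rather than the gap. For the purposes of this paper the factor of~$2$ is immaterial, but your write-up should either (a) prove $(2\epsilon,\delta)$ for the stated variant, or (b) note that replacing ${\rm gap}$ by $d(S)$ in Step~2 yields the claimed $(\epsilon,\delta)$ via the sensitivity-$1$ analysis you sketch.
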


\remove{
\subsubsection{The Sparse Vector Technique}
Consider a large number of low-sensitivity functions $f_1,f_2,\ldots$, which are given (one by one) to a data curator (holding a database $D$).
In every round, we would like to receive, in a differentially private manner, an approximation to $f_i(D)$.

Even if all those functions are counting queries, the generic construction of Blum et al.~\cite{BLR08full} will not apply here, as the set of queries is unknown a priori, and can even be adaptively chosen.
The trivial solution would be to use the laplacian mechanism to release noisy answers in each round, but
in order to answer $k$ queries (with reasonable utility guaranties), this solution must operate on databases of $\Omega(\sqrt{k})$ elements (using the composition theorem from~\cite{DRV10}).

Dwork, Naor, Reingold, Rothblum, and Vadhan~\cite{DNRRV09} presented a simple (and elegant) tool for such a scenario, in a case where we only care about ``meaningful'' or ``high enough'' answers. In other words, we have some threshold $T$ and we only care to receive answers to queries s.t. $f_i(D)\geq T$.

\begin{center}
\noindent\fbox{
\parbox{.95\columnwidth}{
{\bf Algorithm $AboveThreshold$} \\
{\bf Input:} database $S\in X^*$, privacy parameter $\epsilon$, threshold $T$, and a stream of sensitivity-1 queries $f_i:X^*\rightarrow\R$.
\begin{enumerate}[topsep=-3pt,rightmargin=5pt]
\item Let $\hat{T}=T+\Lap(\frac{2}{\epsilon})$.
\item In each round $i$, when receiving a query $f_i$, do the following:
\begin{enumerate}[topsep=-3pt,rightmargin=5pt]
\item Let $\hat{f_i}(S)=f_i(S)+\Lap(\frac{2}{\epsilon})$.
\item If $\hat{f_i}(S)\geq\hat{T}$, then output $a_i=\hat{f_i}(S)$ and halt.
\item Otherwise, output $a_i=\bot$ and proceed to the next iteration.\\
\end{enumerate}
\end{enumerate}
}}
\end{center}

Notice that the number of possible rounds unbounded. Nevertheless, this process preserves pure differential privacy:

\begin{proposition}[Properties of $AboveThreshold$~\cite{DNRRV09,PMW_HR10}]\label{prop:aThresh}
Algorithm $AboveThreshold$ is $\epsilon$-differentially private.
Moreover, when executed on at most $k$ queries $f_1,\ldots,f_k$, algorithm $AboveThreshold$ outputs answers $a_1,\ldots,a_j$ $(j\leq k)$ s.t. with probability at least $(1-\beta)$:
\begin{enumerate}
	\item For every $i$ s.t. $a_i=\bot$ it holds that $f_i(S)\leq T+\Delta$.
	\item For every $i$ s.t. $a_i\in\R$ it holds that $f_i(S)\geq T-\Delta$ and $|a_i-f_i(S)|\leq\Delta$.
\end{enumerate}
where $\Delta=\frac{4}{\epsilon}(\log k + \log(2/\beta))$.
\end{proposition}
}

\subsection{Concentration Bounds}

Let $X_1,\dots,X_n$ be independent random variables where $\Pr[X_i=1]=p$ and $\Pr[X_i=0]=1-p$ for some $0<p<1$. Clearly, $\E[\sumnl_i{X_i}]=pn$. The Chernoff bounds show that the sum is concentrated around this expected value:
\begin{align*}
&\Pr\left[\sumnl_i{X_i}>(1+\delta)pn\right]\leq \exp\left(-pn\delta^2/3\right) \;\;\text{ for } \delta>0,\nonumber\\
&\Pr\left[\sumnl_i{X_i}<(1-\delta)pn\right]\leq \exp\left(-pn\delta^2/2\right) \;\;\text{ for } 0<\delta<1.\nonumber\\
\end{align*}

\section{Learning with Approximate Privacy}

We present proper $(\epsilon,\delta)$-private learners for two simple concept classes, $\point_d$ and $\thresh_d$, demonstrating separations between pure and approximate private proper learning.

\subsection{$(\epsilon,\delta)$-PPAC Learner for $\point_d$}\label{sec:pointLearner}

\begin{definition}
For $j\in X_d$ let $c_j:X_d \rightarrow\{0,1\}$ be defined as $c_j(x)=1$ if $x=j$ and $c_j(x)=0$ otherwise. Define the concept class $\point_d = \{c_j\}_{j\in X_d}$.
\end{definition}

Note that the VC dimension of $\point_d$ is 1, and, therefore, there exists a {\em proper} non-private learner for $\point_d$ with sample complexity $O_{\alpha,\beta}(1)$. 
Beimel et al.~\cite{BBKN12} proved that every {\em proper} $\epsilon$-private learner for $\point_d$ must have sample complexity $\Omega(d)=\Omega(\log|\point_d|)$. They also showed that there exists an {\em improper} $\epsilon$-private learner for this class, with sample complexity $O_{\alpha,\beta,\epsilon}(1)$. 
An alternative private learner for this class was presented in~\cite{BNS13}.

As we will now see, algorithm $\AAA_{\rm dist}$ (defined in Section~\ref{sec:PreTools}) can be used as a {\em proper} $(\epsilon,\delta)$-private learner for $\point_d$ with sample complexity $O_{\alpha,\beta,\epsilon,\delta}(1)$. This is our first (and simplest) example separating the sample complexity of pure and approximate private proper-learners. Consider the following algorithm.
\begin{center}
\noindent\fbox{
\parbox{.95\columnwidth}{
{\bf Input:} parameters $\alpha,\beta,\epsilon,\delta$, and a database $S\in (X_{d+1})^m$.
\begin{enumerate}[topsep=-3pt, rightmargin=5pt]
\item For every $x\in X_d$, define $q(S,x)$ as the number of appearances of $(x,1)$ in $S$.
\item Execute $\AAA_{\rm dist}$ on $S$ with the quality function $q$ and parameters $\frac{\alpha}{2},\frac{\beta}{2},\epsilon,\delta$.
\item If the output was $j$ then return $c_j$.
\item Else, if the output was $\bot$ then return a random $c_i\in\point_d$.\\
\end{enumerate}
}}
\end{center}

\begin{lemma}
Let $\alpha,\beta,\epsilon,\delta$ be s.t. $\frac{1}{\alpha\beta}\leq2^d$. The above algorithm is an efficient $(\alpha,\beta,\epsilon,\delta)$-PPAC proper learner for $\point_d$ using a sample of $m=O\left(\frac{1}{\alpha\epsilon}\ln(\frac{1}{\beta\delta})\right)$ labeled examples.
\end{lemma}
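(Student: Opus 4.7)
The privacy part is essentially free: the algorithm feeds $S$ to $\AAA_{\rm dist}$ with the sensitivity-$1$ quality function $q(S,x)=|\{i : (x_i,y_i)=(x,1)\}|$, and then does only data-independent post-processing on the output (relabeling $j$ as $c_j$, or producing a uniformly random $c_i$ when the output is $\bot$). By Proposition~\ref{prop:aDist}(i), $\AAA_{\rm dist}$ is $(\epsilon,\delta)$-differentially private, and differential privacy is closed under post-processing, so the whole learner is $(\epsilon,\delta)$-differentially private.

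For utility, let $c_j$ be the target concept and $\DDD$ the distribution. Since $S=(x_i,c_j(x_i))_{i=1}^m$, the label $1$ appears only at coordinates where $x_i=j$. Therefore $q(S,j)=|\{i : x_i=j\}|$ while $q(S,x)=0$ for every $x\neq j$. The top two scores in $\AAA_{\rm dist}$ are thus $q(S,j)$ and $0$, so $\mathrm{gap}=q(S,j)$, and $h_1=j$. I split into two cases based on $\DDD(j)$.

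\emph{Case 1: $\DDD(j)\geq \alpha/2$.} By a Chernoff bound on the $m$ i.i.d.\ samples, with probability at least $1-\beta/2$ we have $q(S,j)\geq \DDD(j)\cdot m/2\geq \alpha m/4$. Taking the constant hidden in $m=O\!\bigl(\tfrac{1}{\alpha\epsilon}\ln(\tfrac{1}{\beta\delta})\bigr)$ large enough makes $\alpha m/4\geq \tfrac{1}{\epsilon}\ln(\tfrac{2}{\beta\delta})$, so $\mathrm{gap}\geq \tfrac{1}{\epsilon}\ln(\tfrac{2}{\beta\delta})$. By Proposition~\ref{prop:aDist}(ii), with probability $\geq 1-\beta/2$ algorithm $\AAA_{\rm dist}$ returns $j$, and we output $c_j$, which has error $0$. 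Union-bounding gives a total success probability of at least $1-\beta$.

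\emph{Case 2: $\DDD(j)<\alpha/2$.} If $\AAA_{\rm dist}$ returns $j$, the learner outputs $c_j$ with error $0$, so assume it returns $\bot$ and we sample $c_i$ uniformly from $\point_d$. For any $i$, $\error_\DDD(c_j,c_i)=\DDD(i)+\DDD(j)\leq \DDD(i)+\alpha/2$, so the output is $\alpha$-good provided $\DDD(i)\leq \alpha/2$. The number of ``bad'' domain points $i\in X_d$ with $\DDD(i)>\alpha/2$ is at most $2/\alpha$, and each has probability $1/2^d$ of being picked, so
\begin{equation*}
\Pr[\text{bad }i]\;\leq\;\frac{2/\alpha}{2^d}\;\leq\;2\beta,
\end{equation*}
using the hypothesis $1/(\alpha\beta)\leq 2^d$. (Constants can be absorbed by an up-front rescaling of $\alpha$ and $\beta$.) The main subtlety of the whole argument lives here: when the target is essentially invisible in the sample, $\AAA_{\rm dist}$ has no reason to single $j$ out, and we must rely on the sheer size of $X_d$ together with the promise $1/(\alpha\beta)\leq 2^d$ to ensure a random $c_i$ is $\alpha$-good with high probability. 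Finally, the quality function and the two top-scoring candidates are trivially computable in time $O(m)$, so the learner runs in polynomial time.
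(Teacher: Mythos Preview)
Your proof is correct and follows essentially the same approach as the paper's: both argue privacy via $\AAA_{\rm dist}$ plus post-processing, then split the utility analysis on whether $\DDD(j)$ is large (Chernoff plus Proposition~\ref{prop:aDist}(ii)) or small (count the at most $O(1/\alpha)$ heavy points and use $1/(\alpha\beta)\le 2^d$). Your choice of threshold $\alpha/2$ is in fact slightly cleaner than the paper's $\alpha$, since it makes the bound $\error_\DDD(c_j,c_i)=\DDD(i)+\DDD(j)\le\alpha$ come out exactly rather than with a hidden factor of~$2$.
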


For intuition, consider a target concept $c_j$ and an underlying distribution $\DDD$. Whenever $\DDD(j)$ is noticeable, a typical sample $S$ contains many copies of the point $j$ labeled as $1$. As every other point $i\neq j$ will be labeled as $0$, we expect $q(S,j)$ to be significantly higher than any other $q(S,i)$, and we can use algorithm $\AAA_{\rm dist}$ to identify $j$.

\begin{proof}
As algorithm $\AAA_{\rm dist}$ is $(\epsilon,\delta)$-differentially private, all that remains is the utility analysis.
Fix a target concept $c_\ell\in\point_d$ and a distribution $\DDD$ on $X_d$. In a typical sample $S$, the only point that can appear with the label 1 is $\ell$, and algorithm $\AAA_{\rm dist}$ has two possible outputs: $\ell,\bot$. 

If $\DDD(\ell)>\alpha$ then (using the Chernoff bound), 
with probability at least $\left(1-\exp(-\frac{\alpha m}{8})\right)$, the labeled example $(\ell,1)$ appears in $S$ at least $r=\frac{\alpha m}{2}$ times.
Note that $q(S,\ell)\geq r$, and every $i\neq\ell$ has quality $q(S,i)=0$. For $m\geq\frac{8}{\alpha\epsilon}\ln(\frac{4}{\beta\delta})$, by Proposition~\ref{prop:aDist}, this gap is big enough s.t. algorithm $\AAA_{\rm dist}$ outputs $\ell$ with probability at least $(1-\frac{\beta}{2})$.
Therefore, when $\DDD(\ell)>\alpha$, the probability of $A$ outputting an $\alpha$-good solution is at least $(1-\exp(-\frac{\alpha m}{8}))(1-\frac{\beta}{2})$, which is at least $(1-\beta)$ for $m\geq\frac{8}{\alpha}\ln(\frac{2}{\beta})$.

If, on the other hand, $\DDD(\ell)\leq\alpha$, then algorithm $A$ will fail to output an $\alpha$-good solution only if $\AAA_{\rm dist}$ outputs $\bot$, and algorithm $A$ chooses a hypothesis $c_i$ s.t. $i\neq\ell$ and $\DDD(i)>\alpha$. But there could be at most $\frac{1}{\alpha}$ such points, and the probability of $A$ failing is at most $\frac{1}{\alpha 2^d}$. Assuming $2^d\geq\frac{1}{\alpha\beta}$, this probability is at most $\beta$.
\end{proof}

\begin{remark}
Recall that the above algorithm outputs a random $c_i\in\point_d$ whenever $\AAA_{\rm dist}$ outputs $\bot$. In order for this random $c_i$ to be good (w.h.p.) we needed $2^d$ (i.e., the number of possible concepts) to be at least $\frac{1}{\alpha\beta}$. This requirement could be avoided by outputting the all zero hypothesis $c_0\equiv0$ whenever $\AAA_{\rm dist}$ outputs $\bot$. However, this approach results in a {\em proper} learner only if we add the all zero concept to $\point_d$.
\end{remark}

\remove{\subsection{$(\epsilon,\delta)$-PPAC Learner for $\kpoint_d$}
Consider the concept class $\kpoint_d$, defined as follows.
For every $A \subseteq X_d$ s.t. $|A|=k$, the concept class $\kpoint_d$ contains the concept $c_A:X_d \rightarrow\{0,1\}$, defined as
$c_A(x)=1$ if $x\in A$ and $c_A(x)=0$ otherwise.

In our learner for $\point_d$, we applied algorithm $\AAA_{\rm dist}$ to identify a unique concept in $\point_d$ with low empirical error.
Unlike with $\point_d$, many hypotheses in $\kpoint_d$ might have low empirical error on a typical sample (labeled by some $c_A\in\kpoint_d$). Therefore, a straight forward application of algorithm $\AAA_{\rm dist}$ would fail to identify a good hypotheses.
We will overcome this issue by observing that given a typical sample $S$ (labeled by some $c_A\in\kpoint_d$), either a random concept is (w.h.p.) a good output, or at least one of the following concept classes contains a {\em unique} concept with small empirical error: $\kpoint_d,\;\jpoint{(k-1)}_d,\;\ldots,\;\jpoint{2}_d,\;\jpoint{1}_d=\point_d$. Consider the following algorithm, which starts by identifying a $j$ for which $\jpoint{j}_d$ contains a unique concept with small empirical error, and then employs algorithm $\AAA_{\rm dist}$ in order to identify this concept.

\begin{center}
\noindent\fbox{
\parbox{.95\columnwidth}{
{\bf kPointLearner}\\
{\bf Input:} parameters $\alpha,\beta,\epsilon,\delta$, and a database $S=(x_i,y_i)_{i=1}^m$.
\begin{enumerate}[topsep=-3pt, rightmargin=5pt]
\item For every $A\subseteq X_d$ s.t. $|A|\leq k$, define $q(S,A)=\big|\big\{i\; : \; (x_i\in A)\wedge(y_i=1)\big\}\big|$. That is, $q(S,A)$ is the number of sample points that are labeled as 1 and contained in $A$.

\item For every $1\leq j\leq k$, let $A_j\neq B_j\subseteq X_d$ be two highest score sets of size $j$ (w.r.t. $q(S,\cdot)$, where $q(S,A_j)\geq q(S,B_j)$). Define $Q_j(S)=\frac{1}{2}\big(q(S,A_j)-q(S,B_j)\big)$.

\item Execute algorithm $AboveThreshold$ with privacy parameter $\epsilon/2$, threshold $T=ttt$, and queries $Q_k,\;Q_{k-1},\;Q_{k-2},\;\ldots$. Receive answers $a_k,\;a_{k-1},\;\ldots,\;a_{\ell}$, where $\ell\geq1$ (here $a_{\ell}\in\{\bot\}\cup\R$, and $a_i=\bot$ for every $i<\ell$).

\item If $a_{\ell}=\bot$, then halt and return a random concept from $\kpoint_d$.

\item Execute $\AAA_{\rm dist}$ in order to choose a solution out of $\big\{ A \; : \; (A\subseteq X_d) \wedge (|A|=\ell)  \big\}$, with the quality function $q$ and parameters $\frac{\alpha}{2},\frac{\beta}{2},\frac{\epsilon}{2},\delta$.

\item If $\AAA_{\rm dist}$ returned set $A$, then return a random $c_B\in\kpoint_d$ s.t. $A\subseteq B$.

\item Else, if $\AAA_{\rm dist}$ returned $\bot$, then return a random $c_B\in\kpoint_d$.\\
\end{enumerate}
}}
\end{center}

\begin{observation}
Let $S$ be a sample labeled by $C_P\in\kpoint_d$, and let $L$ denote the total number of points in $S$ that are labeled as 1.
Let $1\leq j\leq k$, and let $q$ and $Q_j$ be defined as in steps~{1,2} of the above algorithm. If $\max\limits_{|A|=j}\big\{q(S,A)\big\}\geq L_1$ and $Q_j(S)\leq L_2$, then $\max\limits_{|A|=j-1}\big\{q(S,A)\big\}\geq (L_1-2L_2)$.
\end{observation}

\begin{proof}
Let $A\subseteq X_d$ be a subset of cardinality $j$, maximizing $q(S,\cdot)$. By the conditions in the observation, $q(S,A)\geq L_1$ (i.e., at least $L_1$ sample points that are labeled as 1 are contained in $A$).

As $Q_j(S)\leq L_2$, there must exists another subset $B\neq A$ of cardinality $j$ s.t. $q(S,B)\geq (L_1-2L_2)$.
\end{proof}}

\subsection{Towards a Proper $(\epsilon,\delta)$-PPAC Learner for $\thresh_d$}
\label{sec:introInterval}

\begin{definition}
For $0\leq j\leq 2^d$ let $c_j:X_d \rightarrow\{0,1\}$ be defined as $c_j(x)=1$ if $x<j$ and $c_j(x)=0$ otherwise. Define the concept class $\thresh_d = \{c_j\}_{0\leq j\leq 2^d}$.
\end{definition}

Note that $\VC(\thresh_d)=1$, and, therefore, there exists a {\em proper} non-private learner for $\thresh_d$ with sample complexity $O_{\alpha,\beta}(1)$. As $|\thresh_d|=2^d+1$, one can use the generic construction of Kasiviswanathan et al.~\cite{KLNRS08} and get a proper $\epsilon$-private learner for this class with sample complexity $O_{\alpha,\beta,\epsilon}(d)$.
Feldman and Xiao~\cite{FX14} showed that this is in fact optimal, and every $\epsilon$-private learner for this class (proper or improper) must have sample complexity $\Omega(d)$.

Our learner for $\point_d$ relied on a strong ``stability'' property of the problem: 
Given a labeled sample, either a random concept is (w.h.p.) a good output, or,
there is exactly one consistent concept in the class, and every other concept has large empirical error.
This, however, is not the case when dealing with $\thresh_d$. In particular, many hypotheses can have low empirical error, and changing a single entry of a sample $S$ can significantly affect the set of hypotheses consistent with it.

In Section~\ref{sec:Concave}, we present a proper $(\epsilon,\delta)$-private learner for $\thresh_d$ with sample complexity (roughly) $2^{O(\log^*(d))}$. We use this section for motivating the construction. We start with two simplifying assumptions. First, when given a labeled sample $S$, we aim at choosing a hypothesis $h\in\thresh_d$ approximately minimizing the empirical error (rather than the generalization error).
Second, we assume that we are given a ``diverse'' sample $S$ that contains many points labeled as $1$ and many points labeled as $0$. Those two assumptions (and any other informalities made hereafter) will be removed in Section~\ref{sec:Concave}.

Assume we are given as input a sample $S=(x_i,y_i)_{i=1}^m$ labeled by some unknown $c_{\ell}\in\thresh_d$. We would now like to choose a hypothesis $h\in\thresh_d$ with small empirical error on $S$, and we would like to do so while accessing the sample $S$ only through differentially private tools.

We refer to points labeled as $1$ in $S$ as {\em ones}, and to points labeled as $0$ as {\em zeros}.
Imagine for a moment that we already have a differentially private algorithm that given $S$ outputs an interval $G\subseteq X_d$ with the following two properties:
\begin{enumerate}
	\item The interval $G$ contains ``a lot'' of ones, {\em and} ``a lot'' of zeros in $S$.
	\item Every interval $I\subseteq X_d$ of length $\leq \frac{|G|}{k}$ does not contain, simultaneously, ``too many'' ones and ``too many'' zeros in $S$, where $k$ is some constant.
\end{enumerate}

Such an interval will be referred to as a {\em $k$-good} interval.
Note that a $k$-good interval is, in particular, an $\ell$-good interval for every $\ell\geq k$.
Figure~\ref{fig:k-good-interval} illustrates such an interval $G$, where the dotted line represents the (unknown) target concept, and the bold dots correspond to sample points.

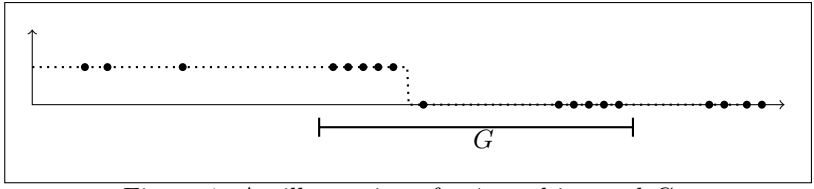
\begin{figure}[H]
\begin{center}
\begin{tikzpicture}[xscale=1,yscale=0.5,show background rectangle,inner frame sep=10pt]
\draw[black,thick,dotted,domain=0:9.5, samples=500] plot(\x, {floor((1.999-\x/5))});
\draw [<->] (0,2) -- (0,0) -- (10,0);
\draw [|-|,thick] (3.8,-0.6) -- (8,-0.6) ;
\node at (6,-0.9) {$G$};

\node[fill,circle,inner sep=0pt,minimum size=3pt] (a1) at (0.7,1) {};
\node[fill,circle,inner sep=0pt,minimum size=3pt] (a1) at (1,1) {};
\node[fill,circle,inner sep=0pt,minimum size=3pt] (a1) at (2,1) {};
\node[fill,circle,inner sep=0pt,minimum size=3pt] (a1) at (4,1) {};
\node[fill,circle,inner sep=0pt,minimum size=3pt] (a1) at (4.2,1) {};
\node[fill,circle,inner sep=0pt,minimum size=3pt] (a1) at (4.4,1) {};
\node[fill,circle,inner sep=0pt,minimum size=3pt] (a1) at (4.6,1) {};
\node[fill,circle,inner sep=0pt,minimum size=3pt] (a1) at (4.8,1) {};

\node[fill,circle,inner sep=0pt,minimum size=3pt] (a1) at (5.2,0) {};
\node[fill,circle,inner sep=0pt,minimum size=3pt] (a1) at (7,0) {};
\node[fill,circle,inner sep=0pt,minimum size=3pt] (a1) at (7.2,0) {};
\node[fill,circle,inner sep=0pt,minimum size=3pt] (a1) at (7.4,0) {};
\node[fill,circle,inner sep=0pt,minimum size=3pt] (a1) at (7.6,0) {};
\node[fill,circle,inner sep=0pt,minimum size=3pt] (a1) at (7.8,0) {};
\node[fill,circle,inner sep=0pt,minimum size=3pt] (a1) at (9,0) {};
\node[fill,circle,inner sep=0pt,minimum size=3pt] (a1) at (9.2,0) {};
\node[fill,circle,inner sep=0pt,minimum size=3pt] (a1) at (9.5,0) {};
\node[fill,circle,inner sep=0pt,minimum size=3pt] (a1) at (9.7,0) {};
\end{tikzpicture}
\end{center}
\caption{An illustration of a 4-good interval $G$. \label{fig:k-good-interval}}
\end{figure}

Given such a $4$-good interval $G$, we can (without using the sample $S$) define a set $H$ of five hypotheses s.t.\ at least one of them has small empirical error. To see this, consider Figure~\ref{fig:divide-4-good}, where $G$ is divided into four equal intervals $g_1,g_2,g_3,g_4$, and five hypotheses $h_1,\ldots,h_5$ are defined 
s.t.\ the points where they switch from one to zero are located at the edges of $g_1,g_2,g_3,g_4$.

Now, as the interval $G$ contains both ones and zeros, it must be that the target concept $c_{\ell}$ switches from $1$ to $0$ inside $G$. Assume without loss of generality that this switch occurs inside $g_2$. Note that $g_2$ is of length $\frac{|G|}{4}$ and, therefore, either does not contain too many ones, and $h_2$ is ``close'' to the target concept, or does not contain too many zeros, and $h_3$ is ``close'' to the target concept. For this argument to go through we need ``not too many'' to be smaller than $\alpha m$ (say $\frac{3}{4}\alpha m$), where $\alpha$ is our approximation parameter and $m$ is the sample size.

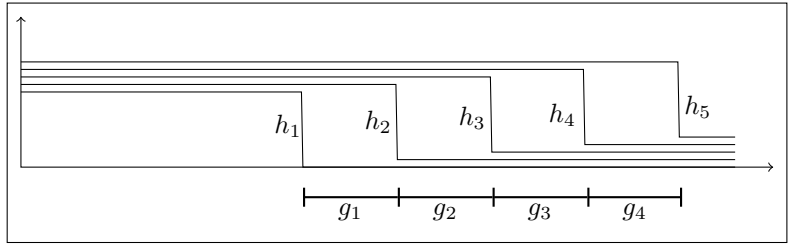
\begin{figure}[H]
\begin{center}
\begin{tikzpicture}[xscale=1,yscale=1,show background rectangle,inner frame sep=5pt]
\draw[domain=0:9.5, samples=500] plot(\x, {0.4+min(1,floor((1.999-(\x-3.75)/5)))});
\draw[domain=0:9.5, samples=500] plot(\x, {0.3+min(1,floor((1.999-(\x-2.5)/5)))});
\draw[domain=0:9.5, samples=500] plot(\x, {0.2+min(1,floor((1.999-(\x-1.25)/5)))});
\draw[domain=0:9.5, samples=500] plot(\x, {0.1+min(1,floor((1.999-(\x)/5)))});
\draw[domain=0:9.5, samples=500] plot(\x, {max(min(1,floor((1.999-(\x+1.25)/5))),0)});
\draw [<->] (0,2) -- (0,0) -- (10,0);
\draw [|-,thick] (3.75,-0.4) -- (5.01,-0.4) ;
\draw [|-,thick] (5.01,-0.4) -- (6.27,-0.4) ;
\draw [|-,thick] (6.27,-0.4) -- (7.53,-0.4) ;
\draw [|-|,thick] (7.53,-0.4) -- (8.79,-0.4) ;
\node at (4.38,-0.6) {$g_1$};
\node at (5.64,-0.6) {$g_2$};
\node at (6.9,-0.6) {$g_3$};
\node at (8.16,-0.6) {$g_4$};

\node at (3.55,0.55) {$h_1$};
\node at (4.75,0.6) {$h_2$};
\node at (6,0.65) {$h_3$};
\node at (7.2,0.7) {$h_4$};
\node at (9,0.8) {$h_5$};
\end{tikzpicture}
\end{center}
\caption{Extracting a small set of hypotheses from a good interval. \label{fig:divide-4-good}}
\end{figure}

After defining such a set $H$, we could use the exponential mechanism to choose a hypothesis $h\in H$ with small empirical error on $S$. As the size of $H$ is constant, this requires only a constant number of samples. To conclude, finding a $4$-good interval $G$ (while preserving privacy) is sufficient for choosing a good hypothesis. We next explain how to find such an interval.

Assume, for now, that we have a differentially private algorithm that given a sample $S$, returns an {\em interval length} $J$ s.t. there exists a {\em 2-good} interval $G\subseteq X_d$ of length $|G|=J$. This length $J$ is used to find an explicit {\em 4-good} interval as follows. Divide $X_d$ into intervals $\{A_i\}$ of length $2J$, and into intervals $\{B_i\}$ of length $2J$ right shifted by $J$ as in Figure~\ref{fig:axisDivision}.

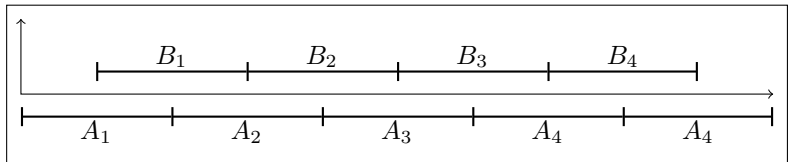
\begin{figure}[H]
\begin{center}
\begin{tikzpicture}[xscale=1,yscale=1,show background rectangle,inner frame sep=5pt]
\draw [<->] (0,1) -- (0,0) -- (10,0);
\draw [|-,thick] (0,-0.3) -- (2,-0.3) ;
\draw [|-,thick] (2,-0.3) -- (4,-0.3) ;
\draw [|-,thick] (4,-0.3) -- (6,-0.3) ;
\draw [|-,thick] (6,-0.3) -- (8,-0.3) ;
\draw [|-|,thick] (8,-0.3) -- (10,-0.3) ;
\node at (1,-0.5) {$A_1$};
\node at (3,-0.5) {$A_2$};
\node at (5,-0.5) {$A_3$};
\node at (7,-0.5) {$A_4$};
\node at (9,-0.5) {$A_4$};

\draw [|-,thick] (1,0.3) -- (3,0.3) ;
\draw [|-,thick] (3,0.3) -- (5,0.3) ;
\draw [|-,thick] (5,0.3) -- (7,0.3) ;
\draw [|-|,thick] (7,0.3) -- (9,0.3) ;
\node at (2,0.5) {$B_1$};
\node at (4,0.5) {$B_2$};
\node at (6,0.5) {$B_3$};
\node at (8,0.5) {$B_4$};
\end{tikzpicture}
\end{center}
\caption{Dividing the axis $X_d$ into intervals of length $2J$. \label{fig:axisDivision}}
\end{figure}

As the promised $2$-good interval $G$ is of length $J$, at least one of the above intervals contains $G$. We next explain how to privately choose such interval. If, e.g., $G\subseteq A_2$  then $A_2$ contains both a lot of zeros and a lot of ones. The target concept must switch inside $A_2$, and, therefore, every other $A_i\neq A_2$ cannot contain both zeros and ones. For every interval $A_i$, define its quality $q(A_i)$ to be the minimum between the number of zeros in $A_i$ and the number of ones in $A_i$. Therefore, $q(A_2)$ is large, while $q(A_i)=0$ for every $A_i\neq A_2$. That is, $A_2$ scores much better than any other $A_i$ under this quality function $q$. The sensitivity of $q()$ is one and we can use algorithm $\AAA_{\rm dist}$ to privately identify $A_2$. It suffices, e.g., that $q(A_2)\geq\frac{1}{4}\alpha m$, and we can, therefore, set our ``a lot'' bound to be $\frac{1}{4}\alpha m$.
Recall that $G\subseteq A_2$ is a $2$-good interval, and that $|A_2|=2|G|$. The identified $A_2$ is, therefore, a $4$-good interval.

To conclude, if we could indeed find (while preserving privacy) a length $J$ s.t. there exists a $2$-good interval $G$ of that length, then our task would be completed. 

\paragraph{Computing the interval length $\mathbf{J}$.} 
At first attempt, one might consider preforming a binary search for such a length $0\leq J\leq 2^d$, in which every comparison will be made using the Laplace mechanism.
More specifically, for every length $0\leq J\leq 2^d$, define
$$Q(J)=\max\limits_{\substack{[a,b]\subseteq X_d\\b-a=J}} \Bigg\{ \min\left\{ \, 	
					\begin{array}{c}
					\text{number of}\\
					\text{zeros in } [a,b]
					\end{array} \; , \;			
					\begin{array}{c}
					\text{number of}\\
					\text{ones in } [a,b]
					\end{array}	 \, \right\}  \Bigg\}.$$
If, e.g., $Q(J)=100$ for some $J$, then {\em there exists} an interval $[a,b]\subseteq X_d$ of length $J$ that contains at least $100$ ones and at least $100$ zeros. Moreover, {\em every} interval of length $\leq J$ either contains at most $100$ ones, or, contains at most $100$ zeros.

Note that $Q(\cdot)$ is a monotonically non-decreasing function, and that $Q(0)=0$ (as in a correctly labeled sample a point cannot appear both with the label 1 and with the label 0). Recall our assumption that the sample $S$ is ``diverse'' (contains many points labeled as $1$ and many points labeled as $0$), and, therefore, $Q(2^d)$ is large. Hence, there exists a $J$ s.t. $Q(J)$ is ``big enough'' (say at least $\frac{1}{4}\alpha m$) while $Q(J-1)$ is ``small enough'' (say at most $\frac{3}{4}\alpha m$). That is, a $J$ s.t. (1) there exists an interval of length $J$ containing lots of ones and lots of zeros; and (2), every interval of length $<J$ cannot contain too many ones and too many zeros simultaneously. Such a $J$ can easily be (privately) obtained using a (noisy) binary search.
However, as there are $d$ noisy comparisons, this solution requires a sample of size $d^{O(1)}$ in order to achieve reasonable utility guarantees.\\

As a second attempt, one might consider preforming a binary search, not on $0\leq J\leq 2^d$, but rather on the power $j$ of an interval of length $2^j$. That is, preforming a search for a power $0\leq j\leq d$ for which there exists a $2$-good interval of length $2^j$. Here there are only $\log(d)$ noisy comparisons, and the sample size is reduced to $\log^{\Omega(1)}(d)$.
Again, a (noisy) binary search on $0\leq j\leq d$ can (privately) yield an appropriate length $J=2^j$ s.t. $Q(2^j)$ is ``big enough'', while $Q(2^{j-1})$ is ``small enough''. Such a $J=2^j$ is, indeed, a length of a $2$-good interval. Too see this, note that as $Q(2^j)$ is ``big enough'', there exists an interval of length $2^j$ containing lots of ones and lots of zeros. Moreover, as $Q(2^{j-1})$ is ``small enough'', 
every interval of length $2^{j-1}=\frac{1}{2}2^j$ cannot contain too many ones and too many zeros simultaneously.

\begin{remark}
A binary search as above would have to operate on noisy values of $Q(\cdot)$ (as otherwise differential privacy cannot be obtained). For this reason, we set the bounds for ``big enough'' and ``small enough'' to overlap. Namely, we search for a value $j$ such that $Q(2^j)\geq\frac{\alpha}{4}m$ and  $Q(2^{j-1})\leq\frac{3\alpha}{4}m$, where $\alpha$ is our approximation parameter, and $m$ is the sample size.
\end{remark}

To summarize, using a binary search we find a length $J=2^j$ such that there exists a 2-good interval of length $J$. Then, using $\AAA_{\rm dist}$, we find a 4-good interval. Finally, we partition this interval to 4 intervals, and using the exponential mechanism we choose a starting point or end point of one of these intervals as our the threshold.

We will apply recursion to reduce the costs of computing $J=2^j$ to $2^{O(\log^*(d))}$. The tool performing the recursion would be formalized and analyzed in the next section. This tool will later be used in our construction of a proper $(\epsilon,\delta)$-private learner for $\thresh_d$.

\subsection{Privately Approximating Quasi-Concave Promise Problems}
\label{sec:Concave}
We next define the notions that enable our recursive algorithm.
\begin{definition}
A function $Q(\cdot)$ is {\em quasi-concave} if $Q(\ell)\geq \min\{Q(i),Q(j)\}$ for every $i\leq \ell \leq j$.
\end{definition}

\begin{definition}[Quasi-Concave Promise Problem]
A {\em Quasi-Concave Promise Problem} consists of an ordered set of possible solutions $[0,T]=\{0,1,\ldots,T\}$, a database $S\in X^m$, a sensitivity-1 quality function $Q:X^*\times[0,T] \rightarrow\R$, an approximation parameter $\alpha$, and another parameter $r$ (called a {\em quality promise}). 

If $Q(S,\cdot)$ is quasi-concave and if there exists a solution $p\in[0,T]$ for which $Q(S,p)\geq r$ then a good output for the problem is a solution $k\in[0,T]$ satisfying $Q(S,k)\geq(1-\alpha)r$. The outcome is not restricted otherwise.
\end{definition}

\begin{example}\label{example:intervalConcave}
Consider a sample $S=(x_i,y_i)_{i=1}^m$, labeled by some target function $c_j\in\thresh_d$.
The goal of choosing a hypothesis with small empirical error can be viewed as a quasi-concave promise problem as follows. Set the range of possible solutions to $[0,2^d]$, the approximation parameter to $\alpha$ and the quality promise to $m$. Define $Q(S,k)=|\{ i : c_k(x_i)=y_i \}|$; i.e., $Q(S,k)$ is the number of points in $S$ correctly classified by $c_k\in\thresh_d$. 
Note that the target concept $c_j$ satisfies $Q(S,j)=m$.
Our task is to find a hypothesis $h_k\in\thresh_d$ s.t.\ $\error_S(h_k)\leq\alpha$, which is equivalent to finding $k\in[0,2^d]$ s.t.\ $Q(S,k)\geq(1-\alpha)m$.

To see that $Q(S,\cdot)$ is quasi-concave, let $u\leq v\leq w$ be s.t. $Q(S,u),Q(S,w)\geq\lambda$.
Consider $j$, the index of the target concept, and assume w.l.o.g. that $j\leq v$ (the other case is symmetric). That is, $j\leq\ v\leq w$.
Note that $c_v$ errs only on points in between $j$ and $v$, and $c_w$ errs on all these points. That is, $\error_S(c_v)\leq\error_S(c_w)$, and, therefore, $Q(S,v)\geq\lambda$. See Figure~\ref{fig:Qconcave} for an illustration.
\end{example}

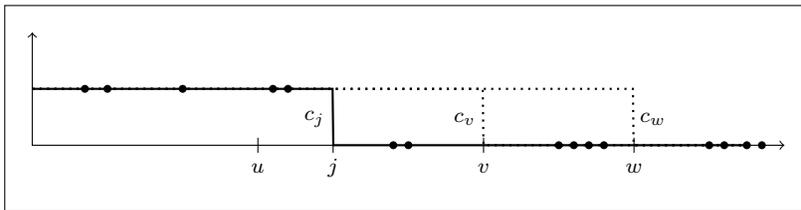
\begin{figure}[H]
\begin{center}
\begin{tikzpicture}[xscale=1,yscale=0.5,show background rectangle,inner frame sep=10pt]
\draw[black,thick,domain=0:9.5, samples=1000] plot(\x, {1.5*max(0,floor((1.999-\x/4)))});
\draw[black,thick,dotted,domain=0:9.5, samples=1000] plot(\x, {1.5*max(0,floor((1.999-\x/6)))});
\draw[black,thick,dotted,domain=0:9.5, samples=1000] plot(\x, {1.5*max(0,floor((1.999-\x/8)))});
\draw [<->] (0,3) -- (0,0) -- (10,0);

\node[fill,circle,inner sep=0pt,minimum size=3pt] at (0.7,1.5) {};
\node[fill,circle,inner sep=0pt,minimum size=3pt] at (1,1.5) {};
\node[fill,circle,inner sep=0pt,minimum size=3pt] at (2,1.5) {};
\node[fill,circle,inner sep=0pt,minimum size=3pt] at (3.2,1.5) {};
\node[fill,circle,inner sep=0pt,minimum size=3pt] at (3.4,1.5) {};
\node[fill,circle,inner sep=0pt,minimum size=3pt] at (4.8,0) {};
\node[fill,circle,inner sep=0pt,minimum size=3pt] at (5,0) {};
\node[fill,circle,inner sep=0pt,minimum size=3pt] at (7,0) {};
\node[fill,circle,inner sep=0pt,minimum size=3pt] at (7.2,0) {};
\node[fill,circle,inner sep=0pt,minimum size=3pt] at (7.4,0) {};
\node[fill,circle,inner sep=0pt,minimum size=3pt] at (7.6,0) {};
\node[fill,circle,inner sep=0pt,minimum size=3pt] at (9,0) {};
\node[fill,circle,inner sep=0pt,minimum size=3pt] at (9.2,0) {};
\node[fill,circle,inner sep=0pt,minimum size=3pt] at (9.5,0) {};
\node[fill,circle,inner sep=0pt,minimum size=3pt] at (9.7,0) {};

\draw [-] (3,-0.2) -- (3,0.2);
\draw [-] (4,-0.2) -- (4,0.2);
\draw [-] (6,-0.2) -- (6,0.2);
\draw [-] (8,-0.2) -- (8,0.2);

\node at (3,-0.6) {\footnotesize $u$};
\node at (4,-0.6) {\footnotesize $j$};
\node at (6,-0.6) {\footnotesize $v$};
\node at (8,-0.6) {\footnotesize $w$};

\node at (3.75,0.7) {\footnotesize $c_j$};
\node at (5.75,0.7) {\footnotesize $c_v$};
\node at (8.25,0.7) {\footnotesize $c_w$};

\end{tikzpicture}
\end{center}
\caption{An illustration for Example~\ref{example:intervalConcave}.
Here $c_j$ is the target concept and the bold dots correspond to sample points.
Note that $c_w$ errs on every point on which $c_v$ errs.
\label{fig:Qconcave}}
\end{figure}

\begin{remark}\label{remark:intervalConcave}
Note that if the sample $S$ in the above example is {\em not} consistent with any $c\in\thresh_d$, then there is no $j$ s.t. $Q(S,j)=m$, and the quality promise is void. Moreover, in such a case $Q(S,\cdot)$ might not be quasi-concave.
\end{remark}

We are interested in solving quasi-concave promise problems while preserving differential privacy. As motivated by Remark~\ref{remark:intervalConcave}, privacy must be preserved even when $Q(S,\cdot)$ is not quasi-concave or $Q(S,p) < r $ for all $p\in[0,T]$.
Our algorithm $RecConcave$ is presented in Figure~\ref{fig:RecConcave} (see inline comments for some of the underlying intuition).

\begin{figure}
\begin{center}
\noindent\fbox{
\parbox{\textwidth}{
{\bf Algorithm $RecConcave$}\\
{\bf Inputs:} Range $[0,T]$, quality function $Q$, quality promise $r$, parameters $\alpha,\epsilon,\delta$, and a database $S$.\\
{\bf Optional Input:} a bound $N\geq1$ on the number of recursive calls (set $N=\infty$ otherwise).
\begin{enumerate}[itemsep=5pt,rightmargin=7pt]

\item If [$(T\leq32)$ or $(N=1)$], then use the exponential mechanism with the quality function $Q$ and the parameter $\epsilon$ to choose and return an index $j\in [0,\ldots,T]$. Otherwise set $N=N-1$.

\item Let $T'$ be the smallest power of $2$ s.t. $T'\geq T$, and define $Q(S,i)=\min\{0, Q(S,T)\}$ for $T<i\leq T'$.

\item For $0\leq j\leq\log(T')$ let 
$L(S,j)=\max\limits_{\substack{[a,b]\subseteq [0,T']\\b-a+1=2^j}}\Bigg( \min\limits_{i\in[a,b]} \Big( Q(S,i) \Big) \Bigg)$. For $j=\log(T')+1$ let $L(S,j)=\min\{0,L(S,\log(T')\}$.
\begin{enumerate}[label=\gray{\%},topsep=-10pt]
\item \gray{If $L(S,j)=x$ then (1) there exists an interval $I\subseteq[0,T']$ of length $2^j$ s.t. $Q(S,i)\geq x$ for all $i\in I$;
and (2) in every interval $I\subseteq[0,T']$ of length $2^j$ there exists a point $i\in I$ s.t. $Q(S,i)\leq x$. 
Note that $L(S,j+1)\leq L(S,j)$. See Figure~\ref{fig:5} for an illustration.}
\end{enumerate}

\item Define the function $q(S,j)=\min\Big(  L(S,j)-(1-\alpha)r , r-L(S,j+1) \Big)$ where $0\leq j\leq\log(T')$.
\begin{enumerate}[label=\gray{\%},topsep=-10pt]
\item \gray{If $q(S,j)$ is high for some $j$, then there exists an interval $I=[a,a+2^j-1]$ s.t. every $i\in I$ has a quality $Q(S,i) >> (1-\alpha)r$, and for every interval $I'=[a',a'+2^{j+1}-1]$ there exists $i'\in I'$ with quality $Q(S,i) << r$. See Figure~\ref{fig:5}.}
\end{enumerate}

\item Let $R=\frac{\alpha}{2}r$.
\begin{enumerate}[label=\gray{\%},topsep=-10pt]
\item \gray{$R$ is the promise parameter for the recursive call. Note that for the maximal $j$ with $L(S,j)\geq(1-\frac{\alpha}{2})r$ we get  $q(S,j)\geq\frac{\alpha}{2}r$ = R.}
\end{enumerate}

\item Execute $RecConcave$ recursively on the range $\{0,\ldots,\log(T')\}$, the quality function $q(\cdot,\cdot)$, the promise $R$, an approximation parameter $\frac{1}{4}$, and $\epsilon,\delta,N$. Denote the returned value by $k$, and let $K=2^k$.
\begin{enumerate}[label=\gray{\%},topsep=-10pt]
\item \gray{If the call to $RecConcave$ was successful, then $k$ is s.t.\ $q(S,k)\geq(1-\frac{1}{4})R=\frac{3\alpha}{8} r$. That is, $L(S,k)\geq(1-\frac{5\alpha}{8})r$ and $L(S,k+1)\leq(1-\frac{3\alpha}{8})r$. Note in the top level call the approximation parameter $\alpha$ is arbitrary (given as input), and that in all of the lower level calls the approximation parameter is fixed at $\frac{1}{4}$.}
\end{enumerate}

\item Divide $[0,T']$ into the following intervals of length $8K$ (the last ones might be trimmed): \\
$ A_1=[0,8K-1],\; A_2=[8K,16K-1],\; A_3=[16K,24K-1], \ldots$ \\
$ B_1=[4K,12K-1],\; B_2=[12K,20K-1],\; B_3=[20K,28K-1], \ldots$
\begin{enumerate}[label=\gray{\%},topsep=-10pt]
\item \gray{We show that in at least one of those two partitions (say the $\{A_i\}$'s), there exists a good interval $A_g$ s.t.\ $Q(S,i)=r$ for some $i\in A_g$, and $Q(S,i) \leq (1-\frac{3\alpha}{8})r$ for all $i\in \{0,\ldots,T\}\setminus A_g$.}
\end{enumerate}
   
\item For every such interval $I\in\{A_i\}\cup\{B_i\}$ let $u(S,I)=\max\limits_{i\in I} \Big(Q(S,i) \Big)$.

\item Use algorithm $\AAA_{\rm dist}$ with parameters $\epsilon,\delta$ and the quality function $u(\cdot,\cdot)$, once to choose an interval $A\in\{A_i\}$, and once more to choose an interval $B\in\{B_i\}$.
\begin{enumerate}[label=\gray{\%},topsep=-10pt]
\item \gray{By the properties of $\AAA_{\rm dist}$, w.h.p.\ at least one of the returned $A$ and $B$ is good.}
\end{enumerate}

\item Use the exponential mechanism with the quality function $Q(\cdot,\cdot)$ and parameter $\epsilon$ to choose and return an index $j\in (A\cup B)$.
\begin{enumerate}[label=\gray{\%},topsep=-10pt]
\item \gray{We show that a constant fraction of the solutions in $(A\cup B)$ have high qualities, and, hence, the exponential mechanism needs only a constant sample complexity in order to achieve good utility guarantees.}
\end{enumerate}

\end{enumerate}
}}
\end{center}
\caption{Algorithm $RecConcave$. \label{fig:RecConcave}}
\end{figure}

\captionsetup[figure]{skip=-7pt}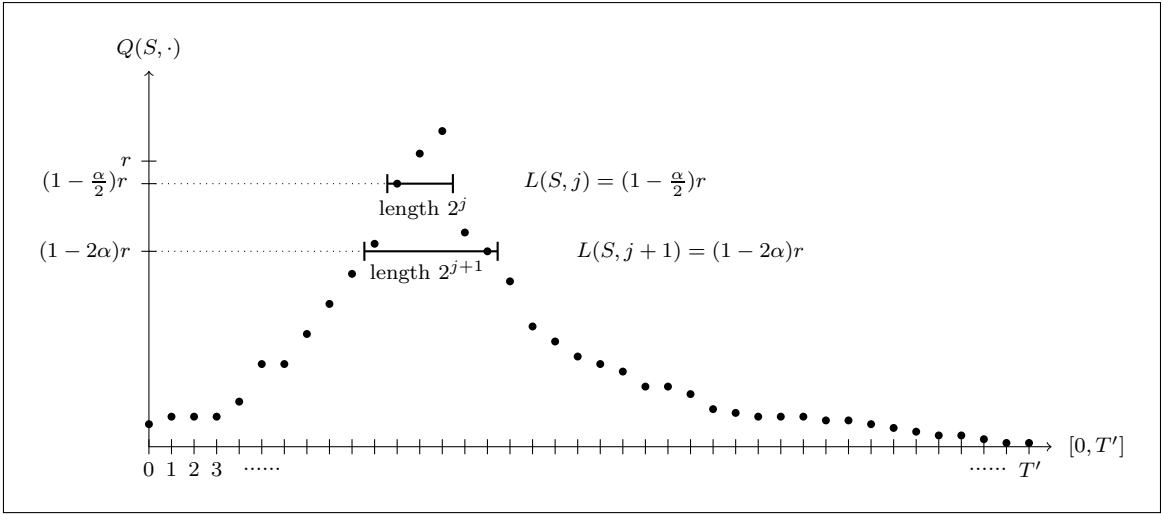
\begin{figure}
\begin{center}
\begin{tikzpicture}[xscale=1,yscale=1,show background rectangle,inner frame sep=10pt]
\draw [<->] (0,5) -- (0,0) -- (12,0);
\node at (0,5.3) {\footnotesize $Q(S,\cdot)$};
\node at (12.6,0) {\footnotesize $[0,T']$};

\foreach \x in {0,0.3,...,12} 
    \draw [-] (\x,-0.1) -- (\x,0.1);

\node at (0,-0.3) {\footnotesize $0$};
\node at (0.3,-0.3) {\footnotesize $1$};
\node at (0.6,-0.3) {\footnotesize $2$};
\node at (0.9,-0.3) {\footnotesize $3$};
\node at (1.5,-0.3) {\footnotesize ......};
\node at (11.15,-0.3) {\footnotesize ......};
\node at (11.72,-0.3) {\footnotesize $T'$};

\node[fill,circle,inner sep=0pt,minimum size=3pt] at (0,0.3) {};
\node[fill,circle,inner sep=0pt,minimum size=3pt] at (0.3,0.4) {};
\node[fill,circle,inner sep=0pt,minimum size=3pt] at (0.6,0.4) {};
\node[fill,circle,inner sep=0pt,minimum size=3pt] at (0.9,0.4) {};
\node[fill,circle,inner sep=0pt,minimum size=3pt] at (1.2,0.6) {};
\node[fill,circle,inner sep=0pt,minimum size=3pt] at (1.5,1.1) {};
\node[fill,circle,inner sep=0pt,minimum size=3pt] at (1.8,1.1) {};
\node[fill,circle,inner sep=0pt,minimum size=3pt] at (2.1,1.5) {};
\node[fill,circle,inner sep=0pt,minimum size=3pt] at (2.4,1.9) {};
\node[fill,circle,inner sep=0pt,minimum size=3pt] at (2.7,2.3) {};
\node[fill,circle,inner sep=0pt,minimum size=3pt] at (3,2.7) {};
\node[fill,circle,inner sep=0pt,minimum size=3pt] at (3.3,3.5) {};
\node[fill,circle,inner sep=0pt,minimum size=3pt] at (3.6,3.9) {};
\node[fill,circle,inner sep=0pt,minimum size=3pt] at (3.9,4.2) {};
\node[fill,circle,inner sep=0pt,minimum size=3pt] at (4.2,2.85) {};
\node[fill,circle,inner sep=0pt,minimum size=3pt] at (4.5,2.6) {};
\node[fill,circle,inner sep=0pt,minimum size=3pt] at (4.8,2.2) {};
\node[fill,circle,inner sep=0pt,minimum size=3pt] at (5.1,1.6) {};
\node[fill,circle,inner sep=0pt,minimum size=3pt] at (5.4,1.4) {};
\node[fill,circle,inner sep=0pt,minimum size=3pt] at (5.7,1.2) {};
\node[fill,circle,inner sep=0pt,minimum size=3pt] at (6,1.1) {};
\node[fill,circle,inner sep=0pt,minimum size=3pt] at (6.3,1) {};
\node[fill,circle,inner sep=0pt,minimum size=3pt] at (6.6,0.8) {};
\node[fill,circle,inner sep=0pt,minimum size=3pt] at (6.9,0.8) {};
\node[fill,circle,inner sep=0pt,minimum size=3pt] at (7.2,0.7) {};
\node[fill,circle,inner sep=0pt,minimum size=3pt] at (7.5,0.5) {};
\node[fill,circle,inner sep=0pt,minimum size=3pt] at (7.8,0.45) {};
\node[fill,circle,inner sep=0pt,minimum size=3pt] at (8.1,0.4) {};
\node[fill,circle,inner sep=0pt,minimum size=3pt] at (8.4,0.4) {};
\node[fill,circle,inner sep=0pt,minimum size=3pt] at (8.7,0.4) {};
\node[fill,circle,inner sep=0pt,minimum size=3pt] at (9,0.35) {};
\node[fill,circle,inner sep=0pt,minimum size=3pt] at (9.3,0.35) {};
\node[fill,circle,inner sep=0pt,minimum size=3pt] at (9.6,0.3) {};
\node[fill,circle,inner sep=0pt,minimum size=3pt] at (9.9,0.25) {};
\node[fill,circle,inner sep=0pt,minimum size=3pt] at (10.2,0.2) {};
\node[fill,circle,inner sep=0pt,minimum size=3pt] at (10.5,0.15) {};
\node[fill,circle,inner sep=0pt,minimum size=3pt] at (10.8,0.15) {};
\node[fill,circle,inner sep=0pt,minimum size=3pt] at (11.1,0.1) {};
\node[fill,circle,inner sep=0pt,minimum size=3pt] at (11.4,0.05) {};
\node[fill,circle,inner sep=0pt,minimum size=3pt] at (11.7,0.05) {};

\draw [-,dotted] (0,3.5) -- (3.155,3.5) ;
\draw [|-|,thick] (3.155,3.5) -- (4.055,3.5) ;
\node at (3.655,3.17) {\footnotesize length $2^j$};
\node at (6.2,3.5) {\footnotesize $L(S,j)=(1-\frac{\alpha}{2})r$};

\draw [-,dotted] (0,2.6) -- (2.85,2.6) ;
\draw [|-|,thick] (2.85,2.6) -- (4.65,2.6) ;
\node at (3.7,2.33) {\footnotesize length $2^{j+1}$};
\node at (7.2,2.6) {\footnotesize $L(S,j+1)= (1-2\alpha)r$};

\draw [-] (-0.1,3.8) -- (0.1,3.8) ;
\node at (-0.3,3.8) {\footnotesize $r$};
\draw [-] (-0.1,3.5) -- (0.1,3.5) ;
\node at (-0.85,3.5) {\footnotesize $(1-\frac{\alpha}{2})r$};
\draw [-] (-0.1,2.6) -- (0.1,2.6) ;
\node at (-0.85,2.6) {\footnotesize $(1-2\alpha)r$};

\end{tikzpicture}
\end{center}
\caption{A demonstration for the functions $L$ and $q$ from Steps~3,4 of $RecConcave$.
In the above illustration, every interval of length $2^j$ contains at least one point with quality at most $(1-\frac{\alpha}{2})r$, and there exists an interval of length $2^j$ containing only points with quality at least $(1-\frac{\alpha}{2})r$. Hence, $L(S,j)=(1-\frac{\alpha}{2})r$. Similarly, $L(S,j+1)=(1-2\alpha)r$. Therefore, for this $j$ we have that
$q(S,j)=\min\{ L(S,j)-(1-\alpha)r , r-L(S,j+1) \}=\frac{\alpha}{2}r$.
The reason for defining $q(\cdot,\cdot)$ is the following. We were interested in identifying a $j$ with an appropriate lower bound on $L(S,j)$ and with an appropriate upper bound on $L(S,j+1)$. That is, in order to decide whether a given $j$ is a good, we need to check both $L(S,j)$ and $L(S,j+1)$. After defining $q(S,\cdot)$, we can simply look for a $j$ with a high $q(S,j)$. A high $q(S,j)$ implies upper and lower bounds (respectively) on $L(S,j),L(S,j+1)$.
}\label{fig:5}
\end{figure}

We start the analysis of Algorithm $RecConcave$ by bounding the number of recursive calls.

\paragraph{Notation.} Given an integer $n$, let $\log^{\lceil * \rceil}(n)$ denote the number of times that the function $\lceil \log(x) \rceil$ must be iteratively applied before the result is less or equal to $1$, i.e., $\log^{\lceil * \rceil}(n)= 1+\log^{\lceil * \rceil}\lceil\log(n)\rceil$ if $n>1$ and zero otherwise.
Observe that $\log^{\lceil * \rceil}(n)=\log^*(n)$.\footnote{ Clearly $\log^{\lceil * \rceil}(n) \geq \log^*(n)$. Let $\ell$ be the smallest number of the form $2^{2^{\cdot^{\cdot^{\cdot^2}}}}$ s.t.\ $\ell\geq n$. We have that $\log^*(\ell)=\log^*(n)$, and that $\log^{\lceil * \rceil}(\ell)=\log^*(\ell)$ (as all of the numbers in the iterative process of $\log^{\lceil * \rceil}(\ell)$ will be integers). As $\log^{\lceil * \rceil}(\cdot)$ is monotonically non-decreasing we get $\log^{\lceil * \rceil}(n)\leq \log^{\lceil * \rceil}(\ell)=\log^*(\ell)=\log^*(n)$.}

\begin{observation}
On a range $[0,T]$ there could be at most $\log^{\lceil * \rceil}(T)=\log^*(T)$ recursive calls throughout the execution of $RecConcave$.
\end{observation}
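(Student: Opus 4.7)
The plan is to track how the range parameter $T$ shrinks at each level of recursion. When $RecConcave$ is invoked on a range $[0,T]$ with $T > 32$, Step~2 defines $T'$ as the smallest power of $2$ with $T' \geq T$, so $T' = 2^{\lceil \log T \rceil}$. The recursive call in Step~6 is then issued on the range $\{0,\ldots,\log(T')\}$, whose upper endpoint is $\log(T') = \lceil \log T \rceil$. Thus each recursive step replaces the current range parameter $T$ by $\lceil \log T \rceil$.

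Setting $T_0 = T$ and $T_{i+1} = \lceil \log T_i \rceil$, a short induction shows that the range parameter at depth $i$ of the recursion equals $T_i$. Step~1 terminates the recursion as soon as the current range parameter is at most $32$, and so in particular it terminates once $T_i \leq 1$. The smallest such $i$ is, by definition of the notation introduced just before the observation, equal to $\log^{\lceil * \rceil}(T)$, which by the footnote equals $\log^*(T)$.

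I don't expect any real obstacle here: the argument reduces to unwinding a single recurrence $T_{i+1} = \lceil \log T_i \rceil$ and invoking the identity $\log^{\lceil * \rceil}(n) = \log^*(n)$ from the footnote. The only minor subtlety is that Step~1 also fires whenever $N = 1$, but this can only shorten the recursion and is therefore harmless for an upper bound. Similarly, the base-case threshold of $32$ (as opposed to $1$) makes the bound $\log^*(T)$ loose by an additive constant, but this slack does not affect the stated inequality.
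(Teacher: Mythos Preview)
Your proposal is correct and matches the implicit reasoning behind the observation; the paper states it without proof, and your argument---tracking the recurrence $T_{i+1}=\lceil\log T_i\rceil$ induced by Steps~2 and~6 and noting that the base case $T\leq 32$ only accelerates termination relative to the $\leq 1$ threshold used to define $\log^{\lceil * \rceil}$---is exactly the intended unwinding.
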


Before proceeding to the privacy analysis, we make the following simple observation.

\begin{observation}\label{obs:minMaxSens1}
Let $\{ f_1,f_2,\ldots, f_N  \}$ be a set of sensitivity-1 functions mapping $X^*$ to $\R$. Then $f_{max}(S)=\max_i\{ f_i(S) \}$ and $f_{min}(S)=\min_i\{ f_i(S) \}$ are sensitivity-1 functions.
\end{observation}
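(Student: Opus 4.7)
The plan is to prove both claims by a direct neighbor-comparison argument, using only the definition of sensitivity. Fix neighboring databases $S, S' \in X^*$. For the maximum, let $i^\star \in \arg\max_i f_i(S)$ and $j^\star \in \arg\max_i f_i(S')$. The key inequality is obtained by chaining the optimality of the argmax with the per-function sensitivity bound:
$$
f_{\max}(S) = f_{i^\star}(S) \geq f_{j^\star}(S) \geq f_{j^\star}(S') - 1 = f_{\max}(S') - 1,
$$
where the middle step uses that each individual $f_i$ has sensitivity $1$. By symmetry (swapping the roles of $S$ and $S'$), we also get $f_{\max}(S') \geq f_{\max}(S) - 1$, so $|f_{\max}(S) - f_{\max}(S')| \leq 1$ as desired.

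The argument for $f_{\min}$ is entirely analogous: letting $i^\star \in \arg\min_i f_i(S)$ and $j^\star \in \arg\min_i f_i(S')$, one writes
$$
f_{\min}(S) = f_{i^\star}(S) \leq f_{i^\star}(S') + 1 \leq f_{\min}(S') + 1 + \bigl(f_{i^\star}(S') - f_{\min}(S')\bigr),
$$
but it is cleaner to just mirror the maximum argument: $f_{\min}(S') \leq f_{i^\star}(S') \leq f_{i^\star}(S) + 1 = f_{\min}(S) + 1$, and swap sides.

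There is essentially no obstacle here — the only thing worth flagging is that the argument does \emph{not} require $N$ to be finite, provided the maximum (resp.\ minimum) is attained; if it is merely a supremum/infimum, one replaces the argmax with an approximate maximizer achieving value within $\eta$ of the supremum and lets $\eta \to 0$, which yields the same bound. Since the observation will be applied inside $RecConcave$ to finite collections indexed by intervals $[a,b] \subseteq [0,T']$, attainment is automatic and no such care is needed.
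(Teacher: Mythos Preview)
Your proof is correct; the paper itself does not supply a proof of this observation, treating it as self-evident. Your argmax/argmin chaining argument is exactly the standard one, and your remark about attainment versus sup/inf is a reasonable aside (and, as you note, irrelevant for the finite index sets actually used in $RecConcave$).
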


We now proceed with the privacy analysis of algorithm $RecConcave$.

\begin{lemma}\label{lem:recPrivacy}
When executed on a sensitivity-1 quality function $Q$, parameters $\epsilon, \delta$, and a bound on the recursion depth $N$, algorithm $RecConcave$ preserves $(3N\epsilon,3N\delta)$-differential privacy.
\end{lemma}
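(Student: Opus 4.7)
The plan is to prove the lemma by induction on the recursion-depth bound $N$, invoking the composition theorems to aggregate the privacy loss from each access $RecConcave$ makes to the database. The main preparatory step is to verify that every intermediate quality function that the algorithm feeds to a private subroutine is indeed a sensitivity-$1$ function of $S$, so that the stated privacy guarantees of those subroutines (exponential mechanism, $\AAA_{\rm dist}$, and the inductive hypothesis for the recursive call) actually apply.

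Concretely, I would first observe that since $Q(\cdot,i)$ has sensitivity $1$ for every $i$, Observation~\ref{obs:minMaxSens1} (applied twice, once for $\min_{i\in[a,b]}$ and once for $\max_{[a,b]}$) implies that each $L(\cdot,j)$ has sensitivity $1$. Since $q(S,j)=\min\!\bigl(L(S,j)-(1-\alpha)r,\, r-L(S,j+1)\bigr)$ is the minimum of two sensitivity-$1$ functions shifted by the data-independent constants $(1-\alpha)r$ and $r$, it too has sensitivity $1$. Similarly, $u(\cdot,I)=\max_{i\in I} Q(\cdot,i)$ is sensitivity-$1$ by Observation~\ref{obs:minMaxSens1}. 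Thus all three private subroutines invoked in Steps~6, 9, and 10 are being called on legitimate sensitivity-$1$ inputs, regardless of the value of $k$ returned by the recursive call (the partition into $\{A_i\}, \{B_i\}$ in Step~7 is a post-processing of $k$).

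For the base case $N=1$ (or $T\leq 32$), only Step~1 touches $S$, and the exponential mechanism with parameter $\epsilon$ is $\epsilon$-differentially private by Proposition~\ref{prop:expMech}, which is certainly $(3\epsilon,3\delta)$-DP. For the inductive step, assuming the claim for depth bound $N-1$, the algorithm accesses $S$ in exactly four places: the recursive call in Step~6, which is $(3(N-1)\epsilon,3(N-1)\delta)$-DP by the induction hypothesis applied to the sensitivity-$1$ function $q$; the two invocations of $\AAA_{\rm dist}$ in Step~9, each $(\epsilon,\delta)$-DP by Proposition~\ref{prop:aDist}; and the final exponential mechanism in Step~10, which is $(\epsilon,0)$-DP. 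Since each subsequent access depends on the outputs of the earlier ones, I would apply the adaptive composition theorem (Theorem~\ref{thm:composition3}) to conclude that the overall algorithm is
\[
\bigl(3(N-1)\epsilon+\epsilon+\epsilon+\epsilon,\ 3(N-1)\delta+\delta+\delta+0\bigr) = \bigl(3N\epsilon,\,(3N-1)\delta\bigr)\text{-DP},
\]
which is stronger than the stated $(3N\epsilon,3N\delta)$-DP guarantee.

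The main conceptual obstacle, and the reason the proof needs care rather than being a pure composition bookkeeping exercise, is the justification that the recursive call is a legitimate application of the inductive hypothesis: the quality function $q(\cdot,\cdot)$ fed into the recursion is itself derived from $S$, so one must confirm that it is a sensitivity-$1$ function of $S$ (not merely of its arguments) before the induction applies. Once the sensitivity bookkeeping via Observation~\ref{obs:minMaxSens1} is in place, the rest of the argument is a clean adaptive-composition accounting.
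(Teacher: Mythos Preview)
Your proposal is correct and takes essentially the same approach as the paper: verify via Observation~\ref{obs:minMaxSens1} that all derived quality functions ($L$, $q$, $u$) are sensitivity-$1$, then account for the privacy loss by composition over the at most three private subroutine calls per level. The only cosmetic difference is that the paper unrolls the recursion and counts $3N$ mechanism invocations directly before a single application of Theorem~\ref{thm:composition3}, whereas you structure the same count as an induction on $N$; your inductive version even yields the marginally tighter $(3N\epsilon,(3N-1)\delta)$ bound.
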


\begin{proof}
Note that since $Q$ is a sensitivity-1 function, all of the quality functions defined throughout the execution of $RecConcave$ are of sensitivity 1 (see Observation~\ref{obs:minMaxSens1}). In each recursive call algorithm $RecConcave$ invokes at most three differentially private mechanisms -- once with the Exponential Mechanism (on Step~1 or on Step~11), and at most twice with algorithm $\AAA_{\rm dist}$ (on Step~9). As there are at most $N$ recursive calls, we conclude that throughout the entire execution algorithm $RecConcave$ invokes most $3N$ mechanisms, each $(\epsilon,\delta)$-differentially private. Hence, using Theorem~\ref{thm:composition3}, algorithm $RecConcave$ is $(3N\epsilon,3N\delta)$-differentially private.
\end{proof}

We now turn to proving the correctness of algorithm $RecConcave$. 
As the proof is by induction (on the number of recursive calls), we need to show that each of the recursive calls to $RecConcave$ is made with appropriate inputs.
We first claim that the function $q(S,\cdot)$ constructed in Step~4 is quasi-concave. Note that for this claim we do not need to assume that $Q(S,\cdot)$ is quasi-concave.

\begin{claim}\label{claim:qConcave}
Let $Q:X^*\times[0,T]\rightarrow\R$ be a quality function, and let the functions $L(\cdot,\cdot)$ and $q(\cdot,\cdot)$ be as in steps~3,~4 of algorithm $RecConcave$. Then, for every $S\in X^*$, it holds that $q(S,\cdot)$ is quasi-concave.
\end{claim}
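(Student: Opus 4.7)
The plan is to reduce quasi-concavity of $q(S,\cdot)$ to monotonicity of the auxiliary function $L(S,\cdot)$. Specifically, I will show that $L(S,\cdot)$ is monotonically non-increasing in $j$, which immediately implies that $q(S,j)$ is the pointwise minimum of a non-increasing term and a non-decreasing term, and any such minimum is quasi-concave.

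\textbf{Step 1 (monotonicity of $L$).} I would first verify that $L(S,j+1)\leq L(S,j)$ for every $0\leq j\leq \log(T')$. For $j<\log(T')$, take any interval $I=[a,b]\subseteq[0,T']$ of length $2^{j+1}$; it contains at least one sub-interval $I'\subseteq I$ of length $2^j$, and clearly $\min_{i\in I}Q(S,i)\leq \min_{i\in I'}Q(S,i)$. Taking the max over $I$ on the left and over all $I'$ (in particular those contained in some such $I$) on the right gives $L(S,j+1)\leq L(S,j)$. For the boundary case $j=\log(T')$, monotonicity is built into the definition of $L(S,\log(T')+1)$ in Step~3 of the algorithm.

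\textbf{Step 2 (structure of $q$).} By Step 1, the function $j\mapsto L(S,j)-(1-\alpha)r$ is non-increasing, while $j\mapsto r-L(S,j+1)$ is non-decreasing. Thus $q(S,j)=\min\{f(j),g(j)\}$ where $f$ is non-increasing and $g$ is non-decreasing in $j$.

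\textbf{Step 3 (quasi-concavity from the min of monotone functions).} Fix any $u\leq v\leq w$ in the domain of $q(S,\cdot)$; I need to show $q(S,v)\geq\min\{q(S,u),q(S,w)\}$. If $q(S,v)=f(v)$, then monotonicity of $f$ gives $f(v)\geq f(w)\geq \min\{f(w),g(w)\}=q(S,w)$, so $q(S,v)\geq q(S,w)$. Otherwise $q(S,v)=g(v)$, and monotonicity of $g$ gives $g(v)\geq g(u)\geq \min\{f(u),g(u)\}=q(S,u)$, so $q(S,v)\geq q(S,u)$. In either case $q(S,v)\geq\min\{q(S,u),q(S,w)\}$, which is exactly quasi-concavity.

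The only place any thought is required is Step 1: the key observation is that enlarging the interval over which we take a min can only decrease the min, and this passes through the outer max because every length-$2^{j+1}$ interval contains a length-$2^j$ interval. I do not expect a serious obstacle, and in particular the argument makes no use of quasi-concavity of $Q(S,\cdot)$, consistent with the claim's statement.
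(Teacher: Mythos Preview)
Your proposal is correct and follows essentially the same approach as the paper: both arguments first establish that $L(S,\cdot)$ is monotonically non-increasing, and then deduce quasi-concavity of $q(S,\cdot)$ from the fact that it is the pointwise minimum of a non-increasing and a non-decreasing function. The paper's verification of quasi-concavity is phrased as ``if $q(S,i),q(S,j)\geq x$ then both terms in the min defining $q(S,\ell)$ are $\geq x$,'' whereas you do a case split on which term realizes the minimum at $v$; these are cosmetic variants of the same idea.
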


\begin{proof}
Fix $S\in X^*$.
First observe that the function 
$$L(S,j)=\max\limits_{\substack{[a,b]\subseteq [0,T']\\b-a+1=2^j}}\Bigg( \min\limits_{i\in[a,b]} \Big( Q(S,i) \Big) \Bigg)$$
is monotonically non-increasing (as a function of $j$).
To see this, note that if $L(S,j)=\XXX$, then there exists an interval of length $2^j$ in which every point has quality at least $\XXX$. In particular, there exists such an interval of length $\frac{1}{2}2^j$, and $L(S,j-1)\geq\XXX$.

Now, let $i\leq\ell\leq j$ be s.t.\ $q(S,i),q(S,j)\geq x$. We get that $L(S,\ell)-(1-\alpha)r\geq L(S,j)-(1-\alpha)r\geq x$, and that $r-L(S,\ell+1)\geq r-L(S,i+1)\geq x$. Therefore, $q(S,\ell)\geq x$, and $q(S,\cdot)$ is quasi-concave.
\end{proof}

\remove{In some cases, it might be useful to bound the number of recursive calls in the execution of algorithm $RecConcave$ (using the input parameter $N$).
For example, this will enable us to construct, for every constant $\lambda$, a private learner for $\thresh_d$ with sample complexity $O_{\alpha,\beta,\epsilon}(\log(\frac{1}{\delta})+\underbrace{\log\log\cdots\log}_{\lambda \text{ times}}(2^d))$.}

\paragraph{Notation.} We use $\Nlog{N}(\cdot)$ to denote the outcome of $N$ iterative applications of the function $\lceil \log(\cdot) \rceil$, i.e., $\Nlog{N}(n)=\underbrace{\lceil\log\lceil\log\lceil\cdots\lceil\log}_{N \text{ times}}(n)\rceil\cdots\rceil\rceil\rceil$.
Observe that $\Nlog{N}(n)\leq 2+ \underbrace{\log\log\cdots\log}_{N \text{ times}}(n)$
for every $N\leq\log^*(n)$.\footnote{
For example
$\lceil \log \lceil \log \lceil \log(n) \rceil \rceil \rceil
\leq \lceil \log \lceil \log (2 + \log(n) ) \rceil \rceil
\leq \lceil \log \lceil \log (2 \log(n) ) \rceil \rceil
= \lceil \log \lceil 1 + \log\log(n) \rceil \rceil
\leq \lceil \log (2 + \log\log(n)) \rceil
\leq \lceil \log ( 2 \log\log(n) ) \rceil
= \lceil 1 + \log\log\log(n) \rceil
\leq 2 + \log\log\log(n)$.}

\begin{lemma}\label{lem:recUtility}
Let $Q:X^*\times[0,T]\rightarrow\R$ be a sensitivity-1 quality function, and let $S\in X^*$ be a database s.t. $Q(S,\cdot)$ is quasi-concave.
Let $\alpha\leq\frac{1}{2}$ and let $\beta,\epsilon,\delta,r,N$ be s.t. 
$$\max_{i\in[0,T]}\{Q(S,i)\}\geq r \geq
8^N \cdot \frac{4}{\alpha\epsilon} \left\{ \log\Big(\frac{32}{\beta\delta}\Big) + \Nlog{N}(T) \right\}.$$
When executed on $S,[0,T],r,\alpha,\epsilon,\delta,N$, algorithm $RecConcave$ fails to outputs an index $j$ s.t.
$Q(S,j)\geq(1-\alpha)r$ with probability at most $2\beta N$.
\end{lemma}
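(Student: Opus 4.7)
The plan is to prove this by induction on the recursion-depth bound $N$.

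For the base case ($N=1$ or $T\leq 32$), the algorithm executes Step~1 and applies the exponential mechanism with quality $Q$ on $[0,T]$. I would invoke \propref{expMech} with $\hat e \geq r$; the hypothesized lower bound on $r$ (which carries both a $\log T$ factor through $\Nlog{N}(T)$ and a $\log(1/\beta\delta)$ factor through $\log(32/\beta\delta)$) easily forces the failure probability of outputting some $j$ with $Q(S,j)<(1-\alpha)r$ below $\beta\leq 2\beta N$.

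In the inductive step ($N\geq 2$, $T>32$), I would first verify that the recursive call sees valid inputs. \claimref{qConcave} already gives quasi-concavity of $q(S,\cdot)$, and Observation~\ref{obs:minMaxSens1} gives sensitivity $1$. The existence of a $j^*$ with $q(S,j^*)\geq R=\alpha r/2$ follows by taking the largest $j^*$ with $L(S,j^*)\geq(1-\alpha/2)r$, which exists because $L(S,0)\geq r$ and $L(S,\cdot)$ is monotone non-increasing and eventually $\leq 0$; by maximality $L(S,j^*+1)<(1-\alpha/2)r$, so both terms defining $q(S,j^*)$ are at least $\alpha r/2$. Routine bookkeeping (using $\Nlog{N}(T)=\Nlog{N-1}(\log T')$ and $R=\alpha r/2$) shows the inductive hypothesis applies to the recursive call, now with approximation $1/4$, depth $N-1$, and range $[0,\log T']$. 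Hence, outside a failure event of probability $\leq 2\beta(N-1)$, the returned $k$ satisfies $q(S,k)\geq (3/4)R=3\alpha r/8$, equivalently $L(S,k)\geq(1-5\alpha/8)r$ and $L(S,k+1)\leq(1-3\alpha/8)r$.

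The geometric heart of the proof comes next. Writing $K=2^k$, the lower bound yields an interval $I^*$ of length $K$ in which every point has $Q(S,\cdot)\geq(1-5\alpha/8)r>(1-\alpha)r$, and the upper bound forbids any interval of length $2K$ from being contained in $P:=\{i:Q(S,i)>(1-3\alpha/8)r\}$. Since $Q(S,\cdot)$ is quasi-concave, $P$ is a contiguous interval of length at most $2K-1<4K$, and so fits entirely inside some $A_g$ (or some $B_g$) of the partitions in Step~7, which are mutually offset by $4K$. In that partition the interval containing $P$ has $u$-value $\geq r$ (it contains the argmax of $Q(S,\cdot)$), whereas every other interval is disjoint from $P$ and has $u$-value $\leq(1-3\alpha/8)r$; the resulting gap of at least $3\alpha r/8$ exceeds $(1/\epsilon)\log(4/\beta\delta)$ by the hypothesis on $r$, so \propref{aDist} lets $\AAA_{\rm dist}$ output the good interval with probability $\geq 1-\beta/2$ (the other $\AAA_{\rm dist}$ invocation need not be analyzed, since we only require $A\cup B$ to contain $I^*$). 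The exponential mechanism in Step~10 then operates on $A\cup B$ of size $\leq 16K$, which contains the $K$ elements of $I^*$ with quality $\geq(1-5\alpha/8)r$; using the density refinement of \propref{expMech} (namely $Z\geq K\exp(\epsilon(1-5\alpha/8)r/2)$), the probability of returning $j$ with $Q(S,j)<(1-\alpha)r$ is at most $16\exp(-3\epsilon\alpha r/16)\leq \beta/2$, again by the lower bound on $r$. Summing the three failure contributions yields $2\beta(N-1)+\beta/2+\beta/2\leq 2\beta N$.

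The step I expect to be trickiest is the bookkeeping around $P$: one needs the strict inequality in the definition of $P$ to bound its length by $2K-1$ (so that it fits inside an interval of length $4K$), combined with quasi-concavity of $Q(S,\cdot)$ (so $P$ is contiguous), and the constants $5\alpha/8$ versus $3\alpha/8$ baked into $q$ must simultaneously yield enough gap for $\AAA_{\rm dist}$ and enough density for the exponential mechanism while matching the factor $8^N$ in the hypothesis on $r$ once one includes the $8\times$ loss from passing $R=\alpha r/2$ through the induction.
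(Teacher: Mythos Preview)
Your induction skeleton and the verification that the recursive call receives valid inputs are essentially the same as the paper's. The gap is in the geometric step after the recursion returns.

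You define $P=\{i:Q(S,i)>(1-3\alpha/8)r\}$ and correctly observe that $P$ is an interval of length $\leq 2K-1$, so it fits in some block $A_g$ (say). From this you conclude that $\AAA_{\rm dist}$ returns $A_g$, which is fine. But you then assert that $A\cup B$ ``contains the $K$ elements of $I^*$'', and this is not justified: $I^*$ lies in the \emph{larger} super-level set $P'=\{i:Q(S,i)\geq(1-5\alpha/8)r\}$, and you have only shown $A_g\supseteq P$, not $A_g\supseteq I^*$. Concretely, take $Q(S,i)=(1-5\alpha/8)r$ on $[0,8K-1]$, $Q(S,8K)=r$, and $Q(S,i)=0$ for $i>8K$. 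Then $P=\{8K\}$, the $A$-block containing $P$ is $[8K,16K-1]$, and this block meets $P'=[0,8K]$ only at the single point $8K$; no choice of $I^*$ lies inside it, and the density argument for the exponential mechanism fails. (Here $B$ happens to save you, but you explicitly declined to analyze that call.)

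The paper avoids this by taking $P$ to be the fixed-length interval $[p-2K+1,\,p+2K-1]$ around a point $p$ with $Q(S,p)\geq r$. Using $L(S,k+1)\leq(1-3\alpha/8)r$, each half of this $P$ contains a point of quality $\leq(1-3\alpha/8)r$, so quasi-concavity forces every point \emph{outside} $P$ to have quality $\leq(1-3\alpha/8)r$ (your gap argument for $\AAA_{\rm dist}$ then goes through). Crucially, quasi-concavity also forces a length-$K$ interval of quality $\geq(1-5\alpha/8)r$ to exist \emph{inside} this $P$: if the witness $I^*$ for $L(S,k)$ lay outside $P$, all points between $I^*$ and $p$ would still have quality $\geq(1-5\alpha/8)r$, and in particular $P$ contains such a $K$-interval. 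Since $|P|=4K-1$ and the two partitions are offset by $4K$, $P$ fits in one block, and that block inherits the $K$ good points. Your level-set $P$ can be arbitrarily short, so it cannot play both roles.
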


\begin{proof}
The proof is by induction on the number of recursive calls, denoted as $t$.
For $t=1$ (i.e., $T\leq32$ or $N=1$), the exponential mechanism ensures that for $r\geq\frac{2}{\alpha\epsilon}\log(\frac{T}{\beta})$, the probability of algorithm $RecConcave$ failing to output a $j$ s.t. $Q(S,j)\geq(1-\alpha)r$ is at most $\beta$.

Assume that the stated lemma holds whenever algorithm $RecConcave$ performs at most $t-1$ recursive calls, and let $S,[0,T],r,\alpha,\epsilon,\delta,N$ be inputs (satisfying the conditions of Lemma~\ref{lem:recUtility}) on which algorithm $RecConcave$ preforms $t$ recursive calls.
Consider the first call in the execution of $RecConcave$ on those inputs, and denote by $T'$ the smallest power of 2 s.t. $T'\geq T$.
In order to apply the inductive assumption, we need to show that for the recursive call in step~6, all the conditions of Lemma~\ref{lem:recUtility} hold.

We first note that by Claim~\ref{claim:qConcave}, the quality function $q(S,\cdot)$ defined of step~4 is quasi-concave.
We next show that the recursive call is preformed with an appropriate quality promise $R=\frac{\alpha}{2}r$. 
The conditions of the lemma ensure that $L(S,0)\geq r$, and, by definition, we have that $L(S,\log(T')+1)\leq0$. There exists therefore a $j\in[0,\log(T')]$ for which $L(S,j)\geq(1-\frac{\alpha}{2})r$, and $L(S,j+1)<(1-\frac{\alpha}{2})r$. Plugging these inequalities in the definition of $q(S,j)$ we get that $q(S,j)\geq\frac{\alpha}{2}r$. Therefore, there exists an index $j\in[0,\log(T')]$ with quality $q(S,j)\geq R$.
Moreover, the recursive call of step~6 executes $RecConcave$ on the range $[0,\log(T')]=[0,\lceil\log(T)\rceil]$ with $(N-1)$ as the bound on the recursion depth, with $\widetilde{\alpha}\triangleq\frac{1}{4}$ as the approximation parameter, and with a quality promise $R$ satisfying
\begin{eqnarray*}
R &=& \frac{\alpha}{2}r\\
&\geq& \frac{\alpha}{2} \cdot 8^N \cdot \frac{4}{\alpha\epsilon} \left\{ \log\Big(\frac{32}{\beta\delta}\Big) + \Nlog{N}(T) \right\}\\
&=& 8^{N-1} \cdot \frac{4}{\widetilde{\alpha} \epsilon} \left\{ \log\Big(\frac{32}{\beta\delta}\Big) + \Nlog{N-1}\lceil\log(T)\rceil \right\}\\
\end{eqnarray*}

We next show that w.h.p.\ at least one of the two intervals $A,B$ chosen on Step~9, contains a lot of points with high score.
Denote the index returned by the recursive call of step~6 as $k$.
By the inductive assumption, with probability at least $(1-2\beta(N-1))$, the index $k$ is s.t. $q(S,k)\geq(1-\frac{1}{4})R=\frac{3\alpha}{8}r$; we proceed with the analysis assuming that this event happened.
By the definition of $q(S,k)$, this means that $L(S,k)\geq q(S,k)+(1-\alpha)r\geq(1-\frac{5\alpha}{8})r$ and that  $L(S,k+1)\leq r-q(S,k)\leq(1-\frac{3\alpha}{8})r$.
That is, there exists an interval $G$ of length $2^k$ s.t. $\forall i\in G$ it holds that $Q(S,i)\geq(1-\frac{5\alpha}{8})r$, and every interval of length $2\cdot2^k$ contains at least one point $i$ s.t. $Q(S,i)\leq(1-\frac{3\alpha}{8})r$.

As promised by the conditions of the lemma, there exists a point $p\in[0,T]$ with quality $Q(S,p)\geq r$. Consider the following two intervals: $P_1=[p-2\cdot2^k+1,p]$ and $P_2=[p,p+2\cdot2^k-1]$, and denote $P=P_1\cup P_2$ (these two intervals might be trimmed if $p$ is close to the edges of $[0,T]$). Assuming $P_1,P_2$ are not trimmed, they both are intervals of length $2\cdot2^k$, and, therefore, each of them contains a point $i_1,i_2$ respectively with quality $Q(S,i_1),Q(S,i_2)\leq(1-\frac{3\alpha}{8})r$. Therefore, by the quasi-concavity of $Q(S,\cdot)$, every point $\ell\geq i_2$ and every point $\ell\leq i_1$ must have quality at most $Q(S,\ell)\leq(1-\frac{3\alpha}{8})r$ (otherwise, by the quasi-concavity of $Q(S,\cdot)$, every point between $\ell$ and $p$ must have quality strictly greater than $(1-\frac{3\alpha}{8})r$, contradicting the quality bound on $i_1,i_2$). See Figure~\ref{fig:4}.

\captionsetup[figure]{skip=-7pt}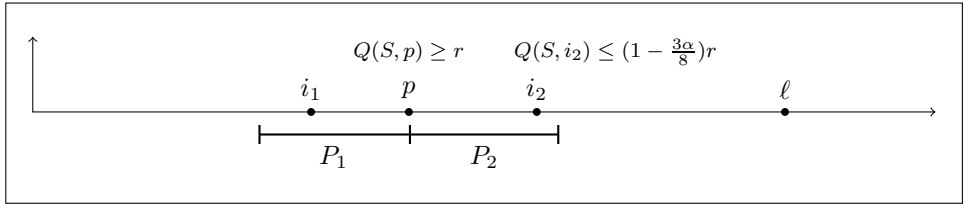
\begin{figure}[H]
\begin{center}
\begin{tikzpicture}[xscale=1,yscale=1,show background rectangle,inner frame sep=10pt]
\draw [<->] (0,1) -- (0,0) -- (12,0);
\draw [|-,thick] (3,-0.3) -- (5,-0.3) ;
\draw [|-|,thick] (5,-0.3) -- (7,-0.3) ;
\node at (4,-0.6) {$P_1$};
\node at (6,-0.6) {$P_2$};
\node[fill,circle,inner sep=0pt,minimum size=3pt] (a1) at (6.7,0) {};
\node[fill,circle,inner sep=0pt,minimum size=3pt] (a1) at (5,0) {};
\node[fill,circle,inner sep=0pt,minimum size=3pt] (a1) at (3.7,0) {};
\node[fill,circle,inner sep=0pt,minimum size=3pt] (a1) at (10,0) {};

\node at (5,0.3) {$p$};
\node at (5,0.8) {\footnotesize $Q(S,p)\geq r$};
\node at (6.7,0.3) {$i_2$};
\node at (7.75,0.8) {\footnotesize $Q(S,i_2)\leq(1-\frac{3\alpha}{8})r$};
\node at (3.7,0.3) {$i_1$};
\node at (10,0.3) {$\ell$};
\end{tikzpicture}
\end{center}
\caption{A point $\ell\notin P$ cannot have quality greater than $(1-\frac{3\alpha}{8})r$.}\label{fig:4}
\end{figure}

Note that if $P_1$ (or $P_2$) is trimmed, then there are no points on the left of (or on the right of) $P$.
So, the interval $P$ contains the point $p$ with quality $Q(S,p)\geq r$ and every point $i\in[0,T]\setminus P$ has quality of at most $(1-\frac{3\alpha}{8})r$. Moreover, $P$ is of length $4\cdot2^k-1$. As the intervals of the partitions $\{A_i\}$ and $\{B_i\}$ are of length $8\cdot 2^k$, and the $\{B_i\}$'s are shifted by $4\cdot 2^k$, there must exist an interval $C\in\{A_i\}\cup\{B_i\}$ s.t. $P\subseteq C$. Assume without loss of generality that $C\in\{A_i\}$.

Recall that the quality $u(S,\cdot)$ of an interval $I$ is defined as the maximal quality $Q(S,i)$ of a point $i\in I$. Therefore, as $p\in C$, the quality of $C$ is at least $r$. On the other hand, the quality of every $A_i\neq C$ is at most $(1-\frac{3\alpha}{8})r$. That is, the interval $C$ scores better (under $u$) than any other interval in $\{A_i\}$ by at least an additive factor of $\frac{3\alpha}{8}r\geq\frac{1}{\epsilon}\log(\frac{1}{\beta\delta})$. By the properties of $\AAA_{\rm dist}$, with probability at least $(1-\beta)$, the chosen interval $A$ in step~9 is s.t. $P\subseteq A$. We proceed with the analysis assuming that this is the case.

Consider again the interval $P$ containing the point $p$, and recall that there exists an interval $G$ of length $2^k$ containing only points with quality $Q(S,\cdot)$ of at least $(1-\frac{5\alpha}{8})r$. Such an interval must be contained in $P$. Otherwise, by the quasi-concavity of $Q(S,\cdot)$, all the points between $G$ and the point $p$ must also have quality at least $(1-\frac{5\alpha}{8})r$, and, in particular, $P$ must indeed contain such an interval.

So, the chosen interval $A$ in step~9 is of length $8\cdot2^k$, and it contains a sub interval of length $2^k$ in which every point has quality at least $(1-\frac{5\alpha}{8})r$. 
That is, at least $\frac{1}{16}$ out of the points in $(A\cup B)$ has quality at least $(1-\frac{5\alpha}{8})r$. Therefore, as $r\geq\frac{4}{\alpha\epsilon}\log(\frac{16}{\beta})$, the exponential mechanism ensures that the probability of step~10 failing to return a point $h\in (A\cup B)$ with $Q(S,h)\geq(1-\alpha)r$ is at most $\beta$.\footnote{
As there are at least $\frac{1}{16}|A\cup B|$ solutions with quality at least $(1-\frac{5\alpha}{8})r$, the probability that the exponential mechanism outputs a specific solution $h\in (A\cup B)$ with $Q(S,h)\geq(1-\alpha)r$ is at most
$\frac{\exp(\frac{\epsilon}{2}(1-\alpha)r)}{\frac{1}{16}|A\cup B|\exp(\frac{\epsilon}{2}(1-\frac{5\alpha}{8})r)}$. Hence, the probability that the exponential mechanism outputs {\em any} solution $h\in (A\cup B)$ with $Q(S,h)\geq(1-\alpha)r$ is at most
$16\frac{\exp(\frac{\epsilon}{2}(1-\alpha)r)}{\exp(\frac{\epsilon}{2}(1-\frac{5\alpha}{8})r)}$, which is at most $\beta$ for our choice of $r$.}

All in all, with probability at least $(1-2\beta(N-1)-2\beta)=(1-2\beta N)$, algorithm $RecConcave$ returns an index $j\in[0,T]$ s.t. $Q(S,j)\geq(1-\alpha)r$.
\end{proof}

Combining Lemma~\ref{lem:recPrivacy} and Lemma~\ref{lem:recUtility} we get the following theorem.

\begin{theorem}\label{thm:rec}

Let algorithm $RecConcave$ be executed on a range $[0,T]$, a sensitivity-1 quality function $Q$, a database $S$, a bound on the recursion depth $N$, privacy parameters $\frac{\epsilon}{3N},\frac{\delta}{3N}$, approximation parameter $\alpha$, and a quality promise $r$. The following two statements hold:
\begin{enumerate}
	\item Algorithm $RecConcave$ preserves $(\epsilon,\delta)$-differential privacy.
	\item If $S$ is s.t. $Q(S,\cdot)$ is quasi-concave, and if 
	\begin{eqnarray}\label{eq:recFinal}
\max_{i\in[0,T]}\{Q(S,i)\}\geq r \geq
8^N \cdot \frac{36 N}{\alpha\epsilon} \left\{ \log\Big(\frac{6N}{\beta\delta}\Big) + \underbrace{\log\log\cdots\log}_{N \text{ times}}(T) \right\}
\end{eqnarray}
then algorithm $RecConcave$ fails to outputs an index $j$ s.t.
$Q(S,j)\geq(1-\alpha)r$ with probability at most $\beta$.
\end{enumerate}
\end{theorem}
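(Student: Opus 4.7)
The plan is to derive both claims of Theorem~\ref{thm:rec} as direct consequences of Lemmas~\ref{lem:recPrivacy} and~\ref{lem:recUtility}, essentially by rescaling the parameters so that the per-mechanism privacy budget used by $RecConcave$ is a $1/(3N)$ fraction of the desired total, and the per-level failure probability is a $1/(2N)$ fraction of the desired total.

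For part~1 (privacy), I would simply invoke Lemma~\ref{lem:recPrivacy}: since the algorithm is being run with privacy parameters $\epsilon/(3N)$ and $\delta/(3N)$, and Lemma~\ref{lem:recPrivacy} guarantees that the algorithm with base parameters $(\epsilon',\delta')$ and recursion bound $N$ preserves $(3N\epsilon',3N\delta')$-differential privacy, we immediately get $(\epsilon,\delta)$-differential privacy. No further work is needed here.

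For part~2 (utility), I would apply Lemma~\ref{lem:recUtility} with $\epsilon'=\epsilon/(3N)$, $\delta'=\delta/(3N)$, and $\beta'=\beta/(2N)$, so that the failure bound $2\beta' N$ becomes exactly $\beta$. Substituting these parameters, the hypothesis of Lemma~\ref{lem:recUtility} becomes
\[
r \;\geq\; 8^N \cdot \frac{12N}{\alpha\epsilon}\left\{\log\!\left(\frac{192\,N^2}{\beta\delta}\right) + \Nlog{N}(T)\right\}.
\]
It then suffices to verify that the bound~\eqref{eq:recFinal} stated in the theorem implies this inequality. Using the footnoted fact that $\Nlog{N}(T)\le 2 + \underbrace{\log\log\cdots\log}_{N\text{ times}}(T)$ and the crude estimate $\log(192N^2/(\beta\delta))\le 3\log(6N/(\beta\delta)) + O(1)$ (valid for $N\ge 1$ and $\beta,\delta\le 1$), the constants $12N$, $192N^2$ get absorbed into the stated constants $36N$, $6N$, so the hypothesis of Lemma~\ref{lem:recUtility} is indeed satisfied. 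Hence with probability at least $1-\beta$ the algorithm returns an index $j$ with $Q(S,j)\ge (1-\alpha)r$.

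The only mildly delicate point is the constant-chasing in the last step: making sure the slack between $\Nlog{N}(T)$ and the pure $N$-fold iterated logarithm, and the slack between $\log(192N^2/(\beta\delta))$ and $\log(6N/(\beta\delta))$, can both be absorbed by widening the outer multiplicative constant from $12N$ to $36N$. There is no conceptual obstacle — the entire content has already been carried by the two lemmas — but one must be slightly careful that the inequality goes in the right direction for every $N\ge 1$.
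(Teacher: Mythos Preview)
Your proposal is correct and matches the paper's approach exactly: the paper simply states that the theorem follows by combining Lemma~\ref{lem:recPrivacy} and Lemma~\ref{lem:recUtility}, and your parameter substitutions $\epsilon'=\epsilon/(3N)$, $\delta'=\delta/(3N)$, $\beta'=\beta/(2N)$ are precisely what is needed to make that combination yield the stated bounds. The constant-chasing you flag is indeed the only thing to verify, and your sketch of how the slack is absorbed is accurate.
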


\begin{remark}
Recall that the number of recursive calls on a range $[0,T]$ is always bounded by $\log^*(T)$, and note that for $N=\log^*(T)$ we have that $\Nlog{N}(T)\leq1$. Therefore, the promise requirement in Inequality~(\ref{eq:recFinal}) can be replaced with
$8^{\log^*(T)} \cdot \frac{36 \log^*(T)}{\alpha\epsilon} \log\Big(\frac{12\log^*(T)}{\beta\delta}\Big)$.
\end{remark}

\begin{remark}
The computational efficiency of algorithm $RecConcave$ depends on the quality function $Q(\cdot,\cdot)$. Note, however, that it suffices to efficiently implement the top level call (i.e., without the recursion). This is true because an iteration of algorithm $RecConcave$, operating on a range $[0,T]$, can easily be implemented in time $\poly(T)$, and the range given as input to recursive calls is logarithmic in the size of the initial range.
\end{remark}

\subsection{A Proper $(\epsilon,\delta)$-Private Learner For $\thresh_d$}\label{sec:interval}
As we will now see, algorithm $RecConcave$ can be used as a proper $(\alpha,\beta,\epsilon,\delta,m)$-private learner for $\thresh_d$. 
Recall Example~\ref{example:intervalConcave} (showing that the goal of choosing a hypothesis with small empirical error can be viewed as a quasi-concave promise problem), and consider the following algorithm.
\begin{center}
\noindent\fbox{
\parbox{.95\columnwidth}{
{\bf Algorithm $LearnThresholds$}\\
{\bf Input:} A labeled sample $S=(x_i,y_i)_{i=1}^m$ and parameters $\alpha,\epsilon,\delta,N$.
\begin{enumerate}[rightmargin=10pt]
  \item Denote $\hat{\alpha}=\frac{\alpha}{2},\;\;
  \hat{\epsilon}=\frac{\epsilon}{3N}$,
  and $\hat{\delta}=\frac{\delta}{3N}$.
	\item For every $0\leq j\leq 2^d$, define $Q(S,j)=|\{ i : c_j(x_i)=y_i \}|$.
	\item Execute algorithm $RecConcave$ on the sample $S$, the range $[0,2^d]$, the quality function $Q(\cdot,\cdot)$, the promise $m$, and parameters $\hat{\alpha},\hat{\epsilon},\hat{\delta},N$. Denote the returned value as $k$.
	\item Return $c_k$.
\end{enumerate}
}}
\end{center}

\begin{theorem}\label{thm:intervalLearner}
For every $1\leq N\leq\log^*(2^d)$, Algorithm $LearnThresholds$ is an efficient proper $(\alpha,\beta,\epsilon,\delta,m)$-PPAC learner for $\thresh_d$,
where the sample size is
$$m=O\left(
 \frac{8^N \cdot N}{\alpha\cdot\min\{\alpha,\epsilon\}} \left\{ \log\Big(\frac{N}{\alpha\beta\delta}\Big) + \underbrace{\log\log\cdots\log}_{N \text{ times}}(2^d) \right\}
\right).$$
\end{theorem}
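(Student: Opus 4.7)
The proof decomposes cleanly into a privacy part, which is essentially immediate, and a utility part, which combines the guarantees of $RecConcave$ with a standard VC-based generalization argument.

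\emph{Privacy.} Algorithm $LearnThresholds$ accesses its input only through the single invocation of $RecConcave$ on the sensitivity-1 quality function $Q(S,j) = |\{i : c_j(x_i)=y_i\}|$ with parameters $\hat\epsilon = \epsilon/(3N)$, $\hat\delta = \delta/(3N)$, and recursion bound $N$. By part (1) of Theorem~\ref{thm:rec}, this guarantees $(\epsilon,\delta)$-differential privacy of the whole algorithm. Note that privacy holds unconditionally, even if the input sample is not consistent with any concept in $\thresh_d$.

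\emph{Utility.} Fix a target $c_{j^\star} \in \thresh_d$ and distribution $\DDD$, and let $S$ be an i.i.d.\ labeled sample. As explained in Example~\ref{example:intervalConcave}, the function $Q(S,\cdot)$ is quasi-concave on $[0,2^d]$, and $Q(S,j^\star)=m$, so the quality promise $r=m$ is satisfied. I will pick $m$ large enough so that Inequality~(\ref{eq:recFinal}) (with parameters $\hat\alpha=\alpha/2$, $\hat\epsilon, \hat\delta$, recursion depth $N$, range $[0,2^d]$, promise $m$, failure probability $\beta/2$) holds; a direct substitution shows that this requires
\[
m \;\geq\; C\cdot \frac{8^N \cdot N}{\alpha\,\epsilon} \left( \log\Big(\tfrac{N}{\alpha\beta\delta}\Big) + \underbrace{\log\log\cdots\log}_{N \text{ times}}(2^d) \right)
\]
for a suitable constant $C$. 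By part (2) of Theorem~\ref{thm:rec}, with probability at least $1-\beta/2$ the algorithm $RecConcave$ returns an index $k$ with $Q(S,k) \geq (1-\alpha/2)m$; equivalently, $\error_S(c_k)\leq \alpha/2$.

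\emph{From empirical to generalization error.} To translate this into a bound on $\error_\DDD(c,c_k)$ I invoke Theorem~\ref{thm:generalization}, applied to the hypothesis class $\thresh_d$ (which has $\VC(\thresh_d)=1$) with approximation parameter $\alpha/2$ and failure probability $\beta/2$. This guarantees that, for
\[
m \;\geq\; \frac{200}{\alpha^2}\ln\!\Big(\tfrac{4}{\alpha\beta}\Big),
\]
uniform convergence holds on $\thresh_d$ with probability at least $1-\beta/2$; in particular, $|\error_\DDD(c_k,c_{j^\star})-\error_S(c_k)|\leq \alpha/2$, so $\error_\DDD(c_k,c_{j^\star})\leq \alpha$. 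A union bound over the two failure events (the $RecConcave$ failure and the uniform convergence failure) bounds the total failure probability by $\beta$.

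\emph{Combining sample-size requirements.} The two lower bounds on $m$ above are of orders $1/(\alpha\epsilon)$ and $1/\alpha^2$ respectively (times lower-order factors), so together they give $m = O\!\bigl(\tfrac{1}{\alpha\,\min\{\alpha,\epsilon\}}\bigr)$ times the logarithmic and iterated-log factors shown in the theorem statement. This matches the stated bound. Efficiency follows from the remark after Theorem~\ref{thm:rec}: the top-level call operates on a range of size $2^d+1$ and the quality function $Q$ is computable in time $\poly(m,d)$ per point, while the recursion reduces the range to at most $d$ after one step. The main point to verify carefully is simply that the promise $r=m$ and the quasi-concavity of $Q(S,\cdot)$ actually hold on a correctly labeled sample, which is immediate here; no additional step is delicate.
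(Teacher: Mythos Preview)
Your proposal is correct and follows essentially the same approach as the paper: privacy via part~(1) of Theorem~\ref{thm:rec}, utility by combining the quasi-concavity of $Q(S,\cdot)$ from Example~\ref{example:intervalConcave} with part~(2) of Theorem~\ref{thm:rec} to bound the empirical error, and then Theorem~\ref{thm:generalization} (with $\VC(\thresh_d)=1$) to pass to generalization error, union-bounding the two failure events. The paper's proof is structured around the same two ``good events'' $E_1,E_2$ and arrives at the identical pair of sample-size requirements whose maximum yields the $\min\{\alpha,\epsilon\}$ denominator.
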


\begin{proof}
By Theorem~\ref{thm:rec}, algorithm $LearnThresholds$ is $(\epsilon,\delta)$-differentially private. For the utility analysis, fix a target concept $c_j\in\thresh_d$, and a distribution $\DDD$ on $X_d$, and let $S$ be a sample drawn i.i.d. from $\DDD$ and labeled by $c_j$.
Define the following two good events:
\begin{enumerate}[label=$E_{\arabic*}:$]
\item $\forall \; h\in\thresh_d,\;\; |\error_\DDD(h,c_j)-\error_S(h)|\leq\frac{\alpha}{2}$.
\item Algorithm $RecConcave$ returns $k$ s.t. $\error_S(c_k)\leq\frac{\alpha}{2}$
\end{enumerate}

Clearly, when both $E_1,E_2$ occur, algorithm $LearnThresholds$ succeeds in outputting an $\alpha$-good hypothesis for $c_j$ and $\DDD$.
Note that as $\VC(\thresh_d)=1$, Theorem~\ref{thm:generalization} ensures that for $m\geq\frac{200}{\alpha^2}\ln(\frac{4}{\alpha\beta})$, event $E_1$ happens with probability at least $(1-\frac{\beta}{2})$.

Next, note that for the target concept $c_j$ it holds that $Q(S,j)=m$, and algorithm $RecConcave$ is executed on step~3 with a valid quality promise.
Moreover, as shown in Example~\ref{example:intervalConcave}, algorithm $RecConcave$ is executed with a quasi-concave quality function.

So, algorithm $RecConcave$ is executed on step~3 with a valid quality promise and with a quasi-concave quality function. For
$$m\geq 8^N \cdot \frac{72 N}{\alpha\epsilon} \left\{ \log\Big(\frac{12N}{\beta\delta}\Big) + \underbrace{\log\log\cdots\log}_{N \text{ times}}(T) \right\},$$
algorithm $RecConcave$ ensures that with probability at least $(1-\frac{\beta}{2})$, the index $k$ at step~2 is s.t. $Q(k)\geq(1-\frac{\alpha}{2})m$. The empirical error of $c_k$ is at most $\frac{\alpha}{2}$ in such a case. Therefore, Event $E_2$ happens with probability at least $(1-\frac{\beta}{2})$. Overall, we conclude that $LearnThresholds$ is a proper $(\alpha,\beta,\epsilon,\delta,m)$-PPAC learner for $C$, where
$$m\geq\max\left\{ \; \frac{200}{\alpha^2}\ln(\frac{4}{\alpha\beta}) \; , \; \frac{8^N 72 N}{\alpha\epsilon} \left( \log\Big(\frac{12N}{\beta\delta}\Big) + \underbrace{\log\log\cdots\log}_{N \text{ times}}(2^d) \right) \; \right\}.$$

\end{proof}

\begin{remark}
By using $N=\log^*(2^d)$ in the above theorem, we can bound the sample complexity of $LearnThresholds$ by 
$$m=O\left(
 \frac{8^{\log^*(d)} \cdot \log^*(d)}{\alpha\cdot\min\{\alpha,\epsilon\}} \log\Big(\frac{\log^*(d)}{\alpha\beta\delta}\Big)
\right).$$
\end{remark}

\subsection{Axis-Aligned Rectangles in High Dimension}

Consider the class of all axis-aligned rectangles (or hyperrectangles) in the Euclidean space $\R^n$.
A concept in this class could be thought of as the product of $n$ intervals, one on each axis. We briefly describe an efficient approximate-private proper-learner for a discrete version of this class.

Formally,
\begin{definition}
Let $X_d^n=(\{0,1\}^d)^n$ denote a discrete $n$-dimensional domain, in which every axis consists of $2^d$ points $\{0,1,\ldots,2^d-1\}$.
For every $\vec{a}=(a_1,\ldots,a_n),\vec{b}=(b_1,\ldots,b_n)\in X_d^n$ define the concept $c_{[\vec{a},\vec{b}]}:X_d^n\rightarrow\{0,1\}$ where $c_{[\vec{a},\vec{b}]}(\vec{x})=1$ if and only if for every $1\leq i\leq n$ it holds that $a_i\leq x_i\leq b_i$. Define the concept class of all axis-aligned rectangles over $X^n_d$ as $\rectangle_d^n=\{ c_{[\vec{a},\vec{b}]} \}_{\vec{a},\vec{b}\in X_d^n}$.
\end{definition}
The VC dimension of this class is $2n$, and, thus, it can be learned non-privately with sample complexity $O_{\alpha,\beta}(n)$. Note that $|\rectangle_d^n|=2^{O(nd)}$, and, therefore, the generic construction of Kasiviswanathan et al.~\cite{KLNRS08} yields an inefficient proper $\epsilon$-private learner for this class with sample complexity $O_{\alpha,\beta,\epsilon}(nd)$.

In~\cite{Kearns98}, Kearns gave an efficient (noise resistant) non-private learner for this class. The learning model there was a variant of the statistical queries model~\cite{Kearns98}, in which the learner is also being given access to the underling distribution $\DDD$. Every learning algorithm in the statistical queries model can be transformed to satisfy differential privacy while preserving efficiency~\cite{BDMN05,KLNRS08}. However, as Kearns' algorithm assumes direct access to $\DDD$, this transformation cannot be applied directly.

Kearns' algorithm begins by sampling $\DDD$ and using the drawn samples to divide each axis $i\in[n]$ into $O(n/\alpha)$ intervals ${\cal I}_i = \{I\}$ with the property that the $x_i$ component of a random point from $\DDD$ is approximately equally likely to fall into each of the intervals in ${\cal I}_i$. 
The algorithm proceeds by estimating the boundary of the target rectangle separately for every dimension $i$: For every interval $I\in{\cal I}_i$, the algorithm uses statistical queries to estimate the probability that a positively labeled input has its $x_i$ component in $I$, i.e., 
$$p_I = \Pr_{x\sim\DDD}\big[ \; (\mbox{$x$ is labeled $1$}) \; \wedge \; (x_i\in I) \; \big].$$
The algorithm places the left boundary of the hypothesis rectangle in the $i$-th dimension at the left-most interval $I\in{\cal I}_i$ such that $p_I$ is significant, and analogously on the right.

Note that once the interval sets ${\cal I}_i$ are defined for each axis $i\in [n]$, estimating every single $p_I$ can be done via statistical queries, and can, therefore, be made private using the transformation of~\cite{BDMN05,KLNRS08}.
Alternatively, estimating (simultaneously) all of the $p_I$'s (on the $i^{\text th}$ axis) could be done privately using the laplacian mechanism. This use of the laplacian mechanism is known as a histogram (see Theorem~\ref{thm:lap}).

Thus, our task is to privately partition each axis. The straight forward approach for privately finding ${\cal I}_i$ is by a noisy binary search for the boundary of each of the $n/\alpha$ intervals (in each axis). This would result in $\Omega(d)$ noisy comparisons, which, in turn, results in a private learner with a high sample complexity.\\

We now overcome this issue using a sanitizer for $\thresh_d$. Such a sanitizer will be constructed in Section~\ref{sec:sanIntervals}; here we use it for privately finding ${\cal I}_i$.

\begin{theorem}[Restatement of Theorem~\ref{thm:sanIntervals}]
Fix $\alpha,\beta,\epsilon,\delta$. There exists an efficient $(\alpha,\beta,\epsilon,\delta,m)$-sanitizer for $\thresh_d$, where
$ m= \widetilde{O}_{\beta,\epsilon,\delta}\left(\frac{1}{\alpha^{2.5}}\cdot 8^{\log^*(d)} \right)$.
\end{theorem}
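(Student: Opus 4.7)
The plan is to reduce sanitization for $\thresh_d$ to privately computing a collection of approximate quantiles of the input database, and to cast each quantile problem as a quasi-concave promise problem solvable by $RecConcave$. From the returned quantiles I then assemble a sanitized multiset that agrees with the input on every threshold query.

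I would first set up the quantile subroutine. For a target rank $r\in\{0,1,\ldots,m\}$ define
$Q_r(S,j) \;=\; m - |C(S,j) - r|$,
where $C(S,j) = |\{i: x_i < j\}|$. Since $C(S,\cdot)$ is monotone non-decreasing in $j$ and changes by at most $1$ when one entry of $S$ is modified, $Q_r(S,\cdot)$ has sensitivity $1$; moreover a V-shape argument identical in spirit to Example~\ref{example:intervalConcave} shows that $Q_r(S,\cdot)$ is quasi-concave on $\{0,\dots,2^d\}$, and $\max_j Q_r(S,j) = m$ is attained near the true $r$-th order statistic. By Theorem~\ref{thm:rec}, invoking $RecConcave$ on $Q_r$ with recursion depth $N = \log^*(d)$, approximation parameter $\alpha'$, per-call privacy budget $(\epsilon_0,\delta_0)$, and quality promise $r_{\mathrm{prom}} = m$ returns (in polynomial time per top-level call) an index $j^*$ with $|C(S,j^*) - r| \leq \alpha' m$ with high probability, provided $m = \widetilde{O}\!\left(8^{\log^*(d)} \log(1/(\beta_0\delta_0))/(\alpha'\epsilon_0)\right)$.

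Next, I would run this subroutine for $k = \Theta(1/\alpha)$ evenly-spaced ranks $r_\ell = \lceil \alpha \ell m \rceil$, $\ell=1,\dots,k$, obtaining indices $j_1^* \le \cdots \le j_k^*$ (monotonicity is restored, if needed, by sorting — a post-processing step with no privacy cost). The sanitized database $\hat S$ places $\alpha m$ copies at each $j_\ell^*$. For any query $c_j\in\thresh_d$ with $j_{\ell^*}^* \le j < j_{\ell^*+1}^*$, we have $Q_{c_j}(\hat S) = \alpha \ell^*$; combining the per-quantile guarantee with monotonicity of $C(S,\cdot)$ gives $C(S,j)/m \in [\alpha \ell^* - \alpha', \alpha(\ell^*+1) + \alpha']$. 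Taking $\alpha' = \Theta(\alpha)$ therefore yields $|Q_{c_j}(\hat S) - Q_{c_j}(S)| \le \alpha$ uniformly in $j$, which is the required $\alpha$-closeness (with Remark~\ref{improperProperSanitization} bridging the proper/improper distinction if one chooses to output $\hat S$ versus an estimator).

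For privacy, the $k$ calls to $RecConcave$ are treated as an adaptive composition, so I would apply the advanced composition theorem (Theorem~\ref{thm:composition2}), setting the per-call parameters so that $k$-fold composition lands exactly at $(\epsilon,\delta)$; plugging the resulting $(\epsilon_0,\delta_0)$ back into the per-call sample requirement and taking a union bound over the $k$ failure events yields the claimed bound $m = \widetilde{O}_{\beta,\epsilon,\delta}(\alpha^{-2.5} \cdot 8^{\log^*(d)})$. The main obstacle I anticipate is the fine-grained accounting needed to hit the $\alpha^{-2.5}$ exponent: one must balance the per-call approximation parameter $\alpha'$, the per-call privacy slack $(\epsilon_0,\delta_0)$, and the quality promise in $RecConcave$; replacing advanced composition by basic composition, or loosening any of these choices, would inflate the bound by further polynomial factors in $1/\alpha$ or $\log(1/\delta)$. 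A secondary subtlety highlighted by Remark~\ref{remark:intervalConcave} is that quasi-concavity of $Q_r(S,\cdot)$ must be verified for \emph{every} possible $S$, not only databases consistent with some concept, since privacy — unlike utility — must hold unconditionally.
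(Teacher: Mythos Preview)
Your quantile-based reduction is correct in outline and genuinely different from the paper's argument. The paper's $SanThresholds$ performs a recursive domain partition: in each of $O(1/\alpha)$ iterations it calls $RecConcave$ on a quality function over interval \emph{lengths} (not over the domain itself), then uses the Choosing Mechanism to pick a concrete interval containing $\Theta(\alpha m)$ points, records a Laplace-noised count for it, and recurses on the two flanking sub-ranges. The $\alpha^{-2.5}$ exponent there arises because each of the $O(1/\alpha)$ Laplace counts must be accurate to $O(\alpha^2 m)$ (event $B_1$ in Claim~\ref{claim:san3SmallEvents}), contributing $\alpha^{-2}$ before advanced composition adds the final $\alpha^{-1/2}$. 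Your scheme sidesteps both the Choosing Mechanism and the per-bucket Laplace noise by invoking $RecConcave$ directly on $[0,2^d]$; carrying out your own arithmetic with $k=\Theta(1/\alpha)$ calls, $\alpha'=\Theta(\alpha)$, promise $m$, and Theorem~\ref{thm:composition2} actually yields $m=\widetilde{O}_{\beta,\epsilon,\delta}(\alpha^{-1.5}\cdot 8^{\log^*(d)})$, an $\alpha$-factor \emph{better} than the stated theorem---so the ``main obstacle'' you anticipate does not arise.

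There is, however, a real gap in the utility step. The assertion $\max_j Q_r(S,j)=m$ fails whenever $r$ is skipped by the step function $C(S,\cdot)$, which happens as soon as some domain point has multiplicity exceeding $1$ in $S$; in the extreme case where all $m$ entries coincide, $\max_j Q_{m/2}(S,j)=m/2$. Since sanitization utility must hold on \emph{every} input database, Theorem~\ref{thm:rec} then gives no guarantee and the offending quantile can land anywhere, destroying $\alpha$-closeness. A clean repair is to replace your quality function by
\[
Q_r(S,j)\;=\;m-\max\bigl(0,\;|\{i:x_i<j\}|-r\bigr)-\max\bigl(0,\;r-|\{i:x_i\le j\}|\bigr),
\]
which attains the value $m$ at every $j$ with $|\{i:x_i<j\}|\le r\le |\{i:x_i\le j\}|$ (such $j$ always exists), remains sensitivity-$1$ (since $|\{i:x_i<j\}|\le|\{i:x_i\le j\}|$, at most one $\max$ term is nonzero, and a single entry change shifts each count by at most $1$ in the same direction), and is quasi-concave for every $S$. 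With this amendment the rest of your argument goes through unchanged.
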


As we next explain, such a sanitizer can be used to (privately) divide the axes.
Given an interval $[a,b]\subseteq X_d$ and a sample $S$, we denote the probability mass of $[a,b]$ under $\DDD$ as $\DDD[a,b]$, and the number of sample points in this interval as $\#_S[a,b]$. Standard arguments in learning theory (specifically, Theorem~\ref{thm:generalization}) state that for a large enough sample (whose size is bigger than the VC dimensions of the intervals class) w.h.p. $\frac{1}{|S|}\#_S[a,b]\approx\DDD[a,b]$ for {\em every} interval $[a,b]\subseteq X_d$.

On an input database $S\in(X_d)^*$, such a sanitizer for $\thresh_d$ outputs an alternative database $\hat{S}\in(X_d)^*$ s.t. 
$\frac{1}{|\hat{S}|}\#_{\hat{S}}[0,b]\approx\frac{1}{|S|}\#_S[0,b]$ for every interval $[0,b]\subseteq X_d$.
Hence, for every interval $[a,b]\subseteq X_d$ we have that
\begin{eqnarray*}
\frac{1}{|\hat{S}|}\#_{\hat{S}}[a,b] &=& \frac{1}{|\hat{S}|}\#_{\hat{S}}[0,b] - \frac{1}{|\hat{S}|}\#_{\hat{S}}[0,a-1]\\
&\approx& \frac{1}{|S|}\#_S[0,b] - \frac{1}{|S|}\#_S[0,a-1]\\
&=& \frac{1}{|S|}\#_S[a,b]\\
&\approx& \DDD[a,b].
\end{eqnarray*}

So, in order to divide the $i^{\text th}$ axis we apply the above mentioned sanitizer, and divide the axis using the returned sanitized database. In order to accumulate error of up to $\alpha/n$ on each axis (as required by Kearns' algorithm), we need to execute the above mentioned sanitizer with an approximation parameter of (roughly) $\alpha/n$. Every such execution requires, therefore, a sample of $\widetilde{O}_{\alpha,\beta,\epsilon,\delta}\left(n^{2.5}\cdot 8^{\log^*(d)} \right)$ elements. As there are $n$ such executions (one for each axis), using Theorem~\ref{thm:composition2} (composition theorem), the described learner is of sample complexity $\widetilde{O}_{\alpha,\beta,\epsilon,\delta}\left(n^{3}\cdot 8^{\log^*(d)} \right)$.

\begin{theorem}
There exists an efficient $(\alpha,\beta,\epsilon,\delta,m)$-PPAC proper-learner for $\rectangle_d^n$, where
$$ m= O\left(\frac{n^3}{\alpha^{2.5} \epsilon}\cdot 8^{\log^*(d)} \cdot \log^*(d) \cdot \log\left(\frac{n}{\alpha\delta}\right) \cdot \log\left( \frac{n\cdot\log^*(d)}{\alpha\beta\epsilon\delta}\right)\right).$$
\end{theorem}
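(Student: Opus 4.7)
The plan is to instantiate the high-level sketch already outlined in the discussion just above the theorem: we imitate Kearns' non-private algorithm for axis-aligned rectangles, but replace its two uses of ``direct'' distribution access with differentially private substitutes built from the $\thresh_d$ sanitizer (\thmref{sanIntervals}) and the Laplace-based histogram (\thmref{lap}). On each of the $n$ axes we privately partition the axis into $O(n/\alpha)$ intervals of roughly equal probability mass, then privately estimate, per interval, the probability of seeing a positively-labeled sample point landing in it, and finally set the rectangle boundaries at the leftmost/rightmost intervals whose estimated mass is significant.

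For the partitioning step I would, for each axis $i\in[n]$, let $S_i\in X_d^m$ be the projection of the input sample onto axis $i$ and run the sanitizer for $\thresh_d$ with approximation parameter $\alpha'\asymp\alpha/n$, privacy parameters $(\epsilon',\delta')$ to be fixed below, obtaining $\hat S_i$. By the triangle-inequality argument in the discussion, for every interval $[a,b]\subseteq X_d$ we have $\bigl|\tfrac{1}{|\hat S_i|}\#_{\hat S_i}[a,b]-\tfrac{1}{|S_i|}\#_{S_i}[a,b]\bigr|\le 2\alpha'$; combined with the VC generalization bound for the class of intervals (\thmref{generalization}), which holds uniformly over all $[a,b]$ once $m=\tilde\Omega(1/\alpha'^2)$, this shows that $\tfrac{1}{|\hat S_i|}\#_{\hat S_i}[a,b]$ approximates $\DDD_i[a,b]$ (the marginal of $\DDD$ on axis $i$) up to $O(\alpha/n)$ simultaneously for all intervals. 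Greedily scanning left-to-right on $\hat S_i$ thus yields a partition ${\cal I}_i$ of axis $i$ into $O(n/\alpha)$ intervals each of $\DDD_i$-mass $O(\alpha/n)$, which is exactly what Kearns' algorithm requires.

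Next, for each axis $i$ I would treat the multiset of values $\{p_I:I\in{\cal I}_i\}$ as a histogram query on the original labeled sample $S$: for each $I\in{\cal I}_i$, the count $|\{j:y_j=1\wedge (x_j)_i\in I\}|$ is a sensitivity-$1$ contribution across disjoint bins, so releasing all these counts at once via independent $\Lap(1/\epsilon')$ noise is $(\epsilon',0)$-differentially private and has uniform error $O(\tfrac{1}{\epsilon'}\log(n/(\alpha\beta)))$ with probability $1-\beta/(4n)$ over all $O(n/\alpha)$ bins. Dividing by $m$ gives estimates of $p_I$ accurate to $O(\alpha/n)$ provided $m=\tilde\Omega(n/(\alpha\epsilon'))$. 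Now mimicking Kearns, for each axis $i$ place the left boundary of the hypothesis rectangle at the leftmost interval $I\in{\cal I}_i$ whose noisy $p_I$ exceeds a threshold $\Theta(\alpha/n)$, and symmetrically on the right. A standard calculation shows that with these accuracy bounds the overall rectangle classifies all but an $\alpha$-fraction of $\DDD$ correctly, so utility holds with probability $1-\beta$.

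The main technical effort is composition and bookkeeping. On each axis we invoke two $(\epsilon',\delta')$-differentially private mechanisms (the $\thresh_d$ sanitizer and the Laplace histogram), giving $2n$ mechanisms in total across the $n$ axes. Applying the advanced composition theorem (\thmref{composition2}) with $k=2n$ we may take $\epsilon'\asymp\epsilon/\sqrt{n\log(1/\delta)}$ and $\delta'\asymp\delta/n$ to get overall $(\epsilon,\delta)$-privacy. Plugging these into the sanitizer bound $m=\widetilde O_{\beta,\epsilon',\delta'}(8^{\log^*d}/\alpha'^{2.5})$ with $\alpha'\asymp\alpha/n$ dominates the Laplace and VC requirements and yields, after simplification and bounding $\sqrt{\log(1/\delta)}$ inside the $\log$ factors, a sample complexity of the stated form
\[
m=O\!\left(\frac{n^{3}}{\alpha^{2.5}\epsilon}\cdot 8^{\log^* d}\cdot\log^*(d)\cdot\log\!\left(\frac{n}{\alpha\delta}\right)\cdot\log\!\left(\frac{n\cdot\log^*(d)}{\alpha\beta\epsilon\delta}\right)\right).
\]
The trickiest part I expect is tracking the interaction between the $n^3$ blow-up from advanced composition and the polylog factors coming from the sanitizer's dependence on $\epsilon'$ and $\delta'$; a naive accounting easily introduces extra $\sqrt{\log(1/\delta)}$ or $\log n$ factors inside the $8^{\log^* d}$ term that must be absorbed carefully into the stated expression. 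Efficiency follows because each of the $2n$ ingredient mechanisms is efficient: the $\thresh_d$ sanitizer is efficient by \thmref{sanIntervals}, the Laplace histogram is trivially efficient, and the remaining combinatorial steps (partitioning $\hat S_i$, choosing boundaries) are linear-time.
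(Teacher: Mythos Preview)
Your proposal is correct and follows essentially the same approach as the paper's own argument: instantiate Kearns' rectangle learner, replace the axis-partitioning step by $n$ calls to the $\thresh_d$ sanitizer with approximation parameter $\Theta(\alpha/n)$, replace the $p_I$ estimation by a Laplace histogram on each axis, and combine the $O(n)$ private sub-mechanisms via advanced composition with per-call budget $\epsilon'\asymp\epsilon/\sqrt{n\log(1/\delta)}$, $\delta'\asymp\delta/n$. Your anticipated ``trickiest part'' is exactly where the stated bound comes from: the $\sqrt{\log(1/\delta)}$ from composition multiplies the $\sqrt{\log(n/(\alpha\delta))}$ already present in \thmref{sanIntervals}, and their product is absorbed into the single $\log(n/(\alpha\delta))$ factor in the final expression.
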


This should be contrasted with $\theta_{\alpha,\beta}(n)$, which is the non-private sample complexity for this class (as the $\VC$-dimension of $\rectangle_d^n$ is $2n$), and with $\theta_{\alpha,\beta,\epsilon}(nd)$ which is the pure-private sample complexity for this class.\footnote{The general construction of Kasiviswanathan et al.~\cite{KLNRS08} yields an (inefficient) pure-private proper-learner for this class with sample complexity $O_{\alpha,\beta,\epsilon}(nd)$.
Feldman and Xiao~\cite{FX14} showed that this is in fact optimal, and every $\epsilon$-private (proper or improper) learner for this class must have sample complexity $\Omega(nd)$.}

\section{Sanitization with Approximate Privacy}

In this section we present $(\epsilon,\delta)$-private sanitizers for several concept classes, and separate the database size necessary for $(\epsilon,0)$-private sanitizers from the database size sufficient for $(\epsilon,\delta)$-private sanitizers.

\subsection{The Choosing Mechanism}
Recall that in our private PAC learner for $\point_d$, given a typical labeled sample, there exists a unique concept in the class that stands out (we used algorithm $\AAA_{\rm dist}$ to identify it). This is not the case in the context of sanitization, as a given database $S$ can have many $\alpha$-close sanitized databases $\hat{S}$. We will overcome this issue by using the following private tool for approximating a restricted class of choosing problems. 

A function  $q:X^*\times \FFF \rightarrow \N$ defines an {\em optimization problem} over the domain $X$ and solution set $\FFF$:  Given a dataset $\db$ over domain $X$ choose $f\in\FFF$ that (approximately) maximizes $q(\db,f)$. We are interested in a subset of these optimization problems, which we call {\em bounded-growth choice problems}. In this section we consider a database $S\subseteq X^*$ as a multiset.

\begin{definition}
Given $q$ and $\db$ define $\opt_q(\db) = \max_{f\in\FFF}\{q(\db,f)\}$. A solution $f\in\FFF$ is called {\em $\alpha$-good} for a database $\db$ if $q(\db,f)\geq \opt_q(\db) -\alpha |\db|$. 
\end{definition}

\begin{definition}
A quality function $q:X^*\times \FFF \rightarrow \N$ is {\em $k$-bounded-growth} if:
\begin{enumerate}[topsep=-5pt,itemsep=-2pt]
\item $q(\emptyset,f)=0$ for all $f\in\FFF$.
\item If $S_2 = S_1 \cup \{x\}$, then (i)  $q(S_1,f)+1\geq q(S_2,f) \geq q(S_1,f)$ for all $f\in\FFF$; and (ii) there are at most $k$ solutions $f\in\FFF$ s.t. $q(S_2,f) > q(S_1,f)$. 
\end{enumerate}
\end{definition}

In words, the second requirement means that (i) Adding an element to the database could either have no effect on the score of a solution $f$, or can increase the score by exactly $1$; and (ii) There could be at most $k$ solutions whose scores are increased (by $1$).
Note that a $k$-bounded-growth quality function is, in particular, a sensitivity-1 function as two neighboring $S,S'$ must be of the form $D\cup\{x_1\}$ and $D\cup\{x_2\}$ respectively. Hence, $q(S,f)-q(S',f)\leq q(D,f)+1-q(D,f)=1$ for every solution $f$.

\begin{example}\label{example:pointBounded}
As an example of a 1-bounded growth quality function, consider the following $q:X^*\times X \rightarrow\N$. Given a database $S=(x_1,\ldots,x_m)$ containing elements from some domain $X$, define $q(S,a)=\big|\{i:x_i=a\}\big|$. That is, $q(S,a)$ is the number of appearances of $a$ in $S$.
Clearly, $q(\emptyset,f)=0$ for all $f\in X$. Moreover, adding an element $a\in X$ to a database $S$ increases by 1 the quality of $q(S,a)$, and does not effect the quality of every other $b\neq a$.
\end{example}

The choosing mechanism (in Figure~\ref{fig:choosing}) is a private algorithm for approximately solving bounded-growth choice problems. Step~1 of the algorithm checks whether a good solutions exist, as otherwise any solution is approximately optimal (and the mechanism returns $\bot$). Step~2 invokes the exponential mechanism, but with the {\em small} set $G(S)$ instead of $\FFF$.

\begin{figure}[H]
\begin{center}
\noindent\fbox{
\parbox{.95\columnwidth}{
{\bf Choosing Mechanism} \\
{\bf Input:} a database $S$, a quality function $q$, and parameters $\alpha,\beta,\epsilon,\delta$.
\begin{enumerate}[topsep=-1pt, rightmargin=10pt]
\item Set ${\rm best}(S)=\max_{f\in\FFF} \left\{ q(S,f) \right\}+\Lap(\frac{4}{\epsilon})$. If ${\rm best}(S)<\frac{\alpha m}{2}$ then halt and return $\bot$.
\item Let $G(S)=\{f\in\FFF : q(S,f)\geq1\}$. Choose and return $f\in G(S)$ using the exponential mechanism with parameter $\frac{\epsilon}{2}$.\\ 
\end{enumerate}
}}
\end{center}
\caption{The choosing mechanism. \label{fig:choosing}}
\end{figure}

\begin{lemma} 
When $q$ is a $k$-bounded-growth quality function, the choosing mechanism preserves $(\epsilon,\delta)$-differential privacy for databases of
$m\geq \frac{16}{\alpha\epsilon}\ln(\frac{16 k}{\alpha\beta\epsilon\delta})$ elements.
\end{lemma}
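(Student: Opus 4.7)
The plan is to analyze the two steps sequentially and combine them by composition, with the twist that step~2 requires approximate-DP slack to handle the data-dependent support set $G(S)$. First, step~1 is $(\epsilon/4, 0)$-differentially private by the Laplace mechanism: because $q$ is sensitivity-$1$ (an immediate consequence of $k$-bounded-growth), the quantity $\max_f q(S,f)$ has sensitivity $1$, so adding $\Lap(4/\epsilon)$ yields an $\epsilon/4$-DP release, and the binary halt/pass decision is a post-processing of this.

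The heart of the proof concerns step~2. Fix neighboring $S = D \cup \{x\}$ and $S' = D \cup \{x'\}$. The $k$-bounded-growth property implies $|G(S) \triangle G(S')| \le 2k$, and any $f \in G(S) \setminus G(S')$ satisfies $q(S,f) = 1$ (since $q(S',f) = 0$ and $q(\cdot,f)$ has sensitivity $1$). I would then case-split on $M := \max_g q(S,g)$. When $M < \alpha m / 4$, step~1 passes with probability at most $\tfrac{1}{2}\exp(-\epsilon \alpha m / 16)$, which under the stated bound on $m$ is at most $\delta$; likewise for $S'$ since $M' \le M+1$. So almost all mass lies on $\bot$, and the Laplace analysis of step~1 suffices. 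When $M \ge \alpha m / 4$, the normalizer $Z_S := \sum_{g \in G(S)} \exp(\epsilon q(S,g)/4)$ is at least $\exp(\epsilon \alpha m / 16)$, which, again by the bound on $m$, dominates the total contribution $k \exp(\epsilon/4)$ of the at-most-$k$ ``new'' solutions in $G(S') \setminus G(S)$; this will yield $Z_{S'}/Z_S \le e^{\epsilon/2}$.

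Combining these estimates, for any output set $T$ I would split $T = T_\bot \cup T_\cap \cup T_A$ with $T_\cap \subseteq G(S) \cap G(S')$ and $T_A \subseteq G(S) \setminus G(S')$ (outputs in $G(S') \setminus G(S)$ are impossible under $A(S)$). The ``bad'' mass $\Pr[A(S) \in T_A]$ is bounded, in the $M$-large case, by $k \cdot \exp(\epsilon(1-M)/4) \le \delta$ and, in the $M$-small case, by $\Pr[A(S)\neq\bot] \le \delta$; this is absorbed into the additive $\delta$ term. On the remaining mass, for each common $f \in T_\cap$ the ratio $\Pr[A(S) = f]/\Pr[A(S') = f]$ factors into three parts: $e^{\epsilon/4}$ from step~1's Laplace, $e^{\epsilon/4}$ from the exponential mechanism's numerator (via $|q(S,f) - q(S',f)| \le 1$), and $Z_{S'}/Z_S \le e^{\epsilon/2}$ from the denominator, for a total of $e^\epsilon$; the same bound applies to $\Pr[A(\cdot) = \bot]$. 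Hence $\Pr[A(S) \in T] \le e^\epsilon \Pr[A(S') \in T] + \delta$.

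The main obstacle is the denominator comparison $Z_{S'}/Z_S$: this is the one place where the data-dependent support of step~2 can damage privacy, and the bound $m \ge \frac{16}{\alpha\epsilon}\ln(\frac{16k}{\alpha\beta\epsilon\delta})$ is tuned precisely so that $\exp(\epsilon \alpha m /16)$ swamps $k \exp(\epsilon/4)$ with enough margin both to absorb the $T_A$ mass into $\delta$ and to bound the denominator ratio by $e^{\epsilon/2}$. The $\beta$ factor in the $m$-bound (beyond what pure privacy needs) is for convenience in chaining with the utility statement to be proved next.
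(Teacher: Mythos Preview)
Your proposal is correct and follows essentially the same approach as the paper's proof: the same case split on $\max_f q(S,f)$ versus $\alpha m/4$, the same treatment of the ``bad'' outputs in $G(S)\setminus G(S')$ (each has $q(S,f)=1$, hence exponential weight swamped by the normalizer so their total mass is at most $\delta$), and the same three-factor bound $e^{\epsilon/4}\cdot e^{\epsilon/4}\cdot e^{\epsilon/2}$ on the ratio for common outputs, with the denominator comparison $Z_{S'}/Z_S\le e^{\epsilon/2}$ obtained exactly as you describe. Your framing as ``composition with a twist'' is slightly loose since step~2 accesses the full data rather than just step~1's output, but you clearly recognize this and the actual argument you give is a direct joint analysis, just as in the paper.
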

\begin{proof}
Let $S,S'$ be neighboring databases of $m$ elements. We need to show that $\Pr[A(S)\in R]\leq \exp(\epsilon)\cdot\Pr[A(S')\in R]+\delta$ for any set of outputs $R$. Note first that by the properties of the Laplace Mechanism,
\begin{eqnarray}\label{eq:CMbot}
\Pr[A(S)=\bot] &=& \Pr\left[{\rm best}(S)<\frac{\alpha m}{2}\right] \nonumber\\
&\leq& \exp(\frac{\epsilon}{4})\cdot\Pr\left[{\rm best}(S')<\frac{\alpha m}{2}\right] \nonumber\\
&=& \exp(\frac{\epsilon}{4})\cdot\Pr[A(S')=\bot].
\end{eqnarray}

\paragraph{Case~(a): $q(S,f)<\frac{\alpha m}{4}$ for all $f$.} Using $m\geq\frac{16}{\alpha\epsilon}\ln(\frac{1}{2\delta})$, we get that
$$\Pr[A(S)\neq\bot]\leq
\Pr\left[\Lap(\frac{4}{\epsilon})>\frac{\alpha m}{4}\right]=
\frac{1}{2}\exp(-\frac{\epsilon}{4}\frac{\alpha m}{4}) \leq \delta.$$
Hence, for every set of outputs $R$
\begin{eqnarray*}
\Pr[A(S)\in R] & \leq & \mathds{1}_{(\bot\in R)}\cdot\Pr[A(S)=\bot]+\Pr[A(S)\neq\bot] \\
& \leq & \mathds{1}_{(\bot\in R)}\cdot \exp(\frac{\epsilon}{4})\cdot\Pr[A(S')=\bot]+\delta \\
& \leq & \exp(\frac{\epsilon}{4}) \cdot \Pr[A(S')\in R]+\delta.
\end{eqnarray*}

\paragraph{Case~(b): There exists $\hat f$ s.t. $q(S,\hat f)\geq\frac{\alpha m}{4}$.} Let $G(S)$ and $G(S')$ be the sets used in step~2 in the execution $S$ and on $S'$ respectively.
We will show that the following two facts hold:\\

\noindent
$Fact \; 1:$ For every $f\in G(S)\setminus G(S')$, it holds that $\Pr[A(S)=f]\leq \frac{\delta}{k}$.\\

\noindent
$Fact \; 2:$ For every possible output $ f \notin G(S)\setminus G(S')$, it holds that $\Pr[A(S)=f]\leq e^{\epsilon}\Pr[A(S')=f]$.\\

We first show that the two facts imply that the lemma holds for Case~(b).
Let $B \triangleq G(S)\setminus G(S')$, and note that as $q$ is $k$-growth-bounded, $|B|\leq k$.
Denote $B=\{ b_1,\ldots,b_\ell \}$, where $\ell\leq k$.
Using the above two facts, for every set of outputs $R$ we have
\begin{eqnarray*}
\Pr[A(S)\in R] &=&  \Pr[A(S)\in R\setminus B] + \sum_{i:\;b_i\in R}\Pr[A(S)=b_i]\\
&\leq& e^\epsilon \Pr[A(S')\in R\setminus B] + \sum_{i:\;b_i\in R}\frac{\delta}{k}\\
&\leq& e^\epsilon \Pr[A(S')\in R] + \delta.\\
\end{eqnarray*}

For proving Fact~1, let $f\in G(S)\setminus G(S')$.
That is, $q(S,f)\geq1$ and $q(S',f)=0$. As $q$ is (in particular) a sensitivity-1 function, 
it must be, therefore, that $q(S,f)=1$. As there exists $\hat f\in S$ with $q(S,\hat f)\geq\frac{\alpha m}{4}$, we have that
$$
\Pr[A(S)=f]\leq
\Pr\left[  \begin{array}{c}
	\text{The exponential}\\
	\text{mechanism chooses $f$}
\end{array} \right]
\leq \frac{\exp(\frac{\epsilon}{4})}{\exp(\frac{\epsilon}{4}\frac{\alpha m}{4})},
$$
which is at most $\frac{\delta}{k}$ for $m\geq\frac{16}{\alpha\epsilon}(\frac{\epsilon}{4}+\ln(\frac{k}{\delta}))$.

For proving Fact~2, let $f \notin G(S)\setminus G(S')$ be a possible output of $A(S)$.
If $f\notin(G(S)\cup\{\bot\})$ then trivially $\Pr[A(S)=f]=0\leq e^\epsilon \Pr[A(S')=f]$.
We have already established (in Inequality~(\ref{eq:CMbot})) that for $f=\bot$ it holds that $\Pr[A(S)=\bot]\leq e^{\epsilon/4} \Pr[A(S')=\bot]$.
It remains, hence, to deal with the case where $f\in G(S)\cap G(S')$. For this case, we use the following Fact~3, proved below.\\

\noindent
$Fact \; 3:$ $\sum\limits_{h\in G(S')}\exp(\frac{\epsilon}{4}q(S',h))\leq e^{\epsilon/2}\cdot\sum\limits_{h\in G(S)}\exp(\frac{\epsilon}{4}q(S,h))$.\\

\noindent
Using Fact~3, for every possible output $f\in G(S)\cap G(S')$ we have that
\begin{eqnarray*}
&&\frac{\Pr[A(S)=f]}{\Pr[A(S')=f]}\\
&& \;\;\;\;\;\; =  \;\;\;
\left( 
\Pr[A(S)\neq\bot]
\frac{\exp(\frac{\epsilon}{4}q(f,S))}
{\sum_{h\in G(S)}\exp( \frac{\epsilon}{4} q(h,S) )}\right) / \left( 
\Pr[A(S')\neq\bot]
\frac{\exp(\frac{\epsilon}{4}q(f,S'))}
{\sum_{h\in G(S')}\exp( \frac{\epsilon}{4} q(h,S') )}\right) \\
&& \;\;\;\;\;\; = \;\;\;
\frac{\Pr[A(S)\neq\bot]}{\Pr[A(S')\neq\bot]}\cdot
\frac{\exp(\frac{\epsilon}{4}q(f,S))\cdot\sum_{h\in G(S')}{\exp( \frac{\epsilon}{4} q(h,S') )}}
{\exp(\frac{\epsilon}{4}q(f,S'))\cdot\sum_{h\in G(S)}{\exp( \frac{\epsilon}{4} q(h,S) )}}\leq
e^\frac{\epsilon}{4}\cdot e^\frac{\epsilon}{4}\cdot e^\frac{\epsilon}{2}=e^\epsilon.
\end{eqnarray*}

We now prove Fact~3. Denote $\XXX\triangleq \sum\limits_{h\in G(S)}\exp(\frac{\epsilon}{4}q(S,h))$. We first show that 
\begin{eqnarray}\label{eqn:goal}
k\cdot e^{\epsilon/4}+e^{\epsilon/4}\cdot\XXX\leq e^{\epsilon/2}\XXX.
\end{eqnarray}
That is, we need to show that $\XXX\geq \frac{k}{e^{\epsilon/4}-1}$. As $1+\frac{\epsilon}{4}\leq e^{\epsilon/4}$, it suffices to show that $\XXX\geq\frac{4k}{\epsilon}$. Recall that there exists a solution $\hat{f}$ s.t. $q(S,\hat{f})\geq\frac{\alpha m}{4}$. Therefore,
$\XXX\geq\exp(\frac{\epsilon}{4}\frac{\alpha m}{4})$, which is at least $\frac{4k}{\epsilon}$ for $m\geq\frac{16}{\alpha \epsilon}\ln(\frac{4k}{\epsilon})$. This proves~(\ref{eqn:goal}).\\

Now, recall that as $q$ is $k$-growth-bounded, for every $h\in\FFF$ it holds that $|q(S,h)-q(S',h)|\leq1$.
Moreover, $|G(S')\setminus G(S)|\leq k$, and every $h\in(G(S')\setminus G(S))$ obeys $q(S',h)=1$. Hence,
\begin{eqnarray*}
\sum_{h\in G(S')}\exp(\frac{\epsilon}{4}q(S',h)) &\leq&  k\cdot\exp(\frac{\epsilon}{4})+ \sum_{h\in G(S')\cap G(S)}\exp(\frac{\epsilon}{4}q(S',h))\\
&\leq&  k\cdot\exp(\frac{\epsilon}{4})+ \exp(\frac{\epsilon}{4})\cdot\sum_{h\in G(S')\cap G(S)}\exp(\frac{\epsilon}{4}q(S,h))\\
&\leq&  k\cdot\exp(\frac{\epsilon}{4})+ \exp(\frac{\epsilon}{4})\cdot\sum_{h\in G(S)}\exp(\frac{\epsilon}{4}q(S,h))\\
&=&  k\cdot e^{\epsilon/4}+e^{\epsilon/4}\cdot\XXX\leq e^{\epsilon/2}\XXX.\\
\end{eqnarray*}
This concludes the proof of Fact~3, and completes the proof of the lemma.
\end{proof}

The utility analysis for the choosing mechanism is rather straight forward:
\begin{lemma}\label{lem:CMutility} 
When $q$ is a $k$-bounded-growth quality function, given a database $S$ of $m\geq \frac{16}{\alpha\epsilon}\ln(\frac{16 k}{\alpha\beta\epsilon\delta})$ elements,
the choosing mechanism outputs an $\alpha$-good solution for $S$ with probability at least $1-\beta$.
\end{lemma}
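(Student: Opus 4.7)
The plan is to split on whether $S$ admits a genuinely good solution, and in the interesting case to control two failure events separately.

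First I would split into two cases according to $\opt_q(S)$. If $\opt_q(S) < \alpha m$, then every $f\in\FFF$ satisfies $q(S,f)\geq 0 \geq \opt_q(S)-\alpha m$, so any solution returned at Step~2 is trivially $\alpha$-good, and we only need to worry about Step~1 returning $\bot$. (If the lemma's notion of success permits $\bot$ in this regime this is immediate; otherwise one argues as in the second case below that when $\opt_q(S)\geq \alpha m/2$ the mechanism proceeds to Step~2 with high probability, while the regime $\opt_q(S)<\alpha m/2$ is covered by the convention that any $f$ is good.) The real content lies in the case $\opt_q(S)\geq \alpha m$, which I treat next.

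In the interesting case, I would control the two sources of failure. Event $E_1$: Step~1 does not return $\bot$. Since $\mathrm{best}(S)=\opt_q(S)+\Lap(4/\epsilon)\geq \alpha m+\Lap(4/\epsilon)$, the standard Laplace tail bound gives
\[
\Pr[\neg E_1]\;\leq\;\Pr\!\left[\Lap(4/\epsilon) < -\tfrac{\alpha m}{2}\right]\;\leq\;\tfrac{1}{2}\exp\!\left(-\tfrac{\alpha m\epsilon}{8}\right),
\]
which is at most $\beta/2$ under the hypothesized lower bound on $m$. Event $E_2$: conditional on reaching Step~2, the exponential mechanism returns some $f$ with $q(S,f)\geq \opt_q(S)-\alpha m$. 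By Proposition~\ref{prop:expMech} (applied with privacy parameter $\epsilon/2$ and $\Delta=\alpha$), the failure probability is at most $|G(S)|\cdot\exp(-\alpha m\epsilon/4)$.

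The key observation, and the step where the $k$-bounded-growth hypothesis does all the work, is the bound $|G(S)|\leq km$. Indeed, $G(\emptyset)=\emptyset$ because $q(\emptyset,\cdot)\equiv 0$, and each of the $m$ elements of $S$ increases the quality of at most $k$ solutions (by exactly $1$, by the bounded-growth definition), so at most $km$ solutions can have $q(S,f)\geq 1$. Plugging this in, I need $km\exp(-\alpha m\epsilon/4)\leq \beta/2$, i.e.\ $m\geq \frac{4}{\alpha\epsilon}\ln(2km/\beta)$, and this is implied by the hypothesis $m\geq \frac{16}{\alpha\epsilon}\ln(\frac{16k}{\alpha\beta\epsilon\delta})$ after a routine slack calculation (since the hypothesis gives four times the needed constant, which absorbs the $\ln m$ term). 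A union bound over $\neg E_1$ and $\neg E_2$ then yields the claimed $1-\beta$ guarantee.

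The only real obstacle is isolating the role of bounded growth: without the $|G(S)|\leq km$ reduction, the exponential mechanism would pay $\log|\FFF|$ and the whole point of the mechanism (relative to plain exponential mechanism) would be lost. Everything else is a union bound plus two textbook tail estimates, so the substance of the proof is really just recognizing that the test in Step~1 restricts attention to instances where Step~2 operates on a polynomially-sized effective solution set.
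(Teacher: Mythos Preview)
Your proposal is correct and follows essentially the same approach as the paper: the same case split on whether $\opt_q(S)\geq\alpha m$, the same Laplace tail bound for Step~1, the same key observation $|G(S)|\leq km$ from bounded growth, and the same exponential-mechanism bound for Step~2. The paper also glosses over the $\bot$ issue in the low-$\opt$ regime, simply stating that ``the mechanism cannot fail'' there; your combination of the two failure probabilities via a union bound is equivalent to the paper's product $(1-\beta/2)^2\geq 1-\beta$.
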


\begin{proof}
Note that if $q(S,f)<\alpha m$ for every solution $f$, then every solution is an $\alpha$-good solution, and the mechanism cannot fail. Assume, therefore, that there exists a solution $f$ s.t. $q(f,S)\geq\alpha m$, and
recall that the mechanism defines ${\rm best}(S)$ as $\max_{f\in\FFF} \left\{ q(f,S) \right\}+\Lap(\frac{4}{\epsilon})$. Now consider the following two good events:
\begin{enumerate}[label=$E_{\arabic*}$:]
\item ${\rm best}(S)\geq\frac{\alpha m}{2}$.
\item The exponential mechanism chooses a solution $f$ s.t. $q(\db,f)\geq \opt(S)-\alpha m$.
\end{enumerate}
If $E_2$ occurs then the mechanism outputs an $\alpha$-good solution.
Note that the event $E_2$ is contained inside the event $E_1$, and, therefore,
$\Pr[E_2]=\Pr[E_1 \wedge E_2]=\Pr[E_1]\cdot\Pr[E_2 | E_1]$.
By the properties of the Laplace Mechanism, 
$\Pr[E_1]\geq\left(1-\frac{1}{2}\exp(-\frac{\epsilon}{4}\frac{\alpha m}{2})\right)$, which is at least $(1-\frac{\beta}{2})$ for $m\geq\frac{8}{\alpha\epsilon}\ln(\frac{1}{\beta})$.

By the growth-boundedness of $q$, and as $S$ is of size $m$, there are at most $km$ possible solutions $f$ with $q(f,S)>0$. That is, $|G(S)|\leq km$. By the properties of the Exponential Mechanism, we have that
$\Pr[E_2|E_1]\geq \left(1- km\cdot\exp(-\frac{\alpha\epsilon m}{4})\right)$, which is at least $(1-\frac{\beta}{2})$ for
$m\geq\frac{8}{\alpha\epsilon}\ln(\frac{16k}{\alpha\beta\epsilon})$. For our choice of $m$ we have, therefore, that $\Pr[E_2]\geq(1-\frac{\beta}{2})(1-\frac{\beta}{2})\geq(1-\beta)$.

All in all, for $m\geq \frac{16}{\alpha\epsilon}\ln(\frac{16 k}{\alpha\beta\epsilon\delta})$ we get that with probability at least $(1-\beta)$ it outputs an $\alpha$-good solution for its input database.
\end{proof}

\subsection{$(\epsilon,\delta)$-Private Sanitizer for $\point_d$}\label{sec:sanIntervals}

Beimel et al.~\cite{BBKN12} showed that every pure $\epsilon$-private sanitizer for $\point_d$, must operate on databases of $\Omega(d)$ elements. In this section we present an $(\epsilon,\delta)$-private sanitizer for $\point_d$ with sample complexity $O_{\alpha,\beta,\epsilon,\delta}(1)$.
This separates the database size necessary for $(\epsilon,0)$-private sanitizers from the database size sufficient for $(\epsilon,\delta)$-private sanitizers.

Let $S=(x_1,x_2,\ldots,x_m)\in X_d^m$ be a database of $d$-bit strings. For every $c_j\in\point_d$, the query $Q_{c_j}:X_d^*\rightarrow[0,1]$ is defined to be the fraction of the strings in the database that equal $j$
$$Q_{c_j}(S)=\frac{1}{m}|\{ i : c_j(x_i)=1  \}| = \frac{1}{m}|\{ i : x_i=j  \}|.$$

Our sanitizing algorithm invokes the Choosing Mechanism to choose points $x\in X_d$. Consider the following $q:X_d^* \times X_d \rightarrow \N$. Given a database $S\in X_d^m$ and a point $x\in X_d$, define $q(S,x)$ to be the number of appearances of $x$ in $S$. By Example~\ref{example:pointBounded}, $q$ defines a 1-bounded-growth choosing problem. Moreover, given a subset $R\subseteq X_d$ consider the restriction of $q$ to the subset $R$ defined as $q_R(S,x)=q(S,x)$ for $x\in R$ and zero otherwise. The function $q_R$ is a 1-bounded-growth quality function. Our sanitizer $SanPoints$ appears in Figure~\ref{fig:sanpoint}.

\begin{figure}[H]
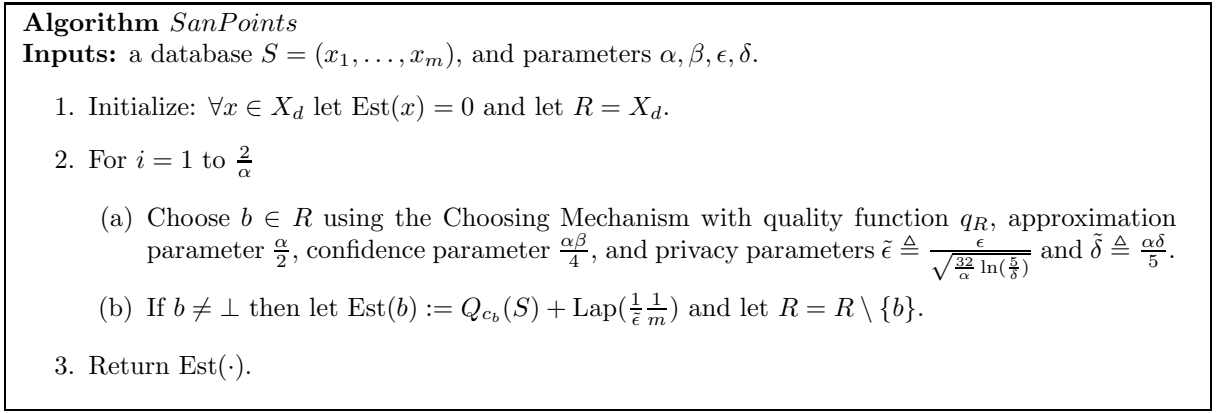

\begin{center}
\noindent\fbox{
\parbox{.95\columnwidth}{
{\bf Algorithm $SanPoints$}\\
{\bf Inputs:} a database $S=(x_1,\ldots,x_m)$, and parameters $\alpha,\beta,\epsilon,\delta$.
\begin{enumerate}[rightmargin=10pt]

\item Initialize: $\forall x\in X_d$ let $\est(x) = 0$ and let $R=X_d$.
\item For $i=1$ to $\frac{2}{\alpha}$

\begin{enumerate}
\item Choose $b\in R$ using the Choosing Mechanism with quality function $q_R$, approximation parameter $\frac{\alpha}{2}$, confidence parameter $\frac{\alpha\beta}{4}$, and privacy parameters $\tilde{\epsilon}\triangleq\frac{\epsilon}{\sqrt{\frac{32}{\alpha}\ln(\frac{5}{\delta})}}$ and $\tilde{\delta}\triangleq\frac{\alpha\delta}{5}$.
\item If $b\neq\bot$ then let $\est(b):=Q_{c_b}(S)+\Lap(\frac{1}{\tilde{\epsilon}}\frac{1}{m})$ and let $R = R \setminus \{ b \}$.
\end{enumerate}
     
\item Return $\est(\cdot)$.

\end{enumerate}}}
\end{center}
\caption{Algorithm $SanPoints$. \label{fig:sanpoint}}
\end{figure}

\begin{theorem}\label{thm:sanPoint}
Fix $\alpha,\beta,\epsilon,\delta$. For
$m\geq O\left(  \frac{1}{\alpha^{1.5} \epsilon}
\sqrt{\ln(\frac{1}{\delta})}
\ln(\frac{1}{\alpha\beta\epsilon\delta}) \right)$,
 algorithm $SanPoints$ is an efficient $(\alpha,\beta,\epsilon,\delta,m)$-improper-sanitizer for $\point_d$.
\end{theorem}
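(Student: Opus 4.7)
The proof plan splits cleanly into privacy and utility analyses. For privacy, each of the $k=2/\alpha$ iterations performs at most two private computations on $S$: a call to the Choosing Mechanism on the $1$-bounded-growth quality function $q_R$, which is $(\tilde\epsilon,\tilde\delta)$-differentially private by the privacy lemma established earlier for the Choosing Mechanism, and --- if the output is not $\bot$ --- a single Laplace release of $Q_{c_b}(S)$ with scale $1/(\tilde\epsilon m)$, which is $\tilde\epsilon$-differentially private because $Q_{c_b}$ has sensitivity $1/m$. Basic composition (Theorem~\ref{thm:composition1}) makes each iteration $(2\tilde\epsilon,\tilde\delta)$-differentially private, and the advanced composition theorem (Theorem~\ref{thm:composition2}) across the $k$ adaptively chosen iterations with $\delta'=\delta/5$ yields total privacy loss
\[\epsilon'=\sqrt{2k\ln(1/\delta')}\cdot(2\tilde\epsilon)+2k(2\tilde\epsilon)^2,\qquad \delta_{\rm tot}=k\tilde\delta+\delta'.\]
The choices $\tilde\epsilon=\epsilon/\sqrt{(32/\alpha)\ln(5/\delta)}$ and $\tilde\delta=\alpha\delta/5$ are tuned precisely so that $\epsilon'\leq\epsilon$ and $\delta_{\rm tot}\leq\delta$.

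For utility, the key combinatorial observation is that frequencies sum to one, so there are at most $2/\alpha$ points with $q(S,j)\geq\alpha m/2$ (call these \emph{medium}), and in particular at most $1/\alpha$ \emph{heavy} points with $Q_{c_j}(S)\geq\alpha$. I condition on the event that every invocation of the Choosing Mechanism returns an $(\alpha/2)$-good solution; by Lemma~\ref{lem:CMutility} applied with confidence $\alpha\beta/4$ and a union bound over the $k$ iterations, this holds with probability at least $1-\beta/2$ provided $m$ exceeds the Choosing Mechanism's sample threshold, which --- after unwinding $\tilde\epsilon,\tilde\delta$ --- is subsumed by the stated hypothesis on $m$. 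Under this good event, whenever $R$ still contains a heavy point we have $\max_{f\in R}q_R(S,f)\geq\alpha m$, so the Choosing Mechanism returns some $b\in R$ with $q(S,b)\geq\alpha m/2$; that is, a medium point is removed from $R$. Since only $2/\alpha$ medium points ever exist and each can be removed at most once, after the $k=2/\alpha$ iterations no heavy point can remain unchosen.

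Finally, I bound $|\est(c_j)-Q_{c_j}(S)|$ for every $j\in X_d$. For unchosen $j$ the estimate is $0$ while $Q_{c_j}(S)<\alpha$ by the previous paragraph, so the error is at most $\alpha$. For each of the at most $2/\alpha$ chosen $b$'s, $\est(c_b)-Q_{c_b}(S)$ is a single draw from $\Lap(1/(\tilde\epsilon m))$, and $\Pr[|\Lap(1/(\tilde\epsilon m))|>\alpha]=\exp(-\alpha\tilde\epsilon m)$; a union bound keeps the total Laplace failure below $\beta/2$ whenever $m\geq\frac{1}{\alpha\tilde\epsilon}\ln(4/(\alpha\beta))$, again implied by the hypothesis. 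Summing the two failure events gives total error probability at most $\beta$. The main obstacle is showing that a single sample-complexity bound of order $\sqrt{\ln(1/\delta)}/(\alpha^{1.5}\epsilon)$ simultaneously satisfies the Choosing Mechanism's size requirement, the Laplace concentration requirement, and the advanced composition budget --- i.e., that plugging the finely tuned $\tilde\epsilon,\tilde\delta$ into all three places causes every logarithmic factor from the various union bounds to land inside the stated $m$.
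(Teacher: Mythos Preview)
Your proposal is correct and follows essentially the same approach as the paper: the same two-bad-event utility analysis (Laplace noise too large on a chosen point; some heavy point never chosen, contradicting the pigeonhole bound on points appearing $>\alpha m/2$ times) and the same privacy analysis via advanced composition over the $O(1/\alpha)$ private accesses. The only cosmetic difference is that you first basic-compose within each iteration and then advanced-compose across iterations, whereas the paper applies advanced composition directly to all $4/\alpha$ mechanisms; both yield the same $(\epsilon,\delta)$ budget with the stated $\tilde\epsilon,\tilde\delta$.
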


\begin{proof}
We start with the utility analysis. Fix a database $S=(x_1,\ldots,x_m)$, and consider the execution of algorithm $SanPoints$ on $S$. Denote the element chosen by the Choosing Mechanism on the $i^{\rm th}$ iteration of step~2 by $b_i$, and denote the set of all such elements as $B=\{ b_1,\ldots,b_{2/\alpha} \}\setminus\{\bot\}$.
Moreover, let $R_i$ denote the set $R$ as is was at the beginning of the $i^{\text{th}}$ iteration.
Consider the following two bad events:
\begin{enumerate}[label=$E_{\arabic*}.$]
\item $\exists b\in B$ s.t. $|Q_{c_b}(S)-\est(b)|>\alpha$.
\item $\exists a\notin B$ s.t. $Q_{c_a}(S)>\alpha$.
\end{enumerate}
If none of these two events happen, then algorithm $SanPoints$ succeeds in outputting an estimation $\est$ s.t. $\forall c_j\in \point_d \;\; \big| Q_{c_j}(S)-\est(j) \big|\leq\alpha$. We now bound the probability of both events.

Consider an iteration $i$ in which an element $b_i\neq\bot$ is chosen in step~2a.
It holds that $\Pr[|Q_{c_{b_i}}(S)-\est(b_i)|>\alpha]=\Pr[|\Lap(\frac{1}{\tilde{\epsilon}}\frac{1}{m})|>\alpha]=\exp(-\tilde{\epsilon}\alpha m)$.
Using the union bound on the number of iterations, we get that
$\Pr[E_1]\leq\frac{2}{\alpha} \exp(-\tilde{\epsilon}\alpha m)$.
For $m\geq \frac{6}{\alpha^{1.5}\epsilon}\ln(\frac{4}{\alpha\beta})\sqrt{\ln(\frac{5}{\delta})}$ we get that $\Pr[E_1]\leq \frac{\beta}{2}$.

We now bound $\Pr[E_2]$. By the properties of the Choosing Mechanism (Lemma~\ref{lem:CMutility}), with probability at least $(1-\frac{\alpha\beta}{4})$, an execution of the 
Choosing Mechanism on step~2a returns an $\frac{\alpha}{2}$-good solution $b_i$ s.t.
\begin{equation}\label{eq:iSucc}
q_{R_i}(S,b_i)\geq \max_{x\in X_d} \{ q_{R_i}(S,x) \}-\frac{\alpha}{2}m.
\end{equation}

Using the union bound on the number of iterations, we get that with probability at least $(1-\frac{\beta}{2})$, Inequality (\ref{eq:iSucc}) holds for every iteration $1\leq i\leq \frac{\alpha}{2}$. We will now see that in such a case, event $E_2$ does not occur. Assume to the contrary that there exists an $a\notin B$ s.t. $Q_{c_a}(S)>\alpha$. Therefore, for every iteration $i$ it holds that $\max_{x\in X_d} \{ q_{R_i}(S,x) \}>\alpha m$ and thus $q_{R_i}(S,b_i)>\frac{\alpha}{2}m$. This means that there exist (at least) $\frac{2}{\alpha}$ different points $b_i\in X_d$ that appear in $S$ more than $\frac{\alpha}{2}m$ times, which contradicts the fact that the size of $S$ is $m$.

All in all, $\Pr[E_2]\leq\frac{\beta}{2}$, and the probability of algorithm $SanPoints$ failing to output an estimation $\est$ s.t. $\forall c_j\in \point_d \;\; \big| Q_{c_j}(S)-\est(j) \big|\leq\alpha$ is at most $\beta$.

We now proceed with the simple privacy analysis. Note that algorithm $SanPoints$ accesses its input database only using the Choosing Mechanism on step~2a and using the laplacian mechanism on step 2b. Every interaction with the laplacian mechanism preserves $(\tilde{\epsilon},0)$-differential privacy, and there exactly $\frac{2}{\alpha}$ such interactions. For our choice of $m$, every interaction with the Choosing Mechanism preserves $(\tilde{\epsilon},\tilde{\delta})$-differential privacy, and there are exactly $\frac{2}{\alpha}$ such interactions. Applying Theorem~\ref{thm:composition2} (the composition theorem) with our choice of $\tilde{\epsilon},\tilde{\delta}$, we get that algorithm $SanPoints$ preserves $(\epsilon,\delta)$-differential privacy.
\end{proof}

The above algorithm $SanPoints$ can also be used as a sanitizer for the concept class $\kpoint_d$, defined as follows.
For every $A \subseteq X_d$ s.t. $|A|=k$, the concept class $\kpoint_d$ contains the concept $c_A:X_d \rightarrow\{0,1\}$, defined as
$c_A(x)=1$ if $x\in A$ and $c_A(x)=0$ otherwise.

Let $S=(x_1,x_2,\ldots,x_m)\in X_d^m$ be a database.
For every $c_I\in\kpoint_d$, the query $Q_{c_I}:X_d^*\rightarrow[0,1]$ is defined as
$$Q_{c_I}(S)=\frac{1}{m}|\{ i : c_I(x_i)=1 \}|=\frac{1}{m}|\{ i : x_i\in I \}|.$$
Consider the following algorithm.

\begin{figure}[H]
\begin{center}
\noindent\fbox{
\parbox{.95\columnwidth}{
{\bf A Sanitizer for $\kpoint_d$.}\\
{\bf Input:} parameters $\alpha,\beta,\epsilon,\delta$ and a dataset $S$ of $m$ elements.
\begin{enumerate}
\item Execute $SanPoints$ on $S,\frac{\alpha}{k},\beta,\epsilon,\delta$, and denote the returned function as $\est(\cdot)$.
\item For every $I\subseteq X_d$ s.t. $|I|=k$, define $e(I)=\sum_{i\in I}{\est(i)}$.
\item Return $e(\cdot)$.
\end{enumerate}
}}
\end{center}
\caption{A Sanitizer for $\kpoint_d$.}
\end{figure}

\begin{theorem}\label{thm:sanKPoint}
Fix $k,\alpha,\beta,\epsilon,\delta$.
For $m\geq O\left(  \frac{k^{1.5}}{\alpha^{1.5} \epsilon}
\sqrt{\ln(\frac{1}{\delta})}
\ln(\frac{k}{\alpha\beta\epsilon\delta}) \right)$,
the above algorithm is an efficient $(\alpha,\beta,\epsilon,\delta,m)$-improper-sanitizer for $\kpoint_d$.
\end{theorem}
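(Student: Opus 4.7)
The plan is to prove the theorem by direct reduction to Theorem~\ref{thm:sanPoint}, exploiting the fact that every $c_I \in \kpoint_d$ decomposes as a disjoint sum of point functions.

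For privacy, I would observe that the only access to the input database $S$ happens inside the call to $SanPoints$ in Step~1; Steps~2 and~3 are purely post-processing of the returned function $\est(\cdot)$. Since $SanPoints$ is invoked with privacy parameters $(\epsilon,\delta)$ and a sample of size $m$ that satisfies the lower bound of Theorem~\ref{thm:sanPoint} (with approximation parameter $\alpha/k$ in place of $\alpha$; note the lower bound on $m$ given here is exactly this substitution), Theorem~\ref{thm:sanPoint} guarantees $(\epsilon,\delta)$-differential privacy of that invocation, and post-processing preserves privacy.

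For utility, the key observation is that for any $I \subseteq X_d$ with $|I|=k$, the indicator sets $\{x : c_i(x)=1\}_{i \in I}$ are pairwise disjoint, so
\[
Q_{c_I}(S) \;=\; \frac{1}{m}\bigl|\{j : x_j \in I\}\bigr| \;=\; \sum_{i \in I} Q_{c_i}(S).
\]
By Theorem~\ref{thm:sanPoint} applied with approximation parameter $\alpha/k$, with probability at least $1-\beta$ the function $\est(\cdot)$ returned by $SanPoints$ satisfies $|Q_{c_i}(S) - \est(i)| \leq \alpha/k$ for every $i \in X_d$ simultaneously. Conditioning on this event and using the triangle inequality,
\[
\bigl|Q_{c_I}(S) - e(I)\bigr| \;=\; \Bigl|\sum_{i \in I}\bigl(Q_{c_i}(S) - \est(i)\bigr)\Bigr| \;\leq\; k \cdot \frac{\alpha}{k} \;=\; \alpha,
\]
for every $I$ of size $k$, which is exactly the desired $\alpha$-closeness for $\kpoint_d$.

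The sample-complexity bound follows by substituting $\alpha \mapsto \alpha/k$ into the expression from Theorem~\ref{thm:sanPoint}: the factor $1/\alpha^{1.5}$ becomes $k^{1.5}/\alpha^{1.5}$, and the logarithmic factor grows from $\ln(1/(\alpha\beta\epsilon\delta))$ to $\ln(k/(\alpha\beta\epsilon\delta))$, matching the stated bound. Efficiency is inherited from $SanPoints$, together with the remark that we do not need to explicitly write down $e(I)$ for every size-$k$ subset $I$; the function $e(\cdot)$ is implicitly represented by $\est(\cdot)$ (of support size at most $2/\alpha'$ for $\alpha'=\alpha/k$) and is evaluated on demand by summing at most $k$ values of $\est$. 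I do not expect a genuine obstacle: the whole argument rests on the disjointness of the point-function supports, after which everything reduces to a syntactic substitution into Theorem~\ref{thm:sanPoint}.
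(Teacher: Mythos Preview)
Your proposal is correct and follows exactly the same approach as the paper: privacy is immediate from post-processing of $SanPoints$, and utility follows by decomposing $Q_{c_I}(S)=\sum_{i\in I}Q_{c_i}(S)$ and applying the per-point guarantee $|Q_{c_i}(S)-\est(i)|\le \alpha/k$ from Theorem~\ref{thm:sanPoint} together with the triangle inequality. The paper's proof is essentially a two-line version of what you wrote.
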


\begin{proof}
The privacy of the above algorithm is immediate. 
Fix a database $S=(x_1,x_2,\ldots,x_m)\in X_d^m$.
By Theorem~\ref{thm:sanPoint}, with probability at least $(1-\beta)$, the estimation $\est$ on step~1 is s.t.
$\forall j\in X_d \;\; \big| \frac{1}{m}\sum_{i=1}^m{\mathds{1}_{\{x_i=j\}}}-\est(j) \big|\leq\frac{\alpha}{k}$. Now fix a set $I\subseteq X_d$ of cardinality $k$. As
$Q_{c_I}(S)=\frac{1}{m}|\{i\; : \; x_i\in I\}|$,
we have that
$|Q_{c_I}(S) - \sum_{i\in I}{\est(i)}  |\leq k\frac{\alpha}{k}=\alpha$.
\end{proof}

\subsection{$(\epsilon,\delta)$-Private Sanitizer for $\thresh_d$}

Recall that $\thresh_d=\{c_0,\ldots,c_{2^d} \}$, where $c_j(x)=1$ if and only if $x<j$. Let $S=(x_1,\ldots,x_m)\in X_d^m$ be a database. For every $c_j\in\thresh_d$, the query $Q_{c_j}:X_d^*\rightarrow[0,1]$ is defined as
$$Q_{c_j}(S)=\frac{1}{m}|\{ i : c_j(x_i)=1  \}|=\frac{1}{m}|\{ i : x_i<j  \}|.$$

As $|\thresh_d|=2^d+1$, one can use the generic construction of Blum et al.~\cite{BLR08full}, and get an $\epsilon$-private sanitizer for this class with sample complexity $O(d)$. By~\cite{BBKN12}, this is the best possible when guaranteeing pure privacy (ignoring the dependency on $\alpha,\beta$ and $\epsilon$).
We next present a recursive sanitizer for $\thresh_d$, guaranteeing approximated privacy and exhibiting sample complexity $\widetilde{O}_{\alpha,\beta,\epsilon,\delta}(8^{\log^*(d)})$.

The algorithm maintains its sanitized database $\hat{S}$ as a global variable, which is initialized as the empty set. In addition, for the privacy analysis, we would need a bound on the number of recursive calls. It will be convenient to maintain another global variable, $calls$, initialized at the desired bound and decreased in every recursive call.

Given a database $S=(x_1,\ldots,x_m)\in X_d^m$, and a subset $R\subseteq X_d$, we denote by $\#_S[R]$ the number of examples $x\in S$ s.t. $x\in R$. That is, $\#_S[R]=|\{ i : x_i\in R  \}|$.
At every recursive call, the algorithm, which is executed on a range $[k,\ell]\subseteq X_d$, identifies an interval $[a,b]\subseteq[k,\ell]$ s.t. $\#_S[a,b]$ is ``not too big'', but ``not too small'' either. Then the algorithm estimates $\hat{\#}[a,b]=\#_S[a,b]+\Lap(\frac{1}{\epsilon})$, and adds $\hat{\#}[a,b]$ copies of the point $b$ to the constructed sanitized database. Afterwards, the algorithm proceeds recursively on the range $[k,a-1]$ and on the range $[b+1,\ell]$. As the number of points in [a,b] is ``not too small'', the depth of this recursion is bounded. Our sanitizer $SanThresholds$ appears in Figure~\ref{fig:san3}. We next prove its properties, starting with the privacy analysis.

\begin{figure}
\begin{center}
\noindent\fbox{
\parbox{\columnwidth}{
{\bf Input:} a range $[k,\ell]$, parameters $\alpha,\beta,\epsilon,\delta$, and a dataset $S$ of $m$ elements. \\
{\bf Global variables:} the sanitized database $\hat{S}$ (initially empty) and $\calls$ (initialized to $\frac{77}{\alpha}$).
\begin{enumerate}[itemsep=0pt,rightmargin=10pt]

\item If $\calls=0$ then halt. Otherwise, set $\calls:=\calls-1$.

\item Compute $\hat{\#}[k,\ell]=\#_S[k,\ell]+\Lap(\frac{1}{\epsilon})$.

\item If $\hat{\#}[k,\ell]<\frac{\alpha m}{8}$ then define $[a,b]:=[k,\ell]$, 
add $\hat{\#}[a,b]$ copies of the point $b$ to $\hat{S}$, and halt.

\item Let $T$ be the smallest power of 2 s.t. $T\geq(\ell-k+1)$.

\item For every $0\leq j \leq \log(T)$, define $I(S,j)=\max\limits_{\substack{[x,y]\subseteq [k,\ell]\\y-x+1\leq2^j}} \Bigg\{  \#_S[x,y] \Bigg\}$.
\begin{enumerate}[label=\gray{\%},topsep=-10pt]
\item \gray{Every interval $[x,y]\subseteq [k,\ell]$ of length $2^j$ contains at most $I(S,j)$ points in $S$, and there exists at least one interval of length (at most) $2^j$ containing exactly $I(S,j)$ points.}
\end{enumerate}
					
\item For every $0\leq j \leq \log(T)$, define its quality $Q(S,j)$ as\\
\centerline{$Q(S,j)=\min\left\{ I(S,j)-\frac{\alpha m}{32} , \frac{3\alpha m}{32} - I(S,j-1)\right\} \text{, where } I(S,-1)\triangleq 0.$}
\begin{enumerate}[label=\gray{\%},topsep=-10pt]
\item \gray{If $Q(S,j)$ is high (for some $j$), then there exists an interval $[a,b]$ of length $2^j$ containing significantly more points than $\frac{\alpha m}{32}$, and every interval of length $\frac{1}{2}2^j$ contains significantly less points than $\frac{3\alpha m}{32}$.}
\end{enumerate}

\item Define $r=\frac{\alpha m}{32}$.

\item Execute algorithm $RecConcave$ on the range $[0,\log(T)]$, the quality function $Q(\cdot,\cdot)$, the quality promise $r$, accuracy parameter $\frac{1}{4}$, and privacy parameters $\hat{\epsilon}=\frac{\epsilon}{3\log^*(d)},\hat{\delta}=\frac{\delta}{3\log^*(d)}$.
Denote the returned value as $z$, and let $Z=2^z$.
\begin{enumerate}[label=\gray{\%},topsep=-10pt]
\item \gray{Assuming the recursive call was successful, the returned $z$ is s.t. $Q(S,z)\geq(1-\frac{1}{4})r=\frac{3\alpha m}{128}$. That is, $I(S,z)\geq\frac{7\alpha m}{128}$ and $I(S,z-1)\leq\frac{9\alpha m}{128}$.  }
\end{enumerate}

\item If $z=0$ then Choose $b\in[k,\ell]$ using the Choosing Mechanism with parameters $\frac{\alpha}{64},\beta,\epsilon,\delta$ and the quality function $\#_S[\cdot]$. Denote $[a,b]=[b,b]$.
\begin{enumerate}[label=\gray{\%},topsep=-10pt]
\item \gray{That is, the quality of $b\in[k,\ell]$ is the number of appearances of the point $b$ is $S$, and the choosing mechanism chooses a frequent point.}
\end{enumerate}

\item Otherwise (if $z\geq1)$ then
\begin{enumerate}[topsep=-10pt]
\item Divide $[k,\ell]$ into the following intervals of length $2Z$ (the last ones might be trimmed): \\
		   $ A_1=[k,k+2Z-1], \; A_2=[k+2Z,k+4Z-1], \; A_3=[k+4Z,k+6Z-1], \ldots $\\
		   $ B_1=[k+Z,k+3Z-1], \; B_2=[k+3Z,k+5Z-1], \; B_3=[k+5Z,k+7Z-1], \ldots $

\item Choose $[a,b]\in\{A_i\}\cup\{B_i\}$ using the Choosing Mechanism with parameters $\frac{\alpha}{64},\beta,\epsilon,\delta$ and the quality function $\#_S[\cdot]$.
\begin{enumerate}[label=\gray{\%},topsep=-10pt]
\item \gray{The union of the $A_i$'s and the $B_i$'s causes the quality function $\#_S[\cdot]$ to be 2-growth-bounded.}
\end{enumerate}
\end{enumerate}

\item Compute $\hat{\#}[a,b]=\#_S[a,b]+\Lap(\frac{1}{\epsilon})$, and add $\hat{\#}[a,b]$ copies of the point $b$ to $\hat{S}$.

\item If $a>k$, then execute $SanThresholds$ recursively on the range $[k,a-1]$, the parameters $\alpha,\beta,\epsilon,\delta$, the database $S$, and the references to $\hat{S}$ and to $\calls$.

\item If $b<\ell$, then execute $SanThresholds$ recursively on the range $[b+1,\ell]$, the parameters $\alpha,\beta,\epsilon,\delta$, the database $S$, and the references to $\hat{S}$ and to $\calls$.

\end{enumerate}
}}
\end{center}
\caption{Sanitizer $SanThresholds$ for $\thresh_d$. \label{fig:san3}}
\end{figure}

\begin{lemma}\label{lem:sanIntervalPrivacy}
When permitted $c$ recursive calls on a sample $S$ of
$m\geq\frac{1024}{\alpha\epsilon}\ln(\frac{2048}{\alpha\beta\epsilon\delta})$
elements, algorithm $SanThresholds$ preserves $(\tilde{\epsilon},5c\delta)$-differential privacy, where $\tilde{\epsilon}=\sqrt{8c\ln(\frac{1}{c\delta})}\epsilon+8c\epsilon^2$.
\end{lemma}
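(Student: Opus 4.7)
The plan is to count the number of differentially-private subroutine invocations made throughout the entire tree of recursive calls of $SanThresholds$ and then apply the advanced composition theorem. Each individual recursive call accesses the input database $S$ through exactly four mechanisms: the Laplace perturbation of $\#_S[k,\ell]$ in Step~2, the call to $RecConcave$ in Step~8, the call to the Choosing Mechanism in Step~9 or Step~10b, and the Laplace perturbation of $\#_S[a,b]$ in Step~11; every other operation is data-independent post-processing of these outputs together with the public inputs (the range $[k,\ell]$ and the fixed parameters).

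I would then verify that each of the four mechanisms is individually $(\epsilon,\delta)$-differentially private. The two Laplace calls add $\Lap(1/\epsilon)$-noise to sensitivity-$1$ counts, so they are $(\epsilon,0)$-differentially private by \thmref{lap}. The invocation of $RecConcave$ uses parameters $\hat\epsilon = \epsilon/(3\log^* d)$ and $\hat\delta = \delta/(3\log^* d)$, and since its automatic recursion depth on the range $[0,\log T]$ is at most $\log^* d$, \lemref{recPrivacy} yields $(\epsilon,\delta)$-differential privacy. For the Choosing Mechanism, the quality function $\#_S[\cdot]$ on singletons (Step~9) is $1$-bounded-growth, while on the overlapping interval family $\{A_i\}\cup\{B_i\}$ (Step~10b) it is $2$-bounded-growth, because every element of $S$ lies in exactly one $A_i$ and exactly one $B_j$. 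With approximation parameter $\alpha/64$ and growth bound $k\leq 2$, the sample-size precondition of the Choosing Mechanism reduces to $m \geq \tfrac{16}{(\alpha/64)\epsilon}\ln\tfrac{16\cdot 2}{(\alpha/64)\beta\epsilon\delta}$, which is exactly the hypothesis $m \geq \tfrac{1024}{\alpha\epsilon}\ln\tfrac{2048}{\alpha\beta\epsilon\delta}$ of the lemma.

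Finally, since the $\calls$ counter caps the total recursion at $c$ calls, the whole execution is an adaptive sequence of at most $4c$ $(\epsilon,\delta)$-differentially private mechanisms; applying \thmref{composition2} with $k = 4c$ and the free parameter $\delta' = c\delta$ gives $\epsilon' = \sqrt{2k\ln(1/\delta')}\,\epsilon + 2k\epsilon^2 = \sqrt{8c\ln(1/(c\delta))}\,\epsilon + 8c\epsilon^2 = \tilde\epsilon$ and total slack $k\delta + \delta' = 4c\delta + c\delta = 5c\delta$, exactly as claimed. The main subtlety I would need to address is that the recursive branching is data-dependent: the split points $a,b$ depend on $S$, so although the child ranges $[k,a-1]$ and $[b+1,\ell]$ are disjoint subsets of $X_d$, the two child calls are not independent parallel compositions on disjoint data. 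I would resolve this by not trying to exploit disjointness at all and simply viewing the execution tree in post-order as one adaptive sequence of $4c$ private queries, a setting covered directly by advanced composition; the only additional check is that all further branching decisions (e.g., whether the base case of Step~3 is triggered) are measurable with respect to the already-accounted-for noisy outputs and the public inputs.
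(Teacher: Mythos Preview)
Your proposal is correct and follows essentially the same approach as the paper: count at most four $(\epsilon,\delta)$-private subroutine calls per recursive invocation (two Laplace, one $RecConcave$, one Choosing Mechanism), verify the sample-size precondition of the Choosing Mechanism matches the hypothesis of the lemma, and apply advanced composition with $k=4c$ and $\delta'=c\delta$. One minor imprecision: you say each call accesses $S$ through \emph{exactly} four mechanisms, but a call that halts at Step~3 uses only the single Laplace call of Step~2; the paper correctly says ``at most,'' though your upper bound of $4c$ total is unaffected.
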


\begin{proof}
Every iteration of algorithm $SanThresholds$ can access its input database at most twice using the laplacian mechanism (on steps 2,11), at most once using the Choosing Mechanism (on step 9 or on step 10b), and at most once using algorithm $RecConcave$ (on step 8).
By the properties of the laplacian mechanism, every interaction with it preserves $(\epsilon,0)$-differential privacy.
Note that the quality function with which we call the Choosing Mechanism is at most 2-growth-bounded. Therefore, as
$m\geq\frac{1024}{\alpha\epsilon}\ln(\frac{2048}{\alpha\beta\epsilon\delta})$,
every such interaction with the Choosing Mechanism preserves $(\epsilon,\delta)$-differential privacy.
Last, for our choice of $\hat{\epsilon},\hat{\delta}$, every interaction with algorithm $RecConcave$ preserves $(\epsilon,\delta)$-differential privacy.

That is, throughout its entire execution, algorithm $SanThresholds$ invokes at most $4c$ mechanisms, each $(\epsilon,\delta)$-differentially private.
By Theorem~\ref{thm:composition2}, algorithm $SanThresholds$ is $(\tilde{\epsilon},5c\delta)$-differential private, where $\tilde{\epsilon}=\sqrt{8c\ln(\frac{1}{c\delta})}\epsilon+8c\epsilon^2$.
\end{proof}

We start the utility analysis of $SanThresholds$ with the following simple claim.

\begin{claim}\label{claim:san3concQ}
The function $Q(S,\cdot)$, defined on step 6, is quasi-concave.
\end{claim}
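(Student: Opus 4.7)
The plan is to mimic the argument used for Claim~\ref{claim:qConcave} (the quasi-concavity of the function $q$ in $RecConcave$), since the definition of $Q(S,\cdot)$ here has exactly the same form: a minimum of (i) a monotonically non-decreasing function of $j$ minus a constant, and (ii) a constant minus a monotonically non-decreasing function of $j-1$.

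First I would establish the monotonicity of $I(S,\cdot)$: if $j_1 \leq j_2$ then every interval of length at most $2^{j_1}$ is in particular an interval of length at most $2^{j_2}$, so the maximum defining $I(S,j_2)$ is over a larger collection and hence $I(S,j_1) \leq I(S,j_2)$.

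Next, given $i \leq \ell \leq j$, I would bound each of the two terms in the definition of $Q(S,\ell)$ separately. For the first term, monotonicity gives $I(S,\ell) - \tfrac{\alpha m}{32} \geq I(S,i) - \tfrac{\alpha m}{32} \geq Q(S,i)$, where the last inequality holds because $Q(S,i)$ is the minimum of two quantities, one of which is $I(S,i)-\tfrac{\alpha m}{32}$. For the second term, monotonicity gives $\tfrac{3\alpha m}{32} - I(S,\ell-1) \geq \tfrac{3\alpha m}{32} - I(S,j-1) \geq Q(S,j)$ by the same kind of reasoning. Hence both terms are at least $\min\{Q(S,i),Q(S,j)\}$, and therefore so is their minimum $Q(S,\ell)$.

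There is no real obstacle here; the argument is a routine two-line chase once one sets up the monotonicity of $I(S,\cdot)$, and the only thing to be slightly careful about is the edge case $j=0$ where $I(S,-1)$ is defined to be $0$, which does not affect the monotonicity statement $I(S,-1) \leq I(S,0) \leq I(S,1) \leq \cdots$.
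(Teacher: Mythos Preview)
Your proposal is correct and follows essentially the same argument as the paper's proof: establish that $I(S,\cdot)$ is non-decreasing, then for $i\le \ell\le j$ bound the first term of $Q(S,\ell)$ from below using $Q(S,i)$ and the second term using $Q(S,j)$. The only cosmetic difference is that the paper fixes a common threshold $x\le\min\{Q(S,i),Q(S,j)\}$ and shows $Q(S,\ell)\ge x$, whereas you directly conclude $Q(S,\ell)\ge\min\{Q(S,i),Q(S,j)\}$; these are equivalent.
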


\begin{proof}
First note that the function $I(S,\cdot)$ defined on step 5 is non-decreasing. Now, let $u\leq v \leq w$ be s.t. $Q(S,u),Q(S,w)\geq x$. That is,
$$\begin{array}{l}
   I(S,u)-\frac{\alpha m}{32}\geq x \\
   \frac{3\alpha m}{32}-I(S,u-1)\geq x
\end{array}
\;\;\;\;\text{ and }\;\;\;\;
\begin{array}{l}
   I(S,w)-\frac{\alpha m}{32}\geq x \\
   \frac{3\alpha m}{32}-I(S,w-1)\geq x
\end{array}.$$
Using the fact that $I(S,\cdot)$ is non-decreasing, we have that $I(S,u)\leq I(S,v)$ and that $I(S,v-1)\leq I(S,w-1)$. Therefore
\begin{eqnarray*}
  I(S,v)-\frac{\alpha m}{32} &\geq& I(S,u)-\frac{\alpha m}{32} \geq x, \\
  \frac{3\alpha m}{32}-I(S,v-1) &\geq& \frac{3\alpha m}{32}-I(S,w-1) \geq x,
\end{eqnarray*}
and $Q(S,v)\geq x$.
\end{proof}

Note that every iteration of algorithm $SanThresholds$ draws at most 2 random samples (on steps~2 and~11) from $\Lap(\frac{1}{\epsilon})$.
We now proceed with the utility analysis by identifying 3 good events that occur with high probability (over the coin tosses of the algorithm).

\begin{claim}\label{claim:san3SmallEvents}
Fix $\alpha,\beta,\epsilon,\delta$. Let $SanThresholds$ be executed with $\calls$ initialized to $c\geq\frac{77}{\alpha}$, and on a database $S$ of
$m\geq 8^{\log^*(d)} \cdot \frac{60 c}{\alpha\epsilon} \log^*(d) \log\big(\frac{12\log^*(d)}{\beta\epsilon\delta}\big)$
elements. With probability at least $(1-3c\beta)$ the following 3 events happen:
\begin{enumerate}[label=$B_{\arabic*}:$]
\item In every random draw of $\Lap(\frac{1}{\epsilon})$ throughout the execution of $SanThresholds$ it holds that $|\Lap(\frac{1}{\epsilon})|\leq \frac{\alpha m}{16c}$.
\item Every interaction with algorithm $RecConcave$ on step~8 succeeds in returning a value $z$ s.t. $Q(S,z)\geq\frac{3\alpha m}{128}$.
\item Every iteration that halts after step~13, defines an interval $[a,b]$ s.t. $\#_S[a,b]\geq\frac{5\alpha m}{128}$.
\end{enumerate}
\end{claim}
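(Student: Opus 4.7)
The plan is to bound the failure probability of each of the three events separately and then apply a union bound, showing each event fails with probability at most $c\beta$.

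For $B_1$, note that throughout the entire execution there are at most $2c$ draws from $\Lap(\frac{1}{\epsilon})$ (at most two per iteration, on steps~2 and~11, and at most $c$ iterations). By the tail bound on the Laplace distribution, $\Pr[|\Lap(\frac{1}{\epsilon})|>t]=\exp(-\epsilon t)$, so a union bound over the $2c$ draws gives failure probability at most $2c\cdot\exp(-\frac{\alpha\epsilon m}{16c})$, which is at most $c\beta$ for our choice of $m$.

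For $B_2$, I would condition on $B_1$ holding. Consider an iteration of $SanThresholds$ that reaches step~8 on a range $[k,\ell]$; then the test of step~3 failed, so $\hat{\#}[k,\ell]\geq\frac{\alpha m}{8}$, and (by $B_1$) $\#_S[k,\ell]\geq\frac{\alpha m}{8}-\frac{\alpha m}{16c}\geq\frac{\alpha m}{16}$. In particular, $I(S,\log T)\geq\frac{\alpha m}{16}$ while $I(S,-1)=0$. Since $I(S,\cdot)$ is non-decreasing, there must exist an index $j^\star$ with $I(S,j^\star)\geq\frac{\alpha m}{16}$ and $I(S,j^\star-1)\leq\frac{\alpha m}{16}$. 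For this index, $Q(S,j^\star)\geq\min\{\frac{\alpha m}{16}-\frac{\alpha m}{32},\frac{3\alpha m}{32}-\frac{\alpha m}{16}\}=\frac{\alpha m}{32}=r$, so the quality promise is met. Combined with Claim~\ref{claim:san3concQ} (quasi-concavity), Theorem~\ref{thm:rec} applies: for our choice of $m$ and the privacy parameters $\hat\epsilon,\hat\delta$, the call to $RecConcave$ returns $z$ with $Q(S,z)\geq(1-\tfrac14)r=\frac{3\alpha m}{128}$ with probability at least $1-\beta$. Union bound over the $\leq c$ recursive calls bounds the failure probability of $B_2$ by $c\beta$.

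For $B_3$, I would again condition on $B_2$, so $Q(S,z)\geq\frac{3\alpha m}{128}$, meaning $I(S,z)\geq\frac{7\alpha m}{128}$ and $I(S,z-1)\leq\frac{9\alpha m}{128}$. Thus there exists an interval of length $2^z=Z$ in $[k,\ell]$ containing at least $\frac{7\alpha m}{128}$ database points. In the case $z=0$ this is a single point in $[k,\ell]$; in the case $z\geq 1$, any interval of length $Z$ is contained inside at least one of the length-$2Z$ intervals in $\{A_i\}\cup\{B_i\}$ (this is the purpose of the two shifted partitions). So the best candidate in the candidate set has quality at least $\frac{7\alpha m}{128}$ under the quality function $\#_S[\cdot]$, which is $2$-growth-bounded on $\{A_i\}\cup\{B_i\}$ (and $1$-growth-bounded in the $z=0$ case). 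By the utility guarantee of the Choosing Mechanism (Lemma~\ref{lem:CMutility}), invoked with accuracy parameter $\frac{\alpha}{64}$, with probability at least $1-\beta$ it returns $[a,b]$ with $\#_S[a,b]\geq\frac{7\alpha m}{128}-\frac{\alpha m}{64}=\frac{5\alpha m}{128}$. A union bound over the at most $c$ iterations bounds the failure probability of $B_3$ by $c\beta$.

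The main subtlety is the derivation for $B_2$: one has to carefully translate the "fired past step~3" condition (which is about the \emph{noisy} count) into a deterministic lower bound on $\max_j Q(S,j)$, and this is where event $B_1$ is needed. A final union bound over $B_1,B_2,B_3$ yields failure probability at most $3c\beta$.
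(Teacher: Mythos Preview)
Your proposal is correct and follows essentially the same approach as the paper's own proof: bound $\Pr[B_1]$ via the Laplace tail and a union over the at most $2c$ draws; use $B_1$ to turn the step~3 test into the deterministic bound $\#_S[k,\ell]\geq\frac{\alpha m}{16}$, find a $j^\star$ witnessing the promise $r=\frac{\alpha m}{32}$, and apply Theorem~\ref{thm:rec} plus a union over $\leq c$ calls for $B_2$; and use $B_2$ together with the Choosing Mechanism utility (Lemma~\ref{lem:CMutility}) at accuracy $\frac{\alpha}{64}$ to get $\#_S[a,b]\geq\frac{5\alpha m}{128}$ for $B_3$. The paper phrases the final combination via the chain rule $\Pr[B_1]\Pr[B_2\mid B_1]\Pr[B_3\mid B_1\wedge B_2]\geq(1-c\beta)^3\geq 1-3c\beta$ rather than a union bound, but this is only a cosmetic difference.
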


\begin{proof}
First note that it suffices to lower bound the terms $\Pr[B_1],\; \Pr[B_2 | B_1]$, and $\Pr[B_3 | B_1 \wedge B_2]$, as by the chain rule of conditional probability it holds that 
$$
\Pr[B_1 \wedge B_2 \wedge B_3] = \Pr[B_1] \cdot \Pr[B_2 | B_1] \cdot \Pr[B_3 | B_1 \wedge B_2].
$$

We now bound each of those terms, starting with $\Pr[B_1]$. In every single draw, the probability of $|\Lap(\frac{1}{\epsilon})|>\frac{\alpha m}{16c}$ is at most $\exp(\frac{-\alpha\epsilon m}{16c})$, which is at most $\frac{\beta}{2}$ for $m\geq\frac{16c}{\alpha\epsilon}\ln(\frac{2}{\beta})$. As $c$ (the initial value of $\calls$) limits the number of iteration, we get that $\Pr[B_1]\geq(1-c\beta)$.\\

For the analysis of $\Pr[B_2 | B_1]$, consider an iteration of algorithm $SanThresholds$ that executes $RecConcave$ on step~8. In particular, this iteration passed step~3 and $\hat{\#}[k,\ell]\geq \frac{\alpha m}{8}$. As event $B_1$ has occurred, we have that $\#_S[k,\ell]\geq\frac{\alpha m}{16}$. 
Recall that (by definition) $I(S,-1)=0$, and so, there exists a $j\in[0,\log(T)]$ s.t. $I(S,j)\geq\frac{\alpha m}{16}$, and 
$I(S,j-1)<\frac{\alpha m}{16}$. Plugging those inequalities in the definition of $Q(\cdot,\cdot)$, for this $j$ we have that
$Q(S,j)\geq\frac{\alpha m}{32}=r$, and the quality promise used to execute algorithm $RecConcave$ is valid. Moreover, the function $Q(S,\cdot)$ defined on step~6 is quasi-concave (by Claim~\ref{claim:san3concQ}).
And so,
for 
$m\geq 8^{\log^*(d)} \cdot \frac{4608}{\alpha\epsilon} \log^*(d) \log\big(\frac{12\log^*(d)}{\beta\delta}\big)$,
algorithm $RecConcave$ ensures that with probability at least $(1-\beta)$, the returned $z$ is s.t. $Q(S,z)\geq(1-\frac{1}{4})r=\frac{3\alpha m}{128}$. As there are at most $c$ iterations, $\Pr[B_2 | B_1]\geq(1-c\beta)$.\\

For the analysis of $\Pr[B_3 | B_1 \wedge B_2]$, consider an iteration of algorithm $SanThresholds$ that halts after step~13, and let $z$ be the value returned by algorithm $RecConcave$ on step~8. As event $B_2$ has occurred, $Q(S,z)\geq\frac{3 \alpha m}{128}$. In particular, $I(S,z)\geq\frac{7\alpha m}{128}$, and there exists an interval $G\subseteq[k,\ell]$ of length (at most) $2^z$ containing at least $\frac{7\alpha m}{128}$ points in $S$. 
Assume $z>0$, and consider the intervals in $\{A_i\}$ and $\{B_i\}$ defined on Step~10a. As those intervals are of length $2\cdot2^z$, and the $\{B_i\}$'s are shifted by $2^z$, there must exist an interval $C\in\{A_i\}\cup\{B_i\}$ s.t. $G\subseteq C$. The quality of $C$ is at least $\#_S[C]\geq\frac{7\alpha m}{128}$. Moreover, this quality function $\#[\cdot]$ over $\{A_i\}\cup\{B_i\}$ is 2-bounded. Therefore, as
$m\geq\frac{1024}{\alpha\epsilon}\ln(\frac{2048}{\alpha\beta\epsilon\delta})$,
with probability at least $(1-\beta)$, the Choosing Mechanism returns an interval $[a,b]$ s.t. $\#_S[a,b]\geq\frac{7\alpha m}{128}-\frac{\alpha m}{64}=\frac{5\alpha m}{128}$.
This also holds when $z=0$ (on Step~9).

So, given that event $(B_1 \wedge B_2)$ has occurred, in every iteration that halts after step~13, the probability of defining $[a,b]$ s.t. $\#_S[a,b]<\frac{5\alpha m}{128}$ is at most $\beta$. As there are at most $c$ iterations, we see that $\Pr[B_3|B_1\wedge B_2]\geq(1-c\beta)$.

All in all, for $c\beta\leq3$ we get that
$$
\Pr[B_1 \wedge B_2 \wedge B_3] \geq (1-c\beta)(1-c\beta)(1-c\beta) \geq 1-3c\beta.
$$
\end{proof}

Every iteration of algorithm $SanThresholds$ that does not halt on step 1 defines an interval $[a,b]$ (on exactly one of the steps 3,9,10b). This interval $[a,b]$ is not part of any range that is given as input to any future recursive call. Moreover, if none of the recursive calls throughout the execution of $SanThresholds$ halts on step 1, these $[a,b]$ intervals form a partition of the initial range. We now proceed with the utility analysis by identifying yet another 3 good events (at a somewhat higher level) that occur whenever $(B_1\wedge B_2\wedge B_3)$ occur.

\begin{claim}\label{claim:san3Events}
Fix $\alpha,\beta,\epsilon,\delta$. Let $SanThresholds$ be executed with $\calls$ initialized to $c\geq\frac{77}{\alpha}$, and on a database $S$ of
$m\geq 8^{\log^*(d)} \cdot \frac{60 c}{\alpha\epsilon} \log^*(d) \log\big(\frac{12\log^*(d)}{\beta\epsilon\delta}\big)$
elements. With probability at least $(1-3c\beta)$ the following 3 events happen:
\begin{enumerate}[label=$E_{\arabic*}:$]
\item There are at most $\frac{77}{\alpha}$ recursive calls, none of them halts on the first step.
\item Every iteration defines $[a,b]$ s.t. $\#_S[a,b-1]\leq\frac{\alpha m}{2}$. That is, every iteration defines $[a,b]$ s.t. the interval $[a,b-1]$ contains at most $\frac{\alpha m}{2}$ points in $S$.
\item In every iteration $\left|\#_S[a,b]-\hat{\#}[a,b]\right|\leq\frac{\alpha m}{4}\frac{\alpha}{77}$.
\end{enumerate}
\end{claim}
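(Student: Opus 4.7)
The plan is to condition on the three good events $B_1 \wedge B_2 \wedge B_3$ from Claim~\ref{claim:san3SmallEvents}, which already hold jointly with probability at least $1-3c\beta$, and then derive $E_1, E_2, E_3$ \emph{deterministically} from this conditioning. Since the required probability bound matches Claim~\ref{claim:san3SmallEvents} exactly, no further probabilistic arguments are needed once the deterministic implication is in place.

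Event $E_3$ falls out immediately: every computation of $\hat{\#}[a,b]$ in the algorithm (whether fresh on step~11 or reused from step~2 when step~3 triggers) equals $\#_S[a,b]$ plus a single $\Lap(1/\epsilon)$ draw, and under $B_1$ this draw is at most $\alpha m /(16c)$, hence at most $\alpha^2 m/(16\cdot 77)$ since $c\geq 77/\alpha$, safely below the target $\tfrac{\alpha m}{4}\cdot\tfrac{\alpha}{77}$. For $E_2$ I would split on how $[a,b]$ is defined. In the step~3 case, $[a,b]=[k,\ell]$ with $\hat{\#}[k,\ell]<\alpha m/8$, so combining with $B_1$ gives $\#_S[a,b-1]\leq \#_S[a,b]<\alpha m/8+\alpha^2 m/(16\cdot 77)\ll\alpha m/2$. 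In the step~9 case ($z=0$), $[a,b]=[b,b]$ so $[a,b-1]$ is empty. The substantive subcase is step~10b, where $z\geq 1$: here $[a,b-1]$ has length $2Z-1=2^{z+1}-1$ and is coverable by four intervals of length $2^{z-1}$, while $B_2$ unpacked through the definition of $Q$ on step~6 yields $I(S,z-1)\leq 9\alpha m/128$, so $\#_S[a,b-1]\leq 4\,I(S,z-1)\leq 9\alpha m/32<\alpha m/2$.

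The main obstacle is $E_1$: bounding both the total number of iterations and, more subtly, ruling out step-1 halts. Classify each iteration by type: (a) if it halts on step~1 with $\calls=0$, (b) if it halts on step~3, or (c) if it reaches step~11 and possibly spawns up to two recursive calls. The intervals $[a,b]$ chosen by distinct iterations are pairwise disjoint subsets of the initial range, since every recursive call operates on the complement of its parent's $[a,b]$. Under $B_3$, every type-(c) iteration has $\#_S[a,b]\geq 5\alpha m/128$, and summing over disjoint (c) intervals yields $N_c\leq 128/(5\alpha)<26/\alpha$. The execution forms a binary tree with (c) iterations as the internal nodes and (a),(b) iterations as leaves; the standard identity ``leaves equal nodes-with-two-children plus one'' therefore gives $N_a+N_b\leq N_c+1$.

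To rule out type-(a) iterations, suppose for contradiction that $N_a\geq 1$. At the instant of the first type-(a) iteration, $\calls$ has just dropped to $0$, having been decremented $77/\alpha$ times by preceding (b) and (c) iterations, so $N_b+N_c\geq 77/\alpha$. On the other hand, $N_a\geq 1$ together with $N_a+N_b\leq N_c+1$ forces $N_b\leq N_c$, giving $N_b+N_c\leq 2N_c<52/\alpha$, which contradicts $77/\alpha\leq 52/\alpha$. Hence $N_a=0$. Combined with $N_a+N_b+N_c\leq 2N_c+1<53/\alpha\leq 77/\alpha$ for $\alpha\leq 1$, this yields both parts of $E_1$ and completes the proof.
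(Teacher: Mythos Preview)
Your proposal is correct and follows essentially the same approach as the paper: condition on $B_1 \wedge B_2 \wedge B_3$ from Claim~\ref{claim:san3SmallEvents} and derive $E_1, E_2, E_3$ deterministically, with the same case analysis for $E_2$ and the same Laplace-bound argument for $E_3$. Your handling of $E_1$ is a bit more explicit than the paper's (which simply asserts $y_1 \leq 2y_2$ for leaf versus complete iterations and concludes total $\leq 3y_2 < 77/\alpha$), but the underlying binary-tree counting and the use of $B_3$ to cap the number of complete iterations by $128/(5\alpha)$ are identical.
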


\begin{proof}
Consider again events $B_1,B_2,B_3$ defined in Claim~\ref{claim:san3SmallEvents}.
We will show that the event $(E_1 \wedge E_2 \wedge E_3)$ is implied by $(B_1\wedge B_2\wedge B_3)$ (which happens with probability at least $(1-3c\beta)$ by Claim~\ref{claim:san3SmallEvents}). 
We, therefore, continue the proof assuming that $(B_1\wedge B_2\wedge B_3)$ has occurred.

We begin by showing that event $E_1$ occurs. Denote the number of iterations that halts on steps~1-3 as $y_1$, and the number of complete iterations (i.e., that halts after step~13) as $y_2$. Clearly, $y_1\leq 2 y_2$. Now, as event $B_3$ has occurred, we have that every iteration that halts after step~13 defines an interval $[a,b]$ s.t. $\#_S[a,b]\geq\frac{5\alpha m}{128}$. This interval does not intersect any range given as input to future calls, and, therefore, $y_2\leq\frac{128}{5\alpha}$. The total number of iterations is, therefore, bounded by $3 y_2\leq\frac{384}{5\alpha}<\frac{77}{\alpha}$. Thus, whenever $\calls$ is initialized to at least $77/\alpha$,  there are at most $\frac{77}{\alpha}$ iterations, none of them halts on step~1. That is, $E_1$ occurs.\\

We next show that $E_3$ occurs.
As we have seen, event $B_3$ ensures that no iteration halts on step~1. Therefore every iteration defines $\hat{\#}[a,b]$ by adding a random draw of $\Lap(\frac{1}{\epsilon})$ to $\#_S[a,b]$. As event $B_1$ has occurred, it holds that 
$\left|\#_S[a,b]-\hat{\#}[a,b]\right|\leq\frac{\alpha m}{16c}\leq\frac{\alpha m}{4}\frac{\alpha}{77}$. So, $E_3$ occurs.\\

It remains to show that $E_2$ occurs.
As $B_3$ has occurred, no iteration of algorithm $SanThresholds$ halts on step~1. In particular, every iteration defines $[a,b]$ on exactly one of the steps 3,9,10b. Consider an iteration of algorithm $SanThresholds$ that defines $[a,b]$ on step~3. In that iteration, $\hat{\#}[k,\ell]<\frac{\alpha m}{8}$. As event $B_1$ has occur, it holds that $\#_S[k,\ell]\leq\frac{\alpha m}{2}$. Therefore the interval $[a,b-1]=[k,\ell-1]$ contains at most $\frac{\alpha m}{2}$ points.

Consider an iteration of algorithm $SanThresholds$ that defines $[a,b]$ on step~9. In that iteration, $[a,b]$ is defined as $[a,a]$. Trivially, the empty interval $[a,b-1]=[a,a-1]$ contains at most $\frac{\alpha m}{2}$ points.

Consider an iteration of algorithm $SanThresholds$ that defines $[a,b]$ on step~10b (of length at most $2\cdot2^z$).
As event $B_2$ has occurred, $z$ is s.t. $Q(S,z)\geq\frac{3\alpha m}{128}$. In particular $L(S,z-1)\leq\frac{9\alpha m}{128}$, and every interval of length $\frac{1}{2}2^z$ contains at most $\frac{9\alpha m}{128}$ points in $S$.
Therefore $\#_S[a,b-1]\leq4\frac{9\alpha m}{128}\leq\frac{\alpha m}{2}$.
Note that we needed $z$ to be at least 1 (ensured by the If condition on step~10), as otherwise the constraint on intervals of length $\frac{1}{2}2^z$ has no meaning.

At any case, we have that $E_2$ must occur.\\

All in all,
$$
\Pr[E_1 \wedge E_2 \wedge E_3] \geq \Pr[B_1\wedge B_2\wedge B_3]\geq(1-3c\beta).
$$
\end{proof}

We will now complete the utility analysis by showing that the input database $S$ and the sanitized database $\hat{S}$ (at the end of $SanThresholds$' execution) are $\alpha$-close whenever $(E_1 \wedge E_2 \wedge E_3)$ occurs.

\begin{lemma}\label{lem:sanIntervalUtility}
Fix $\alpha,\beta,\epsilon,\delta$. Let $SanThresholds$ be executed on the range $X_d$, a global variable $\calls$ initialize to $c\geq\frac{77}{\alpha}$, and on a database $S$ of
$m\geq 8^{\log^*(d)} \cdot \frac{60 c}{\alpha\epsilon} \log^*(d) \log\big(\frac{12\log^*(d)}{\beta\epsilon\delta}\big)$
elements. With probability at least $(1-3c\beta)$, the sanitized database $\hat{S}$ at the end of the execution is s.t. $|Q_{c_j}(S)-Q_{c_j}(\hat{S})|\leq\alpha$ for every $c_j\in\thresh_d$.
\end{lemma}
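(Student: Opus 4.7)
My plan is to condition on the joint occurrence of the three deterministic events $E_1, E_2, E_3$ from Claim~\ref{claim:san3Events}, which hold simultaneously with probability at least $1-3c\beta$, and then argue that under these events the closeness $|Q_{c_j}(S) - Q_{c_j}(\hat{S})| \leq \alpha$ holds \emph{for every} $c_j \in \thresh_d$ deterministically. By $E_1$, no iteration halts at Step~1 and there are at most $N \leq 77/\alpha$ iterations; hence the intervals $[a_1,b_1],\ldots,[a_N,b_N]$ produced across all recursive calls form a partition of $X_d$, and $\hat{S}$ consists of $\hat{\#}[a_i,b_i]$ copies of the right endpoint $b_i$ for each $i$. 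Fix a threshold $c_j$ and classify each interval as \emph{low} if $b_i < j$, \emph{high} if $a_i \geq j$, or \emph{straddling} if $a_i \leq j - 1 \leq b_i$; there is at most one straddling interval, call it $[a^*,b^*]$. A copy of $b_i$ contributes to $Q_{c_j}(\hat{S})$ iff $b_i < j$, so only low intervals contribute, giving
\[
m \cdot Q_{c_j}(S) \;=\; \sum_{b_i < j} \#_S[a_i,b_i] \;+\; \#_S[a^*,j-1], \qquad |\hat{S}| \cdot Q_{c_j}(\hat{S}) \;=\; \sum_{b_i < j} \hat{\#}[a_i,b_i].
\]

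Next I would control the two error sources separately. Event $E_3$ gives $|\#_S[a_i,b_i] - \hat{\#}[a_i,b_i]| \leq (\alpha m/4)(\alpha/77)$ for each interval, so summing over the $N \leq 77/\alpha$ intervals gives an accumulated noise of at most $\alpha m/4$. Event $E_2$ gives $\#_S[a^*,j-1] \leq \#_S[a^*,b^*-1] \leq \alpha m/2$ (valid because $j \leq b^*$ implies $j-1 \leq b^*-1$). Combining,
\[
\bigl| m\,Q_{c_j}(S) - |\hat{S}|\,Q_{c_j}(\hat{S}) \bigr| \;\leq\; \tfrac{\alpha m}{4} + \tfrac{\alpha m}{2} \;=\; \tfrac{3\alpha m}{4}, \qquad \bigl| m - |\hat{S}| \bigr| \;\leq\; \tfrac{\alpha m}{4}.
\]
Converting these count-level bounds into a fractional bound via
\[
\Bigl|\tfrac{A}{m} - \tfrac{\hat{A}}{|\hat{S}|}\Bigr| \;\leq\; \tfrac{|A - \hat{A}|}{|\hat{S}|} + \tfrac{A}{m} \cdot \tfrac{||\hat{S}|-m|}{|\hat{S}|},
\]
together with $|\hat{S}| \geq m(1-\alpha/4)$ and $A/m \leq 1$, yields $|Q_{c_j}(S) - Q_{c_j}(\hat{S})| \leq \alpha$ after absorbing small multiplicative constants into the algorithm's constants (i.e., the bounds $\alpha m/8$, $\alpha m/32$, $3\alpha m/32$, $\alpha/64$, $\alpha/77$ built into Steps~3,~6,~9 and into $E_3$).

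The key conceptual step—and the reason the algorithm is structured the way it is—lies in controlling the \emph{straddling} interval. Since every copy that the sanitizer places into $\hat{S}$ from $[a^*,b^*]$ sits at $b^* \geq j$, the sanitizer inherently under-counts $Q_{c_j}$ by exactly $\#_S[a^*,j-1]/m$, contributing no error from the other side of the partition. The quasi-concave subroutine in Steps~5--8 (with quality function $Q(S,\cdot)$ and quality promise $r = \alpha m/32$) is designed precisely so that every returned interval $[a,b]$ satisfies $\#_S[a,b-1] \leq \alpha m/2$ via the $I(S,z-1) \leq 9\alpha m/128$ bound implicit in event $B_2$; meanwhile the recursion depth bound $c \geq 77/\alpha$ derived from $E_1$ keeps the total Laplace noise below $\alpha m/4$. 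Once these two quantitative guarantees are in place, the remaining argument is the elementary algebraic combination above.
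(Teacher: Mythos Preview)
Your argument is essentially the paper's own proof: condition on $E_1\wedge E_2\wedge E_3$, use $E_1$ to get a partition of $X_d$ into at most $77/\alpha$ intervals, write $Q_{c_j}(S)$ as the low-interval sum plus the straddling piece $\#_S[a^*,j-1]$, bound the latter by $\alpha m/2$ via $E_2$, bound the accumulated Laplace noise by $\alpha m/4$ via $E_3$ together with $E_1$, and then correct for $|\hat S|\neq m$. Two small points worth tightening. First, your definition of ``straddling'' as $a_i\le j-1\le b_i$ overlaps with ``low'' when $b_i=j-1$; the clean choice is $a^*\le j-1<b^*$ (equivalently $a^*<j\le b^*$), which is exactly what makes your step ``$j\le b^*\Rightarrow j-1\le b^*-1$'' valid. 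Second, your generic fraction inequality yields $\alpha/(1-\alpha/4)$ rather than $\alpha$; the paper avoids this slack by routing through the intermediate quantity $\tfrac{1}{m}\#_{\hat S}[0,j-1]$, getting $|Q_{c_j}(S)-\tfrac{1}{m}\#_{\hat S}[0,j-1]|\le\tfrac{3\alpha}{4}$ and $|\tfrac{1}{m}\#_{\hat S}[0,j-1]-\tfrac{1}{n}\#_{\hat S}[0,j-1]|\le\tfrac{\alpha}{4}$ (using $|n-m|\le\alpha m/4$ and $\#_{\hat S}[0,j-1]\le n$), hence exactly $\alpha$ by the triangle inequality without any constant-absorption caveat.
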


\begin{proof}
Denote $S=(x_1,\ldots,x_m)$, and $\hat{S}=(\hat{x_1},\ldots,\hat{x_n})$. Note that $|S|=m$ and that $|\hat{S}|=n$.
By Claim~\ref{claim:san3Events}, the event $E_1 \cap E_2 \cap E_3$ occurs with probability at least $(1-3c\beta)$. We will show that in such a case, the sanitized database $\hat{S}$ is s.t. $|Q_{c_j}(S)-Q_{c_j}(\hat{S})|\leq\alpha$ for every $c_j\in\thresh_d$.

As event $E_1$ has occurred, the intervals $[a,b]$ defined throughout the execution of $SanThresholds$ defines a partition of the domain $X_d$. Denote those intervals as $[a_1,b_1],\;[a_2,b_2],\;\ldots,\;[a_w,b_w]$, where $a_1=0,\; b_w=2^d-1$, and $a_{i+1}=b_i+1$. Now fix some $c_j\in\thresh_d$, and let $t$ be s.t. $j\in[a_t,b_t]$. We have that
$$Q_{c_j}(S)=\frac{1}{m}\#_S[0,j-1]=\frac{1}{m}\left(\#_S[a_t,j-1]+\sum_{i=1}^{t-1}{\#_S[a_i,b_i]}\right).$$
As event $E_2 \cap E_3$ has occurred,
$$Q_{c_j}(S)\leq\frac{1}{m}\left(\frac{\alpha m}{2}+\sum_{i=1}^{t-1}{\left[\hat{\#}[a_i,b_i]+\frac{\alpha m}{4}\frac{\alpha}{77}\right]}\right).$$
As event $E_1$ has occurred, $t\leq\frac{77}{\alpha}$, and
$$Q_{c_j}(S)\leq\frac{\alpha}{2}+\frac{\alpha}{4}+\frac{1}{m}\sum_{i=1}^{t-1}{\hat{\#}[a_i,b_i]}=
\frac{3\alpha}{4}+\frac{1}{m}\#_{\hat{S}}[0,j-1].
$$
Similar arguments show that $Q_{c_j}(S)\geq -\frac{3\alpha}{4}+\frac{1}{m}\#_{\hat{S}}[0,j-1]$, and so $\left| Q_{c_j}(S) - \frac{1}{m}\#_{\hat{S}}[0,j-1] \right| \leq \frac{3\alpha}{4}$.

Recall that the sanitized database $\hat{S}$ is of size $n$, and that $Q_{c_j}(\hat{S})=\frac{1}{n}\#_{\hat{S}}[0,j-1]$. As event $(E_1 \cap E_3)$ has occurred, we have that $n\leq m+ \frac{\alpha m}{4}=(1+\frac{\alpha}{4})m$. Therefore,
$$
\frac{\#_{\hat{S}}[0,j-1]}{m} - \frac{\#_{\hat{S}}[0,j-1]}{n} = \left(\frac{1}{m}-\frac{1}{n}\right)\#_{\hat{S}}[0,j-1]
\leq\frac{\alpha}{4n}\#_{\hat{S}}[0,j-1]\leq\frac{\alpha}{4}.
$$
Similar arguments show that $\left|  \frac{1}{m}\#_{\hat{S}}[0,j-1]-\frac{1}{n}\#_{\hat{S}}[0,j-1] \right|\leq\frac{\alpha}{4} $. By the triangle inequality we have therefore that $\left|  Q_{c_j}(S) -Q_{c_j}(\hat{S}) \right|\leq\frac{3\alpha}{4}+\frac{\alpha}{4}=\alpha$.
\end{proof}

The following theorem is an immediate consequence of Lemma~\ref{lem:sanIntervalUtility} and Lemma~\ref{lem:sanIntervalPrivacy}.

\begin{theorem}\label{thm:sanIntervals}
Fix $\alpha,\beta,\epsilon,\delta$. There exists an efficient $(\alpha,\beta,\epsilon,\delta,m)$-sanitizer for $\thresh_d$, where
$$ m= O\left(
8^{\log^*(d)} \cdot \frac{\log^*(d)}{\alpha^{2.5}\epsilon} \log\Big(\frac{\log^*(d)}{\alpha\beta\epsilon\delta}\Big) \sqrt{\log\Big(\frac{1}{\alpha\delta}\Big)} 
\right).$$
\end{theorem}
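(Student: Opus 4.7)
The theorem is an immediate packaging of Lemma~\ref{lem:sanIntervalPrivacy} and Lemma~\ref{lem:sanIntervalUtility}; the only work is to choose the internal parameters of $SanThresholds$ so that the two lemmas' hypotheses compose to give the claimed $(\epsilon,\delta)$-privacy and $(1-\beta)$-accuracy with the stated sample size. My plan is therefore to apply both lemmas with rescaled parameters $\epsilon', \delta', \beta'$ and $c$, then collect the resulting bound.

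First I would set $c = \lceil 77/\alpha \rceil$, which is the initialization of $\calls$ prescribed in Figure~\ref{fig:san3} and the bound on the recursion depth assumed in both lemmas. To turn the utility guarantee $1-3c\beta'$ into $1-\beta$, I set $\beta' = \beta/(3c) = \Theta(\alpha\beta)$. For privacy, Lemma~\ref{lem:sanIntervalPrivacy} yields $(\tilde\epsilon, 5c\delta')$-differential privacy with $\tilde\epsilon = \sqrt{8c\ln(1/(c\delta'))}\,\epsilon' + 8c(\epsilon')^2$. To hit the target $(\epsilon,\delta)$, I pick $\delta' = \delta/(5c) = \Theta(\alpha\delta)$ and then choose $\epsilon'$ so that the dominant term $\sqrt{8c\ln(1/(c\delta'))}\,\epsilon'$ is at most $\epsilon/2$ (the quadratic term is lower order once $\epsilon' \leq 1$); concretely
\[
\epsilon' \;=\; \Theta\!\left(\frac{\epsilon}{\sqrt{c\,\log(1/\delta)}}\right) \;=\; \Theta\!\left(\epsilon\sqrt{\frac{\alpha}{\log(1/(\alpha\delta))}}\right).
\]

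With these choices I would verify that the utility hypothesis of Lemma~\ref{lem:sanIntervalUtility} dominates the purely technical lower bound $m \geq \frac{1024}{\alpha\epsilon'}\ln(\frac{2048}{\alpha\beta'\epsilon'\delta'})$ from Lemma~\ref{lem:sanIntervalPrivacy}, so that a single sample-size inequality
\[
m \;\geq\; 8^{\log^*(d)}\cdot\frac{60\,c}{\alpha\,\epsilon'}\,\log^*(d)\,\log\!\left(\frac{12\log^*(d)}{\beta'\epsilon'\delta'}\right)
\]
suffices for both. Substituting $c = \Theta(1/\alpha)$ and $1/\epsilon' = \Theta(\sqrt{\log(1/(\alpha\delta))/\alpha}/\epsilon)$ gives
\[
\frac{c}{\alpha\epsilon'} \;=\; \Theta\!\left(\frac{\sqrt{\log(1/(\alpha\delta))}}{\alpha^{2.5}\,\epsilon}\right),
\]
while $\log(1/\beta') = \Theta(\log(1/(\alpha\beta)))$ and $\log(1/(\epsilon'\delta')) = \Theta(\log(1/(\alpha\epsilon\delta)))$ are absorbed into the factor $\log(\log^*(d)/(\alpha\beta\epsilon\delta))$. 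Collecting these gives exactly the bound claimed in the theorem.

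Efficiency is inherited: each recursive call runs $RecConcave$ (efficient by the remark following Theorem~\ref{thm:rec} since the outer range is $[0,\log(T)]$), one Choosing Mechanism invocation (efficient: it samples from at most $\poly(T)$ candidate intervals and at most one Laplace), plus two Laplace draws; the recursion depth is bounded by $c = O(1/\alpha)$. No step is a genuine obstacle here—the only thing to be careful about is the bookkeeping in the advanced-composition calculation to ensure that the choice of $\epsilon'$ leaves the $8c(\epsilon')^2$ term dominated, which is where the $\sqrt{\log(1/(\alpha\delta))}$ factor in the final bound enters.
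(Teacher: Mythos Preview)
Your proposal is correct and matches the paper's approach: the paper simply states that the theorem is ``an immediate consequence of Lemma~\ref{lem:sanIntervalUtility} and Lemma~\ref{lem:sanIntervalPrivacy}'' without spelling out the parameter substitution, and you have supplied exactly the rescaling $(c,\beta',\delta',\epsilon')$ that makes this packaging work. One minor imprecision: the parenthetical ``once $\epsilon'\leq 1$'' is not quite the right condition for the quadratic term $8c(\epsilon')^2$ to be dominated---what you actually need is $\epsilon \lesssim \log(1/\delta)$, which holds in the regime of interest and is what your computation implicitly uses; also, $c$ bounds the total number of recursive calls rather than the recursion depth, but this does not affect the efficiency conclusion.
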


\subsection{Sanitization with Pure Privacy}
Here we give a general lower bound on the database size of pure private
sanitizers.
Beimel et al.~\cite{BBKN12} showed that every pure $\epsilon$-private sanitizer for $\point_d$ must operate on databases of $\Omega(d)$ elements.
With slight modifications, their proof technique can yield a much more general result.
\begin{definition}
Given a concept class $C$ over a domain $X$, we denote the {\em effective} size of $X$ w.r.t. $C$ as
$$
X_C = \max\left\{  |\widetilde{X}| \; : \; 
\widetilde{X} \subseteq X \; \text{ s.t. } \;	\forall x_1\neq x_2\in\widetilde{X} \;\; \exists f\in C \; \text{ s.t. } \; f(x_1)\neq f(x_2)
\right\}.
$$
\end{definition}

That is, $X_C$ is the cardinality of the biggest subset $\widetilde{X}\subseteq X$ s.t. every two different elements of $\widetilde{X}$ are labeled differently by at least one concept in $C$.

\begin{lemma} \label{lem:genSanLower}
Let $C$ be a concept class over a domain $X$.
For every $(\alpha,\beta,\epsilon,m)$-sanitizer for $C$ (proper or improper) it holds that $m=\Omega\left(\frac{1}{\eps\alpha}(\log X_C +\log (1/\beta))\right)$. 
\end{lemma}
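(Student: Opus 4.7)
The plan is to adapt the packing-plus-group-privacy argument of Beimel et al.~\cite{BBKN12} used for the $\Omega(d)$ lower bound on sanitizing $\point_d$, but with databases at Hamming distance $\Theta(\alpha m)$ rather than $m$; this is what recovers the extra $1/\alpha$ factor. Let $\widetilde{X} \subseteq X$ witness the effective size, so $|\widetilde{X}| = X_C$ and every pair $x \neq x' \in \widetilde{X}$ is distinguished by some $c \in C$. Fix an arbitrary $z_0 \in X$, set $k = \lfloor 2\alpha m \rfloor + 1$, and for each $x \in \widetilde{X}$ define the database $D_x \in X^m$ consisting of $k$ copies of $x$ and $m-k$ copies of $z_0$. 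For any distinct $x,x' \in \widetilde{X}$ with distinguishing concept $c$, a direct calculation gives $Q_c(D_x) - Q_c(D_{x'}) = (k/m)(c(x) - c(x'))$, so $|Q_c(D_x) - Q_c(D_{x'})| = k/m > 2\alpha$.

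For each $x \in \widetilde{X}$, let $G_x$ denote the event that the sanitizer's output $\est$ is $\alpha$-close to $D_x$. By the utility guarantee, $\Pr[A(D_x) \in G_x] \geq 1-\beta$ for every $x \in \widetilde{X}$. By the triangle inequality together with the calculation above, the events $\{G_x\}_{x \in \widetilde{X}}$ are pairwise disjoint: if some $\est$ belonged to both $G_x$ and $G_{x'}$, then the distinguishing concept $c$ would satisfy $|Q_c(D_x) - Q_c(D_{x'})| \leq |Q_c(D_x)-\est(c)| + |\est(c) - Q_c(D_{x'})| \leq 2\alpha$, contradicting the strict inequality $k/m > 2\alpha$.

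Fix any $x_0 \in \widetilde{X}$. Since $D_{x_0}$ and $D_x$ differ in at most $k$ positions, iterating pure $\epsilon$-differential privacy $k$ times (group privacy, which is clean for $\delta=0$) yields $\Pr[A(D_{x_0}) \in G_x] \geq e^{-k\epsilon}\Pr[A(D_x) \in G_x] \geq e^{-k\epsilon}(1-\beta)$ for every $x$. Since the $G_x$'s are disjoint, $\sum_{x \in \widetilde{X}} \Pr[A(D_{x_0}) \in G_x] \leq 1$; subtracting the $x = x_0$ term (which is at least $1-\beta$) gives $(X_C - 1) e^{-k\epsilon}(1-\beta) \leq \beta$. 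Assuming $\beta \leq 1/2$ (the complementary regime is absorbed by the $\Omega$), rearranging gives $k\epsilon = \Omega(\log X_C + \log(1/\beta))$. Using $k \leq 2\alpha m + 1$ and solving for $m$ yields the desired bound $m = \Omega\!\left(\frac{1}{\epsilon\alpha}(\log X_C + \log(1/\beta))\right)$.

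The main obstacle is calibrating $k$ correctly: $k/m$ must exceed $2\alpha$ so that the utility-good sets $G_x$ are actually disjoint, yet $k$ must stay proportional to $\alpha m$ so that the group-privacy blowup $e^{k\epsilon}$ does not swamp the packing number $X_C$. Choosing $k \approx 2\alpha m$ hits both requirements simultaneously and is exactly what upgrades the elementary $m = \Omega(\log X_C/\epsilon)$ bound obtained from identical-entry databases (Beimel et al.'s original argument) to the tighter bound stated in the lemma.
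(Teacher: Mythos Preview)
Your proposal is correct and is essentially the same packing-plus-group-privacy argument the paper gives: the paper also builds databases with $\Theta(\alpha m)$ ``special'' entries (it uses $3\alpha m$ copies of $x_i$ on top of a fixed base $x_1\in\widetilde{X}$, where you use $k=\lfloor 2\alpha m\rfloor+1$ copies on top of an arbitrary $z_0$), shows the $\alpha$-good output sets are pairwise disjoint via the distinguishing concept, and then applies group privacy across the $\Theta(\alpha m)$ changed entries to get $(X_C-1)e^{-\Theta(\epsilon\alpha m)}(1-\beta)\le\beta$. The only cosmetic difference is that the paper first reduces improper to proper via Remark~\ref{improperProperSanitization}, whereas your formulation with events $G_x$ on the output $\est$ handles both cases uniformly.
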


\begin{proof}
Let $\widetilde{X} \subseteq X$ be s.t. $|\widetilde{X}|=X_C$ and every two different elements of $\widetilde{X}$ are labeled differently by at least one concept in $C$. Fix some $x_1\in\widetilde{X}$, and for every $x_i\in\widetilde{X}$, construct a database $S_i \in \widetilde{X}^m$ by setting
$(1-3\alpha)m$ entries as $x_1$ and the remaining $3\alpha m$ entries as $x_i$
(for $i=1$ all entries of $S_1$ are $x_1$). Note that for all $i\neq j$, databases $S_i$ and
$S_j$ differ on $3\alpha m$ entries.

Let $\mathbb{S}_i$ be the set of all databases that are
$\alpha$-close to $S_i$. That is,
$$
\mathbb{S}_i = \left\{  \widehat{S}\in X^* \; : \; \forall c\in C \text{ it holds that } |Q_c(\widehat{S}) - Q_c(S_i)|\leq\alpha \right\}.
$$

For every $i\neq j$ we have that $\widehat{\mathbb{S}}_i \cap \widehat{\mathbb{S}}_j = \emptyset$. 
To see this, let $f\in C$ be s.t. $f(x_i)\neq f(x_j)$ (such a concept exists, by the definition of $\widetilde{X}$).
For this $f$ it holds that $|Q_f(S_i)-Q_f(S_j)|=3\alpha$. Therefore (by the triangle inequality), there cannot exist 
a database $\widehat{S}$ for which $|Q_f(S_i)-Q_f(\widehat{S})|\leq\alpha$ and $|Q_f(S_j)-Q_f(\widehat{S})|\leq\alpha$.

Let $\AAA$ be an  $(\alpha,\beta,\epsilon,m)$-sanitizer for $C$.
Without loss of generality, we can assume that $\AAA$ is a {\em proper} sanitizer (otherwise, we could transform it into a proper one by replacing $\alpha$ with $2\alpha$). See Remark~\ref{improperProperSanitization}.

For all $i$, on input $S_i$ the mechanism $\AAA$ should pick an
output from $\mathbb{S}_i$ with probability at least
$1-\beta$. 
Hence,
\begin{eqnarray*}
\beta &\geq& \Pr[\AAA(S_1) \not\in \mathbb{S}_1] \\
&\geq& \Pr\left[\AAA(S_1) \in \bigcup_{i\not=1} \mathbb{S}_i\right]\\
&=& \sum_{i\not=1} \Pr[\AAA(S_1) \in \mathbb{S}_i] \quad \quad \quad (\mbox{the sets $\mathbb{S}_i$ are disjoint}) \\
&\geq& \sum_{i\not=1} \exp(-3\eps \alpha m) \Pr[\AAA(S_i) \in \mathbb{S}_i] \quad \quad \quad (\mbox{by the differential privacy of $\AAA$}) \\
& \geq & (X_C-1)\exp(-3\eps \alpha m) \cdot (1-\beta).
\end{eqnarray*}
Solving for $m$, we get that
$m = \Omega(\frac{1}{\eps\alpha}(\log X_C+\log(1/\beta)))$.
\end{proof}

Lemma~\ref{lem:genSanLower}, together with a lower bound from~\cite{BLR08full}, yields the following result:
\begin{theorem}\label{thm:genSanLower}
Let $C$ be a concept class over a domain $X$. If $\AAA$ is an $(\frac{1}{8},\frac{1}{8},\frac{1}{2},m)$-sanitizer for $C$, then $m=\Omega(\log(X_C)+\VC(C))$.
\end{theorem}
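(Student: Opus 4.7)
The plan is to combine two separate lower bounds, one for each summand in $\Omega(\log X_C + \VC(C))$. The $\log X_C$ term will come directly from the just-proved Lemma~\ref{lem:genSanLower}, and the $\VC(C)$ term will come from the packing argument of Blum et al.~\cite{BLR08full} (Theorem~\ref{thm:BlumLow}).

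First, I would apply Lemma~\ref{lem:genSanLower} to the given $(\tfrac{1}{8},\tfrac{1}{8},\tfrac{1}{2},m)$-sanitizer $\AAA$. Substituting $\alpha=\tfrac{1}{8}$, $\beta=\tfrac{1}{8}$, $\epsilon=\tfrac{1}{2}$ into the bound $m=\Omega\!\left(\tfrac{1}{\epsilon\alpha}(\log X_C+\log(1/\beta))\right)$ immediately yields $m=\Omega(\log X_C)$, since the $\epsilon,\alpha,\beta$ factors are absolute constants and $\log(1/\beta)=\log 8$ is subsumed into the $\Omega$.

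Second, I would invoke the BLR packing lower bound to obtain $m=\Omega(\VC(C))$. The underlying argument fixes a shattered set $B\subseteq X$ of size $\VC(C)$, constructs one database $S_\sigma$ per labeling $\sigma\in\{0,1\}^B$ so that any two are distinguishable by some $f\in C$ with a constant multiplicative margin on $Q_f$, and then uses the group-privacy property of $\epsilon$-differential privacy to conclude that the $2^{\VC(C)}$ disjoint ``$\alpha$-close'' output regions cannot all be hit with probability $\geq 1-\beta$ unless $2^{\VC(C)}\lesssim e^{O(\epsilon\alpha m)}/\beta$, i.e.\ $m=\Omega(\VC(C))$ for our constant choices. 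Finally, since both lower bounds must hold simultaneously, $m=\Omega(\max\{\log X_C,\VC(C)\})=\Omega(\log X_C+\VC(C))$, as required.

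The main obstacle is a small constant-factor mismatch rather than any new idea: Theorem~\ref{thm:BlumLow} as stated rules out $\alpha<\tfrac{1}{4+16\epsilon}=\tfrac{1}{12}$ for $\epsilon=\tfrac12$, whereas our $\alpha=\tfrac18$ just exceeds this threshold. I would handle this by re-running the BLR packing with the parameters tuned to $\alpha=\tfrac18$ (the argument is robust to the precise constant and loses only constant factors in the resulting VC lower bound), or equivalently by citing the parameter-robust form of the BLR shattering argument.
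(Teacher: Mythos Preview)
Your approach is essentially identical to the paper's: the paper's proof is the single line ``Immediate from Lemma~\ref{lem:genSanLower} and Theorem~\ref{thm:BlumLow},'' combining exactly the two bounds you invoke. You are actually more careful than the paper here---the constant mismatch you flag (that $\alpha=\tfrac{1}{8}$ exceeds the $\tfrac{1}{12}$ threshold of Theorem~\ref{thm:BlumLow} at $\epsilon=\tfrac{1}{2}$) is real and is glossed over in the paper; your suggested fix of re-running the BLR packing argument with adjusted constants is the right way to patch it.
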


\begin{proof}
Immediate from Lemma~\ref{lem:genSanLower} and Theorem~\ref{thm:BlumLow}.
\end{proof}

The above lower bound is the best possible general lower bound in terms of $X_C$ and $\VC(C)$ (up to a factor of $\log\VC(C)$). To see this, let $n<d$, and consider a concept class over $X_d$ containing the following two kinds of concepts. The first kind are $2^n$ concepts shattering the left $n$ points of $X_d$ (and zero everywhere else). The second kind are $(2^d-n)$ ``point concepts'' over the right $(2^d-n)$ points of $X_d$ (and zero on the first $n$). Formally,
for every $j=(j_0,j_1,\ldots,j_{n-1})\in \{0,1\}^n$, let $c_j:X_d\rightarrow\{0,1\}$ be defines as
$c_j(x)=j_x$ if $x<n$ and $c_j(x)=0$ otherwise. Define the concept class $C_L=\{ c_j \}_{j\in X_n}$.
For every $n\leq j<2^d$, define $f_j:X_d\rightarrow\{0,1\}$ as $f_j(x)=1$ if $x=j$ and $f_j(x)=0$ otherwise. Define the concept class
$C_R=\{ f_j \}_{n\leq j<2^d}$. Now define $C=C_L\bigcup C_R$.

We can now construct a sanitizer for $C$ by applying the generic construction of~\cite{BLR08full} separately for $C_L$ and for $C_R$.
Given a database $S$, this will result in two sanitized databases $\widehat{S}_L,\widehat{S}_R$, with which we can answer all queries in the class $C$ --
a query for $c\in C_L$ is answered using $\widehat{S}_L$, and a query for $f\in C_R$ is answered using $\widehat{S}_R$.
The described (improper) sanitizer for $C$ is of sample complexity $O_{\alpha,\beta,\epsilon}(\log(X_C)+\VC(C)\log\VC(C))$.

\section{Sanitization and Proper Private PAC}\label{sec:reduction}

Similar techniques are used for both data sanitization and private learning, suggesting relationships between the two tasks. We now explore one such relationship in proving a lower bound on the sample complexity needed for sanitization (under pure differential privacy). In particular, we show a {\em reduction} from the task of private learning to the task of data sanitization, and then use a lower bound on private learners to derive a lower bound on data sanitization.
A similar reduction was given by Gupta et al.~\cite{GHRU11}, where it is stated in terms of statistical queries. They showed that the existence of a sanitizer that accesses the database using at most $k$ statistical queries, implies the existence of a learner that makes at most $2k$ statistical queries. We complement their proof and add the necessary details in order to show that the existence of an {\em arbitrary} sanitizer (that is not restricted to access its data via statistical queries) implies the existence of a private learner.

\paragraph{Notation.} We will refer to an element of $X_{d+1}$ as $\vec{x}\circ y$, where $\vec{x}\in X_d$, and $y\in\{0,1\}$.

\subsection{Sanitization Implies Proper PPAC}
We show that sanitization of a class $C$ implies private learning of $C$.
Consider an input labeled sample $S=(x_i,y_i)_{i=1}^m\in(X\times\{0,1\})^m$, labeled by some concept $c\in C$.
The key observation is that in order to privately output a good hypothesis it is suffices to first produce a sanitization $\hat{S}$ of $S$ (w.r.t. a slightly different concept class $C^{\rm label}$, to be defined) and then to output a hypothesis $h\in C$ that minimizes the empirical error {\em over the sanitized database $\hat{S}$}.
To complete the proof we then show that sanitization for $C$ implies sanitization for $C^{\rm label}$.

In order for the chosen hypothesis $h$ to have small {\em generalization} error (rather then just small empirical error), our input database $S$ must contain at least
$\frac{\VC(C)}{\alpha^2}\log(\frac{1}{\alpha\beta})$ elements.
We therefore start with the following simple (technical) lemma, handling a case where our initial sanitizer operates only on smaller databases. 

\begin{lemma}\label{lemma:saninitationSample}
If there exists an $(\alpha,\beta,\epsilon,m)$-sanitizer for a class $C$,
then for every $q\in\N$ s.t. $q\geq\frac{18}{\beta}\ln(1/\beta)$ there exists a $((2\alpha+2\beta),\beta,\epsilon,qm)$-sanitizer for $C$.
\end{lemma}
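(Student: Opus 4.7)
The plan is to build a new sanitizer $A'$ on inputs of size $qm$ by partitioning and re-invoking $A$. On input $S\in X^{qm}$, algorithm $A'$ would split $S$ deterministically into $q$ disjoint blocks $S_1,\ldots,S_q\in X^m$ (say the first $m$ entries form $S_1$, the next $m$ form $S_2$, etc.), invoke $A$ on each block with fresh independent randomness to obtain outputs $\hat S_1,\ldots,\hat S_q$, replicate the entries of each $\hat S_i$ so that all blocks have a common size (any common multiple of $|\hat S_1|,\ldots,|\hat S_q|$ works, and replication preserves every $Q_c(\hat S_i)$), and finally return the concatenation $\hat S=\hat S_1\cup\cdots\cup\hat S_q$. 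By construction $Q_c(\hat S)=\frac{1}{q}\sum_i Q_c(\hat S_i)$ for every $c\in C$, and since each block has size $m$, also $Q_c(S)=\frac{1}{q}\sum_i Q_c(S_i)$. Privacy is then immediate: neighboring inputs $S,S'$ differ in a single entry lying in a unique block $S_{i^*}$, so only the single invocation $A(S_{i^*})$ is affected, and it is $\epsilon$-differentially private; the replication and concatenation are post-processing.

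For utility I would use a Chernoff amplification on the per-block failure event. Let $B_i$ be the indicator that $\hat S_i$ is not $\alpha$-close to $S_i$. By the sanitizer guarantee $\Pr[B_i=1]\leq\beta$, and the $B_i$'s are mutually independent since $A$ is invoked with fresh randomness on disjoint pieces. Applying the Chernoff bound from the preliminaries (with deviation $\delta=1$) to $B=\sum_{i=1}^q B_i$, whose mean is at most $q\beta$, yields $\Pr[B>2q\beta]\leq \exp(-q\beta/3)$, which is at most $\beta$ by the hypothesis $q\geq (18/\beta)\ln(1/\beta)\geq (3/\beta)\ln(1/\beta)$. On the complementary good event, for every $c\in C$,
\[
|Q_c(\hat S)-Q_c(S)|
\;\leq\; \frac{1}{q}\sum_{i=1}^q |Q_c(\hat S_i)-Q_c(S_i)|
\;\leq\; \frac{(q-B)\alpha + B\cdot 1}{q}
\;\leq\; \alpha + 2\beta \;\leq\; 2\alpha + 2\beta,
\]
using the trivial bound $|Q_c(\hat S_i)-Q_c(S_i)|\leq 1$ on the $B$ ``bad'' blocks and the sanitizer's $\alpha$-closeness on the good ones.

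The main subtlety I anticipate is arranging the output to be an actual database rather than an arbitrary real-valued function, so that $A'$ qualifies as a proper sanitizer per the paper's definition; averaging the functions $Q_c(\hat S_i)$ directly would work for an improper sanitizer but would lose an extra factor $2$ when passing through Remark~\ref{improperProperSanitization}, which would not fit the claimed accuracy $2\alpha+2\beta$. The replication-and-concatenate step above avoids this at no accuracy cost, since replication exactly preserves all query frequencies while rendering the block sizes compatible. Beyond that, the argument is a routine independence-plus-Chernoff amplification, and the slack between $q\beta/3$ and the hypothesized $q\geq 18\ln(1/\beta)/\beta$ leaves ample room for standard constant factors.
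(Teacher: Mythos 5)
Your proposal is correct and follows essentially the same route as the paper: partition the input into $q$ blocks of size $m$, run the sanitizer independently on each, concatenate the outputs, and use a Chernoff bound to argue that at most an $O(\beta)$ fraction of blocks fail, each contributing error at most $1$. The only difference is how the unequal output sizes are reconciled -- the paper first converts $A$ into a fixed-output-size sanitizer via Theorem~\ref{thm:BlumShrink} (degrading $\alpha$ to $2\alpha$), whereas you replicate each $\hat S_i$ up to a common multiple of the sizes, which preserves all query values exactly and in fact yields the slightly stronger bound $\alpha+2\beta$.
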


\begin{proof}
Fix $q\in\N$ and let $A$ be an $(\alpha,\beta,\epsilon,m)$-sanitizer for a class $C$ over a domain $X$.
Note that by Theorem~\ref{thm:BlumShrink}, there exists a $(2\alpha,\frac{3}{2}\beta,\epsilon,m)$-sanitizer $A'$ s.t. the sanitized databases returned by $A'$ are always of fixed sized $n=O(\frac{\VC(C)}{\alpha^2}\log(\frac{1}{\alpha\beta}))$. We now construct a $((2\alpha+2\beta),\beta,\epsilon,qm)$-sanitizer $B$ as follows.
$$\boxed{
\begin{array}{l}
\text{Inputs: a database } S=(z_1,z_2,\ldots,z_{qm})\in (X)^{qm}\\
{\begin{array}{ll}
1. & \text{Partition S into } S_1=(z_i)_{i=1}^m,\; S_2=(z_i)_{i=m+1}^{2m},\; \ldots, S_q=(z_i)_{i=qm-m+1}^{qm} .\\
2. & \text{For every }  1\leq i\leq q, \; \hat{S_i} \leftarrow A'(S_i).\\
3. & \text{Output }  \hat{S}=\langle \hat{S_1},\hat{S_2},...,\hat{S}_q\rangle.\\
\end{array}}\\
\end{array}}$$
As $A'$ is $\epsilon$-differentially private, so is $B$.
Denote $\hat{S}=(\hat{z}_1,\hat{z}_1,\ldots,\hat{z}_{qn})\in (X)^{qn}$.
Recall that $q\geq\frac{18}{\beta}\ln(1/\beta)$, and, hence, using the Chernoff bound, with probability at least $(1-\beta)$ it holds that at least $(1-2\beta)q$ of the $\hat{S_i}$'s are $2\alpha$-good for their matching $S_i$'s. In such a case $\hat{S}$ is $(2\alpha+2\beta)$-good for $S$: for every $f \in C$ it holds that
\begin{eqnarray*}
Q_f(S) &=& \frac{1}{qm}\left|\left\{ i \; : \;  \begin{array}{c}1\leq i \leq qm\\f(z_i)=1\end{array} \right\}\right|\\
&=&\frac{1}{qm}\left|\left\{ i \; : \;  \begin{array}{c}1\leq i \leq m\\f(z_i)=1\end{array} \right\}\right|
 + \cdots + \frac{1}{qm}\left|\left\{ i \; : \;  \begin{array}{c}qm-m+1\leq i \leq qm\\f(z_i)=1\end{array} \right\}\right|\\
&=& \frac{1}{q}\left[ Q_f(S_1) + \cdots + Q_f(S_q) \right].
\end{eqnarray*}

As at least $(1-2\beta)q$ of the $\hat{S_i}$'s are $2\alpha$-good for their matching $S_i$'s, and as trivially $Q_f(S_i)\leq 1$ for each database $\hat{S_i}$ that is not $2\alpha$-good,

\begin{eqnarray*}
Q_f(S) &\leq& \frac{1}{q}\left[Q_f(\hat{S}_1)+\ldots + Q_f(\hat{S}_q) + (1-2\beta)q2\alpha + 2\beta q\right]\\
&\leq& \frac{1}{q}\left[Q_f(\hat{S}_1) + \ldots + Q_f(\hat{S}_{t/m})\right] + (2\alpha+2\beta)\\
&=& \frac{1}{q}\left[\;\frac{1}{n}
\left|\left\{ i \; : \;  \begin{array}{c}1\leq i \leq n\\f(\hat{z}_i)=1\end{array} \right\}\right|
 + \ldots + \frac{1}{n}
\left|\left\{ i \; : \;  \begin{array}{c} qn-n+1 \leq i \leq qn\\f(\hat{z}_i)=1\end{array} \right\}\right|
\;\right] + (2\alpha+2\beta)\\
&=& \frac{1}{q n} 
\left|\left\{ i \; : \;  \begin{array}{c} 1 \leq i \leq qn\\f(\hat{z}_i)=1\end{array} \right\}\right|
 + (2\alpha+2\beta)\\
&=& Q_f(\hat{S})+(2\alpha+2\beta).
\end{eqnarray*}
Similar arguments show that $Q_f(S)\geq Q_f(\hat{S})-(2\alpha+2\beta)$.
Algorithm $B$ is, therefore, a $((2\alpha+2\beta),\beta,\epsilon,qm)$-sanitizer for $C$, as required.
\end{proof}

As mentioned above, our first step in showing that sanitization for a class $C$ implies private learning for $C$ is to show that privately learning $C$ is implied by sanitization for the slightly modified class $C^{\rm label}$, defined as follows.
For a given predicate $c$ over $X_d$, we define the predicate $c^{\rm label}$ over $X_{d+1}$ as
$$ c^{\rm label}(\vec{x}\circ y)=\begin{cases}
    1, & c(\vec{x})\neq y.\\
    0, & c(\vec{x})=y.
\end{cases}$$
Note that $c^{\rm label}(\vec{x}\circ\sigma) = \sigma \oplus c(\vec{x})$ for $\sigma\in\{0,1\}$. For a given class of predicates $C$ over $X_d$, we define $C^{\rm label}=\{c^{\rm label} \; : \; c\in C  \}$. 
\begin{claim}
$\VC(C)\leq \VC(C^{\rm label})\leq2\cdot \VC(C)$. 
\end{claim}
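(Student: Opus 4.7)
The plan is to prove both inequalities directly from the definitions, observing that the label-flipping predicate $c^{\rm label}(\vec{x}\circ y) = y \oplus c(\vec{x})$ behaves like $c(\vec{x})$ on the ``$y=0$'' slice of $X_{d+1}$ and like $\bar{c}(\vec{x})=1-c(\vec{x})$ on the ``$y=1$'' slice.

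For the lower bound $\VC(C)\le \VC(C^{\rm label})$: let $B=\{\vec{x}_1,\ldots,\vec{x}_\ell\}\subseteq X_d$ be a set shattered by $C$. I would lift it to $B'=\{\vec{x}_1\circ 0,\ldots,\vec{x}_\ell\circ 0\}\subseteq X_{d+1}$ and check that $B'$ is shattered by $C^{\rm label}$: for any target dichotomy $(\sigma_1,\ldots,\sigma_\ell)$, pick $c\in C$ realizing this dichotomy on $B$, and note $c^{\rm label}(\vec{x}_i\circ 0)=0\oplus c(\vec{x}_i)=\sigma_i$. Hence $\VC(C^{\rm label})\ge\ell$.

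For the upper bound $\VC(C^{\rm label})\le 2\VC(C)$: let $B'=\{\vec{x}_1\circ y_1,\ldots,\vec{x}_k\circ y_k\}\subseteq X_{d+1}$ be shattered by $C^{\rm label}$, and partition it by the last coordinate into $B'_0=\{\vec{x}_i : y_i=0\}$ and $B'_1=\{\vec{x}_i : y_i=1\}$, of sizes $k_0$ and $k_1$ with $k_0+k_1=k$. Because the points of $B'$ are distinct, the sets $B'_0,B'_1\subseteq X_d$ each consist of distinct points (even though they may overlap as subsets of $X_d$, which is harmless). I would then argue that $B'_0$ is shattered by $C$: given any target dichotomy $\tau$ on $B'_0$, extend it arbitrarily to all of $B'$; shattering of $B'$ by $C^{\rm label}$ provides some $c\in C$ with $c^{\rm label}(\vec{x}_i\circ 0)=\tau_i$ for $\vec{x}_i\in B'_0$, and since $c^{\rm label}(\vec{x}_i\circ 0)=c(\vec{x}_i)$, this gives $c(\vec{x}_i)=\tau_i$. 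Hence $k_0\le\VC(C)$. By the symmetric argument on $B'_1$ (using that $c^{\rm label}(\vec{x}_i\circ 1)=1-c(\vec{x}_i)$, so $B'_1$ is shattered by the complement class $\bar C=\{1-c:c\in C\}$, which has the same VC dimension as $C$), we get $k_1\le\VC(C)$, and summing yields $k\le 2\VC(C)$.

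There is no real obstacle here; the only subtlety is checking that the partition $B'=B'_0\sqcup B'_1$ gives two well-defined subsets of $X_d$ of distinct points, which follows from the distinctness of the original shattered set $B'\subseteq X_{d+1}$. The factor of $2$ in the upper bound is tight in general, as illustrated by considering classes for which $C$ and $\bar C$ shatter disjoint sets of the same size.
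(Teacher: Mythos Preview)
Your proof is correct and follows essentially the same approach as the paper: the lower bound is identical (lift a shattered set via appending $0$), and for the upper bound both you and the paper partition a shattered set in $X_{d+1}$ by the last bit and observe that each half projects to a set shattered by $C$. The only cosmetic difference is that the paper applies pigeonhole to the larger half to get $\VC(C)\ge |S|/2$, whereas you bound each half by $\VC(C)$ and sum; these are equivalent one-line variations of the same argument.
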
 
\begin{proof}
For the first inequality notice that if a set $S \subseteq X_d$ is shuttered by $C$ then the set $S\circ 0$ is shuttered by $C^{\rm label}$. For the second inequality, assume $S\subseteq X_{d+1}$ is shattered by $C^{\rm label}$.
Consider the partition of $S$ to $S_0$ and $S_1$, where $S_\sigma=\{\vec{x}\circ y\in S \; : \; y=\sigma\}$. For at least one $\sigma\in\{0,1\}$,
it holds that $|S_\sigma|\geq\frac{|S|}{2}$.  Hence, the set $\hat{S}=\{\vec{x} \; : \; \vec{x}\cdot \sigma\in S_\sigma\}$ is shattered by $C$ and $\VC(C^{\rm label})\leq 2\cdot |\hat{S}| \leq 2\cdot \VC(C)$.
\end{proof}

The next lemma shows that for every concept class $C$, a sanitizer for $C^{\rm label}$ implies a private learner for $C$. In the next lemma, this connection is made under the assumption that the given sanitizer operates on large enough databases. This assumption will be removed in the lemma that follows.

\begin{lemma}\label{lem:sanPac1}
Let $C$ be a class of predicates over $X_d$.
If there exists an $(\alpha,\beta,\epsilon,m)$-sanitizer $A$ for $C^{\rm label}$,
where $m\geq\frac{50 \VC(C)}{\gamma^2}\ln(\frac{1}{\gamma\beta})$ for some $\gamma>0$,
then there exists a proper $((2\alpha+\gamma),2\beta,\epsilon,m)$-PPAC learner for $C$.
\end{lemma}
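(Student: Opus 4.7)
The plan is to reduce private learning of $C$ to sanitization of $C^{\rm label}$ via the following learner. On input $S = (x_i, y_i)_{i=1}^m$, view $S$ as a database over $X_{d+1}$ (with entries $x_i \circ y_i$), invoke the sanitizer $A$ for $C^{\rm label}$ to obtain a function $\est : C^{\rm label} \to [0,1]$, and output the hypothesis $h \in C$ minimizing $\est(h^{\rm label})$. The driving identity is that for every $h \in C$,
\[
Q_{h^{\rm label}}(S) \;=\; \tfrac{1}{m}\,|\{i : h^{\rm label}(x_i \circ y_i) = 1\}| \;=\; \tfrac{1}{m}\,|\{i : h(x_i) \neq y_i\}| \;=\; \error_S(h),
\]
so an $\alpha$-accurate sanitization of $S$ with respect to $C^{\rm label}$ is exactly an $\alpha$-approximation of the empirical error of every hypothesis in $C$.

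Privacy is immediate: the learner touches its input only through $A$; two neighboring labeled samples yield neighboring databases over $X_{d+1}$, and the final ERM-style selection over the (data-independent) output $\est$ is post-processing. Hence the learner inherits $(\epsilon,0)$-differential privacy from $A$.

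For utility, fix a target $c \in C$ and distribution $\DDD$, and consider two good events, each of probability at least $1-\beta$. Event (a): $A$ succeeds, so $|\est(h^{\rm label}) - Q_{h^{\rm label}}(S)| \leq \alpha$ for every $h \in C$. Event (b): the sample is $\gamma$-representative, i.e. $|\error_S(h) - \error_{\DDD}(h,c)| \leq \gamma$ for every $h \in C$; by Theorem~\ref{thm:generalization} applied with $H = C$, $f = c$, and sample size $m \geq \tfrac{50\VC(C)}{\gamma^2}\ln(\tfrac{1}{\gamma\beta})$, this holds w.p.\ $\geq 1-\beta$. When both events hold, since $c$ labels $S$ we have $Q_{c^{\rm label}}(S) = \error_S(c) = 0$, so $\est(c^{\rm label}) \leq \alpha$. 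The chosen $h$ minimizes $\est$, so $\est(h^{\rm label}) \leq \alpha$, hence $\error_S(h) = Q_{h^{\rm label}}(S) \leq 2\alpha$ by event (a), and finally $\error_{\DDD}(h,c) \leq 2\alpha + \gamma$ by event (b). A union bound gives success probability at least $1 - 2\beta$.

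There is no real obstacle; the only subtlety is recognizing that (i) a labeled sample should be encoded as an unlabeled database over $X_{d+1}$ so that sanitization queries over $C^{\rm label}$ capture empirical error, and (ii) the agnostic VC generalization bound (rather than the realizable one) is the right tool, since the hypothesis $h$ returned by the sanitizer-plus-ERM procedure need not be consistent with $S$ — its empirical error is only bounded by $2\alpha$, not $0$.
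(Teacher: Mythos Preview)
Your proposal is correct and follows essentially the same approach as the paper: the paper's proof defines the learner by running the sanitizer $A$ on $S$ to obtain $\hat S$ and outputting $h\in C$ minimizing $\error_{\hat S}(h)$, then argues via the identity $Q_{h^{\rm label}}(S)=\error_S(h)$ together with the agnostic VC bound (Theorem~\ref{thm:generalization}), exactly as you do. The only cosmetic difference is that the paper writes the sanitizer output as a database $\hat S$ (so its event $E_1$ reads $|\error_S(h)-\error_{\hat S}(h)|\le\alpha$) whereas you phrase it via the function $\est$; the two are equivalent and the chain $\error_S(c_t)=0\Rightarrow \est(h^{\rm label})\le\alpha\Rightarrow\error_S(h)\le 2\alpha\Rightarrow\error_\DDD(h,c_t)\le 2\alpha+\gamma$ is identical in both.
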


\begin{proof}
Let $A$ be an $(\alpha,\beta,\epsilon,m)$-sanitizer, and consider the following algorithm $Learn$:
$$\boxed{
\begin{array}{l}
\text{Inputs: a database } S=(x_i,y_i)_{i=1}^m\\
{\begin{array}{ll}
1. & \hat{S} \leftarrow A(S).\\
2. & \text{Output } c\in C \text{ minimizing } \error_{\hat{S}}(c).\\
\end{array}}\\
\end{array}}$$
As $A$ is $\epsilon$-differentially private, so is $Learn$.
For the utility analysis, fix some target concept $c_t\in C$ and a distribution $\DDD$ over $X_d$, and define the following two good events:
\begin{enumerate}[label=$E_{\arabic*}:$]
\item $\forall h\in C, \;\; \big| \error_S(h)-\error_{\hat{S}}(h) \big|\leq\alpha$.
\item $\forall h\in C,\;\; |\error_\DDD(h,c_t)-\error_S(h)|\leq\gamma$.
\end{enumerate}

We first show that if these 2 good events happen, algorithm $Learn$ returns a $(2\alpha+\gamma)$-good hypothesis.
As the target concept satisfies $\error_S(c_t)=0$, event $E_1$ ensures the existence of a concept $f\in C$ s.t. $\error_{\hat{S}}(f)\leq\alpha$. Thus, algorithm $Learn$ chooses a hypothesis $h\in C$ s.t. $\error_{\hat{S}}(h)\leq\alpha$.
Using event $E_1$ again, this $h$ obeys $\error_{S}(h)\leq2\alpha$.
Therefore, event $E_2$ ensures that $h$ satisfies $\error_\DDD(h,c_t)\leq2\alpha+\gamma$.

We will now show that these 2 events happen with high probability.
By the definition of $C^{\rm label}$, for every $c^{\rm label}\in C^{\rm label}$ we have that 
$$Q_{c^{\rm label}}(S)=
\frac{1}{|S|}\left|\left\{ i \; : \; c^{\rm label}(x_i \circ y_i)=1 \right\}\right| =
\frac{1}{|S|}\left|\left\{ i \; : \; c(x_i) \neq y_i \right\}\right| =
\error_S(c).$$
Therefore, as $A$ is an $(\alpha,\beta,\epsilon,m)$-sanitizer for $C^{\rm label}$, event $E_1$ happens with probability at least $(1-\beta)$.
As $m\geq\frac{50 \VC(C)}{\gamma^2}\ln(\frac{1}{\gamma\beta})$, Theorem~\ref{thm:generalization} ensures that event $E_2$ happens with probability at least $(1-\beta)$ as well. 
All in all, $Learn$ is a proper $((2\alpha+\gamma),2\beta,\epsilon,m)$-PPAC learner for $C$.
\end{proof}

The above lemma describes a reduction from the task of privately learning a concept class $C$ to the sanitization task of the slightly different concept class $C^{\rm label}$. 
We next show that given a sanitizer for a class $C$, it is possible to construct a sanitizer for $C^{\rm label}$.
Along the way we will also slightly increase the sample complexity of the starting sanitizer, in order to be able to use Lemma~\ref{lem:sanPac1}.
This results in a reduction from the task of privately learning a concept class $C$ to the sanitization task of the same concept class $C$.

\begin{lemma}\label{lemma:noLabel}
If there exists an $(\alpha,\beta,\epsilon,m)$-sanitizer for a class $C$,
then there exists a $((5\alpha+4\beta),5\beta,6\epsilon,t)$-sanitizer for $C^{\rm label}$, where
$$
\frac{100 m}{\alpha^2}\ln(\frac{1}{\alpha\beta}) \leq t \leq 
\frac{150}{\alpha^2\beta}\ln(\frac{2}{\alpha\beta})\left(m+\frac{1}{\epsilon}\right).
$$
\end{lemma}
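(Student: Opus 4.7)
The plan is to exploit the identity
\[
Q_{c^{\rm label}}(S) \;=\; Q_c(S_X) \;+\; \tfrac{t_1}{t} \;-\; 2\tfrac{t_1}{t}\,Q_c(S_1),
\]
valid for every $c\in C$, where $t=|S|$, $S_X=(x_1,\ldots,x_t)$, $S_1=\{x_i:y_i=1\}$, and $t_1=|S_1|$. Approximating $Q_{c^{\rm label}}(S)$ therefore reduces to three sub-tasks: estimating $t_1/t$ (via the Laplace mechanism), and privately sanitizing $S_X$ and $S_1$ for the class $C$. Fix $q=\lceil\max\{100/\alpha^2\log(1/\alpha\beta),\,18/\beta\log(1/\beta)\}\rceil$, chosen both to satisfy the hypothesis of Lemma~\ref{lemma:saninitationSample} and to ensure $qm\ge 50\,\VC(C)/\alpha^2\log(1/\alpha\beta)$ so that VC generalization is available; note $\VC(C)\le 2m$ is forced by Theorem~\ref{thm:BlumLow} whenever $A$ has non-trivial accuracy. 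Apply Lemma~\ref{lemma:saninitationSample} to $A$ to obtain a $(2\alpha+2\beta,\beta,\epsilon,qm)$-sanitizer $A'$ for $C$.

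The construction of $B$ on input $S$ of size $t=qm+O(\tfrac{1}{\alpha\epsilon}\log(1/\beta))$ (which sits inside the stated interval) is as follows: (i) apply a uniform public random permutation to $S$ (free of privacy cost, and crucial below); (ii) release $\hat{t_1}=t_1+\Lap(1/\epsilon)$; (iii) apply $A'$ to the first $qm$ entries of $S_X$, yielding $\hat{Q}_c(S_X)$ for every $c\in C$; (iv) if $\hat{t_1}$ exceeds a fixed threshold chosen so that $t_1\ge qm$ with high probability whenever triggered, apply $A'$ to the first $qm$ entries (in the permuted order) of $S_1$, yielding $\hat{Q}_c(S_1)$; otherwise set $\hat{Q}_c(S_1):=0$; (v) output $\hat{Q}_{c^{\rm label}}(S)=\hat{Q}_c(S_X)+\hat{t_1}/t-2(\hat{t_1}/t)\,\hat{Q}_c(S_1)$ for every $c$. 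Privacy follows by adaptive composition (Theorem~\ref{thm:composition3}): $\epsilon$ from the Laplace query; $\epsilon$ from the first $A'$ call, since neighboring $S,S'$ give neighboring prefixes of $S_X$; and $2\epsilon$ from the conditional second $A'$ call, since neighboring $S,S'$ yield $S_1$-prefixes that differ in up to two entries (a label flip both removes one element and effectively inserts another), invoking a group-privacy factor of two. The total is $4\epsilon\le 6\epsilon$, and the threshold decision in (iv) is post-processing of $\hat{t_1}$ and carries no additional privacy cost. Utility follows from a union bound over five good events — Laplace concentration, two $A'$ successes, and two VC-generalization events (Theorem~\ref{thm:generalization}) for the random permuted prefixes — yielding $|\hat{Q}_c(S_X)-Q_c(S_X)|\le 3\alpha+2\beta$, the same bound for $S_1$ when the cutoff in (iv) is taken, $|\hat{t_1}/t-t_1/t|\le O(\alpha)$, and, when the cutoff is not taken, $t_1/t\le O(\alpha)$ so that zeroing $\hat{Q}_c(S_1)$ contributes at most $O(\alpha)$ to the identity. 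Substituting into the identity via the standard $|\hat a\hat b-ab|\le\hat a|\hat b-b|+b|\hat a-a|$ estimate, and possibly rescaling the internal $\alpha,\beta$ by a constant, delivers the overall $5\alpha+4\beta$ accuracy and $5\beta$ confidence.

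The central obstacle is the data-dependent size of $S_1$: the fixed-input-length sanitizer $A'$ cannot be directly applied because $t_1$ ranges over all of $[0,t]$. Three ingredients resolve this. The uniform public permutation turns ``the first $qm$ entries of $S_1$'' into a uniform random $qm$-subsample of $S_1$, so Theorem~\ref{thm:generalization} controls $|Q_c(\text{prefix})-Q_c(S_1)|$ uniformly in $c$. The Laplace count $\hat{t_1}$ provides a cheap, private threshold that separates the ``$S_1$ is large enough to sanitize'' regime from the ``$S_1$ is negligible'' regime. And in the ``negligible'' regime, the contribution $2(t_1/t)\,Q_c(S_1)$ to the identity is already $O(\alpha)$, so defaulting $\hat{Q}_c(S_1)$ to zero is harmless. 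The $O(\tfrac{1}{\alpha\epsilon}\log(1/\beta))$ additive slack in $t$ absorbs the Laplace noise so the threshold decision is reliable, and the stated upper bound on $t$ is loose enough to contain $qm$ together with this slack.
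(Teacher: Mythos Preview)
Your decomposition $Q_{c^{\rm label}}(S)=Q_c(S_X)+t_1/t-2(t_1/t)\,Q_c(S_1)$ is correct and gives a route genuinely different from the paper's. The paper instead partitions symmetrically into $S_0,S_1$ (the label-$0$ and label-$1$ sub-databases), adds Laplace noise to both sizes, rounds each to the nearest multiple of a carefully chosen modulus $M$, pads or truncates to that size, and applies the (amplified) sanitizer to each half. The crucial advantage of the symmetric split is that $Q_{c^{\rm label}}(D)\approx\frac{\hat m_0}{t}Q_c(\hat S_0)+\frac{\hat m_1}{t}(1-Q_c(\hat S_1))$, so the two sanitizer errors enter with weights summing to $\approx 1$; this is exactly how the paper lands on $5\alpha+4\beta$. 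In your identity $Q_c(S_1)$ carries weight $2t_1/t$, which can reach $2$, so the contributions combine to roughly $9\alpha+6\beta$ in the triggered regime---and rescaling the internal $\alpha,\beta$ by the needed constant inflates $q$ beyond what the stated upper bound $t\le\frac{150}{\alpha^2\beta}\ln(\frac{2}{\alpha\beta})(m+\frac{1}{\epsilon})$ allows.

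There is also a real gap in the privacy argument. You claim that neighboring $S,S'$ yield $S_1$-prefixes that ``differ in up to two entries,'' but in this paper databases are ordered tuples and neighboring means differing in one position. When a label flip inserts or deletes an element of $S_1$, every later entry in the length-$qm$ prefix shifts by one position, so as tuples the two prefixes can differ in $\Omega(qm)$ coordinates; your public permutation of $S$ fixes one ordering and does not prevent this shift. The fix (which the paper states explicitly in its own proof) is to assume the sanitizer is permutation-invariant on its input---equivalently, prepend an internal random shuffle to $A'$---after which the prefixes differ by at most one substitution as multisets and a single factor of $\epsilon$ suffices. You also have not specified what the algorithm does when the threshold is triggered but $|S_1|<qm$; since privacy must hold on every input, not merely with high probability, you need padding (or an analogous device) here as well.
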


\begin{proof}
Let $A'$ be an $(\alpha,\beta,\epsilon,m)$-sanitizer for a class $C$.
By replacing $\alpha$ with $2\alpha$, and $\beta$ with $2\beta$, we can assume that the sanitized databases returned by $A'$ are always of fixed size $n=O(\frac{\VC(C)}{\alpha^2}\log(\frac{1}{\alpha\beta}))$ (see Theorem~\ref{thm:BlumShrink}).
Moreover, we can assume that $A'$ treats its input database as a multiset (as otherwise we could alter $A$ to first randomly shuffle its input database).
Denote $M= m\left\lceil  \frac{18}{\beta}\ln(\frac{2}{\alpha\beta}) \cdot \left( 1 + \frac{1}{m\epsilon} \right)   \right\rceil$.
By Lemma~\ref{lemma:saninitationSample} for every $qM$ (where $q\in\N$) there exists a $((4\alpha+4\beta),2\beta,\epsilon,qM)$ sanitizer $A$ for $C$ (as $qM=q'm$ for an integer $q'$).
Denote $t=\left\lceil \frac{6}{\alpha^2} \right\rceil M$, and consider algorithm $B$ presented in Figure~\ref{fig:algB}

\begin{figure}
\begin{center}
\noindent\fbox{
\parbox{.95\columnwidth}{
{\bf Input:} database $D=(x_i,y_i)_{i=1}^t \in (X_{d+1})^t$.
\begin{enumerate}[rightmargin=10pt]
\item Divide $(x_i)_{i=1}^t$ into $S_0,S_1\in(X_d)^*$, where $S_\sigma$ contains all $x_i$ s.t. $y_i=\sigma$.

\item Sample $\ell_0,\ell_1\leftarrow \left\lfloor \Lap(1/\epsilon) \right\rfloor$; that is, $\ell_0$ and $\ell_1$ are two independent rounded instantiations of a laplacian random variable.

\item Set $m_0= \max\{ 0\;,\;|S_0|+\ell_0\}$ and $m_1=\max\{0\;,\;|S_1|+\ell_1\}$.

\item Set $\hat{m_0}= \left\lfloor \frac{m_0}{M}+\frac{1}{2} \right\rfloor M$, and $\hat{m_1}=\left\lfloor \frac{m_1}{M} +\frac{1}{2} \right\rfloor M$.

\item For $\sigma\in\{0,1\}$, either add copies of the entry $0^d$ to $S_\sigma$, or remove the last entries from $S_\sigma$ until $|S_\sigma|=\hat{m}_{\sigma}$. Denote the resulting multiset as $\hat{S_\sigma}$.

\item Compute $\widetilde{S_0}\leftarrow A(\hat{S_0})$ and $\widetilde{S_1}\leftarrow A(\hat{S_1})$, where if $\hat{S_\sigma}=\emptyset$, then set $\widetilde{S_\sigma}=\emptyset$.

\item Output $\hat{m_0},\hat{m_1},\widetilde{S_0},\widetilde{S_1}$.

\item Construct and output a database $\widetilde{D}\in(X_{d+1})^*$ containing $\hat{m_0}$ copies of $\widetilde{S_0}\circ0$, and $\hat{m_1}$ copies of $\widetilde{S_1}\circ1$. 

\end{enumerate}
}}
\end{center}
\caption{Algorithm $B$ \label{fig:algB}}
\end{figure}

Note that the output on Step~8 is just a post-processing of the 4 outputs on Step~7.
We first show that each of those 4 outputs preserves differential privacy, and, hence, $B$ is private (with slightly bigger privacy parameter, see Theorem~\ref{thm:composition1}).

By the properties of the laplacian mechanism, $\hat{m_0}$ and $\hat{m_1}$ each preserves $\epsilon$-differential privacy. The analysis for $\widetilde{S_0}$ and $\widetilde{S_1}$ is symmetric, and we next give the analysis for $\widetilde{S_0}$. Denote by $B_0$ an algorithm identical to the first 7 steps of $B$, except that the only output of $B_0$ on Step~7 is $\widetilde{S_0}$. We now show that $B_0$ is private. 

\paragraph{Notations.} We use $S_0[D]$ and $\hat{S_0}[D]$ to denote the databases $S_0,\hat{S_0}$ defined on Steps~1 and~5 in the execution of $B_0$ on $D$. Moreover, we use
$m_0[D]$ to denote the value of $|S_0|+\ell_0$ in the execution on $D$, and for every value $m_0[D]=L$, we use $\hat{S_0}[D,L]$ to denote the database $\hat{S_0}$ defined on Step~5, given that $m_0[D]=L$.\\

Fix two neighboring databases $D,D'$, and let $F$ be a set of possible outputs. 
Note that as $D,D'$ are neighboring, it holds that $S_0[D]$ and $S_0[D']$ are identical up to an addition or a change of one entry.
Therefore, whenever $m_0[D]=m_0[D']=L$, we have that $S_0[D,L]$ and $S_0[D',L]$ are neighboring databases.
Moreover, by the properties of the laplacian mechanism, for every value $L$ it holds that $\Pr[m_0[D]=L]\leq e^\epsilon \Pr[m_0[D']=L]$. Hence,

\begin{eqnarray*}
\Pr[B_0(D)\in F] &=& \sum_{L=-\infty}^{\infty}\Pr[m_0[D]=L]\cdot\Pr[B_0(D)\in F | m_0[D]=L]\\
&=&\sum_{L=-\infty}^{\infty}\Pr[m_0[D]=L]\cdot\Pr[A(\hat{S_0}[D,L])\in F]\\
&\leq& \sum_{L=-\infty}^{\infty}e^\epsilon\cdot\Pr[m_0[D']=L]\cdot e^\epsilon\cdot\Pr[A(\hat{S_0}[D',L])\in F]\\
&=& e^{2\epsilon}\cdot\sum_{L=-\infty}^{\infty}\Pr[m_0[D']=L]\cdot\Pr[B_0(D')\in F | m_0[D']=L]\\
&=& e^{2\epsilon}\cdot\Pr[B_0(D')\in F].
\end{eqnarray*}

Overall (since we use two $\epsilon$-private algorithms and two $(2\epsilon)$-private algorithms), algorithm $B$ is $(6\epsilon)$-differentially private.
As for the utility analysis, fix a database $D=(x_i,y_i)_{i=1}^t$ and consider the execution of $B$ on $D$.
We now show that w.h.p. the sanitized database $\widetilde{D}$ is $(5\alpha+4\beta)$-close to $D$.

First note that by the properties of the laplacian mechanism, for $M\geq\frac{2}{\epsilon}\ln(2/\beta)$ we have that with probability at least $(1-\beta)$ it holds that $|\ell_0|,|\ell_1|\leq\frac{M}{2}$. We proceed with the analysis assuming that this is the case. Moreover, note that after the rounding (on Step~4) we have that $|m_\sigma-\hat{m_\sigma}|\leq\frac{M}{2}$. Therefore, for every $\sigma \in \{0,1\}$
$$|S_\sigma|-M\leq|\hat{S_\sigma}|\leq|S_\sigma|+M.$$
Fix a concept $c^{\rm label}\in C^{\rm label}$. It holds that
\begin{eqnarray*}
Q_{c^{\rm label}}(D) &=& \frac{1}{t} |\{ i \; : \; c^{\rm label}(x_i,y_i)=1 \}| \\
&=& \frac{1}{t} \left[\;
\left|\left\{ i \; : \;  \begin{array}{c} y_i=0 \\ c^{\rm label}(x_i,y_i)=1 \end{array} \right\}\right|
+
\left|\left\{ i \; : \;  \begin{array}{c} y_i=1 \\ c^{\rm label}(x_i,y_i)=1 \end{array} \right\}\right|
\;\right]\\
&=& \frac{1}{t} \left[\;
\left|\left\{ i \; : \;  \begin{array}{c} y_i=0 \\ c(x_i)=1 \end{array} \right\}\right|
+
\left|\left\{ i \; : \;  \begin{array}{c} y_i=1 \\ c(x_i)=0 \end{array} \right\}\right|
\;\right]\\
&\leq& \frac{1}{t} \left[\;
\left|\left\{ i \; : \;  \begin{array}{c} x_i\in\hat{S_0} \\ c(x_i)=1 \end{array} \right\}\right| + M
+
\left|\left\{ i \; : \;  \begin{array}{c} x_i\in\hat{S_1} \\ c(x_i)=0 \end{array} \right\}\right| + M
\;\right]\\
&=& \frac{1}{t} \left[\;
\hat{m_0}\cdot Q_c(\hat{S_0}) + \hat{m_1}\left(1 - Q_c(\hat{S_1}) \right) \;\right] + \frac{2M}{t}.
\end{eqnarray*}

By the properties of algorithm $A$, with probability at least $(1-4\beta)$ we have that $\widetilde{S_0}$ and $\widetilde{S_1}$ are $(4\alpha+4\beta)$-close to $\hat{S_0}$ and to $\hat{S_1}$ (respectively). We proceed with the analysis assuming that this is the case. Hence,
\begin{eqnarray*}
Q_{c^{\rm label}}(D)
&\leq& \frac{1}{t} \left[\;
\hat{m_0}\cdot Q_c(\widetilde{S_0}) + (4\alpha+4\beta)\hat{m_0} + \hat{m_1}\left(1 - Q_c(\widetilde{S_1}) \right) + (4\alpha+4\beta)\hat{m_1} \;\right] + \frac{2M}{t}\\
&=& \frac{1}{t} \left[\;
\hat{m_0}\cdot Q_c(\widetilde{S_0}) + \hat{m_1}\left(1 - Q_c(\widetilde{S_1}) \right) \;\right] + (4\alpha+4\beta)\frac{\hat{m_0}+\hat{m_1}}{t}
+\frac{2M}{t}\\
&\leq& \frac{1}{t} \left[\;
\hat{m_0}\cdot Q_c(\widetilde{S_0}) + \hat{m_1}\left(1 - Q_c(\widetilde{S_1}) \right) \;\right] + (4\alpha+4\beta)\frac{t+2M}{t}
+\frac{2M}{t}\\
&\leq& \frac{1}{t} \left[\;
\hat{m_0}\cdot Q_c(\widetilde{S_0}) + \hat{m_1}\left(1 - Q_c(\widetilde{S_1}) \right) \;\right] + (4\alpha+4\beta) + \frac{4M}{t}\\
\end{eqnarray*}

Note that as $c^{\rm label}(x_i\circ0)=c(x_i)$ and as $c^{\rm label}(x_i\circ1)=1-c(x_i)$, it holds that
\begin{eqnarray*}
Q_{c^{\rm label}}(\widetilde{S_0}\circ0)&=&Q_c(\widetilde{S_0});\\
Q_{c^{\rm label}}(\widetilde{S_1}\circ1)&=&1-Q_c(\widetilde{S_1}).
\end{eqnarray*}
Hence,
\begin{eqnarray*}
Q_{c^{\rm label}}(D)
&\leq& \frac{1}{t} \left[\;
\hat{m_0}\cdot Q_{c^{\rm label}}(\widetilde{S_0}\circ0) + \hat{m_1}\cdot Q_{c^{\rm label}}(\widetilde{S_1}\circ1) \;\right] + (4\alpha+4\beta) + \frac{4M}{t}.
\end{eqnarray*}
Denoting $\widetilde{D}=(z_i)_{i=1}^r\in (X_{d+1})^r$ (where $r=n(\hat{m_0}+\hat{m_1})$), we get
\begin{eqnarray*}
&& Q_{c^{\rm label}}(D)\\
&&\;\;\;\;\;\;\leq\;\;\; \frac{1}{nt} \left[\;
\hat{m_0}\cdot\left|\left\{ i \; : \;  \begin{array}{c} z_i\in\widetilde{S_0}\circ0 \\ c^{\rm label}(z_i)=1 \end{array} \right\}\right|
+
\hat{m_1}\cdot\left|\left\{ i \; : \;  \begin{array}{c} z_i\in\widetilde{S_1}\circ1 \\ c^{\rm label}(z_i)=1 \end{array} \right\}\right|
\;\right] + (4\alpha+4\beta) + \frac{4M}{t}\\
&&\;\;\;\;\;\;=\;\;\; \frac{1}{nt} 
\left|\left\{ i \; : \;  c^{\rm label}(z_i)=1  \right\}\right|
 + (4\alpha+4\beta) + \frac{4M}{t}\\
&&\;\;\;\;\;\;=\;\;\; \frac{\hat{m_0}+\hat{m_1}}{t}\cdot Q_{c^{\rm label}}(\widetilde{D})
 + (4\alpha+4\beta) + \frac{4M}{t}\\
&&\;\;\;\;\;\;\leq\;\;\; \frac{t+2M}{t}\cdot Q_{c^{\rm label}}(\widetilde{D})
 + (4\alpha+4\beta) + \frac{4M}{t}\\
&&\;\;\;\;\;\;\leq\;\;\; Q_{c^{\rm label}}(\widetilde{D}) + (4\alpha+4\beta) + \frac{6M}{t}\\
&&\;\;\;\;\;\;\leq\;\;\; Q_{c^{\rm label}}(\widetilde{D}) + (5\alpha+4\beta).
\end{eqnarray*}

Similar arguments show that $Q_{c^{\rm label}}(D)\geq Q_{c^{\rm label}}(\widetilde{D}) - (5\alpha+4\beta)$. Algorithm $B$ is, therefore, a $(5\alpha+4\beta),5\beta,6\epsilon,t)$-sanitizer for $C^{\rm label}$, where
$$
t= \left\lceil \frac{6}{\alpha^2} \right\rceil M = 
\left\lceil \frac{6}{\alpha^2} \right\rceil \cdot 
\left\lceil \frac{18}{\beta}\ln(\frac{2}{\alpha\beta})\left(1+\frac{1}{m\epsilon}\right) \right\rceil \cdot m = O_{\alpha,\beta,\epsilon}(m).
$$

\end{proof}

\begin{theorem}\label{thm:sanPac}
Let $\alpha,\epsilon\leq\frac{1}{8}$, and let $C$ be a class of predicates.
If there exists an $(\alpha,\beta,\epsilon,m)$-sanitizer $A$ for $C$,
then there exists a proper $((15\alpha+12\beta),10\beta,6\epsilon,t)$-PPAC learner for $C$,
where $t=O_{\alpha,\beta,\epsilon}(m)$.
\end{theorem}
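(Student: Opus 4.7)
The plan is to chain together the two main lemmas of this section, namely Lemma~\ref{lemma:noLabel} (sanitizer for $C$ implies sanitizer for $C^{\rm label}$) and Lemma~\ref{lem:sanPac1} (sanitizer for $C^{\rm label}$ implies proper PPAC learner for $C$). Essentially nothing new has to be proved; what is required is a careful bookkeeping of parameters so that the stated accuracy and confidence $(15\alpha+12\beta,10\beta)$ drop out.

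First I would feed the hypothesized $(\alpha,\beta,\epsilon,m)$-sanitizer for $C$ into Lemma~\ref{lemma:noLabel} to obtain a $((5\alpha+4\beta),5\beta,6\epsilon,t)$-sanitizer $A'$ for $C^{\rm label}$, valid for every $t$ in the range specified by that lemma; in particular one may choose $t=O_{\alpha,\beta,\epsilon}(m)$ large enough for the next step. Then I would invoke Lemma~\ref{lem:sanPac1} on $A'$ with the choice of generalization parameter $\gamma=5\alpha+4\beta$. This requires $t\geq \frac{50\,\VC(C)}{\gamma^2}\ln(1/(\gamma\beta))$, which can be arranged inside the $O_{\alpha,\beta,\epsilon}(m)$ window from Lemma~\ref{lemma:noLabel} (using $\VC(C)\leq m$, which is forced by the mere existence of a meaningful sanitizer for $C$ on $m$ entries together with Theorem~\ref{thm:BlumLow}; if $m$ happens to be smaller than the expression above one can first pad it using Lemma~\ref{lemma:saninitationSample}).

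The resulting learner is proper and has parameters obtained by plugging into Lemma~\ref{lem:sanPac1}: accuracy $2(5\alpha+4\beta)+\gamma=3(5\alpha+4\beta)=15\alpha+12\beta$, confidence $2\cdot 5\beta=10\beta$, privacy $6\epsilon$, and sample size $t=O_{\alpha,\beta,\epsilon}(m)$, which is exactly the claim.

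I do not expect any real obstacle here; the substantive work is already done in the two lemmas. The only mild subtlety is verifying that the $t$ produced by Lemma~\ref{lemma:noLabel} can simultaneously satisfy the lower-bound requirement in the hypothesis of Lemma~\ref{lem:sanPac1}; this reduces to a standard comparison between $m$ and $\VC(C)/\gamma^2$, which is harmless (at worst one enlarges $t$ by a factor polynomial in $1/\alpha,1/\beta,1/\epsilon$, keeping $t=O_{\alpha,\beta,\epsilon}(m)$). The assumption $\alpha,\epsilon\leq 1/8$ is used only to keep the reformulated parameters in the legitimate $(0,1)$ range.
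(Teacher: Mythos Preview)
Your proposal is correct and follows essentially the same route as the paper: apply Lemma~\ref{lemma:noLabel} and then Lemma~\ref{lem:sanPac1} with $\gamma=5\alpha+4\beta$, using Theorem~\ref{thm:BlumLow} to bound $\VC(C)$ by $O(m)$. One minor simplification: you do not need the padding via Lemma~\ref{lemma:saninitationSample}, because the lower bound on $t$ built into Lemma~\ref{lemma:noLabel} already gives $t\geq \frac{100m}{\alpha^2}\ln\frac{1}{\alpha\beta}\geq \frac{50\,\VC(C)}{\alpha^2}\ln\frac{1}{\alpha\beta}$, which dominates the hypothesis of Lemma~\ref{lem:sanPac1} (since $\gamma\geq\alpha$).
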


\begin{proof}
Let $A$ be an $(\alpha,\beta,\epsilon,m)$-sanitizer for $C$.
Note that by Theorem~\ref{thm:BlumLow}, it must be that $m\geq\frac{\VC(C)}{2}$.
By Lemma~\ref{lemma:noLabel}, there exists a $((5\alpha+4\beta),5\beta,6\epsilon,t)$-sanitizer for $C^{\rm label}$, where $t=O_{\alpha,\beta,\epsilon}(m)$ and $t\geq\frac{100 m}{\alpha^2}\ln(\frac{1}{\alpha\beta})\geq\frac{50 \VC(C)}{\alpha^2}\ln(\frac{1}{\alpha\beta})$.
By Lemma~\ref{lem:sanPac1}, there exists a proper $((15\alpha+12\beta),10\beta,6\epsilon,t)$-PPAC learner for $C$.
\end{proof}

\begin{remark}
Given an efficient proper-sanitizer for $C$ and assuming the existence of an efficient {\em non-private} learner for $C$, this reduction results in an efficient {\em private} learner for $C$.
\end{remark}

\subsection{A Lower Bound for $\kpoint_d$}
Next we prove a lower bound on the database size of every sanitizer for $\kpoint_d$ that preserves pure differential privacy.

Consider the following concept class over $X_d$.
For every $A \subseteq X_d$ s.t. $|A|=k$, the concept class $\kpoint_d$ contains the concept $c_A:X_d \rightarrow\{0,1\}$, defined as
$c_A(x)=1$ if $x\in A$ and $c_A(x)=0$ otherwise.
The VC dimension of $\kpoint_d$ is $k$ (assuming $2^d\geq2k$).

To prove a lower bound on the sample complexity of sanitization, we first prove a lower bound on the sample complexity of the related learning problem and then use the reduction (Theorem~\ref{thm:sanPac}). Thus, we start by showing that every private proper learner for $\kpoint_d$ requires $\Omega(\frac{k d}{\alpha\epsilon})$ labeled examples.
A similar version of this lemma appeared in Beimel et al.~\cite{BBKN12}, where it is shown that every private proper learner for $\point_d$ requires $\Omega(\frac{d}{\alpha\epsilon})$ labeled examples.

\begin{lemma}\label{lem:kPointBig}
Let $\alpha<\frac{1}{5}$, and let $k,d$ be s.t. $2^d\geq k^{1.1}$.
If $L$ is a proper $(\alpha,\frac{1}{2},\epsilon,m)$-PPAC learner for $\kpoint_d$, then $m=\Omega(\frac{kd}{\alpha\epsilon})$.
\end{lemma}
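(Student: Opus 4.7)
The plan is to adapt the packing / group-privacy argument that Beimel et al.~\cite{BBKN12} used to lower bound proper pure-private learners for $\point_d$, exploiting the much richer family of ``spread-out'' $k$-subsets of $X_d$ afforded by the assumption $2^d\geq k^{1.1}$. First, I would reserve a distinguished point $0\in X_d$ and, via a Gilbert--Varshamov construction on $X_d\setminus\{0\}$ (partition its $2^d-1$ points into $k$ blocks of size $q=\lfloor(2^d-1)/k\rfloor$, identify $k$-subsets picking one point per block with codewords of length $k$ over an alphabet of size $q$, and extract a code of minimum Hamming distance $k/2$), produce a family $\FFF\subseteq\binom{X_d\setminus\{0\}}{k}$ with $|A\cap A'|<k/2$ for all distinct $A,A'\in\FFF$ and $|\FFF|\geq (q/4)^{k/2}$. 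The hypothesis $2^d\geq k^{1.1}$ forces $q=\Omega(k^{0.1})$, hence $\log|\FFF|\geq (k/2)(d-\log k-2)=\Omega(kd)$.

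For each $A\in\FFF$, I would pair $A$ with a hard distribution $\DDD_A$ assigning mass $4\alpha/k$ to every point of $A$ and mass $1-4\alpha$ to $0$, together with the canonical labeled sample $S_A^\star\in(X_d\times\zo)^m$ containing $4\alpha m/k$ copies of $(a,1)$ for each $a\in A$ and $(1-4\alpha)m$ copies of $(0,0)$. A direct calculation gives $\error_{\DDD_A}(c_A,c_B)=(4\alpha/k)|A\setminus B|+(1-4\alpha)\mathds{1}[0\in B]$, so using $\alpha<1/5$ the set of $\alpha$-good hypotheses is exactly $\HHH_A=\{c_B\in\kpoint_d:0\notin B\text{ and }|A\cap B|\geq 3k/4\}$. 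If $c_B\in\HHH_A\cap\HHH_{A'}$ with $A\neq A'$ then inclusion--exclusion gives $|A\cap A'|\geq 3k/4+3k/4-k=k/2$, contradicting the packing; hence the sets $\{\HHH_A\}_{A\in\FFF}$ are pairwise disjoint.

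The rest is the standard packing recipe. A Chernoff union-bound over the $k+1$ element-types in the support of $\DDD_A$ shows that with probability at least $7/8$ a sample from $\DDD_A^m$ lies within Hamming distance $\tau=O(\sqrt{k\alpha m\log k})$ of $S_A^\star$; combining this with the $(\alpha,\tfrac{1}{2})$-PAC guarantee and averaging yields, for each $A\in\FFF$, an explicit ``typical'' database $S_A$ within Hamming distance $\tau$ of $S_A^\star$ with $\Pr[L(S_A)\in\HHH_A]\geq 1/8$. The triangle inequality together with $d_H(S_A^\star,S_{A'}^\star)=|A\setminus A'|\cdot(4\alpha m/k)\leq 4\alpha m$ gives $d_H(S_A,S_{A'})\leq 4\alpha m+2\tau=O(\alpha m)$ (the target bound $m=\Omega(kd/(\alpha\epsilon))$ comfortably dominates $\tau$, so this is self-consistent). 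Group privacy applied $d_H(S_A,S_{A'})$ times then produces $\Pr[L(S_{A_0})\in\HHH_A]\geq e^{-O(\alpha\epsilon m)}/8$ for any fixed $A_0\in\FFF$, and summing over the disjoint $\HHH_A$'s forces $1\geq(|\FFF|/8)\,e^{-O(\alpha\epsilon m)}$, i.e.\ $m=\Omega(\log|\FFF|/(\alpha\epsilon))=\Omega(kd/(\alpha\epsilon))$.

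The main obstacle I expect is calibrating the packing $\FFF$ so that both conditions ``$|A\cap A'|<k/2$'' and ``$|\FFF|=2^{\Omega(kd)}$'' hold simultaneously in the full range $2^d\geq k^{1.1}$: if the pairwise intersections creep above $k/2$, the good-hypothesis neighborhoods $\HHH_A$ stop being disjoint and the sum-over-$\FFF$ step collapses; if $|\FFF|$ is too small the final $\log|\FFF|/(\alpha\epsilon)$ is sub-$kd$. Everything else is robust---the typical-sample averaging, the $4\alpha m$ Hamming-distance estimate, and the $\alpha<1/5$ slack needed to forbid $0\in B$ all fit together once the code-based packing is in place.
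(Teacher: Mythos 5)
Your proposal follows essentially the same packing argument as the paper's proof: a $2^{\Omega(kd)}$-size family of $k$-sets avoiding $0^d$ with pairwise intersections at most $k/2$, hard distributions putting mass $\Theta(\alpha/k)$ on each point of the set and the remaining mass on $0^d$, pairwise-disjoint sets of $\alpha$-good hypotheses, typical databases at Hamming distance $O(\alpha m)$ from one another, and group privacy summed over the disjoint events. The only substantive difference is that the paper tracks just the number of occurrences of $(0^d,0)$ rather than all $k+1$ type counts, so the $O\left(\sqrt{k\alpha m\log k}\right)$ concentration term --- and the self-consistency check it forces on you --- never appears.
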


\begin{proof}
Let $L$ be a proper $(\alpha,\frac{1}{2},\epsilon,m)$-PPAC learner for $\kpoint_d$.
Without loss of generality, we can assume that $m\geq\frac{5\ln(4)}{3\alpha}$ (since $L$ can ignore part of the sample).

Consider a maximal cardinality subset $B\subseteq\kpoint_d$ s.t. for every $c_A\in B$ it holds that $0^d\notin A$, and moreover, for every $c_{A_1}\neq c_{A_2}\in B$ it holds that $|A_1 \cap A_2|\leq\frac{k}{2}$.
We have that $|B|\geq\left(\frac{2^d-1}{4 e^2 k}\right)^{k/2}$. To see this, we could construct such a set using the following greedy algorithm.
Initiate $\hat{B}=\emptyset$, and $C=\kpoint_d \setminus \{ c_I\in\kpoint_d \; : \; 0^d\in I  \} $. While $C\neq\emptyset$, arbitrarily choose a concept $c_A \in C$, add $c_A$ to $\hat{B}$, and remove from $C$ every concept $c_I$ s.t. $|A \cap I|\leq\frac{k}{2}$.

Clearly, for every two $c_{A_1}\neq c_{A_2}\in\hat{B}$ it holds that $|A_1 \cap A_2|\leq\frac{k}{2}$. Moreover, at every step, the number of concepts that are removed from $C$ is at most
$$
\sum_{j=k/2}^k{{k\choose j}\cdot{2^d-1-k \choose k-j}}\leq{k\choose k/2}\cdot{2^d-1 \choose k/2},
$$
and, therefore,
$$
\hat{B}\geq\frac{{2^d-1 \choose k}}{{k\choose k/2}\cdot{2^d-1 \choose k/2}} \geq 
\left(\frac{2^d-1}{4 e^2 k}\right)^{k/2}.
$$

For every $c_A\in B$ we will now define a distribution $\DDD_A$, a set of hypotheses $G(A)$, and a database $S_A$.
The distribution $\DDD_A$ is defined as
$$
\DDD_A(x)=
\begin{cases}
    1-5\alpha, & x=0^d.\\
    \frac{5\alpha}{k}, & x \in A.\\
    0, & {\rm else}.
  \end{cases}
$$

Define the set $G(A)\subseteq\kpoint_d$ as all $\alpha$-good hypothesis for $(c_A,\DDD_A)$ in $\kpoint_d$.
\remove{As $L$ is a proper PAC learner for $\kpoint_d$, it must return a hypothesis from $G(A)$ with probability at least $1/2$.}
Note that for every $h_I\in\kpoint_d$ s.t. $\error_{\DDD_A}(h_I,c_A)\leq\alpha$ it holds that $|I\cap A|\geq\frac{4k}{5}$. Therefore,
for every $c_{A_1}\neq c_{A_2}\in B$ it holds that $G(A_1)\cap G(A_2)=\emptyset$ (as $|A_1 \cap A_2|\leq\frac{k}{2}$, and as $|A_1|=|A_2|=|I|=k$).

By the utility properties of $L$, we have that $\Pr_{L,\DDD_A}[L(S)\in G(A)]\geq\frac{1}{2}$.
We say that a database $S$ of $m$ labeled examples is {\em good} if the unlabeled example $0^d$ appears in $\db$ at least $(1-8\alpha)m$ times.
Let $S$ be a database constructed by taking $m$ i.i.d. samples from $\DDD_A$, labeled by $c_A$.
By the Chernoff bound, $S$ is good with probability at least $1-\exp(-3\alpha m /5)$. Hence,

$$\Pr_{\DDD_A , L}\left[   ( L(S) \in G(A) ) \wedge ( S {\rm \;is\;good } )    \right]\geq \frac{1}{2}-\exp(-3\alpha m /5)\geq\frac{1}{4}.$$

Note that, as $0^d\notin A$, every appearance of the example $0^d$ in $S$ is labeled by $0$.
Therefore, there exists a good database $S$ of $m$ samples that contains the entry $0^d \circ 0$ at least $(1-8\alpha)m$ times, and
$\Pr_L\left[ L(S) \in G(A)  \right]\geq\frac{1}{4}$,
where the probability is only over the randomness of $L$. We define $S_A$ as such a database.

\medskip
Note that all of the databases $S_{A_i}$ defined here are of distance at most $8\alpha m$ from one another. The privacy of $L$ ensures, therefore, that for any two such $S_{A_i},S_{A_j}$ it holds that $\Pr_L[L(S_{A_i})\in G(A_j)]\geq\frac{1}{4}\exp(-8\alpha\epsilon m)$.

Now,
\begin{eqnarray}\label{eqn:bigKpoints}
1-\frac{1}{4} &\geq& \Pr_L[L(S_{A_i})\notin G(A_i)] \nonumber \\
&\geq& \Pr_L[L(S_{A_i})\in \bigcup_{A_j \neq A_i} G(A_j)] \nonumber \\
&\geq& \sum_{A_j \neq A_i} {\Pr_L[L(S_{A_i})\in G(A_j)]} \nonumber \\
&\geq& (|B|-1) \frac{1}{4}\exp(-8\alpha\epsilon m) \nonumber \\
&\geq& \left(\left(\frac{2^d-1}{4 e^2 k}\right)^{k/2}-1 \right) \frac{1}{4}\exp(-8\alpha\epsilon m).
\end{eqnarray}
Solving for $m$ yields $m=\Omega(\frac{k}{\alpha\epsilon}(d-\ln(k)))$.
Recall that $2^d\geq k^{1.1}$, and, hence, $m=\Omega(\frac{kd}{\alpha\epsilon})$
\end{proof}

\begin{remark}
The constant $1.1$ in the above lemma could be replaced with any constant strictly bigger than $1$. Moreover, whenever $2^d=O(k)$ we have that $|\kpoint_d|={2^d \choose k}=2^{O(2^d)}$ and, hence, the generic construction of Kasiviswanathan et al.~\cite{KLNRS08} yields a proper $\epsilon$-private learner for this class with sample complexity $O_{\alpha,\beta,\epsilon}(2^d)=O_{\alpha,\beta,\epsilon}(k)$.
\end{remark}

In the next lemma we will use the last lower bound on the sample complexity of private learners, together with the reduction of Theorem~\ref{thm:sanPac}, and derive a lower bound on the database size necessary for pure private sanitizers for $\kpoint_d$.

\begin{theorem}\label{thm:SanPacReduction}
Let $\epsilon\leq\frac{1}{8}$, and let $k$ and $d$ be s.t. $2^d\geq k^{1.1}$. Every $(\frac{1}{150},\frac{1}{150},\epsilon,m)$-sanitizer for $\kpoint_d$ requires databases of size
$$m=\Omega\left(\frac{1}{\epsilon}\VC(\kpoint_d)\cdot \log|X_d|\right).$$
\end{theorem}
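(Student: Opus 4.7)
The plan is a direct two-step reduction: convert any pure-private sanitizer for $\kpoint_d$ into a pure-private proper PAC learner for $\kpoint_d$ (via Theorem~\ref{thm:sanPac}), and then apply the sample-complexity lower bound for such learners given in Lemma~\ref{lem:kPointBig}. Since $\VC(\kpoint_d)=k$ and $\log|X_d|=d$, the target bound $m=\Omega(\VC(\kpoint_d)\cdot \log|X_d|/\epsilon)$ is exactly $m=\Omega(kd/\epsilon)$.

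Concretely, suppose $A$ is an $(\frac{1}{150},\frac{1}{150},\epsilon,m)$-sanitizer for $\kpoint_d$. Instantiating Theorem~\ref{thm:sanPac} with $\alpha=\beta=\frac{1}{150}$ (which satisfies $\alpha,\epsilon\leq\frac{1}{8}$) yields a proper $(\gamma,\beta',6\epsilon,t)$-PPAC learner $L$ for $\kpoint_d$ with $\gamma=15\cdot\frac{1}{150}+12\cdot\frac{1}{150}=\frac{9}{50}<\frac{1}{5}$, $\beta'=\frac{10}{150}<\frac{1}{2}$, and $t=O_{\alpha,\beta,\epsilon}(m)$. Tracking the explicit dependence through Lemma~\ref{lemma:noLabel}, with $\alpha,\beta$ absolute constants one gets $t\leq c\cdot(m+1/\epsilon)$ for an absolute constant $c$.

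Next I would apply Lemma~\ref{lem:kPointBig}: since the resulting learner $L$ has approximation parameter $\gamma<\frac{1}{5}$ and confidence $\beta'<\frac{1}{2}$, and since (as in the hypothesis of Lemma~\ref{lem:kPointBig}) we may assume $2^d\geq k^{1.1}$ to avoid trivial regimes, we conclude $t=\Omega(kd/(6\epsilon))=\Omega(kd/\epsilon)$. Combining with $t\leq c\cdot(m+1/\epsilon)$ gives $m+1/\epsilon=\Omega(kd/\epsilon)$, hence $m=\Omega(kd/\epsilon)=\Omega(\VC(\kpoint_d)\cdot\log|X_d|/\epsilon)$, as required.

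I do not anticipate any serious obstacle: the entire argument is bookkeeping on constants in the reduction. The one place that needs genuine care is verifying that the approximation parameter produced by the reduction (with $\alpha=\beta=1/150$) still lies strictly below the $\frac{1}{5}$ threshold required by Lemma~\ref{lem:kPointBig}, which is precisely why the theorem is stated with the specific constant $\frac{1}{150}$; and ensuring that the additive $1/\epsilon$ slack coming from the $(\ell_0,\ell_1)$ Laplace shifts inside Lemma~\ref{lemma:noLabel} is negligible compared to $kd/\epsilon$, which holds whenever $kd\geq 2$ (otherwise the claimed bound is vacuous).
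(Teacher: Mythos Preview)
Your proposal is correct and follows essentially the same route as the paper: apply Theorem~\ref{thm:sanPac} with $\alpha=\beta=\frac{1}{150}$ to obtain a proper $(\frac{9}{50},\frac{1}{15},6\epsilon,t)$-PPAC learner with $t=O_{\alpha,\beta,\epsilon}(m)$, then invoke Lemma~\ref{lem:kPointBig} to conclude $t=\Omega(kd/\epsilon)$ and hence $m=\Omega(kd/\epsilon)$. Your additional care about the additive $1/\epsilon$ term arising from Lemma~\ref{lemma:noLabel} is a valid refinement that the paper absorbs into the $O(m)$ notation.
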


\begin{proof}
Let $A$ be a $(\frac{1}{150},\frac{1}{150},\epsilon,m)$-sanitizer for $\kpoint_d$.
By Theorem~\ref{thm:sanPac}, there exists a proper $(\frac{9}{50},\frac{1}{15},6\epsilon,t)$-PPAC learner for $\kpoint_d$, where $t=O\left( m \right)$.
By Lemma~\ref{lem:kPointBig}, $t=\Omega\left( \frac{kd}{\epsilon} \right)$, and hence
$m=\Omega\left( \frac{kd}{\epsilon} \right)$.
\end{proof}

Recall that in the proof of Theorem~\ref{thm:sanPac}, we increased the sample complexity in order to use Lemma~\ref{lem:sanPac1}. This causes a slackness of $\alpha^2$ in the database size of the resulting learner, which, in turn, 
eliminates the dependency in $\alpha$ in the above lower bound.
For the class $\kpoint_d^{\rm label}$ it is possible to obtain a better lower bound, by using the reduction of Lemma~\ref{lem:sanPac1} twice.

\begin{theorem}\label{thm:kpointSanLower}
Let $\alpha\leq\frac{1}{50}$ and $\epsilon\leq\frac{1}{8}$.
There exist a $d_0=d_0(\alpha,\epsilon)$ s.t. for every $k$ and $d$ s.t. $2^d\geq\max\{ k^{1.1} \;,\; 2^{d_0}\}$, it holds that
every $(\alpha,\frac{1}{50},\epsilon,m)$-sanitizer for $\kpoint_d^{\rm label}$ must operate on databases of size
$$m=\Omega\left(\frac{1}{\alpha\epsilon}\VC(\kpoint_d^{\rm label})\cdot \log|X_d|\right).$$
\end{theorem}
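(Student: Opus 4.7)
The plan is to mimic the proof of Theorem~\ref{thm:SanPacReduction}, but with one crucial shortcut: since $\kpoint_d^{\rm label}$ is already of the form $C^{\rm label}$ for $C=\kpoint_d$, we can feed a sanitizer for $\kpoint_d^{\rm label}$ directly into Lemma~\ref{lem:sanPac1} and entirely skip the step that goes through Lemma~\ref{lemma:noLabel} (which was the source of the $1/\alpha^2$ loss in Theorem~\ref{thm:SanPacReduction}). Combined with the learner lower bound of Lemma~\ref{lem:kPointBig}, this should upgrade the $\Omega(\VC\log|X_d|/\epsilon)$ bound of Theorem~\ref{thm:SanPacReduction} to $\Omega(\VC\log|X_d|/(\alpha\epsilon))$, matching the tighter dependence claimed in the statement.

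Concretely, let $A$ be an $(\alpha,\tfrac{1}{50},\epsilon,m)$-sanitizer for $\kpoint_d^{\rm label}$. Viewing it as a sanitizer for $C^{\rm label}$ with $C=\kpoint_d$ and $\VC(C)=k$, apply Lemma~\ref{lem:sanPac1} with parameter $\gamma=\alpha$ (and the confidence parameter $\beta=\tfrac{1}{50}$). Provided the VC condition $m\geq 50k\alpha^{-2}\ln(50/\alpha)$ is met, Lemma~\ref{lem:sanPac1} produces a proper $(3\alpha,\tfrac{1}{25},\epsilon,m)$-PPAC learner for $\kpoint_d$. Since $\alpha\leq \tfrac{1}{50}$ yields $3\alpha<\tfrac{1}{5}$, Lemma~\ref{lem:kPointBig} applies and forces $m=\Omega(kd/(\alpha\epsilon))$, which is exactly $\Omega(\VC(\kpoint_d^{\rm label})\cdot\log|X_d|/(\alpha\epsilon))$ as $\VC(\kpoint_d^{\rm label})=\Theta(k)$.

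The main obstacle is the regime in which the VC-hypothesis of Lemma~\ref{lem:sanPac1} with $\gamma=\alpha$ fails, i.e.\ $m<50k\alpha^{-2}\ln(50/\alpha)$. Here I would apply Lemma~\ref{lem:sanPac1} a second time, now with $\gamma$ chosen so that the VC-hypothesis is saturated at the actual $m$, namely $\gamma\approx\sqrt{50k\log(50/\gamma)/m}$. This still produces a proper learner for $\kpoint_d$, of approximation $2\alpha+\gamma$; one checks that $\alpha\leq \tfrac{1}{50}$ keeps $2\alpha+\gamma<\tfrac15$ throughout the relevant range, so Lemma~\ref{lem:kPointBig} continues to give $m=\Omega(kd/((2\alpha+\gamma)\epsilon))$. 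A short case split finishes the argument: if $\gamma\leq\alpha$, the bound collapses to $m=\Omega(kd/(\alpha\epsilon))$; if $\gamma>\alpha$, then combining $m\gamma\gtrsim kd/\epsilon$ with the tight VC equation $m\gamma^2\lesssim k\log(50/\gamma)$ gives $d\lesssim\epsilon\log(1/\alpha)/\alpha$, which contradicts $d\geq d_0$ once $d_0=d_0(\alpha,\epsilon)$ is taken a sufficiently large constant multiple of $\epsilon\log(1/\alpha)/\alpha$. This two-fold invocation of Lemma~\ref{lem:sanPac1} — once with $\gamma=\alpha$ for the main regime, once with the $m$-adaptive $\gamma$ for the small-$m$ regime — is the use of the reduction "twice" advertised in the text, and it is where the parameter bookkeeping requires the most care.
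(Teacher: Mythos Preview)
Your key insight is right: since the class is already of the form $C^{\rm label}$, you can feed the sanitizer directly into Lemma~\ref{lem:sanPac1} and bypass Lemma~\ref{lemma:noLabel}, which is indeed where the $1/\alpha^2$ loss in Theorem~\ref{thm:sanPac} originated. This is exactly what the paper exploits.

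However, your execution of the ``twice'' differs from the paper's, and your version has a gap. In your second regime you choose $\gamma$ to saturate the VC-hypothesis of Lemma~\ref{lem:sanPac1} at the given $m$; but for Lemma~\ref{lem:kPointBig} to apply you need $2\alpha+\gamma<\tfrac15$, which forces $\gamma$ to stay below an absolute constant. That in turn requires $m$ to exceed roughly $50k\,(1/5-2\alpha)^{-2}\ln\!\big(50/(1/5-2\alpha)\big)$, a fixed constant times $k$. Nothing you have written rules out smaller $m$: the only a~priori bound available is $m>\VC/2$ from Theorem~\ref{thm:BlumLow}, which is far too weak, and your $d_0$ cannot depend on $k$. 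The sentence ``one checks that $\alpha\le \tfrac{1}{50}$ keeps $2\alpha+\gamma<\tfrac15$ throughout the relevant range'' is precisely the unjustified step.

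The paper's ``twice'' is a bootstrap rather than a case split. First it weakens the sanitizer to accuracy $\tfrac{1}{50}$ and passes it through Lemma~\ref{lemma:saninitationSample} to inflate the sample by a \emph{fixed} constant factor $q$; combined with $m\ge \VC/2$, this guarantees the VC-hypothesis of Lemma~\ref{lem:sanPac1} for the constant $\gamma=\tfrac{1}{50}$, and Lemma~\ref{lem:kPointBig} yields the preliminary bound $m=\Omega(kd/\epsilon)$. Second, once $d\ge d_0(\alpha,\epsilon)$, this preliminary bound alone forces $m\ge 50k\alpha^{-2}\ln(50/\alpha)$, so Lemma~\ref{lem:sanPac1} can be re-applied to the original $(\alpha,\tfrac{1}{50},\epsilon,m)$-sanitizer with $\gamma=\alpha$, giving $m=\Omega(kd/(\alpha\epsilon))$. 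The first pass through Lemma~\ref{lemma:saninitationSample} is exactly what closes the hole your adaptive-$\gamma$ argument leaves open; if you graft that step onto your plan, the rest goes through.
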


\begin{proof}

Let $A$ be a $(\frac{1}{50},\frac{1}{50},\epsilon,m)$-sanitizer for a class $\kpoint_d^{\rm label}$, where $\epsilon\leq\frac{1}{8}$.
Note that by Theorem~\ref{thm:BlumLow}, it must be that $m\geq\frac{\VC(\kpoint_d^{\rm label})}{2}\geq\frac{\VC(\kpoint_d)}{2}$.
In order to use Lemma~\ref{lem:sanPac1}, we need a slightly stronger guarantee, and therefore use Lemma~\ref{lemma:saninitationSample} to increase the input database size as follows.

Denote $q = \left\lceil 100\cdot 50^3\ln(50^2) \right\rceil$.
By Lemma~\ref{lemma:saninitationSample}, there exists a $(\frac{2}{25},\frac{1}{50},\epsilon,t)$-sanitizer $B$ for $\kpoint_d^{\rm label}$, where
$$
t=qm=m\left\lceil 100\cdot 50^3\ln(50^2) \right\rceil
\geq 50\cdot50^3\VC(\kpoint_d)\ln(50^2).
$$

By Lemma~\ref{lem:sanPac1}, there exists a proper $(\frac{9}{50},\frac{1}{25},\epsilon,t)$-PPAC learner for $\kpoint_d$.
By Lemma~\ref{lem:kPointBig}, $t=\Omega\left( \frac{kd}{\epsilon} \right)$, and hence
\begin{eqnarray}
m=\Omega\left( \frac{kd}{\epsilon} \right).
\label{eq:kpointSanFirstUse}
\end{eqnarray}\\

Let $\alpha\leq\frac{1}{50}$ and $\epsilon\leq\frac{1}{8}$, and let $B$ be an $(\alpha,\frac{1}{50},\epsilon,m)$-sanitizer for $\kpoint_d^{\rm label}$.
As $B$ is, in particular, a $(\frac{1}{50},\frac{1}{50},\epsilon,m)$-sanitizer for $\kpoint_d^{\rm label}$, where $\epsilon\leq\frac{1}{8}$, Equation~\ref{eq:kpointSanFirstUse} states that there exists a constant $\lambda$ s.t.
$m\geq\lambda\frac{ k d }{\epsilon}$.
Asserting that $d\geq d_0 \triangleq \frac{50\epsilon}{\lambda \alpha^2}\ln(\frac{50}{\alpha})$, we ensure that $m\geq\frac{50k}{\alpha^2}\ln(\frac{50}{\alpha})$. By reusing Lemme~\ref{lem:sanPac1}, we now get that there exists a proper $(3\alpha,\frac{1}{25},\epsilon,m)$-PPAC learner for $\kpoint_d$.
Lemma~\ref{lem:kPointBig} now states that
$$m=\Omega\left(\frac{k d}{\alpha\epsilon}\right)=\Omega\left(\frac{1}{\alpha\epsilon}\VC(\kpoint_d^{\rm label})\cdot \log|X_d|\right).$$
\end{proof}

\section{Label-Private Learners}
\label{sec:semiPrivate}
\subsection{Generic Label-Private Learner}
In this section we consider relaxed definitions of private learners preserving pure privacy (i.e., $\delta=0$).
We start with the model of label privacy (see~\cite{CH11} and references therein). In this model, privacy must only be preserved for the {\em labels} of the elements in the database, and not necessarily for their identity.
This is a reasonable privacy requirement when the identity of individuals in a population are known publicly but not their labels. In general, this is not a reasonable assumption.

We consider a database $S=(x_i,y_i)_{i=1}^m$ containing labeled points from some domain $X$, and denote $S_x=(x_i)_{i=1}^m\in X^m$, and $S_y=(y_i)_{i=1}^m\in\{0,1\}^m$.

\begin{definition}[Label-Private Learner]\label{def:labelPrivate}
Let $A$ be an algorithm that gets as input a database $S_x\in X^m$ and its labels $S_y\in \{0,1\}^m$. Algorithm $A$ is an {\em $(\alpha,\beta,\epsilon,m)$-Label Private PAC Learner} for a concept
class $C$ over $X$ if
\begin{description}
\item{\sc Privacy.} $\forall S_x\in X^m$, algorithm $A(S_x,\cdot)=A_{S_x}(\cdot)$ is $\epsilon$-differentially private (as in \defref{dp});
\item{\sc Utility.} Algorithm $A$ is an {\em $(\alpha,\beta,m)$-PAC learner} for $C$ (as in \defref{PAC}).
\end{description}
\end{definition}

Chaudhuri et al.~\cite{CH11} proved lower bounds on the sample complexity of label-private learners for a class $C$ in terms of its doubling dimension. As we will now see, the correct measure for characterizing the sample complexity of such learners is the VC dimension, and the sample complexity of label-private learners is actually of the same order as that of non-private learners (assuming $\alpha,\beta$, and $\epsilon$ are constants).

\begin{theorem}
Let $C$ be a concept class over a domain $X$.
For every $\alpha,\beta,\epsilon$, there exists an $(\alpha,\beta,\epsilon,m)$-Label Private PAC learner for $C$, where $m=O_{\alpha,\beta,\epsilon}(\VC(C))$. The learner might not be efficient.
\end{theorem}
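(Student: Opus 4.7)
The plan is to exploit the fact that under label privacy the unlabeled points $S_x=(x_1,\ldots,x_m)$ are treated as public, so we may branch the algorithm on $S_x$ and only need to protect $S_y$. Concretely, given $S_x$, the set of distinct labelings of $\{x_1,\ldots,x_m\}$ realizable by $C$ is the projection $\Pi_C(\{x_1,\ldots,x_m\})$, whose size is bounded by Sauer's lemma via $|\Pi_C(\{x_1,\ldots,x_m\})|\leq (em/d)^d$ where $d=\VC(C)$. I would fix, deterministically from $S_x$ alone, a set $H'\subseteq C$ of representative hypotheses---one $h_\sigma\in C$ for each labeling $\sigma\in\Pi_C(\{x_1,\ldots,x_m\})$---and then apply the exponential mechanism of Section~\ref{sec:PreTools} on the solution set $H'$ with quality function $q(S,h)=|\{i:h(x_i)=y_i\}|$. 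Since $H'$ depends only on $S_x$ and $q(S,\cdot)$ has sensitivity $1$ with respect to changes in a single label $y_i$, this mechanism is $\epsilon$-differentially private as a function of $S_y$ for every fixed $S_x$, meeting the privacy condition of Definition~\ref{def:labelPrivate}.

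For utility, I would combine two standard facts. First, the target concept $c\in C$ induces some labeling $\sigma_c\in\Pi_C(\{x_1,\ldots,x_m\})$ that agrees with $S_y$, so the representative $h_{\sigma_c}\in H'$ satisfies $\error_S(h_{\sigma_c})=0$; hence the best empirical score in $H'$ equals $m$. By Proposition~\ref{prop:expMech}, the exponential mechanism outputs $h\in H'$ with empirical error at most $\alpha/2$ provided $m\geq \frac{4}{\alpha\epsilon}\bigl(\ln|H'|+\ln(2/\beta)\bigr)$. Second, since every $h_\sigma\in H'\subseteq C$, the agnostic VC generalization bound (Theorem~\ref{thm:generalization}) applied to the class $C$ with $m\geq \frac{50\,d}{(\alpha/2)^2}\ln\!\bigl(2/(\alpha\beta)\bigr)$ guarantees, with probability $1-\beta/2$, that $|\error_\DDD(h,c)-\error_S(h)|\leq\alpha/2$ for every $h\in C$ simultaneously (in particular for the output $h$). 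Taking a union bound yields $\error_\DDD(h,c)\leq\alpha$ with probability $\geq 1-\beta$.

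It remains to verify that the two lower bounds on $m$ can be simultaneously satisfied by some $m=h(\alpha,\beta,\epsilon)\cdot d$. Writing $m=c\cdot d$ and plugging the Sauer bound $\ln|H'|\leq d\ln(em/d)=d\ln(ec)$ into the exponential-mechanism condition reduces it to $c\geq \tfrac{4}{\alpha\epsilon}\bigl(\ln(ec)+\tfrac{\ln(2/\beta)}{d}\bigr)$, which is satisfied for $c=\Theta\!\bigl(\tfrac{1}{\alpha\epsilon}\log\tfrac{1}{\alpha\beta\epsilon}\bigr)$ since $c$ dominates $\ln c$. The VC condition asks for $c\geq \tfrac{200}{\alpha^2}\ln(2/(\alpha\beta))$. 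Setting $c$ to the maximum of these two expressions gives a function depending only on $\alpha,\beta,\epsilon$, so $m=O_{\alpha,\beta,\epsilon}(\VC(C))$ suffices, as claimed.

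The main conceptual step---and the only nontrivial one---is realizing that label privacy lets us ``condition on $S_x$'' and run the exponential mechanism over the \emph{shattered projection} $\Pi_C(\{x_1,\ldots,x_m\})$ rather than over all of $C$, which replaces a $\log|C|$ dependence (as in the generic Kasiviswanathan et al.\ learner) by a $d\log m$ dependence; the rest is a standard Sauer-plus-VC calculation. I do not expect genuine obstacles, only bookkeeping to confirm that the representatives $h_\sigma$ can be chosen as a deterministic (data-independent up to $S_x$) function so that the exponential mechanism indeed sees a fixed solution set of size $\leq(em/d)^d$.
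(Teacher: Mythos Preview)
Your proposal is correct and shares the paper's core idea: since $S_x$ is public, restrict attention to one representative concept per labeling in $\Pi_C(\{x_i\})$ and run the exponential mechanism over this small set. The execution differs in a useful way, though. The paper \emph{splits} the sample, using a prefix $S^1$ of size $n=O_{\alpha,\beta}(\VC(C))$ to build $H$ and bounding $|H|$ trivially by $2^n$; it then runs the exponential mechanism on the fresh suffix $S^2$ and proves generalization by a Chernoff-plus-union-bound over the $2^n$ hypotheses (exploiting that $H$ is independent of $S^2$). You instead use the \emph{entire} sample both to build $H'$ and to score it, invoking Sauer's lemma to get $\ln|H'|\leq d\ln(em/d)$ and the uniform agnostic VC bound (Theorem~\ref{thm:generalization}) over all of $C$ for generalization---the latter is what lets you reuse the same sample without independence issues, since the bound holds for every $h\in C\supseteq H'$ regardless of how $H'$ was selected. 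Your route is a bit cleaner (no splitting, one generalization lemma) at the cost of importing Sauer; the paper's route avoids Sauer but needs the split and a two-stage utility argument. Both yield the same $O_{\alpha,\beta,\epsilon}(\VC(C))$ bound.
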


\begin{proof}

For a concept class $C$ over a domain $X$, and for a subset $B=\{b_1,\ldots,b_\ell\}\subseteq X$, the projection of $C$ on $B$ is denoted as $$\Pi_C(B)=\{(c(b_1),\ldots,c(b_\ell)):c\in C\}.$$

\begin{figure}
\begin{center}
\noindent\fbox{
\parbox{.95\columnwidth}{
{\bf Generic Label-Private Learner}\\
{\bf Input:} parameter $\epsilon$, and a labeled database $S=(x_i,y_i)_{i=1}^m$, where $m\geq\frac{768}{\alpha^2 \epsilon}(\VC(C)\ln(\frac{64}{\alpha})+2\ln(\frac{8}{\beta}))$.
\begin{enumerate}[rightmargin=10pt]
\item Initialize $H=\emptyset$.
\item Set $n=\frac{32}{\alpha}(\VC(C)\ln(\frac{64}{\alpha})+\ln(\frac{8}{\beta}))$, and denote $S^1=(x_i,y_i)_{i=1}^n$, and $S^2=(x_i,y_i)_{i=n+1}^m$.
\item Let $B=\{b_1,\ldots,b_\ell\}$ be the set of all the (unlabeled) points appearing at least once in $S^1$.
\item For every $(z_1,\ldots,z_\ell)\in \Pi_C(B)$,
\begin{enumerate}
\item Let $c\in C$ be an arbitrary concept in $C$ s.t. $c(b_i)=z_i$ for every $1\leq i\leq\ell$.
\item Add $c$ to $H$.
\end{enumerate}
\item Choose and return $h\in H$ using the exponential mechanism with inputs $\epsilon,H,S^2$ and the quality function $q(S^2,h)=|\{i:h(x_i)=y_i\}|$.
\end{enumerate}
}}
\end{center}
\caption{A generic label-private learner. \label{fig:genericLabelPrivate}}
\end{figure}

In Figure~\ref{fig:genericLabelPrivate} we describe a label-private algorithm $A$.
Algorithm $A$ constructs a set of hypotheses $H$ as follows:
It samples an unlabeled sample $S_1$, and defines $B$ as the set of points in $S_1$.
For every labeling of the points in $B$ realized by $C$, add to $H$ an arbitrary concept consistent with this labeling. Afterwards, algorithm $A$ uses the exponential mechanism to choose a hypothesis out of $H$.\\

Note that steps~1-4 of algorithm $A$ are independent of the labeling vector $S_y$.
By the properties of the exponential mechanism (which is used to access $S_y$ on Step~5), for every set of elements $S_x$, algorithm $A(S_x,\cdot)$ is $\epsilon$-differentially private.

For the utility analysis, fix a target concept $c\in C$ and a distribution $\DDD$ over $X$, and define the following 3 good events:
\begin{enumerate}[label=$E_{\arabic*}$]
\item The constructed set $H$ contains at least one hypothesis $f$ s.t. $\error_{S^2}(f)\leq\frac{\alpha}{4}$.
\item For every $h\in H$ s.t. $\error_{S^2}(h)\leq\frac{\alpha}{2}$, it holds that $\error_{\DDD}(c,h)\leq\alpha$.
\item The exponential mechanism chooses an $h$ such that $\error_{S^2}(h) \leq \frac{\alpha}{4} + \min_{f\in H}\left\{\error_{S^2}(f)\right\}$.
\end{enumerate}

We first show that if these 3 good events happen, then algorithm $A$ returns an $\alpha$-good hypothesis.
Event $E_1$ ensures the existence of a hypothesis $f\in H$ s.t. $\error_{S^2}(f)\leq\frac{\alpha}{4}$. Thus, event $E_1 \cap E_3$ ensures algorithm $A$ chooses (using the exponential mechanism) a hypothesis $h\in H$ s.t. $\error_{S^2}(h)\leq\frac{\alpha}{2}$. Event $E_2$ ensures, therefore, that this $h$ obeys $\error_{\DDD}(c,h)\leq\alpha$.

We will now show that those 3 events happen with high probability.
For every $(y_1,\ldots,y_\ell)\in \Pi_C(B)$, algorithm $A$ adds to $H$ a hypothesis $f$ s.t. $\forall 1\leq i \leq \ell,\;f(b_i)=y_i$.
In particular, $H$ contains a hypothesis $h^*$ s.t. $h^*(x)=c(x)$ for every $x\in B$. That is, a hypothesis $h^*$ s.t. $\error_{S^1}(h^*)=0$.
Therefore, by setting $n\geq\frac{32}{\alpha}(\VC(C)\ln(\frac{64}{\alpha})+\ln(\frac{8}{\beta}))$, Theorem~\ref{thm:VCconsistant} ensures that $\error_{\DDD}(c,h^*)\leq\frac{\alpha}{8}$ with probability at least $(1-\frac{\beta}{4})$. In such a case, using the Chernoff bound, we get that with probability at least $1-\exp(-(m-n)\alpha/24)$ this hypothesis $h^*$ satisfies $\error_{S^2}(h^*)\leq\frac{\alpha}{4}$. Event $E_1$ happens, therefore, with probability at least $(1-\frac{\beta}{4})(1-\exp(-(m-n)\alpha/24))$, which is at least $(1-\frac{\beta}{2})$ for
$m\geq n+\frac{24}{\alpha}\ln(4/\beta)$.

Fix a hypothesis $h$ s.t. $\error_{\DDD}(c,h)>\alpha$.
Using the Chernoff bound, the probability that $\error_{S^2}(h)\leq\frac{\alpha}{2}$ is less than $\exp(-(m-n)\alpha/8)$.
As $|H|=2^{|B|}\leq2^n$, the probability that there is such a hypothesis in $H$ is at most $2^n\cdot\exp(-(m-n)\alpha/8)$.
For $m\geq\frac{8}{\alpha}(n+\ln(\frac{4}{\beta}))$, this probability is at most $\frac{\beta}{4}$, and event $E_2$ happens with probability at least $(1-\frac{\beta}{4})$.

The exponential mechanism ensures that the probability of event $E_3$ is at least $1-|H| \cdot \exp(-\epsilon \alpha m /8)$ (see Proposition \ref{prop:expMech}), which is at least $(1-\frac{\beta}{4})$ for $m \geq \frac{8}{\alpha \epsilon}( n+\ln(\frac{4}{\beta}))$.

All in all, by setting
$n=\frac{32}{\alpha}(\VC(C)\ln(\frac{64}{\alpha})+\ln(\frac{8}{\beta}))$
and
$m\geq\frac{768}{\alpha^2 \epsilon}(\VC(C)\ln(\frac{64}{\alpha})+2\ln(\frac{8}{\beta}))$,
we ensure that the probability of $A$ failing to output an $\alpha$-good hypothesis is at most $\beta$.
\end{proof}

\subsection{Label Privacy Extension}
We consider a slight generalization of the label privacy model.
Recall that given a labeled sample, a private learner is required to preserve the privacy of the entire sample, while a label-private learner is only required to preserve privacy for the labels of each entry.

Consider a scenario where there is no need in preserving the privacy of the distribution $\DDD$ (for example, $\DDD$ might be publicly known), but we still want to preserve the privacy of the entire sample $S$. We can model this scenario as a learning algorithm $A$ which is given as input 2 databases -- a labeled database $S$, and an unlabeled database $D$. For every database $D$, algorithm $A(D,\cdot)=A_D(\cdot)$ must preserve differential privacy. We will refer to such a learner as a {\em Semi-Private} learner.

Clearly, $\Omega(\VC(C))$ samples are necessary in order to semi-privately learn a concept class $C$, as this is the case for non-private learners.\footnote{The lower bound of $\Omega(\VC(C))$ is worst case over choices of distributions $\DDD$. For a specific distribution, less samples may suffice.}
This lower bound is tight, as the above generic learner could easily be adjusted for the semi-privacy model, and result in a generic semi-private learner with sample complexity $O_{\alpha,\beta,\epsilon}(\VC(C))$. To see this, recall that in the above algorithm, the input sample $S$ is divided into $S_1$ and $S_2$. Note that the labels in $S_1$ are ignored, and, hence, $S_1$ could be replaced with an unlabeled database. Moreover, note that $S_2$ is only accessed using the exponential mechanism  (on Step~5), which preserves the privacy both for the labels and for the examples in $S_2$.

\begin{example} Consider the task of learning a concept class $C$, and suppose that the relevant distribution over the population is publicly known.
Now, given a labeled database $S$, we can use a semi-private learner and guarantee privacy both for the labellings and for the mere existence of an individual in the database. That is, in such a case, the privacy guarantee of a semi-private learner is the same as that of a private learner.
Moreover, the necessary sample complexity is $O_{\alpha,\beta,\epsilon}(\VC(C))$, which should be contrasted with $O_{\alpha,\beta,\epsilon}(\log|C|)$ which is the sample complexity that would result from the general construction of Kasiviswanathan et al.~\cite{KLNRS08}.
\end{example}

\paragraph{{\bf Acknowledgments.}} We thank Salil Vadhan and Jon Ullman for helpful discussions of ideas in this work.

\bibliographystyle{plain}

\end{document}